\newcommand{\Rmnum}[1]{\expandafter\@slowromancap\romannumeral #1@}
\newtheorem{theorem}{Theorem}
\newtheorem{corollary}{Corollary}
\newtheorem{lemma}{Lemma}
\newtheorem{proposition}{Proposition}
\newtheorem{definition}{Definition}
\newtheorem{assumption}{Assumption}
\pgfplotsset{compat=newest}
\pgfplotsset{plot coordinates/math parser=false,trim axis left}
\newlength\figureheight
\newlength\figurewidth
\newcommand{\eins}{\boldsymbol{1}}
\DeclareMathOperator{\sign}{sign}
\newcommand{\argmax}{\operatornamewithlimits{arg \, max}}
\newcommand{\argmin}{\operatornamewithlimits{arg \, min}}
\pgfplotsset{
standard/.style={
	axis x line=middle,
	axis y line=middle,
	enlarge x limits=0.15,
	enlarge y limits=0.15,
	every axis x label/.style={at={(current axis.right of origin)},anchor=north west},
	every axis y label/.style={at={(current axis.above origin)},anchor=north east}
}
}
\author[1]{Hanyuan Hang}
\author[2]{Yuchao Cai}
\author[2]{Hanfang Yang}
\author[3]{Zhouchen Lin}
\date{\today}
\affil[1]{Department of Applied Mathematics, University of Twente, The Netherlands
}
\affil[2]{School of Statistics, Renmin University, China 
}
\affil[3]{School of EECS, Peking University, China
}
\begin{document}

\title{Under-bagging Nearest Neighbors for Imbalanced Classification}



\allowdisplaybreaks

\maketitle


\begin{abstract}In this paper, we propose an ensemble learning algorithm called \textit{under-bagging $k$-nearest neighbors} (\textit{under-bagging $k$-NN}) for imbalanced classification problems. On the theoretical side, by developing a new learning theory analysis, we show that with properly chosen parameters, i.e., the number of nearest neighbors $k$, the expected sub-sample size $s$, and the bagging rounds $B$, optimal convergence rates for under-bagging $k$-NN can be achieved under mild assumptions w.r.t.~the arithmetic mean (AM) of recalls. Moreover, we show that with a relatively small $B$, the expected sub-sample size $s$ can be much smaller than the number of training data $n$ at each bagging round, and the number of nearest neighbors $k$ can be reduced simultaneously, especially when the data are highly imbalanced, which leads to substantially lower time complexity and roughly the same space complexity. On the practical side, we conduct numerical experiments to verify the theoretical results on the benefits of the under-bagging technique by the promising AM performance and efficiency of our proposed algorithm.
\end{abstract}


\section{Introduction} \label{sec::intro}

Imbalanced classification has been encountered in multiple areas such as telecommunication managements \cite{babu2018enhanced, aljanabi2018intelligent}, bioinformatics \cite{bugnon2019deep, shanab2020optimally}, fraud detection \cite{li2021hybrid, somasundaram2019parallel}, and medical diagnosis \cite{zhao2020intelligent}, and has been considered one of the top ten problems in data mining research \cite{yang200610, rastgoo2016tackling}. In fact, the ratio of the size of the majority class to the minority class can be as huge as $10^6$ \cite{wu2008fast}. It is noteworthy that the imbalanced classification is emerging as an important issue in designing classifiers \cite{weiss2004mining, johnson2020effects}. Two observations account for this point: on the one hand, the imbalanced classification is pervasive in a large number of domains of great importance in the machine learning community, on the other hand, most popular classification learning algorithms are reported to be inadequate when encountering the imbalanced classification problem. These classification algorithms involve $k$-nearest neighbors \cite{wang2021review}, support vector machines \cite{raskutti2004extreme,pisner2020support}, random forest \cite{chawla2004special, sohony2018ensemble}, and neural networks \cite{johnson2019survey}. We refer the reader to \cite{sun2009classification, guo2017learning} for a general review on imbalanced classification.

$k$-nearest neighbors ($k$-NN) is included in the top ten most significant data mining algorithms \cite{wu2008top}, as it has always been preferred for its non-parametric working principle, and ease of implementation \cite{duda2001pattern}. To be specific, $k$-NN is a lazy method without model training since it simply tags the new data entry based learning from historical data. On the other hand, $k$-NN is known to be a simple yet powerful non-parametric machine learning algorithm widely used in fields such as genetics \cite{ayyad2019gene, arowolo2020pca}, data compression \cite{salvador2019compressed, deng2016efficient}, economic forecasting \cite{malini2017analysis, moldagulova2017using}, recommendation and rating prediction \cite{patro2020hybrid, al2021location}. In fact, companies like Amazon and Netflix use $k$-NN when recommending books to buy and movies to watch \cite{ahuja2019movie, belacel2020k}.

However, one of the major drawbacks of $k$-NN is that it uses only local prior probabilities to predict instance labels, ignoring the class distribution, and hence gives preference to the majority class for prediction, which results in undesirable performance on imbalanced data sets. Constant research is going on to cope with the shortcomings resulting in various improvements of the $k$-NN algorithm, which can be roughly grouped into four categories: resampling techniques, algorithm modification, cost-sensitive learning approaches, and ensemble learning methods. Resampling techniques aim to rebalance the training dataset by means of some mechanisms to generate a more balanced class distribution which is suitable for standard classifiers, see e.g., \cite{wilson1972asymptotic, zhang2003knn, kurniawati2018adaptive, nwe2019knn}. Different from resampling techniques, algorithm modification methods try to adjust the structure of standard classifiers to diminish the effect caused by class imbalance. There has been a flurry of work to design specific learning algorithms with $k$-NN for imbalanced classification, see e.g., \cite{tan2005neighbor, wang2008effective, li2011improving,dubey2013class,zhang2017krnn,liu2018improved, mullick2018adaptive,yuan2021novel}. Besides, cost-sensitive learning generally considers a higher cost for the minority class to compensate for the scarcity of minority data. Recently, \cite{zhang2020cost} for the first time designed two efficient cost-sensitive $k$-NN classification models by changing the distance function of the standard $k$-NN classification. As for the ensemble learning methods adapt to imbalance problem, \cite{guo2016bpso} implemented the powerful boosting framework Adaboost as the learning model, in which $k$-NN is chosen as the base classifier and boosting-by-resample method is used to generate the training set. However, these methods suffer from problems including the involvement of exhaustive search, the introduction of new parameters, and significant computational overhead, which hinder the scalability and easy implementation of $k$-NN. In addition, the effectiveness of these methods has not been investigated from a theoretical perspective.

In this study, we propose an ensemble learning algorithm named \textit{under-bagging $k$-nearest neighbors}, where the drawback of the standard $k$-NN classifier for imbalanced classification is eliminated with the help of the under-bagging technique. More precisely, the under-bagging $k$-NN classifier creates an ensemble of base predictors over bootstrap training samples independently drawn following the \textit{under-sampling} rule. In other words, at each \textit{bagging} round, we sample several subsets independently from the majority class such that the expected size of each subset equals to that of the minority class. After that, we build a $k$-NN classifier on the newly created \textit{balanced} training set. The under-bagging $k$-NN classifier is thus derived based on the averaged posterior probability function. It is worth pointing out that the under-bagging $k$-NN classifier enjoys three advantages.
Firstly, it preserves the simplicity of standard $k$-NN as a lazy learner without building any discriminative function from the training data, compared with other model-based classification algorithms. Secondly, the under-bagging $k$-NN classifier is an effective method targeting at imbalanced classification by drawing relatively balanced data subsets used at each bagging round. Last but not least, the bagging technique helps to improve the computational efficiency greatly by reducing the number of training samples at bagging rounds, while the standard $k$-NN is problematic to keep all the training examples in memory, so as to search for all the $k$ nearest neighbors for a test sample \cite{zhang2020cost}.

The main purpose of this paper is to conduct a theoretical analysis on the under-bagging $k$-NN for imbalanced classification from a learning theory perspective \cite{cucker2007learning,steinwart2008support}. In the analysis, we adopt the widely used AM measure, which is shown to be more general and powerful for imbalanced classification. In fact, as \cite{menon2013statistical} pointed out, since the Bayes classifier w.r.t.~the AM measure usually differs from that w.r.t.~the classification loss, standard algorithms expecting balanced class distribution are not consistent in terms of the AM measure. Therefore, we investigate the under-bagging $k$-NN classifier w.r.t.~the AM measure and demonstrate its effectiveness and efficiency. On the other hand, since bagging is known to reduce the variance of the base learners, in our analysis, we use Bernstein's concentration inequality \cite{massart2007concentration, vandervaart1996weak, Kosorok2008introduction} to establish the convergence rates, which allow for localization due to their specific dependence on the variance \cite{hang2017bernstein,hang2016learning}. In this way, our results hold in sense of ``with high probability'', which is more closely related to practical needs than ``in expectation'' and ``in probability'' addressed commonly in existing statistical analysis, see e.g., \cite{hall2005properties,biau2010rate,biau2015lectures}.

The contributions of this paper can be stated as follows.

\textit{(i)} We present the learning theory analysis on the under-sampling and under-bagging $k$-NN for imbalanced multi-class classification w.r.t.~the AM measure. We mention that the learning theory approach distinguishes our work from previous studies. Under the H\"{o}lder smoothness assumption and the margin condition, optimal convergence rates of both under-sampling $k$-NN and under-bagging $k$-NN are established w.r.t.~the AM measure with high probability. It is worth pointing out that our finite sample results demonstrate the explicit relationship among bagging rounds $B$, the number of nearest neighbors $k$, and the expected under-sampling size $s$.

\textit{(ii)} We conduct analysis on both time and space complexity of the under-bagging $k$-NN with parameter selection as shown in Theorem \ref{thm::bagknnclassund}. We show that under roughly the same space complexity, with the help of under-bagging, the time complexity of construction can be reduced from $\mathcal{O}(n \log n)$ (for the standard $k$-NN) to $\mathcal{O}((\rho n \log (\rho n))^{d/(2\alpha+d)})$, and the time complexity in the testing stage can be reduced from $\mathcal{O}((n \log n)^{2\alpha/(2\alpha+d)})$ to $\mathcal{O}(\log^2 (\rho n))$, where $\rho$ represents the imbalance ratio. These results indicate that under-bagging helps to enhance the computational efficiency, especially when the data are highly imbalanced.

\textit{(iii)} We conduct numerical experiments to verify the theoretical results. We first verify the relationship among parameters $k$, $s$, and $B$ in Theorem \ref{thm::bagknnclassund} based on synthetic datasets. Then on real datasets, we compare our under-bagging $k$-NN with the standard $k$-NN and the under-sampling $k$-NN. The results show that the under-bagging $k$-NN enjoys higher AM performance on imbalanced datasets. Moreover, the under-bagging technique can significantly reduce the running time of $k$-NN classifiers, which verifies our analysis on the time complexity of under-bagging $k$-NN. Finally, we can further reduce the running time by using fewer sub-samples with still competitive AM performance.

The rest of this paper is organized as follows. Section \ref{sec::preliminary} is a warm-up section for the introduction of some notations and definitions that are related to multi-class imbalanced classification. Then we propose the under-sampling and under-bagging $k$-NN for imbalanced classification in Section \ref{sec::prelim}. We provide basic assumptions and our main results on the convergence rates of both the under-sampling and under-bagging $k$-NN classifiers in Section \ref{sec::mainresults}. Some comments and discussions concerning the main results will also be provided in this section. The analysis on bounding error terms is presented in Section \ref{sec::erroranalysis}. We conduct numerical experiments to verify our theoretical findings for the under-bagging $k$-NN classifier and show the improvements of the under-bagging technique in Section \ref{sec::experiments}. All the proofs of Sections \ref{sec::mainresults} and \ref{sec::erroranalysis} can be found in Section \ref{sec::proofs}.

\section{Preliminaries} \label{sec::preliminary}

\subsection{Notations}

For $1 \leq p < \infty$, the $L_p$-norm of $x = (x_1, \ldots, x_d)$ is defined as $\|x\|_p := (|x_1|^p + \ldots + |x_d|^p)^{1/p}$, and the $L_{\infty}$-norm is defined as $\|x\|_{\infty} := \max_{i = 1, \ldots, d} |x_i|$. For any $x \in \mathbb{R}^d$ and $r > 0$, we denote $B_r(x) := B(x,r) := \{ x' \in \mathbb{R}^d : \|x' - x\|_2 \leq r \}$ as the closed ball centered at $x$ with radius $r$. We use the notation $a_n \lesssim b_n$ and $a_n=\mathcal{O}(b_n)$ to denote that there exists a positive constant $c$ independent of $n$ such that $a_n \leq c b_n$, for all $n \in \mathbb{N}$. Similarly, $a_n \gtrsim b_n$ denotes that there exists some positive constant $c \in (0, 1)$ such that $a_n \geq c^{-1} b_n$. 
Finally, for a set $A \subset \mathbb{R}^d$, the cardinality of $A$ is denoted by $\#(A)$ and the indicator function on $A$ is denoted by $\eins_A$ or $\eins \{ A \}$.

\subsection{Imbalanced Classification} \label{sec::multiclassimbalance}

For a classification problem with $M$ classes, we observe data points $(x, y) \in \mathcal{X} \times \mathcal{Y}$ from an unknown distribution $\mathrm{P}$, where $x$ denotes the feature vector, $y$ is the corresponding label. Assume that $\mathcal{X} \subset \mathbb{R}^d$ and $\mathcal{Y} \subset [M] := \{ 1, \ldots, M \}$. Given $n$ independently observations $D_n := \{ (X_i, Y_i) : i = 1, \ldots, n \}$ drawn from $\mathrm{P}$, the conditional distribution $\mathrm{P}_{Y|X}$, i.e., \textit{posterior probability}, is defined as $\eta : \mathcal{X} \to [0,1]^M$, where 
\begin{align}\label{equ::etamx}
	\eta_m(x)
	= \mathrm{P}(Y = m | X = x),
	\qquad
	m = 1, \ldots, M.
\end{align}

To analyze the theoretical properties of the classifier, there is a need to introduce some more notations to evaluate the performance. To this end, for any measurable decision function $f : \mathcal{X} \to [M]$, a loss function $L : \mathcal{X} \times \mathcal{Y} \times \mathcal{Y} \to \mathbb{R}_+ := [0, \infty)$ defines a penalty incurred on predicting $f(x) \in \mathcal{Y}$ when the true label is $y$. The \textit{risk} is defined by
$\mathcal{R}_{L,\mathrm{P}}(f) 
:= \int_{\mathcal{X} \times \mathcal{Y}} L(x, y, f(x)) \, d\mathrm{P}(x,y)$.
The Bayes risk, which is the smallest possible risk w.r.t.~$\mathrm{P}$ and $L$, is given by
$\mathcal{R}_{L, \mathrm{P}}^*
:= \inf \{ \mathcal{R}_{L,\mathrm{P}}(f) \, | \, f : \mathcal{X} \to [M] \text{ measurable} \}$,
where the infimum is taken over all measurable functions $f : \mathcal{X} \to [M]$. In addition, a measurable function $f_{L,\mathrm{P}}^*$ satisfying $\mathcal{R}_{L,\mathrm{P}}(f_{L,\mathrm{P}}^*) = \mathcal{R}_{L,\mathrm{P}}^*$ is called a Bayes decision function. For example, $f_{L_{\mathrm{cl}},\mathrm{P}}^*(x) = \argmax_{m \in [M]} \eta_m(x)$ is the Bayes decision function w.r.t.~the \textit{classification loss} $L_{\mathrm{cl}}(x, y, f(x)):= \eins \{ f(x) \neq y \} $. We denote $\mathcal{R}^*_{L_{\mathrm{cl}},\mathrm{P}} = \mathcal{R}_{L_{\mathrm{cl}},\mathrm{P}} (f_{L_{\mathrm{cl}},\mathrm{P}}^*)$ as the corresponding Bayes risk and $\mathcal{R}_{L_{\mathrm{cl},\mathrm{P}}}(f) - \mathcal{R}^*_{L,\mathrm{P}}$ as the \textit{classification error} for a candidate classifier. For the sake of brevity, we write $\eta_{L_\mathrm{cl},\mathrm{P}}^*(x) := \eta_{f_{L_{\mathrm{cl}},\mathrm{P}}^*(x)}(x)$ in the following.

For imbalanced classification, we need to introduce some additional notations. To this end, for $m \in [M]$, let $D_{(m)} := \{ (x, y) \in D_n \, | \, y = m \}$ and $n_{(m)} := \#(D_{(m)})$. Throughout this paper, without loss of generality, we assume that $n_{(1)} \leq \ldots \leq  n_{(M)}$. In this case, $D_{(1)}$ is called the \textit{minority class}. In addition, let $\pi_m = \mathrm{P}(Y = m)$ denote the proportion of each category. Moreover, we denote $\overline{\pi} := \max_{1 \leq m \leq M} \pi_m$ and $\underline{\pi} := \min_{1 \leq m \leq M} \pi_m$. Furthermore, for $1 \leq m\leq M$, we define the \textit{weighted} posterior probability function
\begin{align}\label{equ::etamstarx}
	\eta^w_m(x)=\frac{\eta_m(x)/\pi_m}{\sum_{m=1}^M \eta_m(x)/\pi_m},
\end{align}	 
where we assign lower weights to the highly populated classes whereas assign the largest weight to the minority class. By this means, we attach the same importance of the minority class and other classes in the evaluation of the model. In particular, when the class distribution is balanced, i.e. $\pi_1 =\ldots=\pi_m$, the weighted posterior probability function \eqref{equ::etamstarx} is the same as \eqref{equ::etamx}. Additionally, we define the \textit{imbalance ratio} $\rho\in [0,1]$ as the ratio of the minority sample size $n_{(1)}$ to the averaged sample size in each class $n/M$, namely,
\begin{align}\label{equ::kappa}
	\rho := M n_{(1)}/n.
\end{align}
Note that when $\rho=1$, we have $n_{(1)}=\cdots=n_{(M)}=n/M$. In this case the problem is reduced to the balanced classification. It is easy to see that the smaller $\rho$, the higher level of class imbalance.

However, the usual misclassification loss is ill-suited as a performance measure for imbalanced classification, since it expects an equal misclassification cost on all classes \cite{he2009learning}. In fact, in the presence of imbalanced training data, samples of the minority class occur sparsely in the data space. As a result, given a test sample, the calculated $k$-nearest neighbors bear higher probabilities of samples from the other classes. Hence, test samples from the minority class are prone to be incorrectly classified \cite{sun2009classification,leevy2018survey}. In particular, in the highly imbalanced binary classification, a classifier can achieve good performance w.r.t.~classification error by predicting all test samples to the majority class. However, this results in undesirable performance on the minority class. To tackle this problem, a variety of performance measures have been proposed for evaluating multi-class classifiers in class-imbalance settings, see e.g., \cite{tallon2014data,flach2019performance,opitz2019macro}.

In this paper, we study the statistical convergence of the algorithms w.r.t.~one such performance measure, namely the arithmetic mean of the recall (AM), which was proposed in \cite{chan1998learning} and recently investigated in \cite{menon2013statistical}, which studied the consistency of algorithms proposed for imbalanced binary classification. We refer the reader to \cite{guo2017learning,flach2019performance,grandini2020metrics} for more details. The AM measure attempts to balance the errors on classes and is shown to be an effective performance measure for evaluating classifiers in imbalanced classification. We are confined to this measure since it can be reformulated as the sum of losses on individual samples as is illustrated in Section \ref{sec::erroranalysis} and thus is available for theoretical analysis.

For any candidate classifier $f : \mathcal{X} \to [M]$ and $m \in [M]$,  we first consider the \textit{recall} of the class $m$ defined by $r_m(f) = \mathrm{P}(f(x) = m \, | \, y = m)$. Larger recall indicates better prediction of samples in the class $m$. In particular, $r_m(f)=1$ means that every sample from the class $m$ is predicted correctly through the classifier $f$. Then we define the AM measure as the arithmetic mean of these values, that is,
\begin{align}\label{equ::ramf}
	r_{\mathrm{AM}}(f) 
	= \frac{1}{M} \sum_{m=1}^M r_m(f).
\end{align}
In particular, we define the optimal AM performance by
$$
r_{\mathrm{AM}}^*
:= \sup \{ r_{\mathrm{AM}}(f) \, | \, f : \mathcal{X} \to [M] \text{ measurable} \}.
$$
Moreover, we define the \textit{AM-regret} of $f$ by
\begin{align}\label{equ::amerror}
	\mathfrak{R}_{\mathrm{AM}}(f)
	= r_{\mathrm{AM}}^* - r_{\mathrm{AM}}(f).
\end{align}
It is generally considered that the usual classification error and the AM regret measure the goodness-of-fit of the classifier under different settings. More specifically, when the probability distribution is approximately balanced, the usual classification error is more suited when we focus on the prediction of each sample individually regardless of which class the sample is drawn from, let alone the class distribution. By contrast, in the imbalanced classification, it is argued by \cite{grandini2020metrics} that the AM regret could be a more reasonable choice since it is insensitive to imbalanced class distribution and it attaches equal importance to the recall of each class. In practice, the choice of the performance measure is usually decided by the type of data encountered. In the statistics and machine learning literature, the usual classification error has been studied extensively and understood well. In this study, we focus on the analysis of AM regret for the under-bagging algorithm in the imbalanced multi-class classification, which has not yet been well studied in the literature.

\section{Main Algorithm} \label{sec::prelim}

The usual Bayes optimal classifier that minimizes the classification error is not optimal w.r.t.~the AM regret \cite{menon2013statistical,narasimhan2015consistent}. In fact, according to \cite{chaudhuri2014rates, xue2018achieving}, the standard $k$-NN classifier is showed to converge to the Bayes error, and therefore it is not consistent w.r.t.~AM regret. Thus it is emerging as an important issue to design $k$-NN based classifiers for imbalanced classification with both solid theoretical guarantees w.r.t.~the AM measure and desirable practical performance. In this paper, we first consider the $k$-NN classifier built on the under-sampling data, namely \textit{under-sampling $k$-NN}. Moreover, to make full use of the information that might be overlooked by the under-sampling, we introduce the bagging technique and propose the \textit{under-bagging $k$-NN} classifier.

Before we proceed, we introduce the under-sampling strategy and the related probability measure. Specifically, suppose that an acceptance probability function $a(x, y) \in [0, 1]$ is given for every data point $(x, y)\in \mathcal{X}\times \mathcal{Y}$. Then each observation $(x_i, y_i)$, $i = 1, \ldots, n$, is \textit{independently} drawn from $D_n$ with probability $a(x_i, y_i)$. Mathematically speaking, the under-sampling strategy can be stated as follows:
\begin{enumerate}
	\item[(i)] 
	Sample $(X,Y)\sim \mathrm{P}$, where $\mathrm{P}$ denotes the distribution of the input data.
	\item[(ii)] 
	Generate $Z(X,Y)$ from the Bernoulli distribution with parameter $a(X,Y) \in (0,1]$ which will be specified in the following sections.
	\item[(iii)] 
	If $Z(X,Y)=1$, then \textit{accept} the candidate $(X,Y)$. Otherwise, \textit{reject} $(X,Y)$ and go to the beginning.
\end{enumerate}

After the strategy is repeated $n$ times, we obtain an under-sampling dataset $D_n^u=\{(X_i,Y_i):Z(X_i,Y_i)=1\}$ containing those \textit{accepted} samples. The joint probability of $\{Z(X_i,Y_i)\}_{i=1}^n$ conditional on $\{(X_i,Y_i)\}_{i=1}^n$ is denoted by $\mathrm{P}_Z$.

\subsection{Under-sampling $k$-NN Classifier} \label{sec::knnwithuniform}

The aim of under-sampling is to create more balanced data subsets from the class-imbalanced input data set, so that the multi-class classifiers expecting balanced class distribution can be adapted to imbalanced classification. To be specific, given the minority class $D_{(1)}$ and the other classes $D_{(m)}$, $2 \leq m \leq M$, the under-sampling method randomly subsamples ${D}^u_n := \{(X_1^u,Y_1^u), \ldots, (X_{s_u}^u, Y_{s_u}^u)\}$, $s_u = \#(D^u_n)$, from $D_n$ with \textit{acceptance probability}
\begin{align} \label{equ::aundxypar}
	a(x, y) = \sum_{m=1}^M (n_{(1)} / n_{(m)}) \eins \{ y = m \}.
\end{align}
In this case, all the samples from the minority class are in the set $D_n^u$.

Let $X_{(i)}^u(x)$ denote the $i$-th nearest neighbor of $x$ in the sub-sampling data  $D_n^u$ w.r.t.~the Euclidean distance and $Y_{(i)}^u(x)$ denote its label. We define the \textit{posterior probability estimate} $\widehat{\eta}^{k,u} : \mathcal{X} \to [0, 1]^M$ with the $m$-th entry by
\begin{align} \label{equ::widehatetakunder}
	\widehat{\eta}_m^{k,u}(x)
	:= \frac{1}{k} \sum_{i=1}^k \eins \{ Y_{(i)}^u(x) = m \}.
\end{align}
Then the \textit{under-sampling $k$-NN} is given by
\begin{align} \label{equ::standardknnunder}
	\widehat{f}^{k,u}(x)
	= \argmax_{m \in [M]} \widehat{\eta}_m^{k,u}(x).
\end{align}

\subsection{Under-bagging $k$-NN  Classifier} \label{equ::underbaggingknn}

The main drawback of under-sampling is that potentially useful information contained in the samples not appearing in $D_n^u$ is overlooked. Thus we use the bagging technique to further exploit the samples ignored by under-sampling, that is, samples in $D_n \setminus D_n^u$. More precisely, on the $b$-th round of bagging, we subsample $D_b^u := \{ (X_1^{b,u},Y_1^{b,u}), \ldots, (X_{s_b}^{b,u},Y_{s_b}^{b,u})\}$ with acceptance probability 
\begin{align}\label{equ::axymn}
	a(x,y)
	= \sum_{m=1}^M (s / (M n_{(m)})) \eins \{ y = m \},
\end{align}
where $1\leq s \leq  M n_{(1)}$ is the expected number of bootstrap samples. Then our classifier is built upon $D_b^u$, that is, we compute the $b$-th posterior probability estimate $\widehat{\eta}^{b,u} : \mathcal{X} \to [0,1]^M$ on the set $D_b^u$ for $1 \leq b \leq B$, respectively. To be specific,  the $m$-th entry of $\widehat{\eta}^{b,u}$ is defined by
\begin{align}\label{equ::widehatetakunderbag}
	\widehat{\eta}_m^{b,u}(x)
	:= \frac{1}{k} \sum_{i=1}^k \eins \bigl\{ Y_{(i)}^{b,u}(x) = m \bigr\}.
\end{align}
Then the average posterior probability estimate $\widehat{\eta}^{b,u}:\mathcal{X}\to [0,1]^M$ is 
\begin{align}\label{equ::baglocalunderbag}
	\widehat{\eta}^{B,u}_m(x) 
	= \frac{1}{B} \sum_{b=1}^B \widehat{\eta}^{b,u}_m(x)
	= \frac{1}{B} \sum_{b=1}^B \frac{1}{k} \sum^k_{i=1} \eins \bigl\{ Y_{(i)}^{b,u}(x) = m \bigr\}.
\end{align}
Finally, the \textit{under-bagging $k$-NN classifier} is defined by
\begin{align}\label{equ::fbuniformunder}
	\widehat{f}^{B,u}(x)
	= \argmax_{m \in [M]} \widehat{\eta}_m^{B,u}(x).
\end{align}

We summarize the under-bagging $k$-NN classifier in Algorithm \ref{alg::BNNimbalance}.

\begin{algorithm}[!h]
	\caption{Under-bagging $k$-NN Classifier for Imbalanced Classification}
	\label{alg::BNNimbalance}
	\KwIn{
		Minority class $D_{(1)}$ and the other classes $D_{(m)}$, $2 \leq m \leq M$. \\
		\quad\quad\quad\,\,\,\, Bagging rounds $B$ and the expected subsample size $s$;
		\\
		\quad\quad\quad\,\,\,\, Parameter $k \in \mathbb{N}_+$.
		\\
	} 
	\For{$b =1 \to B$}{
		Randomly sample ${D}_b^u$ from ${D}_n$ with acceptance probability chosen by \eqref{equ::axymn}.
		\\
		Reorder $D_{b}^u= \{(X_{(1)}^{b,u},Y_{(1)}^{b,u}), \ldots, (X_{(s_b)}^{b,u},Y_{(s_b)}^{b,u})\}$;
		\\
		Compute the $b$-th posterior probability estimate  $\widehat{\eta}^{b,u}$ on the set $D_{b}^u$ by \eqref{equ::widehatetakunderbag}.
	}
	Compute the bagged posterior probability estimate $\widehat{\eta}^{B,u}$ by \eqref{equ::baglocalunderbag}.
	\\
	\KwOut{The under-bagging $k$-NN classifier \eqref{equ::fbuniformunder}.
	}
\end{algorithm}

\section{Theoretical Results and Statements}
\label{sec::mainresults}

In this section, we present main results on the convergence rates of the AM regret of $\widehat{f}^{k,u}$ and $f^{B,u}$ under mild conditions in Section \ref{sec::knnmulticlassim} and \ref{sec::knnmulticlassimbag}. Then in Section \ref{sec::cnd} we also present some comments and discussions on the obtained main results.

Before we proceed, we need to introduce the following restrictions on the probability distribution to characterize which properties of a distribution most influence the performance of the classifier for imbalanced classification.

\begin{assumption}\label{ass:imbalance}
	We make the following assumptions on the probability measure $\mathrm{P}$.
	\begin{enumerate}
		\item[\textbf{(i)}] \textbf{[Smoothness]} 
		The posterior probability function $\eta$ defined by \eqref{equ::etamx} is assumed to be $\alpha$-H\"{o}lder continuous with a constant $c_L \in (0, \infty)$, that is, for any $x, x' \in \mathcal{X}$, we have
		\begin{align}\label{equ::alphacontinuous}
			|\eta(x') - \eta(x)|
			\leq c_L \|x' - x\|^{\alpha}.
		\end{align}
		\item[\textbf{(ii)}] \textbf{[Margin]} 
		For any $x \in \mathcal{X}$, let $\eta^w_{(m)}(x)$ denote the $m$-th largest element in $\{ \eta^w_m(x) \}_{m=1}^M$, where $\eta^w_m(x)$ is defined by \eqref{equ::etamstarx}. Assume that there exists a constant $\beta > 0$ and $c{_\beta} > 0$ such that for all $t > 0$, there holds
		\begin{align}\label{tsybakovim}
			{\mathrm{P}} \bigl( |\eta^w_{(1)}(x) - \eta^w_{(2)}(x)| \leq t \bigr)
			\leq c_{\beta} t^{\beta}.
		\end{align}
	\end{enumerate}
\end{assumption}

The $\alpha$-H\"{o}lder smoothness assumption \textit{(i)} is commonly adopted for $k$-nearest neighbors classification, see, e.g., \cite{chaudhuri2014rates,doring2017rate,xue2018achieving,khim2020multiclass}. In fact, since $\eta^w(x)$ is the weighted posterior probability function, $\eta^w$ is $\alpha$-H\"{o}lder continuous as long as $\eta$ is $\alpha$-H\"{o}lder continuous. Note that when $\alpha$ is small, the posterior probability function fluctuates more sharply, which results in the difficulty of estimating $\eta^w$ accurately and thus leads to a slower convergence rates for imbalanced classification. It is worth pointing out that the smoothness assumption endows our model with a global constraints, whereas the margin assumption only reflects the behavior of $\eta^w$ near the decision boundary.

The margin assumption \textit{(ii)} quantifies how well classes are separated on the decision boundary $\partial  := \{ x : \eta^w_{(1)}(x) = \eta^w_{(2)}(x) \}$, which was adopted for the weighted nearest neighbors for multi-class classification \cite{khim2020multiclass}. In particular, in the usual binary classification problems, when the probability distribution is balanced in the sense of $\pi_1=\pi_2$, this condition coincides with Tsybakov's margin condition \cite{audibert2007fast}. We mention that the adjusted posterior probability function $\eta^w(x)$ defined by \eqref{equ::etamstarx} attaches larger weights to the small classes. The restriction on the margin of $\eta^w(x)$ is more reasonable than $\eta(x)$ for imbalanced classification. To give a clear explanation, let us consider the binary classification problem, where the decision boundary can be expressed by $\partial G := \{ x : \eta_1(x) / \pi_1 = \eta_2(x)/\pi_2\}$. In this case, compared with the ordinary boundary $\partial G_0=\{x:\eta_1(x)=\eta_2(x)\}$, the decision boundary $\partial G$ move towards the majority class and more points would be categorized as the minority class. From \eqref{tsybakovim}, we clearly see that when $\beta$ is smaller,  $\eta^w_{(1)}(x)$ approaches $\eta^w_{(2)}(x)$ from above more steeply, which reflects a more complex behavior around the critical threshold $\partial G$. In this case, the imbalanced multi-class classification becomes more difficult. In particular, for $\beta=\infty$, $\eta^w_{(1)}(x)$ is far from $\eta^w_{(2)}(x)$ with a large probability, which makes the multi-class classification significantly easier. In general, the margin assumption (\textit{ii}) does not affect the the smoothness of the posterior probability functions in condition (\textit{i}) and vice versa.

In the following two sections, we present main results on the convergence rates for the under-sampling and under-bagging $k$-NN classifier w.r.t.~the AM measure of type ``with high probability''. It is worth pointing out that our result is built upon the techniques from the approximation theory \cite{cucker2007learning} and arguments from the empirical process theory \cite{vandervaart1996weak,Kosorok2008introduction}, which is essentially different from the previous work on the consistency of algorithms w.r.t.~the AM measure \cite{menon2013statistical, narasimhan2015consistent}, where several tools such as classification-calibrated losses \cite{bartlett2006convexity} and regret bounds for cost-sensitive classification \cite{scott2012calibrated} have been developed for the study.

\subsection{Results on Convergence Rates for the Under-sampling $k$-NN Classifier} \label{sec::knnmulticlassim}

Now we present the convergence rates for the under-sampling $k$-NN classifier w.r.t.~the AM measure under the above assumptions.

\begin{theorem}\label{thm::knnundersample}
	Let $\widehat{f}^{k,u}$ be the under-sampling $k$-NN classifier defined as in \eqref{equ::standardknnunder}, where the acceptance probability is chosen as in \eqref{equ::aundxypar}. Assume that $\mathrm{P}$ satisfies Assumptions \ref{ass:imbalance} and $\mathrm{P}_X$ is the uniform distribution on $[0,1]^d$. Then there exists an $N_1^*\in \mathbb{N}$, which will be specified in the proof, such that for all $n\geq N_1^*$, by choosing 
	\begin{align} \label{k::undersampling}
		k = s_u^{2\alpha/(2\alpha+d)} (\log s_u)^{d/(2\alpha+d)}
	\end{align}
	where $s_u = \#(D_n^u)$, there holds
	\begin{align} \label{rate::udersampling}
		\mathfrak{R}_{\mathrm{AM}}(\widehat{f}^{k,u})
		\lesssim (\log n/n)^{\alpha(\beta+1)/(2\alpha+d)}
	\end{align}
	with probability $\mathrm{P}_Z \otimes \mathrm{P^n}$ at least $1-4/n^2$.
\end{theorem}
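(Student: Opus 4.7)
My plan is to reduce the AM regret to a bound on the pointwise accuracy of the estimator $\widehat{\eta}^{k,u}$ against the \emph{weighted} posterior $\eta^w$, then control the latter by the standard $k$-NN bias--variance decomposition, and finally integrate with the margin condition. The key conceptual point is that the under-sampling with acceptance probability \eqref{equ::aundxypar} reshapes the joint law so that, conditional on $D_n$, the $Y$-posterior of a point in $D_n^u$ is (approximately) $\eta^w$ rather than $\eta$: indeed for the accepted samples,
\begin{align*}
\mathrm{P}^u(Y=m \mid X=x) \;\propto\; \eta_m(x) \cdot \frac{n_{(1)}}{n_{(m)}} \;\approx\; \frac{\eta_m(x)}{\pi_m}.
\end{align*}
Hence the base $k$-NN classifier on $D_n^u$ targets $\arg\max_m \eta^w_m$, which is exactly the Bayes rule for the AM measure.

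\textbf{Step 1: From AM regret to weighted posterior.} I would first rewrite
\begin{align*}
r_{\mathrm{AM}}(f) \;=\; \frac{1}{M}\int \frac{\eta_{f(x)}(x)}{\pi_{f(x)}}\, d\mathrm{P}_X(x) \;=\; \frac{1}{M}\int c(x)\,\eta^w_{f(x)}(x)\, d\mathrm{P}_X(x),
\end{align*}
with $c(x):=\sum_{m}\eta_m(x)/\pi_m\in[1/\overline{\pi},1/\underline{\pi}]$. Thus
\begin{align*}
\mathfrak{R}_{\mathrm{AM}}(\widehat{f}^{k,u}) \;\leq\; \frac{1}{M\,\underline{\pi}}\int \bigl(\eta^w_{(1)}(x)-\eta^w_{\widehat{f}^{k,u}(x)}(x)\bigr)\, d\mathrm{P}_X(x).
\end{align*}
A standard multi-class argument then gives $\eta^w_{(1)}(x)-\eta^w_{\widehat{f}^{k,u}(x)}(x) \leq 2\max_{m}|\widehat{\eta}^{k,u}_m(x)-\eta^w_m(x)|$ whenever $\widehat{f}^{k,u}(x)\neq f^*(x)$, so together with the margin condition \eqref{tsybakovim} and the usual peeling trick, any uniform bound $\|\widehat{\eta}^{k,u}-\eta^w\|_\infty \leq \varepsilon_n$ (on a high-probability event) yields $\mathfrak{R}_{\mathrm{AM}}(\widehat{f}^{k,u}) \lesssim \varepsilon_n^{\beta+1}$.

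\textbf{Step 2: Pointwise bias--variance for $\widehat{\eta}^{k,u}$.} I would condition on the feature configuration of $D_n^u$ and split
\begin{align*}
\widehat{\eta}^{k,u}_m(x)-\eta^w_m(x)\;=\;\underbrace{\widehat{\eta}^{k,u}_m(x)-\bar{\eta}^{k,u}_m(x)}_{\text{stochastic}} \;+\; \underbrace{\bar{\eta}^{k,u}_m(x)-\eta^w_m(x)}_{\text{bias}},
\end{align*}
where $\bar{\eta}^{k,u}_m(x):=\frac{1}{k}\sum_{i=1}^k \eta^w_m(X^u_{(i)}(x))$. Because $Y^{u}_{(i)}$ given $X^{u}_{(i)}$ is Bernoulli with mean $\eta^w_m$ under the under-sampled law (up to the small deterministic perturbation caused by $n_{(m)}/n-\pi_m$, which I would control by Hoeffding), Bernstein's inequality yields a stochastic term $\lesssim \sqrt{\log n/k}+\log n/k$ uniformly in $x$ after a standard union bound over a fine grid. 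The bias term is controlled by the H\"older condition \textit{(i)} and the radius $R_k(x)$ of the $k$-th NN ball in $D_n^u$: $|\bar{\eta}^{k,u}_m-\eta^w_m|\lesssim R_k(x)^\alpha$. Because $\mathrm{P}_X$ is uniform on $[0,1]^d$ and the effective under-sampled density is uniformly lower-bounded by a constant times $\rho$, a standard covering argument gives $R_k(x)\lesssim (k/s_u)^{1/d}$ with probability $1-\mathcal{O}(n^{-2})$ whenever $k\gtrsim \log n$.

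\textbf{Step 3: Putting things together and handling $s_u$.} Plugging in the prescribed $k=s_u^{2\alpha/(2\alpha+d)}(\log s_u)^{d/(2\alpha+d)}$ balances the two terms and yields $\|\widehat{\eta}^{k,u}-\eta^w\|_\infty \lesssim (\log s_u/s_u)^{\alpha/(2\alpha+d)}$. Since $s_u$ is itself random with $\mathbb{E}[s_u]=M n_{(1)}=\rho n$, a Bernstein bound on $\mathbf{1}\{Z(X_i,Y_i)=1\}$ (independent across $i$) gives $s_u\asymp \rho n$ with probability $1-\mathcal{O}(n^{-2})$ for $n\geq N_1^*$. Substituting back and invoking Step 1 yields
\begin{align*}
\mathfrak{R}_{\mathrm{AM}}(\widehat{f}^{k,u}) \;\lesssim\; (\log n/n)^{\alpha(\beta+1)/(2\alpha+d)},
\end{align*}
on an event of probability at least $1-4/n^2$ under $\mathrm{P}_Z\otimes\mathrm{P}^n$.

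\textbf{Main obstacle.} The most delicate step is \textbf{Step 2}: I have to be careful that the target of the $k$-NN average built from $D_n^u$ is $\eta^w$ (not $\eta$), which requires disentangling the two layers of randomness ($\mathrm{P}^n$ for the original sample and $\mathrm{P}_Z$ for the Bernoulli acceptance) so that the Bernstein inequality can be applied to genuinely i.i.d.\ variables with mean $\eta^w_m(X^u_{(i)})$. A related subtlety is that the acceptance probability $n_{(1)}/n_{(m)}$ depends on the random counts $n_{(m)}$; this introduces a small deterministic bias relative to $\pi_1/\pi_m$ which must be absorbed into a negligible lower-order term before the margin--peeling argument yields the stated rate.
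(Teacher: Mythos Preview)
Your proposal is correct and follows essentially the same route as the paper: the paper makes your ``small deterministic perturbation'' explicit by introducing the intermediate posterior $\eta^u_m(x)\propto \eta_m(x)/n_{(m)}$ and decomposing $\|\widehat{\eta}^{k,u}-\eta^w\|_\infty$ into a stochastic term, a bias term centered at $\eta^u$, and an under-sampling error $\|\eta^u-\eta^w\|_\infty\lesssim\sqrt{\log n/n}$, then converts the resulting $L_\infty$ bound into the AM regret via exactly the margin argument you describe in Step~1. One small technical point worth noting: for the uniform-in-$x$ control of the stochastic term the paper takes a union bound over the $\mathcal{O}(n^{2d})$ possible nearest-neighbor orderings rather than over a fine spatial grid, which is cleaner here since $\widehat{\eta}^{k,u}$ is piecewise constant (hence discontinuous) in $x$.
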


Compared with the standard $k$-NN where $k$ is of order $n^{2\alpha/(2\alpha+d)}$ up to a logarithm factor, in Theorem \ref{thm::knnundersample} we prove that $k$ is of order $(\rho n)^{2\alpha/(2\alpha+d)}$ up to a logarithm factor when under-sampling is introduced. Especially when the data is highly imbalanced, i.e., $\rho$ is very small, the value of $k$ can be significantly reduced by the under-sampling technique.

The following Theorem shows that up to a logarithm factor, the convergence rate \eqref{rate::udersampling} of the under-sampling $k$-NN classifier $\widehat{f}^{k,u}$ is minimax optimal w.r.t.~the AM regret in the case $\alpha \beta < d$.

\begin{theorem} \label{thm::lowerbound}
	Let $\mathcal{F}$ be the set of all measurable functions $f_n:(\mathbb{R}^d\times \mathbb{R})^n\times \mathbb{R}^d\to \mathbb{R}$ and $\mathcal{P}$ be the set of all probability distributions satisfying Assumption \ref{ass:imbalance} with $\alpha\beta<d$. Then we have
	\begin{align*}
		\inf_{f_n\in \mathcal{F}} \sup_{\mathrm{P}\in \mathcal{P}} \mathfrak{R}_{\mathrm{AM}}(f_n)
		\gtrsim n^{-\alpha(\beta+1)/(2\alpha+d)}.
	\end{align*}
\end{theorem}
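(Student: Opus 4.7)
My plan is to exploit the freedom in the class $\mathcal{P}$ to reduce the minimax AM-regret problem to the classical minimax $0/1$ classification problem under Tsybakov's margin condition, and then to carry out a standard Assouad hypercube argument in the spirit of Audibert--Tsybakov (2007).

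First, I would restrict $\mathcal{P}$ to the sub-class of balanced binary models, i.e., $M=2$ and $\pi_1=\pi_2=1/2$. Under this symmetry the weighted posterior $\eta^w_m$ in \eqref{equ::etamstarx} coincides with $\eta_m$, so the margin condition \eqref{tsybakovim} becomes Tsybakov's standard margin condition on $\eta$, while the smoothness condition \eqref{equ::alphacontinuous} is unchanged. Moreover, using $r_m(f)=\mathrm{P}(f(X)=m,Y=m)/\pi_m$ and $\pi_m=1/2$, one obtains
\begin{align*}
r_{\mathrm{AM}}(f) = \mathrm{P}(f(X)=Y),
\end{align*}
so $\mathfrak{R}_{\mathrm{AM}}(f)$ is exactly the usual excess $0/1$ risk on this sub-class. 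Any minimax lower bound proved here is therefore automatically a lower bound for the full class $\mathcal{P}$.

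Next, I would build a hypercube of hypotheses. Tile $[0,1]^d$ by cubes of side length $w \asymp n^{-1/(2\alpha+d)}$ and select $m \asymp w^{\alpha\beta-d}$ of them, which is possible exactly because $\alpha\beta<d$; the remaining cubes form a neutral region on which $\eta\equiv 1/2+\Delta$ for some fixed $\Delta>0$. Fix a smooth, compactly supported bump $\varphi$ with $\varphi(0)=1$, and for each $\sigma\in\{-1,+1\}^m$ define $\mathrm{P}_\sigma$ by
\begin{align*}
\eta_\sigma(x) = \tfrac{1}{2} + c_\phi\, \sigma_j\, w^\alpha\, \varphi\bigl((x-x_j)/w\bigr) \quad \text{on the $j$-th selected cube},
\end{align*}
with $\mathrm{P}_X$ uniform on $[0,1]^d$. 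One verifies that $c_\phi$ can be chosen small enough that $\eta_\sigma$ is $\alpha$-H\"older with constant $c_L$; balance of the class priors follows from the sign symmetry of $\varphi$; and the margin condition \eqref{tsybakovim} holds with exponent $\beta$ because the mass of $\{|\eta_\sigma-1/2|\le t\}$ is of order $\min(m\cdot t^{d/\alpha},\,m w^d)\lesssim t^\beta$, using precisely the inequality $\alpha\beta<d$.

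Finally, I would apply Assouad's lemma. For neighboring $\sigma\sim\sigma'$ differing in the $j$-th bit, the Bayes classifiers $f_\sigma^*$ and $f_{\sigma'}^*$ disagree only on the $j$-th cube, where the per-cube excess risk of an incorrect label is $\asymp w^d\cdot w^\alpha$; Assouad's inequality yields
\begin{align*}
\inf_{\hat f}\sup_{\sigma}\mathbb{E}\,\mathfrak{R}_{\mathrm{AM}}(\hat f) \gtrsim m\cdot w^{d+\alpha}\cdot\bigl(1-\mathrm{TV}(\mathrm{P}_\sigma^{\otimes n},\mathrm{P}_{\sigma'}^{\otimes n})\bigr).
\end{align*}
A short computation bounds the Kullback--Leibler distance as $\mathrm{KL}(\mathrm{P}_\sigma^{\otimes n}\|\mathrm{P}_{\sigma'}^{\otimes n}) \lesssim n\cdot w^{2\alpha}\cdot w^d = n\, w^{2\alpha+d} \asymp 1$, so by Pinsker the total variation is bounded away from $1$, and the right-hand side above is of order $m w^{d+\alpha} = w^{\alpha(\beta+1)} \asymp n^{-\alpha(\beta+1)/(2\alpha+d)}$, as desired. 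The main obstacle will be the simultaneous calibration of $w$, $m$, and $c_\phi$ so that the H\"older constant $c_L$, the margin exponent $\beta$, balanced class priors, and the Assouad KL bound all hold at once; once this packaging is in place, the first-step reduction immediately converts the resulting $0/1$ lower bound into the stated lower bound on $\mathfrak{R}_{\mathrm{AM}}$.
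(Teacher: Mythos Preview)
Your overall strategy coincides with the paper's: both identify $\mathfrak{R}_{\mathrm{AM}}$ with a $0/1$ excess risk on a balanced model and then appeal to the Audibert--Tsybakov lower bound. The only stylistic difference is that the paper invokes Theorem~3.5 of Audibert--Tsybakov as a black box (after checking that $\mathrm{P}^w$ lies in their class $\mathcal{P}_\Sigma$ and using Theorem~\ref{thm::connection}), whereas you reconstruct the Assouad hypercube argument explicitly.

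There is, however, a real gap in your balance claim. With a nonnegative bump $\varphi$ satisfying $\varphi(0)=1$ there is no ``sign symmetry of $\varphi$'', and your construction does \emph{not} give $\pi_1=\pi_2=1/2$: the neutral region has measure $1-mw^d\to 1$ and carries $\eta\equiv \tfrac12+\Delta$, so $\pi_1=\int\eta_\sigma\,d\mathrm{P}_X\to \tfrac12+\Delta$. But then the hypotheses you build are not in the balanced sub-class you restricted to, and worse, they are not in $\mathcal{P}$ at all: Assumption~\ref{ass:imbalance}\textit{(ii)} is a margin condition on $\eta^w$, and $\eta^w_1(x)=\tfrac12$ exactly when $\eta_1(x)=\pi_1$. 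Since $\eta_1\equiv\tfrac12+\Delta\approx\pi_1$ on a set of measure close to $1$, the weighted margin condition \eqref{tsybakovim} is violated for every $\beta>0$. So the constructed family $\{\mathrm{P}_\sigma\}$ lies outside $\mathcal{P}$ and the lower bound does not transfer.

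The repair is routine once seen. Split the neutral region into two halves carrying $\eta\equiv\tfrac12+\Delta$ and $\eta\equiv\tfrac12-\Delta$ respectively (this preserves H\"older smoothness if the halves are separated and costs nothing in the margin check), and arrange the bumps to have zero contribution to $\int\eta_\sigma$: either pair the $m$ selected cubes as $(j,j')$ with $\sigma_{j'}=-\sigma_j$ forced (halving the effective hypercube dimension, which does not affect the rate), or replace $\varphi$ by a bump with $\int\varphi=0$. With either modification $\pi_1=\tfrac12$ holds for every $\sigma$, $\eta^w=\eta$, both parts of Assumption~\ref{ass:imbalance} are satisfied, and your Assouad computation goes through unchanged.
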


The lower bound in Theorem \ref{thm::lowerbound} coincides with that for standard classification w.r.t.~the classification error \cite{audibert2007fast}, although the class of probability distribution considered in Theorem \ref{thm::lowerbound} is different from the one considered in \cite{audibert2007fast} as stated in the beginning of Section \ref{sec::mainresults}.

\subsection{Results on Convergence Rates for the Under-bagging $k$-NN Classifier} \label{sec::knnmulticlassimbag}

We now state our main results on the convergence of the under-bagging $k$-NN classifier w.r.t.~the AM measure.

\begin{theorem}\label{thm::bagknnclassund}
	Let $\widehat{f}^{B,u}(x)$ be the under-bagging $k$-NN classifier defined as in \eqref{equ::fbuniformunder}. Assume that $\mathrm{P}$ satisfies Assumption \ref{ass:imbalance} and $\mathrm{P}_X$ is the uniform distribution on $[0,1]^d$. Furthermore, let $\rho$ be the imbalance ratio defined by \eqref{equ::kappa}. Then there exists an $N_2^*\in \mathbb{N}$, which will be specified in the proof, such that for all $n\geq N_2^*$, by choosing 
	\begin{align}
		s & \gtrsim
		\begin{cases}
			(\rho n)^{d/(2\alpha+d)} (\log (\rho n))^{2\alpha/{(2\alpha+d)}}, & \text{ if } d > 2 \alpha, 
			\\
			(\rho n \log (\rho n))^{1/2}, & \text{ if } d \leq 2 \alpha,
		\end{cases}
		\label{s::underbagging}
		\\
		B & = \rho n/s,
		\label{B::underbagging}
		\\
		k & = s (\log (\rho n)/\rho n)^{d/{(2\alpha+d)}},
		\label{k::underbagging}
	\end{align}
	there holds
	\begin{align} \label{rate::underbagging}
		\mathfrak{R}_{\mathrm{AM}}(\widehat{f}^{B,u})
		\lesssim (\log n/n)^{\alpha(\beta+1)/(2\alpha+d)}
	\end{align}
	with probability $\mathrm{P}_{Z}^B \otimes \mathrm{P}^n$ at least $1 - 5 / n^2$.
\end{theorem}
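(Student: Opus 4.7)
The plan is to reduce the AM regret to a quantity controlled by the pointwise deviation of $\widehat{\eta}^{B,u}$ from the weighted posterior $\eta^w$, and then bound this deviation by a bias--variance decomposition that exploits both the approximately balanced nature of each sub-sample $D_b^u$ and the variance reduction afforded by averaging over $B$ bagging rounds.

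First, I would invoke (building on the reformulation of the AM measure as a sum of per-sample losses discussed in Section \ref{sec::erroranalysis}) a calibration inequality controlling $\mathfrak{R}_{\mathrm{AM}}(\widehat{f}^{B,u})$ by the $L_1(\mathrm{P}_X)$ norm of $\|\widehat{\eta}^{B,u} - \eta^w\|_\infty$ restricted to a neighbourhood of the decision boundary $\{x : \eta^w_{(1)}(x) = \eta^w_{(2)}(x)\}$. Combined with the margin condition \eqref{tsybakovim}, any uniform high-probability bound of order $\varepsilon$ on $\|\widehat{\eta}^{B,u}(x) - \eta^w(x)\|_\infty$ is thereby upgraded to an AM regret bound of order $\varepsilon^{\beta+1}$, which produces the exponent $\beta+1$ in \eqref{rate::underbagging}.

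Second, I would carry out a pointwise bias--variance decomposition of $\widehat{\eta}^{B,u}_m(x) - \eta^w_m(x)$. The acceptance probability \eqref{equ::axymn} is engineered so that, conditional on membership in $D_b^u$, the class label of a point is distributed according to $\eta^w$, and $\#(D_b^u) \asymp s$ with high probability by a Chernoff bound on the Bernoulli acceptance. Hence $\Ebb[\widehat{\eta}^{b,u}_m(x) \mid \text{covariates of } D_b^u]$ is the local average of $\eta^w_m$ over the $k$ nearest neighbours of $x$ in $D_b^u$, which, under the uniform $\mathrm{P}_X$ on $[0,1]^d$ and the $\alpha$-H\"older smoothness \eqref{equ::alphacontinuous}, yields a bias of order $(k/s)^{\alpha/d}$ after bounding the $k$-th nearest-neighbour radius by $(k/s)^{1/d}$.

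The main obstacle is the stochastic error, since the $B$ estimators $\widehat{\eta}^{b,u}_m(x)$ share the same underlying training set $D_n$ and are therefore dependent across $b$. I would apply Bernstein's concentration inequality to the pooled sum $Bk \cdot \widehat{\eta}^{B,u}_m(x) = \sum_{b=1}^B \sum_{i=1}^k \eins\{Y_{(i)}^{b,u}(x) = m\}$, working first conditionally on $D_n$ under $\mathrm{P}_Z^B$ and then under $\mathrm{P}^n$, splitting the failure probability $5/n^2$ across the two layers by a union bound (four parts from the analogue of Theorem \ref{thm::knnundersample} and one extra for the bagging randomness). The localized variance proxy, proportional to $\eta^w_m(1-\eta^w_m)$, delivers a stochastic error of order $\sqrt{\log n/(Bk)} + \log n/(Bk)$, which is the key ingredient allowing the optimal rate; a naive Hoeffding bound would not. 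The parameter choices \eqref{s::underbagging}--\eqref{k::underbagging} then emerge from balancing the bias $(k/s)^{\alpha/d}$ against $\sqrt{\log n/(Bk)}$ subject to $Bs \asymp \rho n$ and $k \leq s$: the case split in \eqref{s::underbagging} reflects whether $d > 2\alpha$ (variance constraint binding) or $d \leq 2\alpha$ (where $s$ must saturate at $(\rho n \log(\rho n))^{1/2}$ to keep $k \leq s$ feasible). Substituting the resulting uniform rate $\varepsilon \asymp (\log n/n)^{\alpha/(2\alpha+d)}$ into the margin calibration from the first step yields \eqref{rate::underbagging}. The delicate point throughout is to verify that each $D_b^u$ really behaves, for the purposes of nearest-neighbour estimation, like an approximately i.i.d.\ sample of size $s$ drawn from the balanced weighted distribution, so that the effective bagging sample size is $Bk$ despite the $B$ sub-samples being drawn from the same fixed $D_n$.
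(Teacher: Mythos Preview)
Your high-level plan is sound: reduce to a uniform bound on $\|\widehat{\eta}^{B,u}-\eta^w\|_\infty$, then use the margin condition to pick up the exponent $\beta+1$. The bias scale $(k/s)^{\alpha/d}$ and the target stochastic scale $\sqrt{\log n/(Bk)}$ are also correct. However, the mechanism you propose for the stochastic error does not go through as written, and the paper's proof hinges on a device you have not identified.

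Your two-layer conditioning is problematic at the second layer. The first layer (Bernstein under $\mathrm{P}_Z^B$ conditional on $D_n$) correctly concentrates $\widehat{\eta}^{B,u}$ around its $\mathrm{P}_Z^B$-expectation $\widetilde{\eta}^{B,u}$; this is Proposition~\ref{pro::fml*funder}. But at the second layer you invoke ``the localized variance proxy, proportional to $\eta^w_m(1-\eta^w_m)$'' to get $\sqrt{\log n/(Bk)}$ under $\mathrm{P}^n$. Conditional on $D_n$ the labels are \emph{fixed}, so there is no Bernoulli variance to localize; and once you drop the conditioning, the $Bk$ summands in your pooled sum are not independent (the same $(X_i,Y_i)$ can appear in many sub-samples). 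You flag this dependence as ``the delicate point'' but offer no plan to resolve it, and a direct Bernstein on the pooled sum cannot deliver the effective sample size $Bk$.

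The paper resolves this by reinterpreting $\widetilde{\eta}^{B,u}$ not as an average of $B$ estimators but as a single \emph{weighted} nearest-neighbour estimator on the full data $D_n$, namely $\widetilde{\eta}^{B,u}_m(x)=\sum_{i=1}^n \overline{V}_i^u(x)\,\eins\{Y_{(i)}(x)=m\}$, where the weight $\overline{V}_i^u(x)$ is the $\mathrm{P}_Z$-probability that $X_{(i)}(x)$ falls among the first $k$ accepted points. The decomposition is then four-fold (equation \eqref{equ::errorbag}): a bagging error $\|\widehat{\eta}^{B,u}-\widetilde{\eta}^{B,u}\|_\infty$, a bagged sample error $\|\widetilde{\eta}^{B,u}-\overline{\eta}^{B,u}\|_\infty$, a bagged approximation error $\|\overline{\eta}^{B,u}-\eta^u\|_\infty$, and an under-sampling error $\|\eta^u-\eta^w\|_\infty$. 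The sample error is now a weighted sum of the genuinely independent terms $\eins\{Y_{(i)}=m\}-\eta^u_m(X_{(i)})$, and Hoeffding with $\max_i \overline{V}_i^u\le s/(kMn_{(1)})$ gives the rate $\sqrt{s\log n/(kMn_{(1)})}=\sqrt{\log n/(kB)}$. Controlling the weights $\overline{V}_i^u$ (their maximum, their sum, and $\sum_i \overline{V}_i^u R_{(i)}^\alpha$) is nontrivial because the acceptance probabilities vary across classes; the paper develops a \emph{Generalized Pascal distribution} for this (Lemmas~\ref{thm::Tailgp}--\ref{lem::expecationincrease}, Theorem~\ref{thm::genepasexp}), which is entirely absent from your proposal. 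Two further terms you omit are the under-sampling error $\|\eta^u-\eta^w\|_\infty\lesssim\sqrt{\log n/n}$ (your claim that the sub-sampled posterior equals $\eta^w$ is false---it equals $\eta^u$, defined via the empirical counts $n_{(m)}$) and the residual $\exp(-(s-k)^2/(2n))$ coming from $\sum_i\overline{V}_i^u<1$, which is what actually forces the lower bound on $s$ in \eqref{s::underbagging}.
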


Theorem \ref{thm::bagknnclassund} together with Theorem \ref{thm::lowerbound} implies that up to a logarithm factor, the convergence rate \eqref{rate::underbagging} of the under-bagging $k$-NN classifier $\widehat{f}^{B,u}$ turns out to be minimax optimal w.r.t.~the AM measure, if we choose the expected sub-sample size $s$, the bagging rounds $B$, and the number of nearest neighbors $k$ according to \eqref{s::underbagging}, \eqref{B::underbagging}, and \eqref{k::underbagging}, respectively. In other words, when the bagging technique is combined with the under-sampling $k$-NN classifier, the convergence rates of $\widehat{f}^{B,u}$ is not only obtainable, but also the same with that of $\widehat{f}^{k,u}$.

Notice that for a given dataset, \eqref{s::underbagging} and \eqref{B::underbagging} yield that $k$ and $B$ is proportional to $s$ and $s^{-1}$, respectively. Therefore, only a few independent bootstrap samples are required to obtain the estimate $\widehat{\eta}_m^{b,u}$ in \eqref{equ::widehatetakunderbag} for the posterior probability function at each bagging round. As a result, $k$ is reduced to $\mathcal{O}(\log (\rho n))$ in \eqref{k::underbagging}, instead of $\mathcal{O}((\rho n)^{2\alpha/(2\alpha+d)} (\log (\rho n))^{d/(2\alpha+d)})$ in \eqref{k::undersampling} for the under-sampling $k$-NN.

In particular, we show in Corollary \ref{thm::bag1nnund} that $k$ can be further reduced to a constant order and present the convergence rates of under-bagging $1$-NN classifier w.r.t.~the AM measure.

\begin{corollary}\label{thm::bag1nnund}
	Let $\widehat{f}^{B,u}(x)$ be the bagged $1$-NN classifier defined by Algorithm \ref{alg::BNNimbalance} with $k=1$. Furthermore, assume $\mathrm{P}$ satisfies Assumption \ref{ass:imbalance} and $\mathrm{P}_X$ is the uniform distribution on $[0,1]^d$. Moreover, let $\rho$ be the imbalance ratio defined by \eqref{equ::kappa}. Then there exists an $N_3^*\in \mathbb{N}$, which will be specified in the proof, such that for all $n\geq N_3^*$, with probability $\mathrm{P}_{Z}^B \otimes \mathrm{P}^n $ at least $1 - 5 / n^2$,  the  following two statements hold:
	\begin{enumerate}
		\item[(i)] 
		If $d > 2 \alpha$, by choosing $s = (\rho n)^{\frac{d}{2\alpha+d}} (\log (\rho n))^{\frac{2\alpha-d}{2\alpha+d}}$ and $B = (\rho n)^{\frac{2\alpha}{2\alpha+d}}(\log (\rho n))^{\frac{d-2\alpha}{2\alpha+d}}$, we have
		\begin{align} \label{rate::underbagging1NN}
			\mathfrak{R}_{\mathrm{AM}}(\widehat{f}^{B,u})
			\lesssim (\log^2 n/n)^{\alpha(\beta+1)/(2\alpha+d)}.
		\end{align}
		\item[(ii)] 
		If $d \leq 2 \alpha$, by choosing $s = (\rho n \log (\rho n))^{1/2}$ and $B = (\rho n / \log (\rho n))^{1/2}$, we have
		\begin{align*}
			\mathfrak{R}_{\mathrm{AM}}(\widehat{f}^{B,u})
			\lesssim \max \bigl\{ (\log n/n)^{\alpha/(2d)}, (\log^3 n/n)^{1/4} \bigr\}^{\beta+1}.
		\end{align*}
	\end{enumerate}
\end{corollary}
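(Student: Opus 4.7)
The plan is to derive Corollary \ref{thm::bag1nnund} by re-running the error analysis underlying Theorem \ref{thm::bagknnclassund} with the additional specification $k=1$ and re-balancing the resulting bias-variance trade-off accordingly. The high-level structure (error decomposition, nearest-neighbor radius control, Bernstein concentration over bagging rounds) carries over; what changes is the budget-constrained optimization between the bias of a single-nearest-neighbor estimator and the variance of the bagged average when no smoothing across neighbors is available.

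First, I would recall the AM-regret error decomposition (developed in Section \ref{sec::erroranalysis} for Theorem \ref{thm::bagknnclassund}) which, after invoking the margin assumption with exponent $\beta+1$, reduces the control of $\mathfrak{R}_{\mathrm{AM}}(\widehat{f}^{B,u})$ to bounding the pointwise deviation $|\widehat{\eta}^{B,u}_m(x) - \eta^w_m(x)|$. This deviation splits into a bias term $|\mathbb{E}[\widehat{\eta}^{B,u}_m(x) \mid D_n] - \eta^w_m(x)|$, which for the $1$-nearest neighbor rule on an expected sub-sample of size $s$ is of order $s^{-\alpha/d}$ by H\"{o}lder smoothness and the standard $s^{-1/d}$ estimate for the nearest-neighbor radius under uniform $\mathrm{P}_X$; and a stochastic term $|\widehat{\eta}^{B,u}_m(x) - \mathbb{E}[\widehat{\eta}^{B,u}_m(x) \mid D_n]|$, which, because each base estimator $\widehat{\eta}^{b,u}_m(x)$ is an indicator with conditional variance bounded by a constant, is controlled by Bernstein's inequality over the $B$ conditionally independent bags at order $\sqrt{\log n / B}$ plus lower-order terms.

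Combining the two pieces with the margin condition, the AM-regret is bounded up to constants by $(s^{-\alpha/d} + \sqrt{\log n / B})^{\beta+1}$, subject to the budget $sB \leq \rho n$ inherited from the minority class size and the acceptance probability \eqref{equ::axymn}. Minimizing this bound over $s$ and $B = \rho n / s$ gives the optimal scale. In case (i), the regime $d > 2 \alpha$, the bias and the stochastic terms balance at the stated $s = (\rho n)^{d/(2\alpha+d)} (\log (\rho n))^{(2\alpha-d)/(2\alpha+d)}$ and $B = (\rho n)^{2\alpha/(2\alpha+d)} (\log (\rho n))^{(d-2\alpha)/(2\alpha+d)}$, and substituting back yields the rate $(\log^2 n / n)^{\alpha(\beta+1)/(2\alpha+d)}$; the extra $\log$ compared with Theorem \ref{thm::bagknnclassund} is exactly the price of pinning $k$ to $1$ rather than to the optimal $k \asymp \log (\rho n)$. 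In case (ii), the regime $d \leq 2 \alpha$, the lower bound $s \gtrsim (\rho n \log (\rho n))^{1/2}$ inherited from the inner Bernstein step of Theorem \ref{thm::bagknnclassund} (needed to guarantee that the actual bag size $s_b$ concentrates around its mean and that at least one sample falls into a ball of radius $s^{-1/d}$ around $x$) is binding, forcing $s = (\rho n \log (\rho n))^{1/2}$ and $B = (\rho n / \log (\rho n))^{1/2}$; the two candidates for the dominating term then produce the $\max\{(\log n/n)^{\alpha/(2d)}, (\log^3 n/n)^{1/4}\}^{\beta+1}$ bound stated.

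The main obstacle is that $k=1$ lies \emph{outside} the optimal scaling $k \asymp \log (\rho n)$ used in Theorem \ref{thm::bagknnclassund}, so the inner Bernstein step and the concentration of the sub-sample size $s_b = \#(D_b^u)$ around its mean $s$ must be redone rather than invoked as a black box; in particular, one has to verify that, with high probability jointly over $\mathrm{P}^n$ and $\mathrm{P}_Z^B$, every bag contains at least one sample in a ball of radius $\asymp s^{-1/d}$ around an arbitrary test point, which is what forces the $(\rho n \log (\rho n))^{1/2}$ lower bound on $s$ in case (ii). A second, smaller difficulty is that the class-dependent acceptance probability \eqref{equ::axymn} must be carefully propagated through the H\"{o}lder estimate for $\eta^w_m$ so that the constants and the exponent on $\log n$ in the final rate track through without loss.
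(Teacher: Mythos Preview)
Your two–term ``bias plus stochastic'' decomposition hides a term that actually drives the trade-off. When you condition on $D_n=\{(X_i,Y_i)\}_{i=1}^n$ and take the expectation over the sub-sampling, you obtain $\mathbb{E}[\widehat{\eta}^{B,u}_m(x)\mid D_n]=\widetilde{\eta}^{B,u}_m(x)=\sum_i \overline{V}^u_i(x)\,\eins\{Y_{(i)}(x)=m\}$. This is \emph{not} a deterministic bias of order $s^{-\alpha/d}$: it still depends on the random labels $Y_i$ and fluctuates around $\overline{\eta}^{B,u}_m(x)=\sum_i \overline{V}^u_i(x)\,\eta^u_m(X_{(i)}(x))$ by the bagged sample error, which the paper controls separately (Proposition~\ref{pro::tildeffunder}) as $\sqrt{s\log n/(kMn_{(1)})}$. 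With $k=1$ this is $\sqrt{s\log n/(\rho n)}$, and it is precisely this term---not any ``budget $sB\le \rho n$''---that prevents you from taking $s$ large. In fact $B$ is unconstrained: sending $B\to\infty$ kills your stochastic term but leaves $\sqrt{s\log n/(\rho n)}$ untouched, so your decomposition would wrongly suggest choosing $s$ as large as possible.

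The paper therefore uses the four–term split \eqref{equ::errorbag}: bagging error $\sqrt{\log n/(kB)}$ (Proposition~\ref{pro::fml*funder}), bagged sample error $\sqrt{s\log n/(kMn_{(1)})}$ (Proposition~\ref{pro::tildeffunder}), bagged approximation error $(k/s)^{\alpha/d}+\exp(-(s-k)^2/(2n))$ (Proposition~\ref{pro::tildefmlund}), and under-sampling error $\sqrt{\log n/n}$ (Proposition~\ref{thm::punderund}). Setting $k=1$ and balancing the second and third terms gives the stated $s$; the choice $B=\rho n/s$ then equalizes the first two, which is why your numbers accidentally come out right. Your explanation for the lower bound $s\gtrsim(\rho n\log(\rho n))^{1/2}$ in case~(ii) is also off: it is forced by the term $\exp(-(s-1)^2/(2n))$ from Lemma~\ref{lem::sumviund} (the defect $1-\sum_i\overline{V}^u_i(x)$, i.e.\ the probability that a bag contains no accepted point at all), not by a nearest-neighbor radius or bag-size concentration argument.
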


Again, Theorem \ref{thm::lowerbound} yields that up to a logarithm factor, the rate \eqref{rate::underbagging1NN} of under-bagging $1$-NN classifier is minimax optimal when $d > 2 \alpha$, $\alpha \in [0, 1]$, which is usually the case.

\subsection{Comments and Discussions} \label{sec::cnd}

The section presents some comments on the obtained theoretical results on the convergence rates of the under-sampling classifier $\widehat{f}^{k,u}$ and $\widehat{f}^{B,u}$, and compares them with related findings in the literature.

\subsubsection{Comments on Convergence for Imbalanced Classification}

In this paper, we focus on the imbalanced classification problem. As pointed out in Sections \ref{sec::intro} and \ref{sec::multiclassimbalance}, in the context of imbalanced classification, the AM regret is a widely used performance measure instead of the usual classification error. \cite{xue2018achieving, khim2020multiclass} show that the standard $k$-NN classifier converges to the Bayes risk for multi-class classification. Therefore, in general, it can not be consistent w.r.t.~the AM regret, which explains the undesirable performance of standard $k$-NN classification on imbalanced data from the theoretical perspective. To tackle this problem, in this study, we consider the under-sampling and under-bagging $k$-NN classifiers. Both of them only retain the scalability and easy implementation of $k$-NN method but also have the optimal convergence rates w.r.t.~the AM measure (Theorems \ref{thm::knnundersample} and \ref{thm::bagknnclassund}). In our analysis, we only make the $\alpha$-H\"{o}lder continuity and the margin assumption for the posterior probability function $\eta$. As is pointed out in Section \ref{sec::mainresults}, for imbalanced classification, the decision boundary is determined by the weighted posterior probability function $\eta^w$ by endowing smaller classes with larger weights. Moreover, it is worth pointing out that the results are of type ``with high probability'' by using Bernstein's concentration inequality that takes into account the variance information of the random variables within a learning theory framework \cite{cucker2007learning,steinwart2008support}.

As mentioned in Section \ref{sec::intro}, in the literature, despite many classifiers designed to address imbalance have been proposed, theoretical studies on these methods w.r.t.~the AM measure are relatively limited. \cite{menon2013statistical} proved the statistical consistency of two families of algorithms for imbalanced classification, where the first family of algorithms applies a suitable threshold to a class probability estimate obtained by minimizing an appropriate strongly proper loss, and the second one minimizes a suitably weighted form of an appropriate classification-calibrated loss, i.e., the cost-sensitive learning algorithms. However, it is well-known that consistency only measures the infinite-sample property of a classifier, while its finite-sample bounds of convergence rates can hardly be guaranteed. Besides, consistency did not directly reflect any degree of regularity or smoothness of the underlying posterior probability function.

\subsubsection{Comparison with Previous Bagged $k$-NN Algorithms and Analysis}

We also compare our results with previous theoretical analysis of the $k$-NN algorithm combined with bagging techniques. \cite{hall2005properties} demonstrated the consistency properties of bagged nearest neighbor classifiers to the Bayes classifier. \cite{biau2010rate} studied the rate of convergence of the bagged nearest neighbor estimate w.r.t.~the mean squared error. They derived the optimal rate $\mathcal{O}(n^{-2/(2+d)})$ under the assumption that the regression function is Lipschitz. \cite{samworth2012optimal} regarded the bagged nearest neighbor classifier as a weighted nearest neighbor classifier, and showed that the ``infinite simulation'' case of bagged nearest neighbors (with infinite bagging rounds) can attain the optimal convergence rate. It is worth pointing out that our analysis of the under-bagging $k$-NN presents in this study is essentially different from that in the previous works.

First of all, we highlight that different from previous statistical analysis, our theoretical analysis is conducted from a learning theory perspective \cite{cucker2007learning,steinwart2008support} using techniques such as approximation theory and empirical process theory \cite{vandervaart1996weak,Kosorok2008introduction}.

Secondly, previous works only take into account the uniform resampling method based on the $1$-NN classifier, where the weights of the bagged estimate have an explicit probability distribution, whereas our work aims at providing a theoretical analysis of the under-sampling $k$-NN algorithm (Theorem \ref{thm::knnundersample}). To this end, we have to explore the more complex Generalized Pascal distribution (Section \ref{sec::proofbagapprox}).

Thirdly, previous works consider the ``infinite simulation'' case of bagged $k$-NN when the number of bagging round $B \to \infty$, where the results fail to explain the success of bagging with finite resampling times in practice. By contrast, we provide results of convergence rate with finite $B$ by exploiting arguments such as Bernstein's concentration inequality from the empirical process theory, which enable us to derive the trade-off between the number of bagging rounds $B$ and the expected sub-sample size $s$ (Theorem \ref{thm::bagknnclassund}).
In fact, \eqref{s::underbagging} and \eqref{B::underbagging} imply that $B \lesssim (\rho n)^{2\alpha/(2\alpha+d)}$ up to a logarithm factor, which turns out to be relatively small, especially when $\rho$ is small or when $d$ is large. 
Furthermore, we show that when the smaller acceptance probability is adopted for under-sampling, more bagging rounds are required to achieve optimal convergence rates.

Last but not least, results in \cite{biau2010rate} hold ``in expectation'' w.r.t.~both the resampling distribution and input data, and results in \cite{samworth2012optimal} hold ``in probability'', while results in our study hold ``with high probability'', which is a stronger claim since it gives us a confidence about how well the method has learned for a given data set $D$ of fixed size $n$ \cite{steinwart2008support}. In other words, Theorem \ref{thm::knnundersample} and \ref{thm::bagknnclassund} imply that for most datasets sampled from $\mathrm{P}^n$ the classifiers $\widehat{f}^{k,u}$ and $\widehat{f}^{B,u}$ have an almost optimal performance whenever $n$ is large.

\subsubsection{Comments on Complexity} \label{sec::complexity}

As a commonly-used algorithm, $k$-$d$ tree \cite{bentley1975multidimensional} is used to search the nearest neighbors in NN-based methods. In what follows, we show that under-bagging helps reducing the time complexity of both the construction and search stages, whereas maintaining roughly the same space complexity.

\cite{friedman1977algorithm} shows that $k$-$d$ tree has a time complexity $\mathcal{O}(nd \log n)$ and a space complexity $\mathcal{O}(nd)$ for the tree construction. By Theorem \ref{thm::bagknnclassund}, we see that it suffices to choose $B = \rho n/s$ when $s = (\rho n)^{d/(2\alpha+d)} (\log (\rho n))^{2\alpha/{(2\alpha+d)}}$. Therefore, compared with the standard $k$-NN whose complexity is $\mathcal{O}(nd\log n)$, the time complexity of construction the $k$-$d$ tree in our algorithm can be reduced to $\mathcal{O}((\rho n)^{d/(2\alpha+d)} d\log (\rho n)^{(2\alpha+d)/(4\alpha+d)})$ with parallel computing fully employed. Considering the bagging rounds $B$, the space complexity of our algorithm turns out to be $\mathcal{O}(Bsd)=\mathcal{O}(\rho n d)$, whereas the space complexity of the standard $k$-NN is $\mathcal{O}(nd)$.

In the search of the $k$-th nearest neighbor for a test sample, the time complexity is $\mathcal{O}(k \log n)$  \cite{friedman1977algorithm}. For the standard $k$-NN, since the number of nearest neighbors is $\mathcal{O}(n^{2\alpha/(2\alpha+d)})$ \cite{chaudhuri2014rates, zhao2021minimax}, the time complexity of the search stage turns out to be $\mathcal{O}(n^{2\alpha/(2\alpha+d)}\log n))$. According to Theorem \ref{thm::bagknnclassund}, thanks to the under-sampling technique, for each base learner, we merely need to search $\mathcal{O}(\log (\rho n))$ neighbors among $s = (\rho n)^{d/(2\alpha+d)} (\log (\rho n))^{2\alpha/{(2\alpha+d)}}$ samples. Thus, the time complexity of the search stage can be reduced to $\mathcal{O}(\log^2(\rho n))$.

In summary, the bagging technique can enhance the computational efficiency to a considerable amount when parallel computation is fully employed. When the dimension gets higher, we typically require more samples in the input space, i.e., larger $n$, and thus an algorithm requires more time. This phenomenon is often referred to as the curse of dimensionality. We mention that the under-bagging technique can actually alleviate this problem by enjoying smaller time complexity. Furthermore, by adopting the under-sampling rule, the expected number of samples in each class is equal to the sample size of the minority class, and thus the size of training samples at each bagging round can be greatly reduced when the data distribution is highly imbalanced, reflected in a very small value of $\rho$.

\section{Error Analysis} \label{sec::erroranalysis}

In this section, we conduct error analysis for the under-sampling and under-bagging $k$-NN classifier respectively by establishing its convergence rates, which are stated in the above section in terms of the AM measure. The downside of using the AM measure is that it does not admit an exact bias-variance decomposition and the usual techniques for classification error estimation may not apply directly. Nonetheless, if we introduce the \textit{balanced} version of the classification loss,
\begin{align}\label{equ::lbal}
	L_{\mathrm{bal}}(x, y, f(x))
	& := \sum_{m=1}^M \eins \{ y = m \} L_{\mathrm{cl}}(x, y, f(x)) / (M \pi_m)
	\nonumber\\
	& = \sum_{m=1}^M \eins \{ y = m \} \eins \{ f(x) \neq y \} / (M \pi_m),
\end{align}
where a wrong classification of an instance from the minority class is punished stronger than a wrong classification of an instance from the majority class, we are able to reduce the problem of analyzing the AM regret to the problem of analyzing the expectation or sum of a loss on individual samples. In fact, the balanced loss is a useful tool to study the statistical consistency of algorithms for ranking and imbalanced classification \cite{kotlowski2011bipartite, menon2013statistical}. According to Proposition 6 in \cite{narasimhan2015consistent}, the Bayes classifier w.r.t.~the balanced loss can be expressed as
\begin{align}
	f_{L_{\mathrm{bal}}, \mathrm{P}}^*(x)
	& = \argmin_{m \in [M]} \sum_{i=1}^n \eta_i(x) L_{\mathrm{bal}}(x, m, i)
	= \argmin_{m \in [M]} \sum_{i=1}^n \eta_i(x){\eins \{ i \neq m \}} / \pi_i
	\nonumber\\
	& = \argmin_{m \in [M]} \biggl( \sum_{i=1}^n \eta_i(x) / \pi_i - \eta_m(x) / \pi_m \biggr)
	= \argmax_{m \in [M]} \eta^w_m(x).
	\label{equ::bayes_bal}
\end{align}
For brevity, we write $\eta_{L_{\mathrm{bal}},\mathrm{P}}^*(x) := \eta_{f_{L_{\mathrm{bal}}, \mathrm{P}}^*(x)}(x)$ in the following. From \eqref{equ::bayes_bal} we see that the Bayes classifier in terms of the balanced loss depends on the weighted posterior probability function $\eta^w$ instead of $\eta$. To explain, let us consider the binary classification problem $f:\mathcal{X}\to \{+1,-1\}$ with $\pi_{+1}\geq \pi_{-1}$. Then \eqref{equ::bayes_bal} takes the following form
\begin{align*}
	f^*_{L_{\mathrm{bal}},\mathrm{P}}(x)
	& = \sign \bigl( \eta_{+1}(x) / \pi_{+1} - \eta_{-1}(x) / \pi_{-1} \bigr)
	\\
	& = \sign \bigl( \eta_{+1}(x) / \pi_{+1} - (1 - \eta_{+1}(x)) / \pi_{-1} \bigr)
	= \sign(\eta_{+1}(x)-\pi_{+1}),
\end{align*}
where $\sign(x)=1$ if $x>0$ and $\sign(x)=-1$ otherwise. It is easy to see that the decision boundary changes from $1/2$ for usual classification to $\pi_{+1}$, which expands the region where the prediction is the minority class. In particular, if $\pi_m = 1/M$ for $1 \leq m \leq M$, then we have $L_{\mathrm{bal}}(x, y, f(x)) = L_{\mathrm{cl}}(x, y, f(x))$, then the balanced loss is equal to the classification loss, which leads to the same Bayes classifier.

With these preparations, we can present the next proposition which indicates that to analyze the AM-regret of a classifier, it suffices to analyze its balance risk.

\begin{proposition}\label{pro::eqiambalance}
	For any classifier $f:\mathcal{X}\to [M]$, we have
	$r_{\mathrm{AM}}(f)=1-\mathcal{R}_{L_{\mathrm{bal}}, \mathrm{P}}(f)$.
\end{proposition}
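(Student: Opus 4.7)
The plan is to reduce the identity to a straightforward computation based on the definitions, exploiting the fact that the balanced loss \eqref{equ::lbal} has a prefactor of $1/(M\pi_m)$ for each class $m$ which exactly cancels the class prior $\pi_m = \mathrm{P}(Y=m)$ appearing in the risk integral. This cancellation is what turns the risk into an unweighted average of per-class misclassification rates, which is precisely the complement of the AM measure.

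First I would expand the risk by substituting the definition of $L_{\mathrm{bal}}$ into $\mathcal{R}_{L_{\mathrm{bal}},\mathrm{P}}(f) = \int L_{\mathrm{bal}}(x,y,f(x))\,d\mathrm{P}(x,y)$, swap sum and integral, and recognize each summand as $(M\pi_m)^{-1}\,\mathrm{P}(f(X)\neq m,\ Y=m)$. Next I would condition on $Y=m$ to write $\mathrm{P}(f(X)\neq m,\ Y=m) = \pi_m\,\mathrm{P}(f(X)\neq m\mid Y=m) = \pi_m\,(1 - r_m(f))$, using the definition $r_m(f) = \mathrm{P}(f(X)=m\mid Y=m)$. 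The $\pi_m$ factors then cancel, leaving
\begin{align*}
\mathcal{R}_{L_{\mathrm{bal}},\mathrm{P}}(f) = \frac{1}{M}\sum_{m=1}^M \bigl(1 - r_m(f)\bigr) = 1 - \frac{1}{M}\sum_{m=1}^M r_m(f) = 1 - r_{\mathrm{AM}}(f),
\end{align*}
which is the desired identity upon rearranging.

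There is no genuine obstacle here: the proof is essentially the careful unpacking of definitions followed by an application of the law of total probability to turn the joint probability $\mathrm{P}(f(X)\neq m, Y=m)$ into the product $\pi_m(1-r_m(f))$. The only minor technical point to verify is that the measurability of $\{f(X)\neq m\}$ for each $m\in[M]$ is guaranteed by the assumed measurability of $f$, so the conditional probability $\mathrm{P}(f(X)\neq m\mid Y=m)$ is well-defined whenever $\pi_m > 0$ (and the case $\pi_m = 0$ contributes nothing to either side since classes with zero prior do not appear in the data and can be excluded from the AM average without loss of generality).
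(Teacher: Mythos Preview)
Your proof is correct and follows essentially the same approach as the paper: both expand $\mathcal{R}_{L_{\mathrm{bal}},\mathrm{P}}(f)$ as $\sum_{m=1}^M (M\pi_m)^{-1}\mathrm{P}(f(X)\neq m,\,Y=m)$, then use $\mathrm{P}(f(X)\neq m,\,Y=m)=\pi_m(1-r_m(f))$ to cancel the priors and recover $1-r_{\mathrm{AM}}(f)$. The only cosmetic difference is that the paper starts from $1-\mathcal{R}_{L_{\mathrm{bal}},\mathrm{P}}(f)$ and arrives at $r_{\mathrm{AM}}(f)$, whereas you compute the risk directly and rearrange at the end.
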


The above proposition directly yields that $r_{\mathrm{AM}}^* = 1 - \mathcal{R}_{L_{\mathrm{bal}},\mathrm{P}}^*$, which implies that the AM-regret is equal to the excess balanced error. As a result, the AM-regret in \eqref{equ::amerror} can be re-expressed as
\begin{align}\label{equ::R_amf}
	\mathfrak{R}_{\mathrm{AM}}(f)
	= \mathcal{R}_{L_{\mathrm{bal}}, \mathrm{P}}(f) - \mathcal{R}_{L_{\mathrm{bal}},\mathrm{P}}^*.
\end{align}

To bound the right-hand side of \eqref{equ::R_amf}, the main idea here is to build a new probability distribution to convert the excess balanced error into the excess classification error so that the approximation theory and the Bernstein's concentration inequality for multi-class classification can be applied. To this end, note that \eqref{equ::bayes_bal} implies that the Bayes classifier in terms of the balanced loss depends on $\eta^w$, which inspires us to consider a new probability distribution $\mathrm{P}^w$ with the posterior probability function $\eta^w$. To be specific, let $\mathrm{P}(X,Y)$ be the probability distribution of the samples, then we define the \textit{balanced} probability distribution $\mathrm{P}^w(X,Y)$ whose marginal distribution satisfies
\begin{align} \label{equ::tildepy}
	\pi^w_m
	:=\mathrm{P}^w(Y = m)
	= 1/M
	\qquad
	\text{ for } 1 \leq m \leq M, 
\end{align}
and the conditional density function satisfies
\begin{align}\label{equ::tildefxy}
	f^w(x | Y = m)
	= f(x | y = m)
	= \eta_m(x) f_X(x)/\pi_m.
\end{align}
Consequently, combining \eqref{equ::tildepy} and \eqref{equ::tildefxy}, we obtain the marginal density $f^w_{{X}}(x)$ given by
\begin{align}\label{equ::tildefx}
	f^w_X(x)
	= \sum_{m=1}^M \pi^w_m f^w(x | y = m)
	= \sum_{m=1}^M \eta_m(x) f_X(x) / (M \pi_m).
\end{align}
Thus, for the probability measure $\mathrm{P}^w(x, y)$, the Bayes classifier w.r.t.~the classification loss is 
\begin{align}\label{equ::f01fbal}
	f_{L_{\mathrm{cl}},\mathrm{P}^w}^*(x)
	= \argmax_{m \in [M]} \eta^w_m(x)
	= f_{L_{\mathrm{bal}}, \mathrm{P}}^*(x).
\end{align}

With these preparations, we present the next  theorem showing the equivalence between the excess balanced error w.r.t.~$\mathrm{P}$ and the excess classification error w.r.t.~$\mathrm{P}^w$ defined as above, which supplies the key to the proof of the convergence rates of the candidate classifier w.r.t.~the AM measure.

\begin{theorem}\label{thm::connection}
	Let  $\mathrm{P}^w$ be the probability measure defined by \eqref{equ::tildepy} and \eqref{equ::tildefxy}. Then for any classifier $f : \mathcal{X} \to \mathcal{Y}$, we have
	$\mathcal{R}_{L_{\mathrm{bal}},\mathrm{P}}(f)-\mathcal{R}^*_{L_{\mathrm{bal}},\mathrm{P}}
	= \mathcal{R}_{L_{\mathrm{cl}}, \mathrm{P}^w}({f}) - \mathcal{R}_{L_{\mathrm{cl}}, \mathrm{P}^w}^*$.
\end{theorem}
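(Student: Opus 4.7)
The plan is to prove the stronger statement that the two risks themselves coincide for every measurable $f$, namely
\[
\mathcal{R}_{L_{\mathrm{bal}},\mathrm{P}}(f) = \mathcal{R}_{L_{\mathrm{cl}},\mathrm{P}^w}(f),
\]
from which the claim follows immediately by taking infima over measurable classifiers. This reduces the whole argument to an integrand-by-integrand identification, which is essentially a restatement of Bayes' rule adapted to the reweighted measure $\mathrm{P}^w$.

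First I would unfold the left-hand side using the definition \eqref{equ::lbal} of the balanced loss and the disintegration of $\mathrm{P}$ into $f_X$ and $\eta$: this gives
\[
\mathcal{R}_{L_{\mathrm{bal}},\mathrm{P}}(f) = \int_{\mathcal{X}} \sum_{m=1}^M \frac{\eta_m(x)\,\eins\{f(x)\neq m\}}{M\pi_m}\, f_X(x)\, dx.
\]
Next I would unfold the right-hand side using the definition of $L_{\mathrm{cl}}$ together with the disintegration of $\mathrm{P}^w$ into $f^w_X$ and $\eta^w$, yielding
\[
\mathcal{R}_{L_{\mathrm{cl}},\mathrm{P}^w}(f) = \int_{\mathcal{X}} \sum_{m=1}^M \eins\{f(x)\neq m\}\, \eta^w_m(x)\, f^w_X(x)\, dx.
\]
The central computation is then to show that $\eta^w_m(x)\, f^w_X(x) = \eta_m(x) f_X(x)/(M\pi_m)$. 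Plugging the definition \eqref{equ::etamstarx} of $\eta^w_m$ and the formula \eqref{equ::tildefx} for $f^w_X$ into the product, the sum $\sum_{m'}\eta_{m'}(x)/\pi_{m'}$ cancels between numerator and denominator, leaving exactly $\eta_m(x) f_X(x)/(M\pi_m)$. Conceptually this is the statement that $\eta_m(x) f_X(x)/(M\pi_m)$ is the joint density of $(X,Y)$ at $(x,m)$ under $\mathrm{P}^w$, which is built in via \eqref{equ::tildepy} and \eqref{equ::tildefxy}.

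With that identity in hand the two integrands agree pointwise in $x$ and in $m$, so the two risks are equal for every $f$. Taking the infimum over measurable $f$ on both sides yields $\mathcal{R}^*_{L_{\mathrm{bal}},\mathrm{P}} = \mathcal{R}^*_{L_{\mathrm{cl}},\mathrm{P}^w}$, and subtracting gives the required identity of excess risks (this is also consistent with \eqref{equ::f01fbal}, which shows that the two Bayes classifiers coincide). There is no real obstacle to the argument; the only thing to be careful about is keeping track of where the weights $1/(M\pi_m)$ live, since they appear explicitly inside $L_{\mathrm{bal}}$ on the one side but are absorbed into the densities $\eta^w$ and $f^w_X$ on the other, and the whole content of the theorem is precisely that these two bookkeeping conventions produce the same risk functional.
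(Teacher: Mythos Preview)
Your argument is correct and in fact establishes a slightly stronger statement than the paper does: you show that the risks themselves agree, $\mathcal{R}_{L_{\mathrm{bal}},\mathrm{P}}(f)=\mathcal{R}_{L_{\mathrm{cl}},\mathrm{P}^w}(f)$ for every measurable $f$, whereas the paper only matches the \emph{excess} risks. The paper's route is more roundabout: it first proves Lemma~\ref{equ::excessbalrisk}, which expresses the excess balanced risk as an $\mathrm{P}_X$-expectation involving $\eta^w_{f^*_{L_{\mathrm{bal}},\mathrm{P}}(x)}(x)-\eta^w_{f(x)}(x)$ weighted by $\sum_m \eta_m(x)/(M\pi_m)$, then separately computes the excess classification risk under $\mathrm{P}^w$ and changes measure from $\mathrm{P}^w_X$ to $\mathrm{P}_X$ via \eqref{equ::tildefx} to match the two expressions. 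Your approach collapses these two steps into a single integrand identity $\eta^w_m(x)f^w_X(x)=\eta_m(x)f_X(x)/(M\pi_m)$, which is really just the observation that both sides are the joint density of $(X,Y)$ under $\mathrm{P}^w$. This is cleaner and avoids having to track the Bayes classifier explicitly; the paper's decomposition, on the other hand, isolates the pointwise excess $\eta^w_{(1)}(x)-\eta^w_{f(x)}(x)$, which is the form that feeds directly into the later margin-condition arguments (Lemma~\ref{lem::regexcessrisk}).
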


Combining Theorem \ref{thm::connection} with \eqref{equ::R_amf}, we see that the standard techniques 
can also be applied to analyzing the classification error w.r.t.~the balanced probability distribution $\mathrm{P}^w$ to derive the convergence rates of AM regret.

The following Lemma enables us to reduce the problem of bounding the excess multi-class classification error to the problem of bounding the estimation error of the posterior probability function.

\begin{proposition}\label{pro::regressionerror}
	Let $\widehat{\eta}:\mathcal{X}\to [0,1]^M$ be an estimate of $\eta^w$ and $\widehat{f}(x)=\argmax_{m\in[M]}\widehat{\eta}_m(x)$. Suppose that there exists $\phi_n$ such that
	$(\mathrm{P}^w)^n ( \|\widehat{\eta}(x) - \eta^w(x)\|_{\infty} \leq \phi_n)
	\geq 1 - \delta$.
	Then with probability $(\mathrm{P}^w)^n$ at least $1 - \delta$, there holds
	$\mathcal{R}_{L_{\mathrm{cl}},\mathrm{P}^w}(\widehat{f})-\mathcal{R}_{L_{\mathrm{cl}},\mathrm{P}^w}^*\leq c_\beta(2\phi_n)^{\beta+1}$.
\end{proposition}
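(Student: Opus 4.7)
The plan is to carry out the classical Tsybakov-style comparison between sup-norm estimation error of the posterior and excess classification risk, adapted to the auxiliary measure $\mathrm{P}^w$ whose Bayes classifier is $f^*(x) := f^*_{L_{\mathrm{cl}},\mathrm{P}^w}(x) = \argmax_{m \in [M]} \eta^w_m(x)$. Starting from $\mathcal{R}_{L_{\mathrm{cl}},\mathrm{P}^w}(f) = 1 - \mathbb{E}_{\mathrm{P}^w_X}[\eta^w_{f(X)}(X)]$, I would first rewrite the excess risk in the pointwise integral form
\begin{align*}
\mathcal{R}_{L_{\mathrm{cl}},\mathrm{P}^w}(\widehat{f}) - \mathcal{R}_{L_{\mathrm{cl}},\mathrm{P}^w}^*
= \int_{\mathcal{X}} \bigl(\eta^w_{(1)}(x) - \eta^w_{\widehat{f}(x)}(x)\bigr)\, \eins\bigl\{\widehat{f}(x) \neq f^*(x)\bigr\} \, d\mathrm{P}^w_X(x),
\end{align*}
so the task reduces to a pointwise estimate on the disagreement set.

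The core step is a two-part pointwise inequality on the event $E := \{\|\widehat{\eta}-\eta^w\|_\infty \leq \phi_n\}$, which by hypothesis has $(\mathrm{P}^w)^n$-probability at least $1-\delta$. Fix $x$ with $\widehat{m} := \widehat{f}(x) \neq f^*(x) =: m^*$. The argmax definition of $\widehat{f}$ gives $\widehat{\eta}_{\widehat{m}}(x) \geq \widehat{\eta}_{m^*}(x)$, and inserting $|\widehat{\eta}_m(x) - \eta^w_m(x)| \leq \phi_n$ for $m = \widehat{m}, m^*$ yields
\begin{align*}
\eta^w_{(1)}(x) - \eta^w_{\widehat{f}(x)}(x) = \eta^w_{m^*}(x) - \eta^w_{\widehat{m}}(x) \leq 2\phi_n.
\end{align*}
Since $\widehat{m} \neq m^*$, one also has $\eta^w_{\widehat{m}}(x) \leq \eta^w_{(2)}(x)$, hence $\eta^w_{(1)}(x) - \eta^w_{(2)}(x) \leq 2\phi_n$ whenever the indicator in the integrand above is nonzero.

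Combining these two facts with the margin assumption \eqref{tsybakovim}, on the event $E$ I obtain
\begin{align*}
\mathcal{R}_{L_{\mathrm{cl}},\mathrm{P}^w}(\widehat{f}) - \mathcal{R}_{L_{\mathrm{cl}},\mathrm{P}^w}^*
\leq 2\phi_n \cdot \mathrm{P}^w_X\bigl(\eta^w_{(1)}(X) - \eta^w_{(2)}(X) \leq 2\phi_n\bigr)
\leq c_\beta (2\phi_n)^{\beta+1},
\end{align*}
which is exactly the claimed bound. The only mildly delicate point, and the one genuine obstacle, is that \eqref{tsybakovim} is phrased under $\mathrm{P}$ while the integral above is against $\mathrm{P}^w_X$; however, \eqref{equ::tildefx} shows that $\mathrm{P}^w_X$ is absolutely continuous w.r.t.~$\mathrm{P}_X$ with density $\sum_{m} \eta_m(x)/(M\pi_m) \leq 1/(M\underline{\pi})$, so the margin inequality transfers to $\mathrm{P}^w_X$ up to absorbing the bounded density ratio into the constant $c_\beta$. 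Everything else is just the two-line argmax/supnorm comparison.
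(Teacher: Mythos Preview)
Your proposal is correct and follows essentially the same route as the paper. The paper factors the argument into Lemma~\ref{lem::convergereg} (the two-line argmax/sup-norm comparison giving $\eta^w_{(1)}(x)-\eta^w_{\widehat f(x)}(x)\le 2\phi_n$) and Lemma~\ref{lem::regexcessrisk} (the split of the excess-risk integral over $\{\delta(x)\ge 2\phi_n\}$ and its complement, followed by the margin bound), but the substance---including the change of measure from $\mathrm{P}^w_X$ to $\mathrm{P}_X$ via the density ratio $\sum_m \eta_m(x)/(M\pi_m)\le 1/(M\underline{\pi})$---is identical to what you wrote.
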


Therefore, to further our analysis, we first need to bound the $L_{\infty}$-distance $\|\eta^{k,u}-\eta^w\|_{\infty}$ and $\|\eta^{B,u}-\eta^w\|$, where $\eta^{k,u}$ and $\eta^{B,u}$ are the posterior probability function estimator for the under-sampling and under-bagging $k$-NN defined by \eqref{equ::widehatetakunder} and \eqref{equ::baglocalunderbag}, respectively. We present the error analysis for the under-sampling and under-bagging $k$-NN classifiers in the following two sections.

\subsection{Analysis for the Under-Sampling $k$-NN Classifier} \label{sec::errordec}

In this section, we conduct error analysis for the $L_{\infty}$-distance between  $\eta^{k,u}$ and $\eta^w$ by establishing its convergence rates. The downside of under-sampling strategy is that it changes the probability distribution of the training data, that is, the under-sampling subset $D_n^u$ in Section \ref{sec::knnwithuniform} dose not have the distribution $\mathrm{P}$. In the sequel, let $\mathrm{P}^u$ denote the probability distribution of the accepted samples by the under-sampling strategy discussed in Section \ref{sec::knnwithuniform} and $\eta^u(x)$ be the corresponding posterior probability function. By Lemma \ref{lem::pu} in Section \ref{proofsecerror}, for $1\leq m\leq M$, $\eta_m^u(x)$ can be expressed as 
\begin{align}\label{equ::etaund}
	\eta_m^u(x)
	= \frac{\eta_m(x) / n_{(m)}}{\sum_{m=1}^M \eta_m(x) / n_{(m)}}.
\end{align}
It thus follows that
\begin{align}\label{equ::etaku}
	\|\widehat{\eta}^{k,u}-\eta^w\|_{\infty}\leq \|\widehat{\eta}^{k,u}-\eta^u\|_{\infty}+\|\eta^u-\eta^w\|_{\infty}.
\end{align}
It is easy to see that the first term of the right-hand side of \eqref{equ::etaku} represents the error for applying $k$-NN on the subset $D_n^u$ and thus it admits the usual decomposition for error estimation whereas the second term, namely \textit{under-sampling error} is brought about by the under-sampling strategy from the training data. To bound the first term $\|\widehat{\eta}^{k,u}-\eta^u\|_{\infty}$, we need to define $\overline{\eta}^{k,u} : \mathcal{X} \to [0,1]^M$, where its $m$-th entry 
\begin{align}\label{equ::bareta}
	\overline{\eta}_m^{k,u}(x)
	= \mathbb{E}[ \widehat{\eta}_m^{k,u}(x)|D_n^u]
	= \frac{1}{k}\sum_{i=1}^k \eta_m^u(X_{(i)}^u(x)).
\end{align}
In other words, $\overline{\eta}^{k,u}$ denotes the conditionally expectation of $\widehat{\eta}^{k,u}$ on the under-sampling data $D_n^u$. Thus we obtain the error decomposition for the posterior probability function w.r.t.~the under-sampling $k$-NN classifier as follows:
\begin{align}\label{equ::errordec}
	\|\widehat{\eta}^{k,u}-\eta^w\|_{\infty}\leq \|\eta^u-\eta^w\|_{\infty} + \|\widehat{\eta}^{k,u}-\overline{\eta}^{k,u}\|_{\infty}+\|\overline{\eta}^{k,u}-\eta^u\|_{\infty}.
\end{align}
Apart from the under-sampling error mentioned above, the second term on the right-hand side of \eqref{equ::errordec} is called the \textit{sample error} since it is associated with the empirical measure $D_n^u$ and the last term of \eqref{equ::errordec} is called \textit{approximation error} since it indicates how the error is propagated by the under-sampling $k$-NN algorithm.

\subsubsection{Bounding the Sample Error Term} \label{sec::sampleunder}

We now establish the oracle inequality for the under-sampling posterior probability function $\widehat{\eta}^{k,u}$ under $L_{\infty}$-norm. This oracle inequality will be crucial in establishing the convergence results of the estimator.

\begin{proposition}\label{prop::BnnEstimation}
	Let $\widehat{\eta}^{k,u}$ and $\overline{\eta}^{k,u}$ be defined by \eqref{equ::widehatetakunder} and \eqref{equ::bareta}, respectively. Then there exists an $N_1\in \mathbb{N}$, which will be specified in the proof, such that for all $n>N_1$, with probability $\mathrm{P}^n\otimes \mathrm{P}_Z$ at least $1-1/n^2$,
	there holds
	\begin{align}\label{equ::error212}	
		\|\widehat{\eta}^{k,u}(x)-\overline{\eta}^{k,u}(x)\|_{\infty}
		\lesssim \sqrt{{\log s_u/k}}.
	\end{align}
\end{proposition}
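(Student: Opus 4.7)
The plan is to combine a conditional Bernstein tail bound with a combinatorial upper bound on the number of distinct $k$-nearest-neighbor configurations, thereby converting a pointwise deviation estimate into a uniform $L_\infty$ bound.

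First, I would condition on the positions $\mathcal{X}_n^u := \{X_1^u, \ldots, X_{s_u}^u\}$ of the under-sampled data. Given $\mathcal{X}_n^u$, Lemma \ref{lem::pu} (from Section \ref{proofsecerror}) implies that the corresponding labels $Y_i^u$ are conditionally independent with $\mathrm{P}(Y_i^u = m \mid X_i^u) = \eta_m^u(X_i^u)$. For any fixed $x \in \mathcal{X}$ and $m \in [M]$, the difference
\[
\widehat{\eta}_m^{k,u}(x) - \overline{\eta}_m^{k,u}(x) = \frac{1}{k}\sum_{i=1}^k \bigl(\eins\{Y_{(i)}^u(x) = m\} - \eta_m^u(X_{(i)}^u(x))\bigr)
\]
is, conditionally on $\mathcal{X}_n^u$, an average of $k$ independent centered random variables bounded in absolute value by $1$ and with variance at most $1/4$. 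Bernstein's inequality then yields a conditional pointwise tail bound of the form $2\exp(-c_1 k t^2)$ for every $t \in (0,1]$ and some absolute constant $c_1>0$.

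Second, I would upgrade this pointwise bound to a uniform bound over $x \in \mathcal{X}$. Both $\widehat{\eta}^{k,u}$ and $\overline{\eta}^{k,u}$ depend on $x$ only through the unordered $k$-nearest-neighbor set $\{X_{(1)}^u(x), \ldots, X_{(k)}^u(x)\}$, hence they are constant on each cell of the order-$k$ Voronoi partition generated by $\mathcal{X}_n^u$. A classical combinatorial argument (equivalently, the finite VC dimension of closed balls in $\mathbb{R}^d$) shows that the number of such cells is at most $s_u^{C(d)}$ for some constant $C(d)$ depending only on $d$. Selecting one representative $x_j$ per cell and taking a union bound over $j$ and over $m \in [M]$ produces
\[
\mathrm{P}\bigl(\|\widehat{\eta}^{k,u} - \overline{\eta}^{k,u}\|_\infty > t \,\big|\, \mathcal{X}_n^u\bigr) \leq 2 M s_u^{C(d)} \exp(-c_1 k t^2).
\]
Choosing $t = c_2\sqrt{\log s_u / k}$ with $c_2$ large enough that $c_1 c_2^2 \geq C(d)+3$, and taking $N_1$ large enough so that the prefactor $2M$ is absorbed (using the deterministic bound $s_u \leq n$), the right-hand side is at most $1/n^2$. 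Since this conditional bound is deterministic in $\mathcal{X}_n^u$, integrating against the outer law under $\mathrm{P}^n \otimes \mathrm{P}_Z$ preserves it, and the claim follows.

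The main obstacle is precisely the uniformization step: a standard $\varepsilon$-net or Lipschitz argument on $\mathcal{X}$ is unavailable because $\widehat{\eta}^{k,u}$ is only piecewise constant, with cell diameters that are not controlled by any deterministic modulus of continuity. The combinatorial bound on the number of $k$-NN configurations supplies the ``effective cardinality'' that makes the union bound succeed; it is also what is responsible for the particular $\log s_u$ factor (rather than some larger power) appearing in the final rate.
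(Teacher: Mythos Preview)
Your approach is essentially the one the paper takes: condition on the under-sampled locations, apply a pointwise concentration inequality to the label indicators, and then uniformize by a union bound over the finitely many $k$-NN configurations realizable as $x$ varies. The paper uses Hoeffding (via its Lemma~\ref{lem::lemconcen}) and the reordering count of Lemma~\ref{lem::numberreorder} ($|\mathcal{S}|\le (25/d)^d s_u^{2d}$), while you use Bernstein and an order-$k$ Voronoi/VC argument; both yield the same $s_u^{O(d)}$ prefactor and hence the same $\sqrt{\log s_u/k}$ rate.

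There is one slip in your last step. After the union bound you have a conditional failure probability of order $M s_u^{-3}$, and you want this to be at most $1/n^2$. The inequality $s_u\le n$ points the wrong way: you need a \emph{lower} bound on $s_u$ in terms of $n$. The paper supplies this via Lemma~\ref{lem::punder}, which gives $s_u\ge n_{(1)}\ge n\underline{\pi}/2$ with $\mathrm{P}^n$-probability at least $1-2M/n^3$; this extra event is absorbed into the final probability budget and is what makes the choice of $N_1$ work. Fixing this detail, your argument is complete and matches the paper's.
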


\subsubsection{Bounding the Approximation Error Term} \label{sec::approxunder}

The result on bounding the deterministic error term shows that the $L_{\infty}$-distance between $\overline{\eta}^{k,u}$ and $\eta^u$ can be small by choosing $k$ appropriately.

\begin{proposition}\label{prop::knnundersample}
	Let $\widehat{f}^{k,u}$ be the under-sampling $k$-NN classifier defined by \eqref{equ::standardknnunder}. Assume that $\mathrm{P}_X$ is the uniform distribution on $[0,1]^d$ and Assumption \ref{ass:imbalance} is satisfied. Then there exists an $N_2\in \mathbb{N}$, which will be specified in the proof, such that for all $n\geq N_2$, there holds
	\begin{align*}
		\|\overline{\eta}^{k,u} - \eta^u \|_{\infty}
		\lesssim (k/s_u)^{\alpha/d}
	\end{align*}
	with probability $\mathrm{P}^n\otimes \mathrm{P}_Z$ at least $1-1/n^2$.
\end{proposition}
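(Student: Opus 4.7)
The strategy is to use the $\alpha$-Hölder continuity of $\eta^u$ to collapse the quantity $\|\overline{\eta}^{k,u} - \eta^u\|_\infty$ down to the $k$-th nearest neighbor distance in $D_n^u$, and then to control that distance uniformly over $x \in [0,1]^d$ via a covering plus concentration argument. Observe first that, from the explicit form \eqref{equ::etaund}, $\eta^u_m(x)$ is a smooth function of $\eta(x)$ whose denominator $\sum_j \eta_j(x)/n_{(j)}$ lies in the constant interval $[1/n_{(M)},\,1/n_{(1)}]$; a quotient estimate therefore transfers Assumption~\ref{ass:imbalance}(i) to $\eta^u$ with some new constant $c_L' > 0$. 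Plugging this into \eqref{equ::bareta} gives, for every $m$,
\[
|\overline{\eta}^{k,u}_m(x) - \eta^u_m(x)|
\leq \frac{1}{k}\sum_{i=1}^k |\eta^u_m(X^u_{(i)}(x)) - \eta^u_m(x)|
\leq c_L'\, \|X^u_{(k)}(x) - x\|^{\alpha},
\]
since the $k$-th neighbor is the farthest of the $k$. Hence it is enough to show that $\sup_{x\in[0,1]^d}\|X^u_{(k)}(x) - x\| \lesssim (k/s_u)^{1/d}$ with $\mathrm{P}^n \otimes \mathrm{P}_Z$-probability at least $1 - 1/n^2$.

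Second, to get that uniform $k$-NN distance bound I would condition on the acceptance variables $\{Z(X_i,Y_i)\}$ (fixing both the indices of the accepted points and the size $s_u$); conditionally, $D_n^u$ consists of $s_u$ i.i.d.\ draws from the distribution $\mathrm{P}^u$ identified in Lemma~\ref{lem::pu}. Since $\mathrm{P}_X$ is uniform on $[0,1]^d$, the marginal density of $\mathrm{P}^u$ is proportional to $\sum_m \eta_m(x)/n_{(m)}$, which is bounded both above and below by positive constants depending only on $n_{(1)}/n_{(M)}$. Choose $r \asymp (k/s_u)^{1/d}$ and an $r$-net $\mathcal{N} \subset [0,1]^d$ of cardinality $\mathcal{O}(s_u/k)$. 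For each $x_0 \in \mathcal{N}$, the ball $B(x_0, r/2)$ has $\mathrm{P}^u_X$-mass $\gtrsim r^d \asymp k/s_u$, so the binomial number of $D_n^u$-points it contains has mean $\gtrsim k$. A Bernstein tail bound (as in Proposition~\ref{prop::BnnEstimation}) combined with a union bound over $\mathcal{N}$ then forces every such ball to contain at least $k$ accepted points, with failure probability $\leq 1/n^2$, provided $k$ dominates $\log(n\,s_u/k)$. Given any $x \in [0,1]^d$, picking the nearest net point $x_0$ yields $B(x_0, r/2) \subset B(x, 2r)$, and therefore $\|X^u_{(k)}(x) - x\| \leq 2r \lesssim (k/s_u)^{1/d}$ uniformly in $x$.

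Combining the two steps yields $\|\overline{\eta}^{k,u} - \eta^u\|_\infty \lesssim (k/s_u)^{\alpha/d}$ on the desired event. The main obstacle is the probabilistic bookkeeping, because both $D_n^u$ and its size $s_u$ are random and the conclusion must hold under the joint measure $\mathrm{P}^n \otimes \mathrm{P}_Z$ while still being phrased in terms of $s_u$ itself. I would handle this by carrying out the covering/Bernstein argument conditionally on $\{Z_i\}_{i=1}^n$ so that the $s_u$ that appears in the final bound is the realized one, and then integrating; no separate concentration for $s_u$ is required. The constant $N_2$ is then chosen so that for all $n \geq N_2$, the relevant $k = k(n)$ is large enough (in particular $k \gtrsim \log n$) for the Bernstein tail to absorb the $\log|\mathcal{N}|$ term from the union bound and to deliver the required $1/n^2$ failure probability.
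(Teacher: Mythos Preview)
Your approach is correct and follows the same high-level strategy as the paper: reduce $\|\overline{\eta}^{k,u}-\eta^u\|_\infty$ via the H\"older continuity of $\eta^u$ to a uniform bound on the $k$-th nearest-neighbor distance in $D_n^u$, and then control that distance by a covering-plus-Bernstein argument. The difference lies in the covering step. You cover the \emph{space} $[0,1]^d$ by a spatial $r$-net and apply Bernstein to the ball count at each net point; the paper instead covers the \emph{function class} $\{\mathbf{1}_{B(x,r)}:x\in\mathbb{R}^d,\,r>0\}$ via the VC dimension of balls (Lemmas~\ref{lem::CoveringNumber} and~\ref{lem::Rrho}), obtaining a bound on $R_{(k)}(x)$ that is uniform simultaneously over all $k\geq c\log s_u$. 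Your route is more elementary and self-contained; the paper's is more modular, since Lemma~\ref{lem::Rrho} is reused verbatim in the under-bagging analysis (Lemma~\ref{pro::detererror1}). The paper also keeps the full average $\frac{1}{k}\sum_{i=1}^k R_{(i)}^\alpha$ and bounds it by an integral rather than passing immediately to $R_{(k)}^\alpha$, but this only affects constants, not the rate.

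Two small points to tighten. First, your H\"older constant $c_L'$ and your lower density bound for $\mathrm{P}^u_X$ depend on the random ratio $n_{(M)}/n_{(1)}$; to make them absolute (depending only on $\underline{\pi},\overline{\pi}$) you must restrict to the high-probability event $n_{(m)}\asymp n\pi_m$, which the paper supplies explicitly via Lemma~\ref{lem::punder}. Second, the statement that ``conditionally on $\{Z_i\}$, $D_n^u$ consists of $s_u$ i.i.d.\ draws from $\mathrm{P}^u$'' is not quite right as written, because the acceptance probability $a(x,y)$ itself depends on the empirical class counts $n_{(m)}$, hence on all of $\{Y_j\}$; the clean way (which the paper also adopts implicitly) is to first work on the event where the $n_{(m)}$ have concentrated, so that $\mathrm{P}^u$ is effectively fixed, and then run your Bernstein argument.
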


\subsubsection{Bounding the Under-sampling Error Term} \label{sec::undersamplingerror}

The next proposition shows that the $L_{\infty}$-norm distance between $\eta^u$ and $\eta^w$, which possess a polynomial decay w.r.t.~the number of the training data. The following result is crucial in our subsequent analysis on the converge rates of both under-sampling and under-bagging $k$-NN classifier.

\begin{proposition}\label{thm::punderund}
	Let $\eta^w_m(x)$ and  $\eta_m^u(x)$ be defined by \eqref{equ::etamstarx} and \eqref{equ::etaund} respectively. Then there exists an $N_3\in \mathbb{N}$, which will be specified in the proof, such that for all $n\geq N_3$, there holds
	\begin{align} \label{equ::con2}
		\|\eta^u - \eta^w\|_{\infty}
		\lesssim \sqrt{\log n/n}
	\end{align}
	with probability $\mathrm{P}^n$ at least $1-1/n^2$.
\end{proposition}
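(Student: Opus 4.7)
The plan is to reduce the claim to a concentration inequality for the empirical class frequencies $\hat\pi_m := n_{(m)}/n$. Multiplying both numerator and denominator of \eqref{equ::etaund} by $n$, we can rewrite
\begin{align*}
\eta_m^u(x) = \frac{\eta_m(x)/\hat\pi_m}{\sum_{j=1}^M \eta_j(x)/\hat\pi_j},
\end{align*}
so $\eta_m^u(x)$ differs from $\eta_m^w(x)$ in \eqref{equ::etamstarx} only by replacing each $\pi_m$ with $\hat\pi_m$. Hence the problem splits into two steps: (i) control $\max_m |\hat\pi_m - \pi_m|$, and (ii) propagate that control through the rational map $\pi \mapsto \eta^w(x)$ uniformly in $x$.

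For step (i), each count $n_{(m)} = \sum_{i=1}^n \eins\{Y_i = m\}$ is a sum of i.i.d.\ Bernoulli$(\pi_m)$ variables, so Hoeffding's inequality gives $\mathrm{P}^n(|\hat\pi_m - \pi_m| > t) \leq 2 e^{-2 n t^2}$. Choosing $t \asymp \sqrt{\log n / n}$ and taking a union bound over the (constant) number of classes $M$ yields
\begin{align*}
\max_{m \in [M]} |\hat\pi_m - \pi_m| \lesssim \sqrt{\log n/n}
\end{align*}
with $\mathrm{P}^n$-probability at least $1 - 1/n^2$. I will choose the threshold $N_3$ precisely so that, on this event, $\max_m |\hat\pi_m - \pi_m| \leq \underline{\pi}/2$, which in turn forces $\hat\pi_m \in [\underline{\pi}/2,\, 2\overline{\pi}]$ for every $m$.

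For step (ii), set $u_m := \eta_m(x)/\pi_m$ and $\hat u_m := \eta_m(x)/\hat\pi_m$, so that $\eta_m^w(x) = u_m/\sum_j u_j$ and $\eta_m^u(x) = \hat u_m/\sum_j \hat u_j$. The elementary identity $|a/A - b/B| \leq |a-b|/A + (b/B)|B-A|/A$, combined with the pointwise bound $|\hat u_m - u_m| \leq (2/\underline{\pi}^2)\, \eta_m(x)\, |\hat\pi_m - \pi_m|$ and the uniform lower bounds $\sum_j u_j \geq 1/\overline{\pi}$ and $\sum_j \hat u_j \geq 1/(2\overline{\pi})$ (both valid on the event above, since $\sum_j \eta_j(x) = 1$), yields
\begin{align*}
|\eta_m^u(x) - \eta_m^w(x)| \lesssim \max_{j \in [M]} |\hat\pi_j - \pi_j|,
\end{align*}
uniformly in $m \in [M]$ and $x \in \mathcal{X}$. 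Combining (i) and (ii) delivers \eqref{equ::con2}.

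The argument is conceptually routine; the main thing requiring care is verifying that the constants implicit in step (ii) depend only on $\underline{\pi}$ and $\overline{\pi}$ (and not on $x$), which is what allows us to conclude in the $L_\infty$-norm rather than merely pointwise. Once this is done, the remaining work is bookkeeping to pin down $N_3$ from the requirement that the deviation in step (i) stay below $\underline{\pi}/2$.
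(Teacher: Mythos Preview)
Your proposal is correct and follows essentially the same two-step strategy as the paper's proof: first control the empirical class proportions via a concentration inequality, then propagate that control through the rational map $(\pi_1,\ldots,\pi_M)\mapsto \eta^w_m(x)$ using the uniform lower bound $\sum_j \eta_j(x)/\pi_j \geq 1/\overline{\pi}$. The only cosmetic differences are that the paper invokes Bernstein's inequality (Lemma~\ref{lem::bernstein}) rather than Hoeffding's, and it splits the rational difference by first changing the denominator and then the numerator, whereas you use the identity $|a/A - b/B| \leq |a-b|/A + (b/B)|B-A|/A$; both decompositions yield the same order of bound with constants depending only on $M,\underline{\pi},\overline{\pi}$.
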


\subsection{Analysis for the Under-bagging $k$-NN Classifier} \label{sec::error_bagging}

We now proceed with the estimation of the posterior probability error term $\|\widehat{\eta}^{B,u}-\eta^w\|$ within the learning theory framework. To this end, we first show that the under-bagging $k$-NN classifier can be re-expressed as a weighted $k$-NN, which is amenable to statistical analysis. To be specific, let $X_{(i)}(x)$ be the $i$-th nearest neighbor of $x$ in $D_n$ w.r.t.~the Euclidean distance and $Y_{(i)}(x)$ denote its label. Then for $1\leq b\leq B$, we re-express the posterior probability estimate $\widehat{\eta}^{b,u} : \mathcal{X} \to [0, 1]^M$ on the under-sampling set $D_b^u$ with its $m$-th entry defined by
$\widehat{\eta}_m^{b,u}(x) 
= \sum_{i=1}^n V_i^{b,u}(x) \eins \bigl\{ Y_{(i)}(x) = m \bigr\}$.
Here, $V_i^{b,u}(x)$ equals 
$1/k$ if $\sum_{j=1}^i Z^b(X_{(j)}(x),Y_{(j)}(x))\leq k$ and $0$ otherwise,
where $Z^b(x,y)$, $1\leq b\leq B$, are i.i.d.~Bernoulli random variables with parameter $a(x, y)$. Then the posterior probability estimate \eqref{equ::baglocalunderbag} can be re-expressed as
\begin{align}\label{equ::baglocalunder}
	\widehat{\eta}_m^{B,u}(x) 
	= \frac{1}{B} \sum_{b=1}^B \widehat{\eta}^{b,u}_m(x)
	= \frac{1}{B} \sum_{b=1}^B \sum_{i=1}^n V_i^{b,u}(x) \eins \bigl\{ Y_{(i)}(x) = m \bigr\}.
\end{align}
To bound $\|\widehat{\eta}^{B,u}-\eta^w\|_{\infty}$, we need to consider the bagged posterior probability function estimator, that is, we repeat under-sampling an infinite number of times, and take the average of the individual outcomes. To be specific, we define 
$\widetilde{\eta}^{B,u} : \mathcal{X} \to [0,1]^M$, with the $m$-th entry 
\begin{align}\label{equ::tildefmlund}
	\widetilde{\eta}_m^{B,u}(x)
	= \mathbb{E}_{\mathrm{P}_Z}^B [ \widehat{\eta}_m^{B,u}(x) |  \{(X_i,Y_i)\}_{i=1}^n ]:=\sum_{i=1}^n \overline{V}_i^u(x) \eins \bigl\{ Y_{(i)}(x) = m \bigr\},
\end{align}
where 
\begin{align}\label{equ::defbarviu}
	\overline{V}_i^u(x)
	= \mathbb{E}_{\mathrm{P}_Z} \bigl[ V_i^{b,u}(x) \big|  \{(X_i,Y_i)\}_{i=1}^n \bigr].
\end{align}
In fact, for any $x\in \mathcal{X}$, by the law of large numbers, we have $\widehat{\eta}_m^{B,u}(x) \to \widetilde{\eta}_m^{B,u}(x)$ almost surely as $B\to \infty$. Then we define the population version of the bagged estimator $\widetilde{\eta}_m^{B,u}(x)$ as follows:
\begin{align}\label{equ::ebaglocalund}
	\overline{\eta}_m^{B,u}(x)
	= \mathbb{E}[ \widetilde{\eta}_m^{B,u} | X_1,\ldots,X_n]
	= \sum_{i=1}^n \overline{V}_i^u(x) \eta_m^u(X_{(i)}(x)),
\end{align}
where the conditional expectation is taken w.r.t.~$(\mathrm{P}^u)_{Y|X}^n$. With these preparations, we are able to make the following error decomposition:
\begin{align}\label{equ::errorbag}
	\|\widehat{\eta}^{B,u}-\eta^w\|_{\infty}\leq
	\|\widehat{\eta}^{B,u}-\widetilde{\eta}^{B,u}\|_{\infty}
	+ \|\overline{\eta}^{B,u}-	\widetilde{\eta}^{B,u}\|_\infty
	+ \|\overline{\eta}^{B,u}-	{\eta}^u\|_\infty+
	\|\eta^u-\eta^w\|_\infty.
\end{align}
Compared with the analysis for the under-sampling $k$-NN classifier in \eqref{equ::errordec}, there are four terms on the right hand side of \eqref{equ::errorbag}. Since we are not able to repeat the sampling strategy an infinite number of times, the bagging procedure brings about the error term $\|\widehat{\eta}^{B,u} - \widetilde{\eta}^{B,u}\|_{\infty}$, which is called \textit{bagging error} in what follows. In addition, the second and the third term on the right hand-side of \eqref{equ::errorbag} can be viewed as the \textit{bagged sample error} and the \textit{bagged approximation error} for the bagged posterior probability function estimator $\widetilde{\eta}_m^{B,u}$ by similar arguments in the analysis for under-sampling $k$-NN classifier. Finally, the last term of  \eqref{equ::errorbag} is the \textit{under-sampling error} as mentioned in \eqref{equ::errordec}.

\subsubsection{Bounding the Bagging Error Term} \label{sec::bagsample}

The next proposition shows that the bagging error term can be bounded in term of the number of the nearest neighbors $k$ and the number of bagging rounds $B$.

\begin{proposition}\label{pro::fml*funder}
	Let $\widehat{\eta}^{B,u}$ and $ \widetilde{\eta}^{B,u}$ be defined by \eqref{equ::baglocalunder} and \eqref{equ::tildefmlund}, respectively. Suppose that $9 k B \geq 2(2d+3)\log n$. Then there exists an $N_4 \in \mathbb{N}$, which will be specified in the proof, such that for all $n\geq N_4$, there holds
	\begin{align*}
		\|\widetilde{\eta}^{B,u}-\widehat{\eta}^{B,u}\|_{\infty}
		\lesssim \sqrt{\log n / (kB)}
	\end{align*}
	with probability $\mathrm{P}_Z^B\otimes \mathrm{P}^n$ at least $1-1/n^{2}$.
\end{proposition}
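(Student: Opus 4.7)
The plan is to view $\widehat{\eta}^{B,u}_m(x) - \widetilde{\eta}^{B,u}_m(x)$, conditionally on the training data $\{(X_i,Y_i)\}_{i=1}^n$, as the empirical mean of $B$ i.i.d.\ centered random variables (one per bagging round), and to apply Bernstein's inequality together with a variance estimate of the correct order $1/k$. The $\sup_x$ in the $L_\infty$-norm is then handled by a union bound over the finitely many cells on which $\widehat{\eta}^{B,u}_m$ and $\widetilde{\eta}^{B,u}_m$ are jointly piecewise constant in $x$.

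More concretely, I would fix $x\in\mathcal{X}$ and $m\in[M]$, and set $W_b := \widehat{\eta}^{b,u}_m(x) - \widetilde{\eta}^{B,u}_m(x)$ for $b=1,\ldots,B$. Because the bag-$b$ acceptance variables $\{Z^b(X_i,Y_i)\}_i$ are drawn independently across $b$, the $W_b$'s are i.i.d.\ conditionally on $\{(X_i,Y_i)\}_{i=1}^n$, are mean zero and satisfy $|W_b|\leq 1$. The essential quantitative input is the variance bound $\mathrm{Var}(W_b)\lesssim 1/k$. To derive it, I would write $\widehat{\eta}^{b,u}_m(x)=\sum_{i=1}^n V_i^{b,u}(x)\,\eins\{Y_{(i)}(x)=m\}$ with $V_i^{b,u}(x)\in\{0,1/k\}$ and $\sum_i V_i^{b,u}(x)\leq 1$, and then run a bounded-differences (Efron--Stein) argument: flipping a single Bernoulli acceptance $Z^b(X_{(i)}(x),Y_{(i)}(x))$ can shift $\widehat{\eta}^{b,u}_m(x)$ by at most $2/k$ and can have any effect at all only for the $O(k)$ indices that lie near the top of the distance ordering from $x$ — summing these $O(k)$ squared increments of size $1/k^2$ yields exactly the $O(1/k)$ variance estimate.

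Armed with this, Bernstein's inequality produces, for any $\tau>0$,
\begin{align*}
\mathrm{P}_Z^B\Bigl(|\widehat{\eta}^{B,u}_m(x)-\widetilde{\eta}^{B,u}_m(x)|\geq \sqrt{c_1\tau/(kB)}+c_2\tau/B\Bigr)\leq 2e^{-\tau}.
\end{align*}
To lift this pointwise statement to a uniform bound, I exploit that both $x\mapsto \widehat{\eta}^{B,u}_m(x)$ and $x\mapsto \widetilde{\eta}^{B,u}_m(x)$ depend on $x$ only through the ordering of the distances $\|x-X_i\|$, so they are jointly piecewise constant on the polyhedral arrangement generated by the $\binom{n}{2}$ perpendicular bisectors between training points. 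Standard arrangement bounds give at most $Cn^{2d+1}$ such cells. Taking one representative per cell, together with a union bound over $m\in[M]$ and the choice $\tau\asymp(2d+3)\log n$, yields failure probability $\lesssim 1/n^2$. The hypothesis $9kB\geq 2(2d+3)\log n$ is exactly what is needed to ensure that the linear Bernstein term $\tau/B$ is absorbed into the square-root term $\sqrt{\tau/(kB)}$, so the final bound simplifies to the advertised $\lesssim \sqrt{\log n/(kB)}$.

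The main obstacle will be the variance estimate at the correct order $1/k$: the $V_i^{b,u}(x)$'s are not independent across $i$ because they are coupled through the ``first $k$ accepted neighbors'' rule, so a naive computation only gives a trivial $O(1)$ variance — one really has to exploit the $\{0,1/k\}$-valued structure and the fact that only indices sitting near the top-$k$ boundary of the distance ordering can influence the estimator. The remaining ingredients (Bernstein, piecewise-constant counting, union bound) are then comparatively routine.
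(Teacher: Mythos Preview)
Your overall plan is the same as the paper's: condition on $\{(X_i,Y_i)\}_{i=1}^n$, apply Bernstein's inequality to the $B$ i.i.d.\ centered summands $\widehat{\eta}^{b,u}_m(x)-\widetilde{\eta}^{B,u}_m(x)$ with variance proxy $O(1/k)$, and then pass to $\sup_x$ by a union bound over the finitely many nearest-neighbor orderings (the paper uses Lemma~6, giving at most $(25/d)^d n^{2d}$ orderings, which is the same mechanism as your hyperplane-arrangement count). The role of the hypothesis $9kB\geq 2(2d+3)\log n$ is exactly as you describe.

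The one genuine gap is in your variance argument. Your claim that flipping a single acceptance bit ``can have any effect at all only for the $O(k)$ indices that lie near the top of the distance ordering'' is false in this setting: the acceptance probabilities are of order $s/n\ll 1$, so the $k$-th accepted neighbor typically sits at index $\asymp kn/s$, not $\asymp k$, and the number of indices whose flip can matter is \emph{not} $O(k)$ deterministically. The Efron--Stein route still works, but the accounting has to be probabilistic: writing $N_i=\sum_{j<i}Z^b(X_{(j)}(x),Y_{(j)}(x))$ and $p_i=a(X_{(i)}(x),Y_{(i)}(x))$, the flip at index $i$ can matter only on $\{N_i<k\}$, so
\[
\mathrm{Var}\bigl(\widehat{\eta}^{b,u}_m(x)\bigr)
\;\leq\; \frac{1}{k^2}\sum_{i=1}^n p_i\,\mathrm{P}(N_i<k)
\;=\;\frac{1}{k^2}\sum_{i=1}^n \mathrm{P}\bigl(Z^b_i=1,\ N_i<k\bigr)
\;=\;\frac{1}{k^2}\sum_{i=1}^n k\,\overline{V}_i^u(x)
\;\leq\;\frac{1}{k},
\]
using $\sum_i \overline{V}_i^u(x)\leq 1$. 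The paper reaches the same endpoint by observing that each $V_i^{b,u}(x)$ is $\{0,1/k\}$-valued with mean $\overline{V}_i^u(x)$, so $\mathrm{Var}(V_i^{b,u}(x))\leq \overline{V}_i^u(x)/k$, and then summing; in either route the identity $\sum_i \overline{V}_i^u(x)\leq 1$ is what converts the individual $1/k$-scale increments into the global $1/k$ variance bound.
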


\subsubsection{Bounding the Bagged Approximation Error Term}\label{sec::bagapprox}

We now show that the $L_{\infty}$ distance between $\overline{\eta}^{B,u}$ and $\eta^u$ can be bounded by two terms. The first term is determined by the ratio $k/s$ and the smoothness of the posterior probability function whereas the second term results from the under-sampling strategy, which possess an exponential decay w.r.t.~$(s/n)^2$.

\begin{proposition}\label{pro::tildefmlund}
	Let $\overline{\eta}^{B,u}$ be defined by \eqref{equ::ebaglocalund} with $k \geq \lceil 48(2d+9)\log n\rceil$ and $\eta^u$ be defined by \eqref{equ::etaund}. Assume that $\mathrm{P}$ satisfies Assumption \ref{ass:imbalance} and $\mathrm{P}_X$ is the uniform distribution on $[0, 1]^d$. Moreover, suppose that $s\exp( - (s/M-k)^2/(2n)) \leq M\underline{\pi}/2$. Then there exists an $N_5\in \mathbb{N}$, which will be specified in the proof,  such that for all $n\geq N_5$, there holds 
	\begin{align*}
		\|\overline{\eta}^{B,u} - \eta^u\|_{\infty}
		\lesssim (k/s)^{\alpha/d} + \exp \bigl( - (s - k)^2 / (2n) \bigr)
	\end{align*}
	with probability $\mathrm{P}^n$ not less than $1 - 1 / n^2$. 
\end{proposition}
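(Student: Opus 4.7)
The plan is to bound the pointwise error and then upgrade to the uniform $L_\infty$-bound via a covering argument on $[0,1]^d$. The starting point is the identity
\begin{align*}
\overline{\eta}^{B,u}_m(x) - \eta^u_m(x)
= \sum_{i=1}^n \overline{V}_i^u(x)\bigl(\eta^u_m(X_{(i)}(x)) - \eta^u_m(x)\bigr)
+ \Bigl(\sum_{i=1}^n \overline{V}_i^u(x) - 1\Bigr)\eta^u_m(x),
\end{align*}
splitting the error into a \emph{local bias} and a \emph{normalization defect}. For the normalization defect, note that $k\sum_i V_i^{b,u}(x)$ equals $\min(N^b, k)$ with $N^b := \sum_{i=1}^n Z^b(X_i, Y_i)$ a Poisson-binomial whose conditional mean is $\sum_m n_{(m)}\cdot s/(Mn_{(m)}) = s$, so that $|1 - \sum_i \overline{V}_i^u(x)| = \mathbb{E}_{\mathrm{P}_Z}[(k-N^b)_+]/k \leq \mathrm{P}_Z(N^b \leq k)$; Hoeffding's inequality applied to $N^b$ then yields $\mathrm{P}_Z(N^b \leq k) \leq \exp(-(s-k)^2/(2n))$, which is precisely the second term of the target bound.

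The more delicate task is the local bias $\sum_i \overline{V}_i^u(x)|\eta^u_m(X_{(i)}) - \eta^u_m(x)|$. I would first argue that $\eta^u$ inherits $\alpha$-Hölder continuity from $\eta$ through \eqref{equ::etaund} on the high-probability event where each $n_{(m)}/n$ is close to $\pi_m$ (this is also where the hypothesis $s\exp(-(s/M-k)^2/(2n)) \leq M\underline{\pi}/2$ is used, namely to keep the denominator in \eqref{equ::etaund} uniformly bounded below), and then introduce a cutoff $L \asymp kn/s$ chosen so that the conditional mean of $T_L(x) := \sum_{j=1}^L Z^b(X_{(j)}(x), Y_{(j)}(x))$ exceeds $2k$. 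The bias then splits into
\begin{align*}
\sum_i \overline{V}_i^u(x)\|X_{(i)}(x) - x\|^\alpha
\leq \|X_{(L)}(x) - x\|^\alpha + 2 \sum_{i > L}\overline{V}_i^u(x).
\end{align*}
The near part is handled by a uniform nearest-neighbour radius bound: with the uniform distribution on $[0,1]^d$, covering $[0,1]^d$ by $\lesssim n^d$ sub-cubes and applying Bernstein's inequality to the count in each cube gives, with $\mathrm{P}^n$-probability at least $1 - 1/(2n^2)$, that $\|X_{(L)}(x) - x\| \lesssim (L/n)^{1/d} \asymp (k/s)^{1/d}$ for every $x \in [0,1]^d$; the lower bound $k \geq \lceil 48(2d+9)\log n\rceil$ is what makes the expected cube counts dominate their Bernstein deviations. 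The far part is controlled by writing $\overline{V}_i^u(x) \leq (1/k)\, a(X_{(i)}(x), Y_{(i)}(x))\,\mathrm{P}_Z(T_{i-1}(x) < k)$ for $i > L$, using $\sum_i a(X_i, Y_i) = s$ and a Bernstein-type estimate for $T_L(x)$ to absorb this tail into the $\exp(-(s-k)^2/(2n))$ term.

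The principal obstacle is making the above bounds \emph{uniform} in $x$: the weights $\overline{V}_i^u(x)$ depend on $x$ in a highly discontinuous fashion through both the distance-ordering of $D_n$ and the labels that enter $a(X_{(i)}(x), Y_{(i)}(x))$, so the index of the $k$-th accepted neighbour follows a generalized Pascal-type distribution whose success probabilities shift with $x$ (this is the tool alluded to in the body and developed in Section \ref{sec::proofbagapprox}). Rather than applying concentration pointwise and union-bounding over an $\varepsilon$-net of $\mathcal{X}$, one routes the uniform control through the $L$-th nearest-neighbour \emph{radius}, a purely geometric quantity that is insensitive to the acceptance randomness and controllable by counting $D_n$-points in the cells of a fixed covering. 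Once the radius is uniformly bounded, the pointwise decomposition and the two Bernstein estimates combine, with a final union bound over the $O(n^d)$ cubes, to yield the stated inequality with $\mathrm{P}^n$-probability at least $1 - 1/n^2$ for all $n \geq N_5$.
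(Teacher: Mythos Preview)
Your decomposition into local bias plus normalization defect is correct and matches the paper exactly, and your treatment of the normalization defect via $k\sum_i V_i^{b,u}(x)=\min(k,N^b)$ and a Hoeffding bound on the Poisson--binomial $N^b$ reproduces Lemma~\ref{lem::sumviund}. The uniform control of the $L$-th nearest-neighbour radius is also fine in spirit (the paper does this in Lemma~\ref{lem::Rrho} via a VC covering of balls rather than a cube partition).

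The gap is in the ``far part'' of the local bias. Your cutoff at $L\asymp kMn/s$ gives, in the worst case where the first $L$ neighbours all come from the majority class, $\mathbb{E}[T_L(x)]\geq Ls/(Mn_{(M)})\asymp k$, so Hoeffding/Bernstein on $T_L$ only yields
\[
\sum_{i>L}\overline{V}_i^u(x)\;\le\;\mathrm{P}_Z\!\bigl(T_L(x)<k\bigr)\;\lesssim\;\exp\!\bigl(-c'\,ks/n\bigr).
\]
But the hypotheses of the proposition permit $k\asymp\log n$ and $s$ only slightly above $M\sqrt{n\log n}$ (this is exactly the boundary where $s\exp(-(s/M-k)^2/(2n))\le M\underline{\pi}/2$ starts to hold), and then $ks/n\to 0$, so your far tail is of order $1$ while the target $(k/s)^{\alpha/d}+\exp(-(s-k)^2/(2n))$ tends to $0$. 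Thus the tail cannot be ``absorbed'' into either term, and the argument as written does not establish the proposition in full generality. You also misplace the role of the hypothesis $s\exp(-(s/M-k)^2/(2n))\le M\underline{\pi}/2$: it has nothing to do with the H\"older continuity of $\eta^u$ (that follows from Lemma~\ref{lem::punder}, which uses only concentration of $n_{(m)}/n$); rather, it is precisely the device the paper uses to kill a tail term in the expectation bound for the Generalized Pascal distribution.

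What the paper does instead is avoid the hard cutoff altogether. Using $R_{(i)}(x)\lesssim (i/n)^{1/d}$ uniformly for $i\ge a_n$ (Lemma~\ref{lem::Rrho}), it reduces the local bias to $k^{-1}\sum_{j=1}^k\bigl(\sum_i i\,p^u_{i,j}(x)\bigr)^{\alpha/d}$ via Jensen, and then bounds the \emph{expectation} $\sum_i i\,p^u_{i,j}(x)$ of the Generalized Pascal variable directly (Theorem~\ref{thm::genepasexp}), obtaining $\lesssim jn/s$ once the exponential remainder $n\exp(-(s/M-k)^2/(2n_{(M)}))$ is absorbed using the stated hypothesis. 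This is where the permutation argument (Lemmas~\ref{lem::xxqq1}--\ref{lem::expecationincrease}) is essential: it shows the worst ordering of acceptance probabilities is the monotone one, which allows a clean comparison with a standard Pascal mean. Your cutoff idea would recover the proposition only under the extra restriction $ks\gtrsim n\log n$, which is not assumed.
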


\subsubsection{Bounding the Bagged Sample Error Term}\label{sec::bagsampleerror}

We now establish the oracle inequality for the bagged under-bagging posterior probability function $\widetilde{\eta}^{B,u}$ in terms of $L_{\infty}$ norm. The oracle inequality will be crucial in establishing the convergence results for the under-bagging $k$-NN classifier.

\begin{proposition}\label{pro::tildeffunder}
	Let  $\widetilde{\eta}^{B,u}$ and $\overline{\eta}^{B,u}$ be defined by \eqref{equ::tildefmlund} and \eqref{equ::ebaglocalund}, respectively.  Then there exists an $N_6\in \mathbb{N}$, which will be specified in the proof, such that for all $n\geq N_6$, there holds
	\begin{align*}
		\|\widetilde{\eta}^{B,u} - \overline{\eta}^{B,u}\|_{\infty}
		\lesssim \sqrt{s \log n / (k M n_{(1)})}
	\end{align*}
	with probability $\mathrm{P}_{Z}^B \otimes \mathrm{P}^n$ at least $1 - 1 / n^2$.
\end{proposition}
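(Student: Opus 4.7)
The plan is to apply Bernstein's inequality pointwise in $x$ and then upgrade to a uniform-in-$x$ bound via a stratification argument on the nearest-neighbor ordering. Fix $x \in \mathcal{X}$ and $m \in [M]$; subtracting \eqref{equ::ebaglocalund} from \eqref{equ::tildefmlund} gives
\[
\widetilde{\eta}_m^{B,u}(x) - \overline{\eta}_m^{B,u}(x)
= \sum_{i=1}^n \overline{V}_i^u(x) \bigl( \eins\{Y_{(i)}(x) = m\} - \eta_m^u(X_{(i)}(x)) \bigr),
\]
a weighted sum of zero-mean summands under the appropriate conditional distribution on the labels. The key input is two bounds on the weights. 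Since $V_i^{b,u}(x) \in \{0, 1/k\}$, with the value $1/k$ requiring that sample $i$ be first accepted by the under-sampling (probability $a(X_i, Y_i) = s/(M n_{(Y_i)}) \leq s/(M n_{(1)})$), one obtains $\max_i \overline{V}_i^u(x) \leq s/(kMn_{(1)})$. Moreover $\sum_i V_i^{b,u}(x) \leq 1$ in every bagging round, so $\sum_i \overline{V}_i^u(x) \leq 1$. Plugging $R := s/(kMn_{(1)})$ as the $L^\infty$-bound and $V \leq R$ as the variance proxy into Bernstein's inequality yields, with probability at least $1 - 2 e^{-t}$,
\[
|\widetilde{\eta}_m^{B,u}(x) - \overline{\eta}_m^{B,u}(x)| \lesssim \sqrt{t s/(kMn_{(1)})} + t s/(kMn_{(1)}).
\]
For $t \asymp \log n$ the square-root term dominates whenever $s \log n \lesssim kMn_{(1)}$, a condition easily verified for $n$ large enough under the parameter choice of Theorem \ref{thm::bagknnclassund}, and this yields the claimed pointwise rate.

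To pass from a pointwise bound to the $L^\infty$-norm, I would exploit that, for fixed $\{X_i\}_{i=1}^n$, the ordering $\bigl(X_{(i)}(x)\bigr)_{i=1}^n$ and hence the weights $\overline{V}_i^u(x)$ are piecewise constant on the cells of the arrangement formed by the $\binom{n}{2}$ perpendicular bisectors of pairs of training points; this arrangement has at most $\mathcal{O}(n^{2d})$ cells in $\mathbb{R}^d$. On each such cell the difference $\widetilde{\eta}_m^{B,u}(\cdot) - \overline{\eta}_m^{B,u}(\cdot)$ is constant, so a union bound over cells and over the $M$ coordinates $m$ merely inflates $t$ to $t = c(d+1)\log n$, still giving a failure probability no more than $1/n^2$.

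The chief technical obstacle will be the careful bookkeeping of the conditioning: both $\overline{V}_i^u(x)$ and $\eins\{Y_{(i)}(x) = m\}$ depend on the labels $Y_i$, so one must justify that the weighted sum genuinely centers in the appropriate conditional sense. This is precisely why the expectation in \eqref{equ::ebaglocalund} is stated under $(\mathrm{P}^u)_{Y|X}^n$ rather than $\mathrm{P}_{Y|X}^n$, and this also identifies the correct $\sigma$-field against which to apply Bernstein. The threshold $N_6$ is then the least $n$ for which $s \log n \leq kMn_{(1)}$, ensuring that the Bernstein square-root term dominates its linear counterpart.
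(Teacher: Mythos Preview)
Your proposal is essentially the paper's own argument: the same decomposition into $\sum_i \overline{V}_i^u(x)\bigl(\eins\{Y_{(i)}(x)=m\}-\eta_m^u(X_{(i)}(x))\bigr)$, the same two weight estimates $\max_i \overline{V}_i^u(x)\le s/(kMn_{(1)})$ (this is Lemma~\ref{lem::pijunder}) and $\sum_i \overline{V}_i^u(x)\le 1$, and the same union bound over the $\mathcal{O}(n^{2d})$ nearest-neighbour orderings (Lemma~\ref{lem::numberreorder}). The one substantive difference is that the paper applies the Hoeffding-based Lemma~\ref{lem::lemconcen} rather than Bernstein. That lemma gives directly $\mathrm{P}\bigl(|\sum_i v_i Z_i|\ge\varepsilon\bigr)\le 2\exp\bigl(-\varepsilon^2/(2v_{\max}\sum_i|v_i|)\bigr)$, so there is no linear remainder term to absorb, and consequently the paper's threshold $N_6=\max\{2d,\lceil 2M(25/d)^d\rceil\}$ is a fixed constant depending only on $M$ and $d$. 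Your Bernstein route is correct but forces $N_6$ to depend on $s,k,n_{(1)}$ through the condition $s\log n\lesssim kMn_{(1)}$, which is a mild weakening of the proposition as stated (though harmless for the downstream Theorem~\ref{thm::bagknnclassund}). If you swap Bernstein for Lemma~\ref{lem::lemconcen} you recover the paper's proof verbatim. Your identification of the conditioning subtlety (weights $\overline{V}_i^u$ depending on labels) is apt; the paper handles it exactly as you anticipate, working under $(\mathrm{P}^u)_{Y|X}^n$ conditionally on the feature locations.
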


\section{Experiments}\label{sec::experiments}

\subsection{Performance Evaluation Metrics}

Compared with typical supervised learning, imbalanced learning pays more attention to the classification performance of the minority classes. Therefore, instead of the overall accuracy, we use the AM measure defined by \eqref{equ::ramf} to evaluate the performance of different classifiers.

\subsection{Hyper-parameter Analysis}

There are three hyper-parameters in the under-bagging $k$-NN for imbalanced classification: the bagging rounds $B$, the number of nearest neighbors $k$, and the expectation of subsample size $s$. Before we conduct parameter through synthetic experiments, we first introduce the data generation procedure. We use a simple toy dataset containing two interleaving half circles. In detail, we use two half moon functions with Gaussian noises added to synthesize the samples, where each half represents an unique class. The standard deviation of the Gaussian noises is $0.2$. We generate $20,000$ positive samples and $200$ negative samples in each run for training, and $200,000$ positive samples and $2,000$ negative samples for testing. We repeat the synthetic experiments for 100 times and record the averaged recall score. One visualization of synthetic dataset is shown in Figure \ref{fig:syndata}, with the ratio of the majority class (marked in blue) and the minority class (marked in orange) $100:1$.

\begin{figure}[!h]
	\centering
	\includegraphics[width=0.58\textwidth]{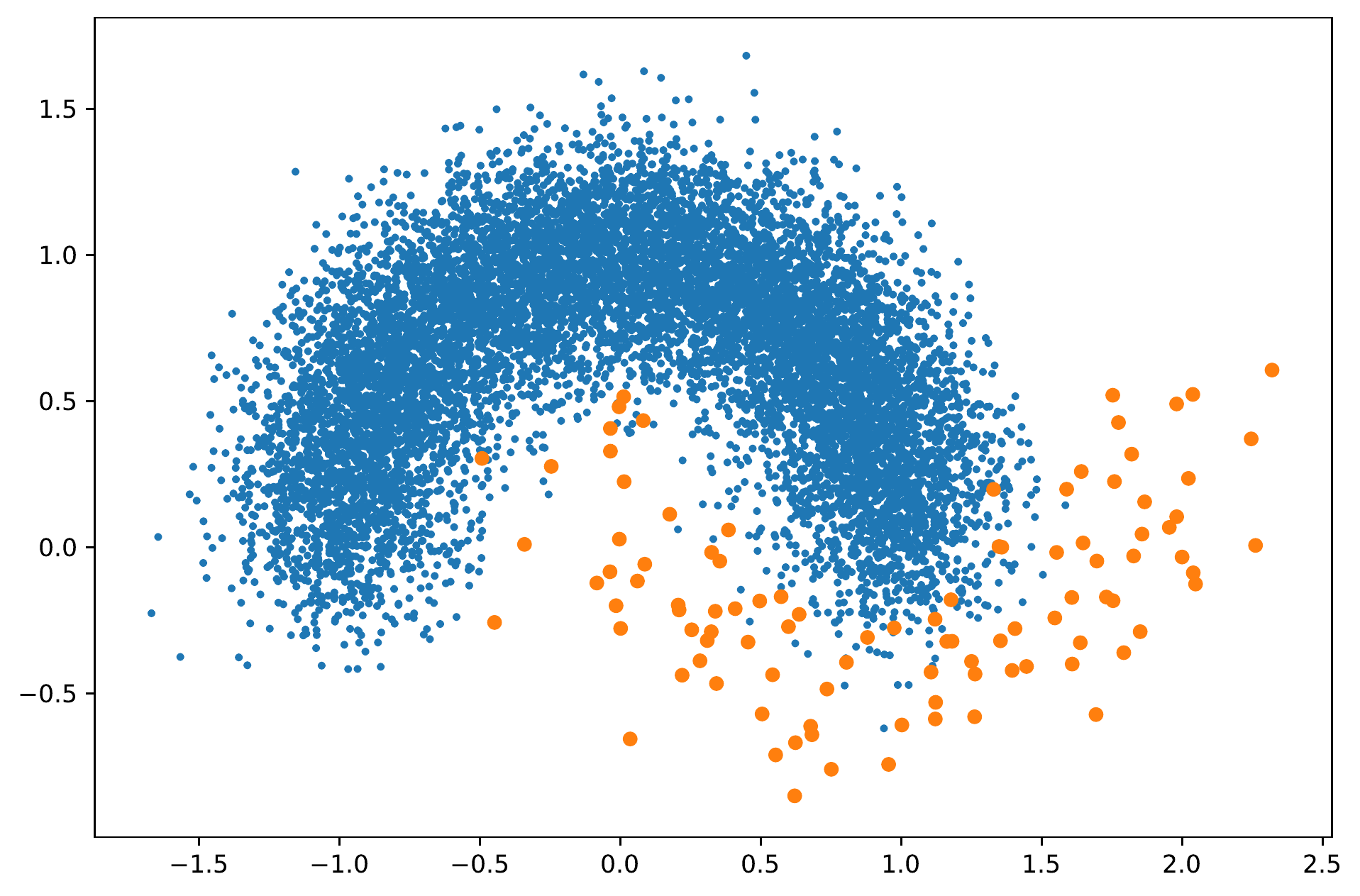}
	\captionsetup{justification=centering}
	\caption{One Visualization of the Generated Synthetic Dataset}
	\label{fig:syndata}
\end{figure}

\vspace{0.1cm}

\noindent\textbf{Parameter Analysis on $B$ and $k$.}
To study these two hyper-parameters $B$ and $k$, we fix the expected sub-sample size $s=Mn_{(1)}$, and vary the bagging rounds $B \in \{1, 2, 5, 10, 20, 50\}$ and the number of neighbors $k \in \{1, 2, \ldots, 30\}$. The averaged AM measure among different $B$ and $k$ are shown in Figure \ref{fig:param-Bk}. As expected, we can choose a sufficient large $B$ and the optimal number of neighbors $k$ for a good performance. Moreover, as the bagging rounds $B$ increases, the performance of a bagging classifier is robust under a wide range of hyper-parameter $k$, which reduces the difficulty of selecting the optimal hyper-parameter $k$. In addition, from the practical perspective, we can use a relatively large $B$ to achieve a high AM measure performance with a low running time, since we can easily save running time under the parallelism of bagging rounds $B$.

\begin{figure}[!h]
	\centering
	\includegraphics[width=0.58\textwidth]{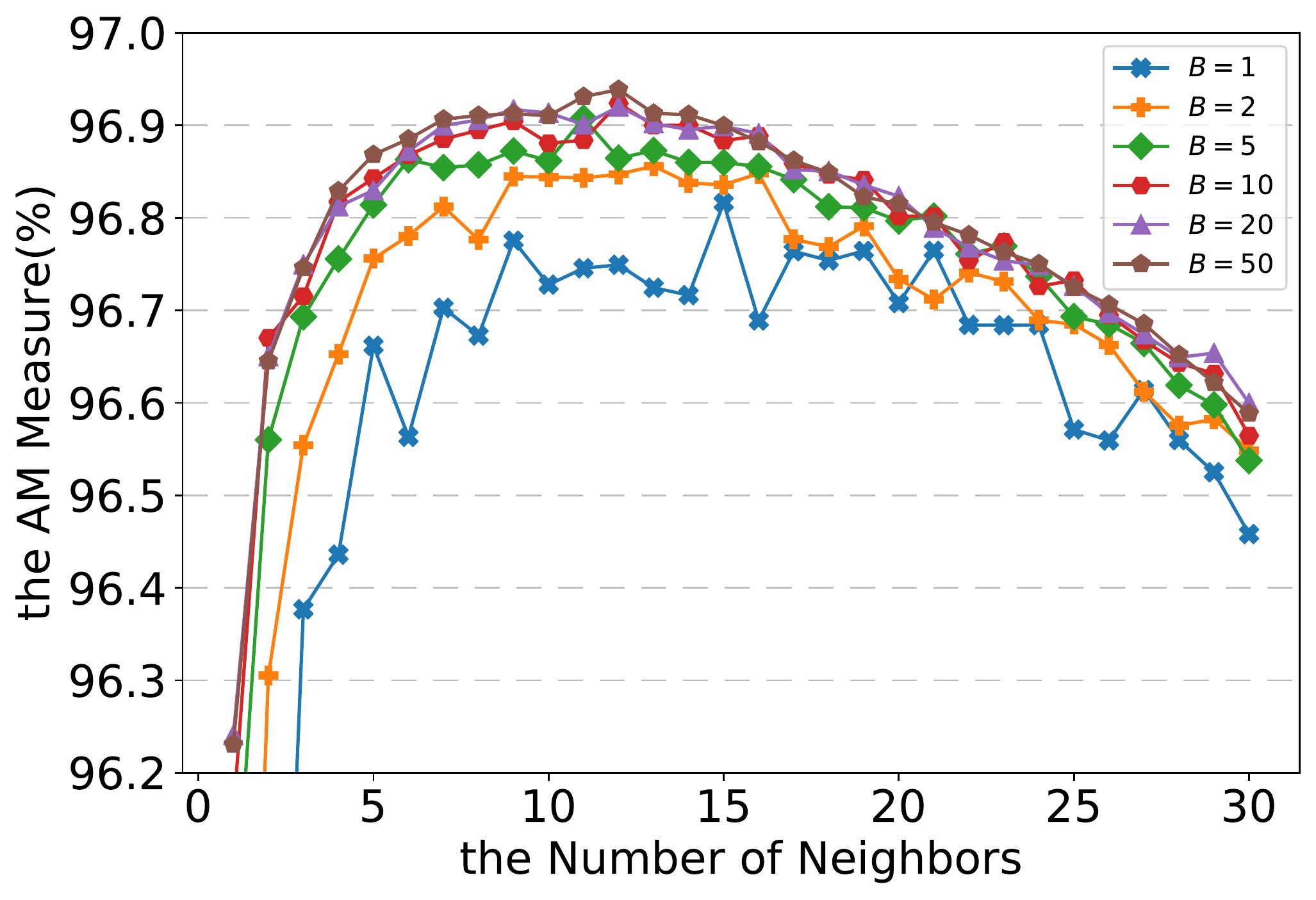}
	\captionsetup{justification=centering}
	\caption{Parameter Analysis between $B$ and $k$.}
	\label{fig:param-Bk}
\end{figure}

\vspace{0.1cm}

\noindent\textbf{Parameter Analysis on the Expected Sub-Sample Size $s$.}
To study the empirical performance of under-bagging $k$-NN with different expected sub-sample size $s$, we fix the bagging rounds $B=20$, and explore the performance under different sub-sample size $s=aMn_{(1)}$ with $a \in \{0.2, 0.4, \ldots, 1.0\}$. In fact, according to \eqref{equ::aundxypar}, $a$ represents the acceptance probability of the samples in the minority class. As is shown in Figure \ref{fig:param-sk}, as the expected sub-sample size becomes larger, the results of the AM measure become better, since more samples are taken into consideration in each round of bagging. However, the difference in the best AM measure gets smaller while the expected sub-sample size is close to $Mn_{(1)}$, which means that when coping with massive imbalanced data, under-bagging $k$-NN achieves competitive empirical performance with a relatively small sub-sample size $s$ w.r.t.~$n$. Moreover, the optimal number of nearest neighbors $k$ required increases with the expected sub-sample size $s$, which coincides with the theoretical results that $k = s (\log (\rho n)/\rho n)^{d/{(2\alpha+d)}}$ in Theorem \ref{thm::bagknnclassund}. In practice, we can use a relatively small $a$, and train on less samples with a relatively small hyper-parameter $k$ to achieve competitive empirical performance under only a small computational burden.

\begin{figure}[!h]
	\centering
	\includegraphics[width=0.58\textwidth]{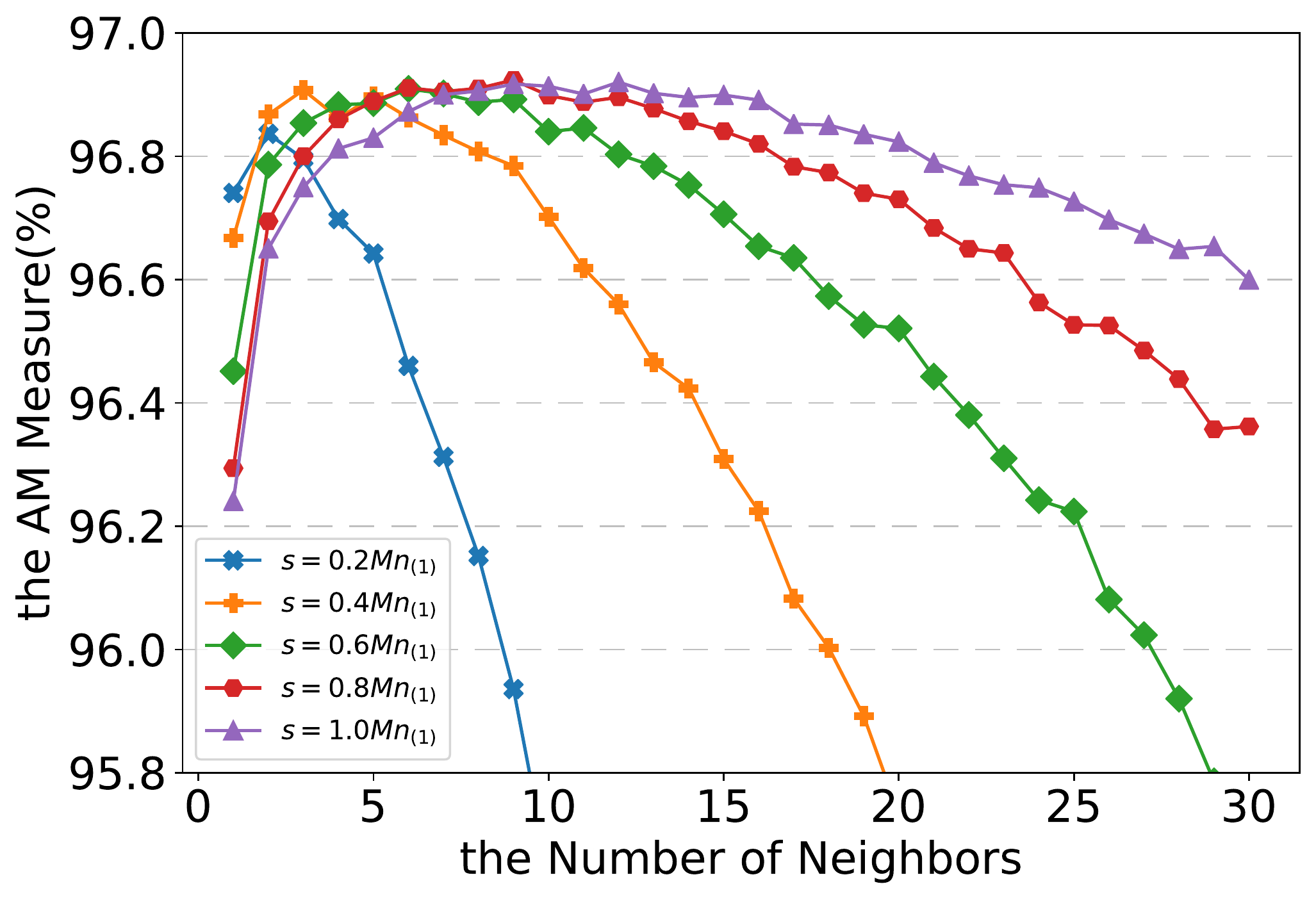}
	\captionsetup{justification=centering}
	\caption{Parameter analysis of the expected sub-sample size $s$.}
	\label{fig:param-sk}
\end{figure}

\subsection{Numerical Comparison on Real-world Imbalanced Datasets}

\subsubsection{Experimental Settings}

To verify the effectiveness of our proposed under-bagging $k$-NN for imbalanced classification, we conduct extensive experiments on nine real-world imbalanced datasets, including binary-class and multi-class data sets. These imbalanced datasets comes from the UCI Machine Learning Repository \cite{Dua:2019}. As some data sets contain missing values, we impute the missing values of numerical features and categorical features with the mean value and the most frequent value of the non-missing values for those features respectively. The details of datasets, including size and dimension are listed in Table \ref{tab::dataset-descrpitions}. Besides, we show the proportion of the majorities and minorities in each data set, and then calculate the imbalance ratio $\rho$. We mention that the number of features in Table \ref{tab::dataset-descrpitions} are provided after the preprocessing of the one-hot encoding for categorical features. We use the one-hot encoding to transform a categorical feature with $k$ categories into $k$ binary features. We apply standardization of datasets by scaling features to the range of $[0, 1]$. We apply 2 times 10-fold cross-validation with a total of 20 runs for repeated experiments, and then report the average results of AM measure and running time. All experiments are conducted on a 64-bit machine with 24-cores Intel Xeon 2.0GHz CPU (E5-4620) and 64GB main memory.

\begin{table}[!h]
	\centering
	\captionsetup{justification=centering}
	\caption{Description of Real-World Data Sets}
	\resizebox{\textwidth}{!}{
		\begin{tabular}{lrrrrrr}
			\toprule
			Datasets & \#Instances & \#Features & \#Classes & \%Majority &  \%Minority & $\rho$ \hspace{3mm} \\
			\midrule
			{\tt APSFailure}  & 76,000 & 170 & 2 & 98.19 & 1.81 & 3.62\%\\
			{\tt ActivityRecognition}  & 22,646 & 9 & 4 & 90.69 & 1.48 & 5.92\%\\
			{\tt Adult}  & 48,842 & 105 & 2 & 76.07 & 23.93 & 47.86\%\\
			{\tt Bitcoin}  & 2,916,697 & 6 & 2 & 98.58 & 1.42 & 2.84\%\\
			{\tt BuzzInSocialMedia}  & 140,707 & 77 & 2 & 99.16 & 0.84 & 1.68\%\\
			{\tt CencusIncomeKDD}   & 299,285 & 503 & 2 & 93.8 & 6.2 & 12.4\%\\
			{\tt CreditCardClients}    & 30,000 & 32 & 2 & 77.88 & 22.12 & 44.24\%\\
			{\tt OccupancyDetection}    & 20,560 & 5 & 2 & 76.9 & 23.1 & 46.2\%\\
			{\tt p53Mutants}   & 31,420 & 5408 & 2 & 99.52 & 0.48 & 0.96\%\\
			\bottomrule
		\end{tabular}
	}
	\label{tab::dataset-descrpitions}
\end{table}

We verify the effectiveness of our under-bagging technique for the $k$-NN classifier, we compare the following cases:
\begin{itemize}
	\item[\textit{(i)}] 
	The standard $k$-NN classifier;
	\item[\textit{(ii)}] 
	The $k$-NN classifier with under-sampling of the majority classes, corresponding to our under-bagging $k$-NN classifier with $B=1$ and $a=1$;
	\item[\textit{(iii)}] 
	Our under-bagging $k$-NN classifier with $B=5$ and $s=Mn_{(1)}$, which means that we use all samples of the minority class;
	\item[\textit{(iv)}] 
	Our under-bagging $k$-NN classifier with $B = 5$ and $s=0.5Mn_{(1)}$, which means that the expected sub-sample size turns out to be a half of that in \textit{(iii)} for efficient computing.
\end{itemize}
We use the {\tt scikit-learn}  and {\tt imbalanced-learn} implementations in {\tt Python} and tune the number of neighbors $k$ by cross-validation.

\subsubsection{Descriptions of Datasets}

The datasets are from the UCI machine learning repository \cite{Dua:2019}.
\begin{itemize}
	\item {\tt APSFailure}: 
	The \textit{APS Failure at Scania Trucks Data Set} \cite{Biteus2017} contains 76,000 samples and 170 attributes. The dataset is collected for the prediction of failures in the Air Pressure System of Scania heavy trucks.
	\item {\tt ActivityRecognition}: 
	The goal of the dataset \textit{Activity Recognition with Healthy Older People Using a Batteryless Wearable Sensor Data Set} \cite{torres2013sensor} is to predict the activity type using a batteryless, wearable sensor on top of elder people's clothing. There are two room settings (S1 and S2), and we choose the S2 setting, where the sample size is 22,646 and the feature size is 9.
	\item {\tt Adult}: 
	The \textit{Adult Data Set} is to predict whether the income is more than 50 thousands per year based on census data. It contains 48,842 data points with six numerical attributes and eight categorical attributes. We preprocess the categorical attributes by one-hot encodings, and the feature size of the preprocessed dataset is 105.
	\item {\tt Bitcoin}: 
	The \textit{Bitcoin Heist Ransomware Address Data Set} \cite{akcora2019bitcoinheist} has 2,916,697 data points and 6 predictive attributes. This dataset aims at identifying ransomware payments. The original labels are multiclass, containing the `white' label (i.e., not known to be ransomware) and a series of ransomware types. As the sample size of each ransomware type is small, we combine all ransomware types into a `ransomware' label  to build a binary-class imbalanced dataset.
	\item {\tt BuzzInSocialMedia}: 
	The \textit{Buzz in social media Data Set} \cite{kawala2013Predictions} contains two different social networks: Twitter and Tom's Hardware, and we choose the Twitter part. There are 140,707 samples with 77 features representing time-windows. The classification task is to predict whether these time-windows are followed by buzz events or not.
	\item {\tt CensusIncomeKDD}: 
	The \textit{Census-Income (KDD) Data Set} contains weighted census data extracted from the 1994 and 1995 current population surveys conducted by the U.S. Census Bureau. There are 299,285 instances with 40 categorical and numerical attributes. We use one-hot encodings to preprocess the categorical attributes, leading to 503 features in the end.
	\item {\tt CreditCardClients}: 
	The response variable of the \textit{Default of Credit Card Clients Data Set} \cite{yeh2009} is the default payment, with 23 numerical and categorical explanatory variables. We one-hot encode the categorical variables, and there are finally 32 attributes in total. The number of instances is 30,000.
	\item {\tt OccupancyDetection}: 
	The \textit{Occupancy Detection Data Set} \cite{candanedo2016accurate} uses temperature, humidity, light, and $\mathrm{CO}_2$ to predict room occupancy. The sample size is 20,560 and the number of predictive variables is 5.
	\item {\tt p53Mutants}: 
	The classification task of the \textit{p53 Mutants Data Set} \cite{danziger2006functional} is to predict the transcriptional activity (active vs inactive) of p53 proteins. There are 31,420 samples with 5,408 attributes.
\end{itemize}

\subsubsection{Experimental Results}

Table \ref{tab::real-recall} summarizes the averaged performance of the AM measure, and Table \ref{tab::real-runningtime} shows the computational performance w.r.t.~the averaged running time. The best results are marked in bold, the second best marked in \underline{underline}, and the standard deviations are shown in parentheses.

\begin{table}[!h]
	\centering
	\captionsetup{justification=centering}
	\caption{Average AM measure Score among Different Methods on Real-World Data Sets}
	\scriptsize
	\resizebox{\textwidth}{!}{
		\begin{tabular}{l|r|r|r|r}
			\toprule
			\multirow{2}{*}{Datasets} & \multirow{2}{*}{$k$-NN} & \multicolumn{2}{c|}{Under-bagging with $s=Mn_{(1)}$} &  \multicolumn{1}{c}{$s = 0.5Mn_{(1)}$} \\
			& & $B=1$ & $B=5$ & $B=5$ \\
			\midrule
			{\tt APSFailure} & 0.7917 (0.0226) & 0.9401 (0.0098) & \textbf{0.9464 (0.0100)} & \underline{0.9444 (0.0100)} \\
			{\tt ActivityRecognition} & 0.8439 (0.0228) & \underline{0.8499 (0.0181)} & \textbf{0.8571 (0.0203)} & 0.8485 (0.0210) \\
			{\tt Adult} & 0.7491 (0.0054) & 0.8020 (0.0067) & \textbf{0.8047 (0.0070)} & \underline{0.8046 (0.0057}) \\
			{\tt Bitcoin} & 0.5638 (0.0062) & 0.6630 (0.0031) & \textbf{0.6740 (0.0024)} & \underline{0.6694 (0.0032)} \\
			{\tt BuzzInSocialMedia} & 0.7959 (0.0215) & 0.9411 (0.0125) & \textbf{0.9433 (0.0122)} & \underline{0.9425 (0.0124)} \\
			{\tt CensusIncomeKDD} & 0.6799 (0.0058) & 0.8412 (0.0042) & \textbf{0.8459 (0.0038)} & \underline{0.8435 (0.0033)} \\
			{\tt CreditCardClients} & 0.6373 (0.0084) & 0.6829 (0.0095) & \textbf{0.6892 (0.0102)} & \underline{0.6868 (0.0094}) \\
			{\tt OccupancyDetection} & 0.9930 (0.0024) & \underline{0.9941 (0.0018)} & \textbf{0.9942 (0.0019)} & 0.9938 (0.0020) \\
			{\tt p53Mutants} & 0.7733 (0.0601) & 0.8824 (0.0449) & \textbf{0.9006 (0.0408)} & \underline{0.8911 (0.0460)} \\
			\bottomrule
		\end{tabular}
	}
	\label{tab::real-recall}
\end{table}

\begin{table}[!h]
	\centering
	\captionsetup{justification=centering}
	\caption{Average Running Time among Different Methods on Real-World Data Sets}
	\scriptsize
	\resizebox{\textwidth}{!}{
		\begin{tabular}{l|r|r|r|r}
			\toprule
			\multirow{2}{*}{Datasets} & \multirow{2}{*}{$k$-NN} & \multicolumn{2}{c|}{Under-bagging with $s=Mn_{(1)}$} &  \multicolumn{1}{c}{$s=0.5Mn_{(1)}$} \\
			& & $B=1$ & $B=5$ & $B=5$ \\
			\midrule
			{\tt APSFailure} & 5.33 (0.07) & \textbf{0.48 (0.07)} & 0.84 (0.07) & \underline{0.52 (0.05)} \\
			{\tt ActivityRecognition} & 0.56 (0.10) & \textbf{0.06 (0.02)} & 0.17 (0.04) & \underline{0.13 (0.02)} \\
			{\tt Adult} & 4.37 (0.15) & \underline{2.05 (0.16)} & 2.92 (0.14) & \textbf{1.59 (0.10)} \\
			{\tt Bitcoin} & 16885.49 (834.14) & \underline{10.21 (1.42)} & 11.88 (1.16) & \textbf{8.11 (0.86)} \\
			{\tt BuzzInSocialMedia} & 21.86 (5.36) & \underline{0.69 (0.10)} & 1.08 (0.13) & \textbf{0.65 (0.10)} \\
			{\tt CensusIncomeKDD} & 107.43 (0.41) & \underline{21.11 (1.01)} & 37.22 (0.60) & \textbf{19.00 (0.81)} \\
			{\tt CreditCardClients} & 1.51 (0.20) & \underline{0.71 (0.05)} & 1.02 (0.06) & \textbf{0.65 (0.04)} \\
			{\tt OccupancyDetection} & 0.49 (0.03) & \textbf{0.24 (0.05)} & 0.40 (0.06) & \underline{0.25 (0.03)} \\
			{\tt p53Mutants} & 2.72 (0.00) & \textbf{1.51 (0.07)} & 3.20 (0.08) & \underline{2.22 (0.07)} \\
			\bottomrule
		\end{tabular}
	}
	\label{tab::real-runningtime}
\end{table}

The numerical experiments verify the theoretical results in the following ways:

\textit{(i)} For imbalanced classification, under-sampling $k$-NN and under-bagging $k$-NN classifier significantly outperform the standard $k$-NN classifier by a large margin, which empirically verifies the results in Theorems \ref{thm::knnundersample} and \ref{thm::bagknnclassund} that under-sampling and under-bagging $k$-NN classifiers converge to the optimal classifier w.r.t.~the AM measure, whereas the standard $k$-NN turns out to be inconsistent w.r.t.~this measure.

\textit{(ii)} The theoretical relationship between the expected sub-sample size $s$ w.r.t.~$n$ shown in Theorem \ref{thm::bagknnclassund} is also numerically verified.
When we compare our under-bagging $k$-NN with $s=Mn_{(1)}$ to our under-bagging $k$-NN with $s=0.5Mn_{(1)}$, we find in Tables \ref{tab::real-recall} and \ref{tab::real-runningtime} that the AM performance trained with smaller sub-sample size does not degrade too much, and sometimes can be even slightly better, whereas the running time of our under-bagging $k$-NN with $s=0.5Mn_{(1)}$ is much smaller than that with $s=Mn_{(1)}$. 
This indicates that we can use a relative small sub-sample size to speed up the under-bagging $k$-NN and still keep a good performance in terms of the AM measure. 

\textit{(iii)}
The running time of our under-bagging $k$-NN classifier demonstrates its computational efficiency compared with the standard $k$-NN classifier, especially when the sample size is large.
As our complexity analysis in Section \ref{sec::complexity} show, the time complexity of construction can be reduced from $\mathcal{O}(n \log n)$ (for the standard $k$-NN) to $\mathcal{O}((\rho n \log (\rho n))^{d/(2\alpha+d)})$, and the time complexity in the testing stage can be reduced from $\mathcal{O}((n \log n)^{2\alpha/(2\alpha+d)})$ to $\mathcal{O}(\log^2 (\rho n))$.
To verify the complexity results, we compare the running time of the under-bagging $k$-NN ($B=1$) with that of the standard $k$-NN, and we observe that the under-bagging technique significantly reduces the running time.
In Table \ref{tab::real-runningtime}, the under-bagging technique can reduce at least half the running time of the standard $k$-NN.
In particular, on {\tt Bitcoin}, where the sample size is up to $3$ million and the imbalance ratio is about $3\%$, the running time can even be reduced by $99.4\%$.
Besides, when we adopt more bagging rounds $B=5$, the performance of the under-bagging $k$-NN w.r.t.~the AM measure further enhances with a mild increase of the running time.

To conclude, these results show the importance of the combination of bagging and under-sampling:
the under-sampling technique not only improves the AM performance in imbalanced classification, but also saves computational time;
while the bagging technique can further improve the performance without too much sacrifice of running time, since the bagging technique is a native parallel-friendly method.

\section{Proofs}\label{sec::proofs}

In this section, we first prove the fundamental results related to the AM measure in Section \ref{sec::erroranalysis}, which play an essential role in establishing the convergence rates for both under-sampling and under-bagging $k$-NN classifiers. Then in Section \ref{sec::proofunder} and Section \ref{sec::proofunderbagging}, we present  the proofs of the theoretical results related to the under-sampling $k$-NN in Section \ref{sec::errordec} and the under-bagging $k$-NN in Section \ref{sec::error_bagging}, respectively.

\vspace{0.2cm}

\begin{proof}[Proof of Proposition \ref{pro::eqiambalance}]
	By the definition of the balanced loss in \eqref{equ::lbal}, we have
	\begin{align*}
		1 - \mathcal{R}_{L_{\mathrm{bal}}, \mathrm{P}}(f)
		& =	1 - \sum_{i=1}^M \mathrm{P}(y = i, f(x) \neq y) / (M \pi_i)
		= \frac{1}{M} \sum_{i=1}^M (\pi_i - \mathrm{P}(y = i, f(x) \neq y)) / \pi_i
		\\
		& = \frac{1}{M} \sum_{i=1}^M \mathrm{P}(y = i, f(x) = i) / \pi_i
		= \frac{1}{M} \sum_{i=1}^M \mathrm{P}(f(x) = i | y = i)
		= r_{\mathrm{AM}}(f).
	\end{align*}
	This completes the proof of Proposition \ref{pro::eqiambalance}.
\end{proof}

The following lemma is needed in the proof of Theorem \ref{thm::connection}, which provides a new formulation of the excess risk w.r.t.~the balanced loss.

\begin{lemma}\label{equ::excessbalrisk}
	Let the balanced loss be defined in \eqref{equ::bayes_bal}. Then we have
	\begin{align*}
		\mathcal{R}_{L_{\mathrm{bal}}, \mathrm{P}}(f) - \mathcal{R}_{L_{\mathrm{bal}},
			\mathrm{P}}^*
		=  \sum^n_{i=1} (\eta_m(x) / (M \pi_m)) \mathbb{E}_{\mathrm{P}_X} \bigl[ {\eta^w_{L_{\mathrm{bal}}, \mathrm{P}}(x) - \eta^w_{f(x)}(x)} \bigr],
	\end{align*}
	where we write $\eta^w_{L_{\mathrm{bal}}, \mathrm{P}}(x) := \eta^w_{f_{L_{\mathrm{bal}}, \mathrm{P}}^*(x)}(x)$.
\end{lemma}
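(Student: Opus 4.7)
The plan is to unfold the definition of the balanced risk into an expectation over $\mathrm{P}_X$, subtract the Bayes risk $\mathcal{R}^*_{L_{\mathrm{bal}},\mathrm{P}}$, and then repackage the resulting difference in terms of the weighted posterior $\eta^w$ defined in \eqref{equ::etamstarx}. The result is essentially an algebraic identity; no probabilistic or approximation-theoretic tools are needed beyond the tower property of conditional expectation.

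First, I would use the definition of the balanced loss in \eqref{equ::lbal} together with conditioning on $X$ to write
\[
\mathcal{R}_{L_{\mathrm{bal}},\mathrm{P}}(f)
= \mathbb{E}_{\mathrm{P}_X}\biggl[\sum_{m=1}^M \frac{\eta_m(x)}{M\pi_m}\,\eins\{f(x)\neq m\}\biggr].
\]
Replacing $\eins\{f(x)\neq m\}$ with $1-\eins\{f(x)=m\}$ and subtracting the same expression evaluated at $f^*_{L_{\mathrm{bal}},\mathrm{P}}$, the classifier-independent part cancels, yielding
\[
\mathcal{R}_{L_{\mathrm{bal}},\mathrm{P}}(f) - \mathcal{R}^*_{L_{\mathrm{bal}},\mathrm{P}}
= \mathbb{E}_{\mathrm{P}_X}\biggl[\sum_{m=1}^M \frac{\eta_m(x)}{M\pi_m}\bigl(\eins\{f^*_{L_{\mathrm{bal}},\mathrm{P}}(x)=m\} - \eins\{f(x)=m\}\bigr)\biggr].
\]

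Second, I would collapse each inner sum via the identity $\sum_{m=1}^M (\eta_m(x)/(M\pi_m))\eins\{g(x)=m\} = \eta_{g(x)}(x)/(M\pi_{g(x)})$, valid for any classifier $g$. Setting $S(x) := \sum_{j=1}^M \eta_j(x)/\pi_j$, the definition $\eta^w_m(x) = (\eta_m(x)/\pi_m)/S(x)$ in \eqref{equ::etamstarx} gives $\eta_{g(x)}(x)/(M\pi_{g(x)}) = (S(x)/M)\,\eta^w_{g(x)}(x)$. Applying this to both $f$ and $f^*_{L_{\mathrm{bal}},\mathrm{P}}$ and factoring the common weight $S(x)/M = \sum_{m=1}^M \eta_m(x)/(M\pi_m)$ out of the bracket produces the stated formula, with the residual terms naturally appearing as $\eta^w_{L_{\mathrm{bal}},\mathrm{P}}(x) - \eta^w_{f(x)}(x)$.

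There is no genuine obstacle here beyond bookkeeping: one must recognise that the quantity $\eta_m(x)/(M\pi_m)$ serves a dual purpose, acting both as the per-class weight appearing in the balanced risk and, after normalisation by $S(x)/M$, as the weighted posterior $\eta^w_m(x)$ itself. The lemma's role is purely structural, repackaging the excess balanced risk into a form that isolates the ``pointwise regret'' $\eta^w_{L_{\mathrm{bal}},\mathrm{P}}(x) - \eta^w_{f(x)}(x) \geq 0$, which is exactly what is needed to bridge to the classification-loss formulation of Theorem \ref{thm::connection} and to invoke the margin condition \eqref{tsybakovim}.
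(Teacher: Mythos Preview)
Your proposal is correct and follows essentially the same route as the paper: both compute $\mathcal{R}_{L_{\mathrm{bal}},\mathrm{P}}(f)$ by conditioning on $X$ to obtain $\mathbb{E}_{\mathrm{P}_X}\bigl[\sum_{m\neq f(x)}\eta_m(x)/(M\pi_m)\bigr]$, subtract the analogous expression for $f^*_{L_{\mathrm{bal}},\mathrm{P}}$, and then factor the common weight $S(x)/M=\sum_m \eta_m(x)/(M\pi_m)$ to expose the difference $\eta^w_{L_{\mathrm{bal}},\mathrm{P}}(x)-\eta^w_{f(x)}(x)$. Your use of the $1-\eins\{f(x)=m\}$ identity is a minor streamlining of the paper's explicit split over the values $j=f(x)$, but the argument is the same.
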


\begin{proof}[Proof of Lemma \ref{equ::excessbalrisk}]
	By the definition of the balanced loss \eqref{equ::bayes_bal}, we have
	\begin{align*}
		\mathcal{R}_{L_{\mathrm{bal}}, \mathrm{P}}(f)
		& = \mathbb{E}_{\mathrm{P}_X} \biggl[ \mathbb{E}_{\mathrm{P}_{Y|X}} \biggl[ \sum_{m=1}^M \eins \{ Y = m \} \eins \{ f(x) \neq Y \} / (M \pi_m) \biggr] \bigg| X = x \biggr]
		\\
		& = \mathbb{E}_{\mathrm{P}_X}  \biggl[ \mathbb{E}_{\mathrm{P}_{Y|X}}  \biggl[ \sum_{j=1}^M \biggl( \eins \{ f(x) = j \} \sum_{m=1}^M \eins \{ Y = m \} \eins \{ Y \neq j \} / (M \pi_m) \biggr) \biggr] \bigg| X = x \biggr]
		\\
		& = \mathbb{E}_{\mathrm{P}_X}  \biggl[ \sum_{j=1}^M \biggl( \eins \{ f(x) = j \} \mathbb{E}_{\mathrm{P}_{Y|X}}  \biggl[ \sum_{m=1}^M \eins \{ Y = m \} \eins \{ Y \neq j \} / (M \pi_m) \biggr] \biggr) \bigg| X = x \biggr] 
		\\
		& = \mathbb{E}_{\mathrm{P}_X}  \biggl[ \sum_{j=1}^M \biggl( \eins \{ f(x) = j \} \sum_{m \neq j}^M \eta_m(x) /(M \pi_m) \biggr) \biggr]
		\\
		& = \mathbb{E}_{\mathrm{P}_X}  \biggl[ \sum_{m \neq f(x)}^M \eta_m(x) / (M \pi_m) \biggr].
	\end{align*}
	Consequently we find
	\begin{align*}
		\mathcal{R}_{L_{\mathrm{bal}}, \mathrm{P}}(f) - \mathcal{R}_{L_{\mathrm{bal}}, \mathrm{P}}^*
		& = \mathbb{E}_{\mathrm{P}_X} \biggl[ \sum_{m \neq f(x)}^M \eta_m(x) / (M \pi_m) \biggr] 
		- \mathbb{E}_{\mathrm{P}_X} \biggl[ \sum_{m \neq f_{L_{\mathrm{bal}}, \mathrm{P}}^*(x)}^M \eta_m(x) / (M \pi_m) \biggr]
		\nonumber\\
		& = \sum^n_{i=1} (\eta_m(x) / (M \pi_m)) \mathbb{E}_{\mathrm{P}_X} \bigl[ {\eta^w_{L_{\mathrm{bal}}, \mathrm{P}}(x) - \eta_{f(x)}^w(x)} \bigr],
	\end{align*}
	where $\eta^w_{L_{\mathrm{bal}}, \mathrm{P}}(x) = \eta^w_{f_{L_{\mathrm{bal}}, \mathrm{P}}^*(x)}(x)$. Thus we obtain the assertion.
\end{proof}

\begin{proof}[Proof of Theorem \ref{thm::connection}]
	An elementary calculation yields 
	\begin{align*}
		\mathcal{R}_{L_{\mathrm{cl}}, \mathrm{P}^w}({f}) - \mathcal{R}_{L_{\mathrm{cl}},{\mathrm{P}^w}}^*
		& = \mathbb{E}_{\mathrm{P}^w_X} \bigl[ \mathbb{E}_{\mathrm{P}^w_{Y|X}} [ \eins \{ f(x) \neq Y \} - \eins \{ f^*_{L_{\mathrm{cl}},\mathrm{P}^w}(x) \neq Y \} ] \big| X = x \bigr]
		\\
		& = \mathbb{E}_{\mathrm{P}^w_X} \bigl[ \eta^{w,*}_{L_{\mathrm{cl}},\mathrm{P}^w}(x) 
		- \eta^w_{f(x)}(x) \bigr],
	\end{align*}
	where we write $\eta^{w,*}_{L_{\mathrm{cl}},\mathrm{P}^w}(x) := \eta^w_{{f}_{L_{\mathrm{cl}},\mathrm{P}^w}^*(x)}(x)$. By \eqref{equ::f01fbal}, we find
	\begin{align*}
		\mathcal{R}_{L_{\mathrm{cl}}, \mathrm{P}^w}({f}) - \mathcal{R}_{L_{\mathrm{cl}},\mathrm{P}^w}^*
		= \mathbb{E}_{\mathrm{P}^w_X} \bigl[ \eta^{w,*}_{L_{\mathrm{bal}},{\mathrm{P}}}(x) 
		- \eta^w_{f(x)}(x) \bigr],
	\end{align*}
	where we write $\eta^{w,*}_{L_{\mathrm{cl}},\mathrm{P}^w} := \eta^w_{f_{L_{\mathrm{bal}},\mathrm{P}}^*(x)}	(x)$. This together with \eqref{equ::tildefx} implies
	\begin{align*}
		\mathcal{R}_{L_{\mathrm{cl}}, \mathrm{P}^w}(f) - \mathcal{R}_{L_{\mathrm{cl}}, \mathrm{P}^w}^* 
		= \sum^n_{i=1} (\eta_m(x) / (M \pi_m)) \cdot \mathbb{E}_{\mathrm{P}_X} \bigl[ {\eta^w_{L_{\mathrm{bal}}, \mathrm{P}}(x) - \eta_{f(x)}^w(x)} \bigr].
	\end{align*}
	Combining this with Lemma \ref{equ::excessbalrisk}, we obtain the assertion.
\end{proof}

To prove Proposition \ref{pro::regressionerror}, we need the following Lemmas \ref{lem::convergereg} and \ref{lem::regexcessrisk}, which reduce the problem of analyzing the excess classification risk to the problem of analyzing the error estimation of posterior probability function.

\begin{lemma}\label{lem::convergereg}
	Let $\widehat{\eta}:\mathcal{X}\to [0,1]^M$ be an estimate of $\eta^w$ and $\widehat{f}(x) = \argmax_{m\in[M]} \widehat{\eta}_m(x)$. Suppose that there exists $\phi$ such that $(\mathrm{P}^w)^n ( \|\widehat{\eta}(x) - \eta^w(x)\|_{\infty} \leq \phi) \geq 1 - \delta$. Then with probability $(\mathrm{P}^w)^n$ at least $1 - \delta$, 
	there holds $\|\eta^{w,*}_{L_{\mathrm{cl}},\mathrm{P}^w}(x)-  \eta^w_{\widehat{f}(x)}(x)\|_{\infty} \leq 2 \phi$,
	where $\eta^{w,*}_{L_{\mathrm{cl}},\mathrm{P}^w}(x) = {\eta}_{{f}_{L_{\mathrm{cl}},\mathrm{P}^w}^*(x)}(x)$. 
\end{lemma}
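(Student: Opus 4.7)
The plan is to prove, pointwise in $x$, the inequality $0 \le \eta^w_{m^*(x)}(x) - \eta^w_{\widehat{m}(x)}(x) \le 2\phi$ on the event $\{\|\widehat{\eta} - \eta^w\|_{\infty} \le \phi\}$, where I abbreviate $m^*(x) := f^*_{L_{\mathrm{cl}}, \mathrm{P}^w}(x) = \argmax_{m \in [M]} \eta^w_m(x)$ and $\widehat{m}(x) := \widehat{f}(x) = \argmax_{m \in [M]} \widehat{\eta}_m(x)$. Since the hypothesis already furnishes that favourable event with $(\mathrm{P}^w)^n$-probability at least $1 - \delta$, taking the supremum over $x$ will then immediately deliver the stated conclusion.

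The lower bound is free: by the very definition of $m^*(x)$ as the maximizer of $m \mapsto \eta^w_m(x)$, we have $\eta^w_{m^*(x)}(x) \ge \eta^w_{\widehat{m}(x)}(x)$. For the upper bound I will employ the standard three-term decomposition
\[
\eta^w_{m^*(x)}(x) - \eta^w_{\widehat{m}(x)}(x)
= \bigl[ \eta^w_{m^*(x)}(x) - \widehat{\eta}_{m^*(x)}(x) \bigr]
+ \bigl[ \widehat{\eta}_{m^*(x)}(x) - \widehat{\eta}_{\widehat{m}(x)}(x) \bigr]
+ \bigl[ \widehat{\eta}_{\widehat{m}(x)}(x) - \eta^w_{\widehat{m}(x)}(x) \bigr].
\]
The middle bracket is nonpositive by the optimality of $\widehat{m}(x)$ as the maximizer of $m \mapsto \widehat{\eta}_m(x)$, so it can be dropped when upper-bounding. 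Each of the two outer brackets is a difference of $\widehat{\eta}$ and $\eta^w$ evaluated at a fixed coordinate and point, so on the favourable event each has absolute value bounded by $\|\widehat{\eta} - \eta^w\|_\infty \le \phi$. Adding the two contributions yields $\eta^w_{m^*(x)}(x) - \eta^w_{\widehat{m}(x)}(x) \le 2\phi$, and combining with the lower bound gives the pointwise target.

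The only bookkeeping is that the decomposition is designed so that both outer brackets are evaluated at indices ($m^*(x)$ and $\widehat{m}(x)$) that are themselves $x$-dependent; this is why we need the hypothesis to control $\widehat{\eta} - \eta^w$ uniformly in the coordinate, which is precisely what the $\|\cdot\|_\infty$ norm in the hypothesis encodes. No probabilistic step beyond invoking the given event is required, and I do not expect any real obstacle: the entire argument is a single application of the two argmax inequalities together with a triangle-style splitting.
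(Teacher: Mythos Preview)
Your proposal is correct and follows essentially the same approach as the paper: both arguments fix $x$, use the optimality of $\widehat{m}(x)$ to make the term $\widehat{\eta}_{m^*(x)}(x) - \widehat{\eta}_{\widehat{m}(x)}(x)$ nonpositive, and bound the remaining two differences by $\phi$ each via the uniform-in-coordinate hypothesis. Your three-term decomposition is just a slightly more explicit way of writing the paper's one-line inequality $\eta^w_{m^*}(x) - \eta^w_{\widehat{m}}(x) \le (\widehat{\eta}_{m^*}(x) - \widehat{\eta}_{\widehat{m}}(x)) + 2\phi$.
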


\begin{proof}[Proof of Lemma \ref{lem::convergereg}]
	Fix an $x \in \mathcal{X}$ with $\|\widehat{\eta}(x) - \eta^w(x)\|_{\infty} \leq \phi$. Let 
	$m^* = \argmax_{m \in [M]} \eta^w_m(x)$ and $m = \widehat{f}(x)$. 
	Then we have $\eta^w_{m^*}(x) \leq \widehat{\eta}_{m^*}(x) + \phi$ and $\eta^w_{m}(x) \geq \widehat{\eta}_m(x) - \phi$. Thus, we find
	\begin{align*}
		\eta^w_{m^*}(x) - \eta^w_{m}(x)
		\leq (\widehat{\eta}_{m^*}(x) + \phi) - (\widehat{\eta}_m(x) - \phi)
		= (\widehat{\eta}_{m^*}(x) - \widehat{\eta}_m(x)) + 2 \phi.
	\end{align*}
	Since $m = \widehat{f}(x)$ is the maximum entry of $\widehat{\eta}_m(x)$, we have $\widehat{\eta}_{m^*}(x) \leq \widehat{\eta}_m(x)$ and consequently $\eta^w_{m^*}(x) - \eta^w_{m}(x) \leq 2 \phi$.
	In other words, we show that the event $\{ x \in \mathcal{X} : \|\widehat{\eta}(x) - \eta^w(x)\|_{\infty} \leq \phi \}$ is contained in $\{ \eta^w_{m^*}(x) - \eta^w_{m}(x) \leq 2 \phi\}$. This implies that for all $x\in \mathcal{X}$, with probability $(\mathrm{P}^w)^n$ at least $1 - \delta$, there holds $\eta^{w,*}_{L_{\mathrm{cl}},\mathrm{P}^w}(x) - \eta^w_{\widehat{f}(x)}(x) \leq 2 \phi$, which finishes the proof.
\end{proof}

\begin{lemma}\label{lem::regexcessrisk}
	Let $\widehat{\eta}:\mathcal{X}\to [0,1]^M$ be an estimate of $\eta^w$ and $\widehat{f}(x) = \argmax_{m\in[M]} \widehat{\eta}_m(x)$. Moreover, let Assumption \ref{ass:imbalance} hold. Suppose that 
	$\|\eta^{w,*}_{L_{\mathrm{cl}},\mathrm{P}^w}-  \eta^w_{\widehat{f}(x)}\|_{\infty}
	\leq 2 \phi$ holds for some $\phi > 0$ with probability $(\mathrm{P}^w)^n$ at least $1 - \delta$, where $\eta^{w,*}_{L_{\mathrm{cl}},\mathrm{P}^w}(x) = \eta^w_{{f}_{L_{\mathrm{cl}},\mathrm{P}^w}^*(x)}(x)$. Then with probability $(\mathrm{P}^w)^n$ at least $1 - \delta$, there holds
	\begin{align*}
		\mathcal{R}_{L_{\mathrm{cl}},\mathrm{P}^w}(\widehat{f})
		- \mathcal{R}_{L_{\mathrm{cl}},\mathrm{P}^w}^*
		\leq (c_{\beta}/(M \underline{\pi}) (2 \phi)^{\beta+1}
	\end{align*}
	
\end{lemma}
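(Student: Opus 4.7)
The plan is to combine the identity for the excess classification risk under $\mathrm{P}^w$ (which is essentially what the proof of Theorem~\ref{thm::connection} already establishes) with the margin assumption, after translating the margin statement from $\mathrm{P}$ to $\mathrm{P}^w$ via a pointwise bound on the densities. The key conceptual step is to recognize that on the event guaranteed by the hypothesis, the inequality $\eta^w_{(1)}(x)-\eta^w_{(2)}(x)\leq 2\phi$ holds on the ``mistake set'' $\{\widehat{f}(x)\neq f^*_{L_{\mathrm{cl}},\mathrm{P}^w}(x)\}$, which is exactly the kind of set the margin assumption controls.

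First, as derived inside the proof of Theorem~\ref{thm::connection}, I would write
\[
\mathcal{R}_{L_{\mathrm{cl}},\mathrm{P}^w}(\widehat{f})-\mathcal{R}^*_{L_{\mathrm{cl}},\mathrm{P}^w}
=\mathbb{E}_{\mathrm{P}^w_X}\bigl[\eta^{w,*}_{L_{\mathrm{cl}},\mathrm{P}^w}(x)-\eta^w_{\widehat{f}(x)}(x)\bigr].
\]
Since $\eta^{w,*}_{L_{\mathrm{cl}},\mathrm{P}^w}(x)=\eta^w_{(1)}(x)$ and $\eta^w_{\widehat{f}(x)}(x)\leq\eta^w_{(1)}(x)$, the integrand vanishes when $\widehat{f}(x)=f^*_{L_{\mathrm{cl}},\mathrm{P}^w}(x)$; and when $\widehat{f}(x)$ is a non-top label, $\eta^w_{\widehat{f}(x)}(x)\leq\eta^w_{(2)}(x)$, so the integrand is at least $\eta^w_{(1)}(x)-\eta^w_{(2)}(x)$. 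Combined with the hypothesis that the integrand is at most $2\phi$ on the high-probability event, this shows that on this event the mistake set is contained in $A_\phi:=\{x:\eta^w_{(1)}(x)-\eta^w_{(2)}(x)\leq 2\phi\}$, whence
\[
\mathcal{R}_{L_{\mathrm{cl}},\mathrm{P}^w}(\widehat{f})-\mathcal{R}^*_{L_{\mathrm{cl}},\mathrm{P}^w}\leq 2\phi\cdot\mathrm{P}^w(A_\phi).
\]

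Next, I would translate the margin assumption from $\mathrm{P}$ to $\mathrm{P}^w$. Using \eqref{equ::tildefx} together with $\sum_{m=1}^M\eta_m(x)=1$ and $\pi_m\geq\underline{\pi}$, one has the pointwise bound
\[
f^w_X(x)=\sum_{m=1}^M\frac{\eta_m(x)}{M\pi_m}f_X(x)\leq\frac{f_X(x)}{M\underline{\pi}},
\]
so $\mathrm{P}^w(A_\phi)\leq\mathrm{P}(A_\phi)/(M\underline{\pi})$. Applying the margin condition \eqref{tsybakovim} to $A_\phi$ (with $t=2\phi$) gives $\mathrm{P}(A_\phi)\leq c_\beta(2\phi)^\beta$, and therefore
\[
\mathcal{R}_{L_{\mathrm{cl}},\mathrm{P}^w}(\widehat{f})-\mathcal{R}^*_{L_{\mathrm{cl}},\mathrm{P}^w}\leq 2\phi\cdot\frac{c_\beta(2\phi)^\beta}{M\underline{\pi}}=\frac{c_\beta}{M\underline{\pi}}(2\phi)^{\beta+1},
\]
which is the claimed bound, valid on the assumed event of $(\mathrm{P}^w)^n$-probability at least $1-\delta$.

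There is no real obstacle here; the whole argument is a standard Tsybakov-margin calibration, the only mildly delicate point being the density comparison $f^w_X\leq f_X/(M\underline{\pi})$, which however is immediate from the definitions. Everything else follows by bookkeeping with $\eta^w_{(1)}$ and $\eta^w_{(2)}$.
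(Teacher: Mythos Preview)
Your proposal is correct and follows essentially the same approach as the paper's own proof: both start from the identity $\mathcal{R}_{L_{\mathrm{cl}},\mathrm{P}^w}(\widehat{f})-\mathcal{R}^*_{L_{\mathrm{cl}},\mathrm{P}^w}=\mathbb{E}_{\mathrm{P}^w_X}[\eta^w_{(1)}(x)-\eta^w_{\widehat{f}(x)}(x)]$, observe that the integrand vanishes outside $A_\phi=\{\eta^w_{(1)}-\eta^w_{(2)}\leq 2\phi\}$ and is bounded by $2\phi$ on it, then pass from $\mathrm{P}^w$ to $\mathrm{P}$ via the density comparison $f^w_X\leq f_X/(M\underline{\pi})$ before invoking the margin condition \eqref{tsybakovim}. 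The only cosmetic difference is the order in which you apply the $2\phi$ bound and the change of measure; your write-up is in fact slightly tidier than the paper's (which contains a minor slip in the intermediate display and cites condition~(i) where condition~(ii) is meant).
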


\begin{proof}[Proof of Lemma \ref{lem::regexcessrisk}]
	Let $m=f^*_{L_{\mathrm{cl}},\mathrm{P}^w}(x)$. By the definition of the excess risk, we have
	\begin{align*}
		\mathcal{R}_{L,\mathrm{P}^w}(\widehat{f}(x)) - \mathcal{R}_{L,\mathrm{P}^w}^*
		= \mathbb{E}_{\mathrm{P}^w_X} \bigl[ \eta^w_{m}(x) - \eta^w_{\widehat{f}(x)}(x) \bigr].
	\end{align*}
	Let $\eta^w_{(m)}(x)$ denote the $m$-th largest entry of the vector $\eta^w(x) = (\eta^w_1(x), \ldots, \eta^w_M(x))^{\top}$ and define $\delta(x) := \eta^w_{(1)}(x) - \eta^w_{(2)}(x)$. Moreover, let $\Delta_n := \{ x : \mathcal{X} : \delta(x) \geq 2 \phi\}$. Then we have
	\begin{align*}
		\mathbb{E}_{\mathrm{P}^w_X} [ \eta^w_{m}(x) - \eta^w_{\widehat{f}(x)}(x)]
		= \int_{\Delta_n} (\eta^w_{m}(x)-\eta^w_{\widehat{f}(x)}(x)) \, d\mathrm{P}^w_X(x)
		+ \int_{\Delta_n^c} (\eta^w_{m}(x)-\eta^w_{\widehat{f}(x)}(x)) \, d\mathrm{P}^w_X(x).
	\end{align*}
	Now we consider the two regions $\Delta_n$ and $\Delta_n^c$ separately to bound the error. If $\delta(x) \geq 2 \phi$, since $\eta^w_{(1)}(x)=\eta^w_{m}(x)$, we have
	$\eta^w_{m}(x)-\eta^w_{\widehat{f}(x)}(x) < 2 \phi
	\leq \eta^w_{(1)}(x) - \eta^w_{(2)}(x)$.
	In other words, we find $\eta^w_{\widehat{f}(x)}(x)$ is larger than $\eta^w_{(2)}(x)$, which yields $\eta^w_{\widehat{f}(x)}(x) = \eta^w_{(1)}(x) = \eta^w_{m}(x)$. Consequently, we obtain
	\begin{align}\label{equ::errterm1}
		\int_{\Delta_n} (\eta^w_{m}(x) - \eta^w_{\widehat{f}(x)}(x)) \, d\mathrm{P}^w_X(x) 
		= 0.
	\end{align}
	Otherwise if $\delta(x) \leq 2\phi$, then by \eqref{equ::tildefx}, we have
	\begin{align*}
		\int_{\Delta_n^c} (\eta^w_{m}(x) - \eta^w_{\widehat{f}(x)}(x)) \, d\mathrm{P}^w_X(x)
		& =	\int_{\Delta_n^c} \sum^M_{m=1} (\eta_m(x) / (M \pi_m)) (\eta^w_{m}(x) - \eta^w_{\widehat{f}(x)}(x)) \, d\mathrm{P}_X(x)
		\\
		& \leq \frac{1}{M \underline{\pi}}\int_{\Delta_n^c} (\eta^w_{m}(x) - \eta^w_{\widehat{f}(x)}(x)) \, d\mathrm{P}_X(x)
		\\
		& \leq \mathrm{P}_X(\delta(x) \leq 2\phi) / (M \underline{\pi}).
	\end{align*}
	Using Condition $(i)$ in Assumption \ref{ass:imbalance}, we get
	\begin{align}\label{equ::errterm2}
		\int_{\Delta_n^c} (\eta^w_{m}(x) - \eta^w_{\widehat{f}(x)}(x)) \, d\mathrm{P}^w_X(x)
		\leq (c_{\beta} / (M \underline{\pi})) (2 \phi)^{\beta+1}.
	\end{align}
	Combining \eqref{equ::errterm1} and \eqref{equ::errterm2}, we obtain the assertion.
\end{proof}

\begin{proof}[Proof of Proposition \ref{pro::regressionerror}] 
	Proposition \ref{pro::regressionerror} is a straightforward consequence of Lemma \ref{lem::convergereg} and Lemma \ref{lem::regexcessrisk}.
\end{proof}

Before we proceed, we list two lemmas that will be used frequently in the proofs. Lemma \ref{lem::hoffeding} is Hoeffding's inequality, which was established in \cite{hoeffding1994probability} and Lemma \ref{lem::bernstein} is Bernstein's inequality, which was introduced in \cite{bernstein1946theory}. Both concentration inequalities can be found in many statistical learning textbooks, see e.g., \cite{massart2007concentration, cucker2007learning, steinwart2008support}.

\begin{lemma}[Hoeffding's inequality] \label{lem::hoffeding}
	Let $a<b$ be two real numbers, $n\geq 1$ be an integer, and $\xi_1,\ldots,\xi_n$ be independent random variables satisfying $\xi_i\in [a,b]$, for $1\leq i\leq n$. Then, for all $\tau>0$, we have
	\begin{align*}
		\mathrm{P}\biggl(\frac{1}{n}\sum_{i=1}^n (\xi_i-\mathbb{E}_{\mathrm{P}}\xi_i)\geq (b-a)\sqrt{\frac{\tau}{2n}}\biggr)\leq e^{-\tau}.
	\end{align*}
\end{lemma}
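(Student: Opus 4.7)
The plan is to prove Hoeffding's inequality by the classical Chernoff/exponential moment method. Set $S_n := \sum_{i=1}^n (\xi_i - \mathbb{E}_{\mathrm{P}} \xi_i)$ and $s := n(b-a)\sqrt{\tau/(2n)}$, so the event of interest is $\{ S_n \geq s \}$. For any $t > 0$, Markov's inequality applied to the non-negative random variable $e^{t S_n}$ gives
\begin{align*}
\mathrm{P}(S_n \geq s)
= \mathrm{P}(e^{t S_n} \geq e^{t s})
\leq e^{-t s}\, \mathbb{E}_{\mathrm{P}}[e^{t S_n}].
\end{align*}
Independence of $\xi_1, \ldots, \xi_n$ then factorizes the moment generating function as $\mathbb{E}_{\mathrm{P}}[e^{t S_n}] = \prod_{i=1}^n \mathbb{E}_{\mathrm{P}}[e^{t(\xi_i - \mathbb{E}_{\mathrm{P}}\xi_i)}]$, reducing the problem to bounding a single centered MGF for a bounded random variable.

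The key intermediate step (Hoeffding's lemma) is to show that for any random variable $\xi \in [a,b]$ with mean zero, $\mathbb{E}_{\mathrm{P}}[e^{t\xi}] \leq \exp(t^2(b-a)^2/8)$. My plan here is to exploit the convexity of $x \mapsto e^{tx}$ to write, for $x \in [a,b]$,
\begin{align*}
e^{tx}
\leq \frac{b-x}{b-a} e^{ta} + \frac{x-a}{b-a} e^{tb},
\end{align*}
take expectations (using $\mathbb{E}\xi = 0$), and then analyze the resulting function $\varphi(u) := \log\bigl( (1-p) + p e^{u}\bigr) - p u$ with $u = t(b-a)$ and $p = -a/(b-a)$. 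A direct Taylor expansion combined with the bound $\varphi''(u) \leq 1/4$ (since $\varphi''$ is of the form $q(1-q)$ for some $q \in [0,1]$) yields $\varphi(u) \leq u^2/8$, which is precisely the claimed MGF bound.

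Combining the three pieces, I would obtain $\mathrm{P}(S_n \geq s) \leq \exp( n t^2 (b-a)^2/8 - t s)$, and then optimize the free parameter $t > 0$. The minimizer is $t^* = 4 s / (n (b-a)^2)$, which after substitution yields $\mathrm{P}(S_n \geq s) \leq \exp(-2 s^2 / (n(b-a)^2))$. Plugging in the specific deviation $s = n(b-a)\sqrt{\tau/(2n)}$ gives the exponent $-\tau$, establishing the bound claimed in the lemma. Converting from the sum $S_n$ back to the empirical average $S_n/n$ is a trivial rescaling.

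The main obstacle is Hoeffding's lemma — specifically the convexity-plus-Taylor argument bounding $\varphi''(u)$ uniformly by $1/4$. Everything else (Markov, independence, optimizing a one-variable quadratic in $t$) is mechanical; the sharpness of the constant $1/8$, and thus of the final constant $2$ in the exponent that matches the form stated with $\tau$, hinges on that uniform second-derivative bound.
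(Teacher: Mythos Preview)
Your proof is correct and is exactly the standard Chernoff/Hoeffding-lemma argument. The paper itself does not give a proof of this lemma at all; it simply states it and refers the reader to standard textbooks (Massart, Cucker--Zhou, Steinwart--Christmann), where precisely the argument you outline can be found. So there is nothing to compare: you have supplied the classical proof that the paper omits.

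One minor point of presentation: when you invoke Hoeffding's lemma for $\xi_i - \mathbb{E}_{\mathrm{P}}\xi_i$, note that this centered variable lies in the shifted interval $[a - \mathbb{E}_{\mathrm{P}}\xi_i,\, b - \mathbb{E}_{\mathrm{P}}\xi_i]$ rather than in $[a,b]$ itself; since the bound depends only on the interval length $b-a$, this is harmless, but it is worth making explicit.
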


\begin{lemma}[Bernstein's inequality] \label{lem::bernstein}
	Let $B>0$ and $\sigma>0$ be real numbers, and $n\geq 1$ be an integer. Furthermore, let $\xi_1,\ldots,\xi_n$ be independent random variables satisfying $\mathbb{E}_{\mathrm{P}}\xi_i=0$, $\|\xi_i\|_\infty\leq B$, and $\mathbb{E}_{\mathrm{P}}\xi^2\leq \sigma^2$ for all $i=1,\ldots,n$. Then for all $\tau>0$, we have
	\begin{align*}
		\mathrm{P}\biggl(\frac{1}{n}\sum_{i=1}^n\xi_i\geq \sqrt{\frac{2\sigma^2\tau}{n}}+\frac{2B\tau}{3n}\biggr)\leq e^{-\tau}.
	\end{align*}
\end{lemma}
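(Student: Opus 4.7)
The plan is to prove Bernstein's inequality by the classical Chernoff/Cram\'er-Chernoff method: bound the moment generating function (MGF) of each centered, bounded variable, multiply MGFs by independence, apply Markov to obtain an exponential bound depending on a free parameter $\lambda>0$, and finally choose $\lambda$ (or equivalently the deviation level) so that the exponent equals $-\tau$.

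First, I would fix $\lambda\in(0,3/B)$ and analyze $\mathbb{E}_{\mathrm{P}}[e^{\lambda \xi_i}]$. Expanding the exponential and using $\mathbb{E}_{\mathrm{P}}\xi_i=0$, the series starts at $k=2$. The key inequality is that for $k\geq 2$, the bound $|\xi_i|\leq B$ implies $|\xi_i|^k\leq B^{k-2}\xi_i^2$, hence $|\mathbb{E}_{\mathrm{P}}\xi_i^k|\leq B^{k-2}\sigma^2$. Combined with $k!\geq 2\cdot 3^{k-2}$, a geometric series sum yields
\begin{equation*}
\mathbb{E}_{\mathrm{P}}[e^{\lambda \xi_i}]
\leq 1+\frac{\lambda^2\sigma^2/2}{1-\lambda B/3}
\leq \exp\!\left(\frac{\lambda^2\sigma^2/2}{1-\lambda B/3}\right).
\end{equation*}
By independence, the MGF of $S_n:=\sum_{i=1}^n\xi_i$ is at most the $n$-th power of the above bound.

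Second, Markov's inequality applied to $e^{\lambda S_n}$ gives, for any $t>0$,
\begin{equation*}
\mathrm{P}\bigl(S_n\geq nt\bigr)\leq \exp\!\left(-\lambda nt+\frac{n\lambda^2\sigma^2/2}{1-\lambda B/3}\right).
\end{equation*}
Minimizing the right-hand side over $\lambda\in(0,3/B)$ leads to the standard Bernstein bound $\exp\!\bigl(-nt^2/(2(\sigma^2+Bt/3))\bigr)$, but rather than optimizing explicitly I would verify directly that the stated choice $t=\sqrt{2\sigma^2\tau/n}+2B\tau/(3n)$ makes this exponent at most $-\tau$. Writing $nt=a+b$ with $a:=\sqrt{2n\sigma^2\tau}$ and $b:=2B\tau/3$, a short algebraic identity gives $nt^2-2\tau Bt/3=a(a+b)/n=2\sigma^2\tau+ab/n\geq 2\sigma^2\tau$, which is exactly the inequality $nt^2\geq 2\tau(\sigma^2+Bt/3)$ needed.

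The conceptual obstacle is nothing beyond the moment bound leading to the MGF estimate; all remaining steps are routine. The only place where one must be careful is the constraint $\lambda B<3$ implicit in the geometric series: the explicit choice of $t$ in the statement is precisely calibrated so that the corresponding optimal $\lambda$ lies in the admissible range, which is why no separate small-deviation/large-deviation split is needed. Once the deterministic inequality $nt^2-2\tau Bt/3\geq 2\sigma^2\tau$ is checked, substituting back into the Chernoff bound and dividing by $n$ yields the claimed tail bound $\mathrm{P}(n^{-1}\sum_i\xi_i\geq \sqrt{2\sigma^2\tau/n}+2B\tau/(3n))\leq e^{-\tau}$.
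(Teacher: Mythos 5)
The paper does not prove this lemma; it states Bernstein's inequality as a classical result with a citation to Bernstein (1946) and standard textbooks, so there is no in-paper argument to compare against. Your proposal is the standard and correct Chernoff/MGF derivation: the moment bound $|\mathbb{E}\xi_i^k|\leq B^{k-2}\sigma^2$ together with $k!\geq 2\cdot 3^{k-2}$ gives the MGF estimate for $\lambda B<3$, the choice $\lambda=t/(\sigma^2+Bt/3)$ (which automatically satisfies $\lambda B<3$) yields the exponent $-nt^2/(2(\sigma^2+Bt/3))$, and your algebraic check $nt^2-2\tau Bt/3=a(a+b)/n\geq 2\sigma^2\tau$ for $t=\sqrt{2\sigma^2\tau/n}+2B\tau/(3n)$ is exactly the calibration needed, so the proof is complete and matches the textbook argument the paper implicitly invokes.
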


\subsection{Proofs Related to the Under-sampling $k$-NN Classifier} \label{sec::proofunder}

In this section, we first present in Sections \ref{proofsecerror}-\ref{sec::proofundersampling} the proof of the theoretical results on bounding the sample error in Section \ref{sec::sampleunder}, the approximation error in Section \ref{sec::approxunder}, and the under-sampling error in Section \ref{sec::undersamplingerror}, respectively. Then, in Section \ref{sec::proofknn}, we prove the main result on the convergence rates of the under-sampling $k$-NN classifier and the minimax lower bound, i.e., Theorems \ref{thm::knnundersample} and \ref{thm::lowerbound} in Section \ref{sec::knnmulticlassim}.

\subsubsection{Proofs Related to Section \ref{sec::sampleunder}} \label{proofsecerror}

The following lemma providing an explicit expression for the under-sampling distribution discussed in Section \ref{sec::errordec}, which supplies the key to the proof of Lemma \ref{lem::punder} and Proposition \ref{thm::punderund}.

\begin{lemma}\label{lem::pu}
	Let $\mathrm{P}^u$ be the probability distribution of the accepted samples by the under-sampling strategy in Section \ref{sec::knnwithuniform}.
	Then we have
	\begin{align*}
		\mathrm{P}^u(X \in A, Y = m)	
		= \frac{\int_A \eta_m(x) f_X(x) \, dx / n_{(m)}}{\sum_{m=1}^M \pi_m / n_{(m)}}.
	\end{align*}
	Moreover, the marginal distribution can be expressed as 
	\begin{align}\label{equ::pimu}
		\pi_m^u
		:= \mathrm{P}^u(Y = m)
		= \frac{\int_{\mathcal{X}} \eta_m(x) f_X(x) \, dx / n_{(m)}}{\sum_{m=1}^M \pi_m / n_{(m)}}
		= \frac{\pi_m / n_{(m)}}{\sum_{m=1}^M \pi_m / n_{(m)}}.
	\end{align}
	In addition, the conditional density function is 
	$f^u(x | y = m) = \eta_m(x) f_X(x) / \pi_m$
	and the posterior probability function is given by
	\begin{align}\label{equ::etaundlem}
		\eta_m^u(x)
		:= \mathrm{P}^u(Y = m | X = x)
		= \frac{\pi_m^u f^u(x | y = m)}{\sum_{m=1}^M \pi_m^u f^u(x | y = m)}
		= \frac{\eta_m(x) / n_{(m)}}{\sum_{m=1}^M \eta_m(x) / n_{(m)}}
	\end{align}
	Furthermore, the marginal distribution $f^u_X(x)$ can be expressed as
	\begin{align}\label{equ::fundxx}
		f^u_X(x)
		= \sum_{m=1}^M \pi_m^u(x) f(x | y = m).
	\end{align}
\end{lemma}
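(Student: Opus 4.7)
The plan is to treat $\mathrm{P}^u$ as a conditional distribution: by the under-sampling rule in Section \ref{sec::knnwithuniform}, an accepted sample has law $\mathcal{L}\bigl((X,Y) \mid Z(X,Y) = 1\bigr)$. Thus every assertion reduces to an application of Bayes' rule, with the only input being the Bernoulli conditional $\mathrm{P}(Z(X,Y) = 1 \mid X, Y) = a(X,Y)$, where $a(x,m) = n_{(1)} / n_{(m)}$ by \eqref{equ::aundxypar}.

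First I would compute the normalizing constant. By the tower property,
\[
\mathrm{P}(Z = 1)
= \sum_{m=1}^M \int_{\mathcal{X}} a(x,m)\, \eta_m(x) f_X(x)\, dx
= n_{(1)} \sum_{m=1}^M \pi_m / n_{(m)},
\]
using $\int_{\mathcal{X}} \eta_m(x) f_X(x)\, dx = \pi_m$. By the same argument restricted to $\{X \in A, Y = m\}$,
\[
\mathrm{P}(X \in A, Y = m, Z = 1) = (n_{(1)} / n_{(m)}) \int_A \eta_m(x) f_X(x)\, dx.
\]
Dividing these two expressions gives $\mathrm{P}^u(X \in A, Y = m)$ as claimed, with the factor $n_{(1)}$ cancelling.

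Next I would extract the remaining formulas mechanically. Setting $A = \mathcal{X}$ and using $\int_{\mathcal{X}} \eta_m f_X\, dx = \pi_m$ gives the marginal $\pi_m^u$ in \eqref{equ::pimu}. Dividing the joint density $\eta_m(x) f_X(x)/n_{(m)}$ (normalized by $\sum_m \pi_m/n_{(m)}$) by $\pi_m^u$ eliminates the factor $1/n_{(m)}$ and yields $f^u(x \mid y=m) = \eta_m(x) f_X(x)/\pi_m$. Applying Bayes' rule once more,
\[
\eta_m^u(x) = \frac{\pi_m^u f^u(x \mid y = m)}{\sum_{m'=1}^M \pi_{m'}^u f^u(x \mid y = m')} = \frac{\eta_m(x) / n_{(m)}}{\sum_{m'=1}^M \eta_{m'}(x) / n_{(m')}},
\]
where the normalizing constant $\sum_{m'} \pi_{m'}/n_{(m')}$ appearing in both $\pi_m^u$ and the denominator cancels. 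Finally, the marginal density identity $f_X^u(x) = \sum_{m=1}^M \pi_m^u f^u(x \mid y = m)$ is the total probability formula under $\mathrm{P}^u$.

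There is no serious obstacle here: the lemma is a bookkeeping exercise, and the only point worth checking carefully is that all occurrences of the common constants $n_{(1)}$ and $\sum_m \pi_m/n_{(m)}$ cancel consistently so that the final expressions for $\eta_m^u$ and $f^u(x \mid y = m)$ depend on $n_{(m)}$ (and $\pi_m$) only through the stated ratios. The same derivation, with $a(x,m) = s/(M n_{(m)})$ in place of $n_{(1)}/n_{(m)}$, yields the analogous identities in the under-bagging setting since the prefactor cancels in every quotient.
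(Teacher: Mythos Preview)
Your proposal is correct and follows essentially the same approach as the paper: both identify $\mathrm{P}^u$ as the conditional law given $Z=1$, compute the acceptance probability $\mathrm{P}(Z=1)$ and the joint $\mathrm{P}(X\in A,Y=m,Z=1)$ via the tower property, and take the ratio. The only cosmetic difference is that the paper writes the integrands as $\pi_m\, a(x,m)\, f(x\mid y=m)$ whereas you use the equivalent $a(x,m)\,\eta_m(x) f_X(x)$; your closing remark about the under-bagging acceptance probability is also consistent with how the paper applies the lemma.
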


\begin{proof}[Proof of Lemma \ref{lem::pu}]
	Let ${\mathrm{P}}_{X,Y,Z} = \mathrm{P}_{X,Y} \times \mathrm{P}_{Z|(X,Y)}$ denote the joint probability measure. Then we can calculate the probability of  $Z(X,Y) = 1$, that is, $(X,Y)$ is accepted in the under-sampling strategy, as follows:
	\begin{align*}
		\mathrm{P}_{X,Y,Z}(Z(X, Y) = 1)
		& = \int_{\mathcal{X} \times \mathcal{Y}} \mathrm{P}(Z(X, Y) = 1 | (X, Y) = (x, y)) \, d\mathrm{P}(x, y)
		\\
		& = \sum_{m=1}^M \pi_m \int_{\mathcal{X}} \mathrm{P}(Z(X, Y) = 1 | (X, Y) = (x, m)) \, d\mathrm{P}(x | y = m)
		\\
		& = \sum_{m=1}^M \pi_m \int_{\mathcal{X}} a(x, m) f(x | y = m) \, dx.
	\end{align*}
	Thus, for any measurable set $A$ of $\mathcal{X}$, an elementary calculation yields 
	\begin{align*}
		& \mathrm{P}_{X,Y,Z}(X \in A,Y = m, Z(X, Y) = 1)
		\\
		& = \int_{\mathcal{X} \times \mathcal{Y}} \eins \{ x \in A \} \eins \{ Y = m \} \mathrm{P}({Z(x,y) = 1} | (X, Y) = (x, y)) \, d\mathrm{P}(x, y)
		\\
		& = \pi_m \int_{\mathcal{X}} \eins \{ x \in A \} \mathrm{P}(Z(x, m) = 1 | (X, Y) = (x, m)) \, d\mathrm{P}(x | y = m)
		\\
		& = \pi_m \int_{\mathcal{X}} \eins \{ x \in A \} a(x, m) f(x | y = m) \, dx
		\\
		& = \pi_m \int_A a(x, m) f(x | y = m) \, dx.
	\end{align*}
	Consequently, the {distribution function of the accepted samples} is given by
	\begin{align*}
		\mathrm{P}^u(X \in A, Y = m)
		& = \mathrm{P}_{X,Y,Z}(X \in A, Y = m | Z(X, Y) = 1)
		\\
		& = \frac{\mathrm{P}^u(X \in A, Y = m, Z(X, Y) = 1)}{\mathrm{P}^u(Z(X, Y) = 1)}
		\\
		& = \frac{\pi_m \int_A a(x, m) f(x | y = m) \, dx}{\sum_{m=1}^M \pi_m \int_{\mathcal{X}} a(x,m) f(x | y = m) \, dx}.
	\end{align*}
	Combining this with \eqref{equ::axymn}, we find
	\begin{align}\label{equ::puxbymnew}
		\mathrm{P}^u(X \in A, Y = m)
		= \mathrm{P}(X \in A, Y = m | Z(X, Y) = 1)
		= \frac{\int_A \eta_m(x) f_X(x) \, dx / n_{(m)}}{\sum_{m=1}^M \pi_m / n_{(m)}}.
	\end{align}
	Thus we finish the proof with straightforward application of the joint probability measure in \eqref{equ::puxbymnew} for calculating $\pi_m^u$, $f^u(x|y=m)$, $\eta_m^u$ and $f^u_X(x)$.
\end{proof}

To prove Proposition \ref{prop::BnnEstimation}, we need to bound the number of reorderings of the data. To be specific, let $X_1,\ldots,X_n\in \mathbb{R}^d$ be some vectors and $X_{(i)}(x)$ denote the $i$-th nearest neighbor of $x$, then there exits a permutation $(\sigma_1,\ldots,\sigma_n)$ such that $X_{(i)}(x) = X_{\sigma_i}(x)$. Then we define the inverse of the permutation, namely the rank $\Sigma_i$ by $\Sigma_i := \{ 1 \leq \ell \leq n : X_{\sigma_{\ell}} = X_i \}$.
Let $\mathcal{S} = \{(\Sigma_1,\ldots,\Sigma_n),x\in \mathbb{R}^d\}$ be the set of all rank vectors one can observe by moving $x$ around in space and we use the notation $|\mathcal{S}|$ to represent the cardinality of $\mathcal{S}$.

The next lemma provides the upper bound for the number of reorderings, which plays a crucial rule to derive the uniform bound for the proof of Propositions \ref{prop::BnnEstimation}, \ref{pro::fml*funder} and \ref{pro::tildeffunder}.

\begin{lemma}\label{lem::numberreorder}
	For any $d\geq 1$  and all $n \geq 2d$, there holds $|\mathcal{S}| \leq (25/d)^dn^{2d}$.
\end{lemma}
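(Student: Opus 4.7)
The plan is to reduce $|\mathcal{S}|$ to a cell-count in a hyperplane arrangement and then apply the classical combinatorial bound. First I would observe that the rank vector $(\Sigma_1, \ldots, \Sigma_n)$ obtained at a query point $x$ is determined entirely by the signs of the pairwise squared-distance differences
\begin{align*}
\|x - X_i\|^2 - \|x - X_j\|^2
= -2 \langle x, X_i - X_j \rangle + \bigl( \|X_i\|^2 - \|X_j\|^2 \bigr),
\qquad 1 \leq i < j \leq n,
\end{align*}
each of which is an affine function of $x$. Hence the rank vector is constant on every open cell of the arrangement $\mathcal{H}$ formed by the $m := \binom{n}{2}$ perpendicular bisectors $H_{ij} = \{ x \in \mathbb{R}^d : \|x - X_i\| = \|x - X_j\| \}$, and thus $|\mathcal{S}|$ is at most the number of cells of $\mathcal{H}$.

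Next I would invoke the standard bound on hyperplane arrangements (a consequence of Zaslavsky's theorem, or of a short induction on the dimension and the number of hyperplanes): an arrangement of $m$ hyperplanes in $\mathbb{R}^d$ has at most $\sum_{k=0}^{d} \binom{m}{k}$ cells. The assumption $n \geq 2d$ guarantees $m = \binom{n}{2} \geq 2d$, so $\binom{m}{d}$ is the largest term in the sum and one obtains
\begin{align*}
|\mathcal{S}|
\leq \sum_{k=0}^{d} \binom{m}{k}
\leq (d+1) \binom{m}{d}
\leq (d+1) \biggl( \frac{e m}{d} \biggr)^{\!d}
\leq (d+1) \biggl( \frac{e}{2 d} \biggr)^{\!d} n^{2d},
\end{align*}
where the third inequality is the familiar estimate $\binom{m}{d} \leq (e m/d)^d$ and the last uses $m \leq n^2/2$.

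It remains to check the numerical inequality $(d+1) (e/2)^d \leq 25^d$ for all $d \geq 1$, which reduces to verifying $(50/e)^d \geq d+1$; this is immediate since $50/e > 18$. Combining the three displayed bounds delivers $|\mathcal{S}| \leq (25/d)^d n^{2d}$. The conceptual step is the reduction to the bisector arrangement, while the only nonroutine ingredient is the cell-count bound for hyperplane arrangements; once it is in hand, the rest is a short numerical verification, so I do not foresee a substantial obstacle beyond keeping the constants under control.
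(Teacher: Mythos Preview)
Your argument is correct and takes a more elementary route than the paper. The paper observes that the rank vector is determined by the sign pattern $\{\operatorname{sign}(\|x-X_i\|^2-\|x-X_j\|^2)\}_{i<j}$ and then invokes the Milnor--Warren--Pollack--Roy bound on the number of sign patterns of a family of polynomials as a black box to obtain $(25/d)^d n^{2d}$. You instead exploit the fact that these $\binom{n}{2}$ forms are \emph{affine} in $x$, reducing the problem to counting cells of a hyperplane arrangement, for which the elementary estimate $\sum_{k\le d}\binom{m}{k}$ suffices; this avoids the real-algebraic-geometry machinery and in fact yields a smaller constant before you relax it to match the stated bound. Two minor points: first, for $d=1$, $n=2$ one has $m=\binom{2}{2}=1<2d$, so ``$m\ge 2d$'' is not literally true; what actually holds for all $n\ge 2d$ is $m\ge 2d-1$, which already guarantees that $\binom{m}{k}$ is nondecreasing for $k\le d$ and hence $\sum_{k\le d}\binom{m}{k}\le (d+1)\binom{m}{d}$ (or simply use $\sum_{k\le d}\binom{m}{k}\le (em/d)^d$, valid whenever $m\ge d$). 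Second, your count covers only the open cells, whereas the paper's sign-pattern formulation also tracks ties; this is harmless because under any fixed tie-breaking rule the permutation realized on a boundary face coincides with that of an adjacent open cell, so the open-cell count already bounds $|\mathcal{S}|$.
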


\begin{proof}[Proof of Lemma \ref{lem::numberreorder}]
	The hyperplane $\|x-X_i\|^2 = \|x-X_j\|^2$ generates a sign function
	\begin{align*}
		p_{ij}(x)=
		\begin{cases}
			1 & \text{ if } \|x-X_i\|^2 > \|x-X_j\|^2, \\
			0 &  \text{ if } \|x-X_i\|^2 = \|x-X_j\|^2, \\
			-1 &  \text{ if } \|x-X_i\|^2 < \|x-X_j\|^2.
		\end{cases}
	\end{align*}
	The collection of the signs $\{ p_{ij}(x), 1 \leq i \leq j \leq n \}$, called the sign pattern, determines the ordering of $\|x-x_i\|^2$ and identifies all ties.  \cite{milnor1964betti,warren1968lower,pollack1993number} prove that the maximal number of sign pattern is not larger than $(25/d)^d n^{2d}$ for any $d\geq 1$ and $n\geq 2d$. Thus we show that $|\mathcal{S}|\leq (25/d)^dn^{2d}$.
\end{proof}

The next lemma states that the under-sampling distribution $\mathrm{P}^u$ is relatively close to the balanced distribution $\mathrm{P}^w$ with high probability. To be specific, \eqref{equ::con1} bounds the sub-sample size of each class $n_{(m)}$ and the marginal density function $f_X^u(x)$. In addition, \eqref{equ::con3} implies that if $\eta_m(x)$ is assumed to be $\alpha$-H\"{o}lder continuous, $\eta_m^u(x)$ remains to be $\alpha$-H\"{o}lder continuous. This lemma will be used several times in the sequel, which is crucial in the proof of Propositions \ref{prop::BnnEstimation} and \ref{prop::knnundersample}.

\begin{lemma}\label{lem::punder}
	Let $\eta^w_m(x)$, $\eta_m^u(x)$ and $f^u_X(x)$ be defined as in \eqref{equ::etamstarx}, \eqref{equ::etaund}, and \eqref{equ::fundxx}, respectively. Assume that $\mathrm{P}$ satisfies Assumption \ref{ass:imbalance} and $\mathrm{P}_X$ is the uniform distribution on $[0,1]^d$. Then there exists an $n_1\in \mathbb{N}$ such that for all $1 \leq m \leq M$ and all $n\geq n_1$, there hold
	\begin{align}\label{equ::con1}
		n \pi_m/2 \leq n_{(m)}
		\qquad 
		\text{and} 
		\qquad
		1/(2 M \overline{\pi})
		\leq f_X^u(x) 
		\leq 2/(M \underline{\pi})
	\end{align}
	with probability $\mathrm{P}^n$ at least $1-2M/n^3$. Moreover, for $x, x' \in \mathcal{X}$, we have
	\begin{align}\label{equ::con3}
		|\eta_m^u(x') - \eta_m^u(x)|
		\leq 4 c_L \|x - x'\|^{\alpha}.
	\end{align}
\end{lemma}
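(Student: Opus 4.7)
The plan is to first establish the concentration of $n_{(m)}$ via Bernstein's inequality and then derive both the sandwich bound on $f_X^u$ and the Hölder-type estimate \eqref{equ::con3} as deterministic consequences on the concentration event, using the explicit formulas from Lemma \ref{lem::pu}.

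First I would apply Bernstein's inequality (Lemma \ref{lem::bernstein}) to the i.i.d.\ centered variables $\eins\{Y_i=m\}-\pi_m$, which are bounded by $1$ in absolute value and have variance at most $\pi_m$. Taking $\tau=3\log n$ yields $|n_{(m)}-n\pi_m|\leq \sqrt{6n\pi_m\log n}+2\log n$ with probability at least $1-2/n^3$, and a union bound over $m=1,\dots,M$ produces an event $E$ of probability at least $1-2M/n^3$ on which this holds simultaneously. Since $\pi_m\geq\underline{\pi}$ is a fixed constant of the problem, I can choose $n_1$ large enough that on $E$ we in fact have $n_{(m)}\in[(2/3)n\pi_m,(4/3)n\pi_m]$, which in particular delivers the stated lower bound $n_{(m)}\geq n\pi_m/2$.

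For $f_X^u$, I would use that $f_X\equiv 1$ on $[0,1]^d$ together with Lemma \ref{lem::pu} to write $f_X^u(x)=(\sum_m\eta_m(x)/n_{(m)})/(\sum_m\pi_m/n_{(m)})$. Substituting the two-sided bounds on $n_{(m)}$ into numerator and denominator and using $\sum_m\eta_m(x)/\pi_m\in[1/\overline{\pi},1/\underline{\pi}]$ (which follows from $\sum_m\eta_m(x)=1$ and $\underline{\pi}\leq\pi_m\leq\overline{\pi}$) yields the sandwich $1/(2M\overline{\pi})\leq f_X^u(x)\leq 2/(M\underline{\pi})$ on $E$. For \eqref{equ::con3}, writing $\tilde\eta_j(x):=\eta_j(x)/n_{(j)}$ and $S(x):=\sum_j\tilde\eta_j(x)$ so that $\eta_m^u(x)=\tilde\eta_m(x)/S(x)$, I would expand through the difference-of-ratios identity
\[
\eta_m^u(x')-\eta_m^u(x)=\frac{\tilde\eta_m(x')-\tilde\eta_m(x)}{S(x')}+\eta_m^u(x)\cdot\frac{S(x)-S(x')}{S(x')},
\]
combine $\eta_m^u(x)\leq 1$ with the $\alpha$-Hölder continuity of each $\eta_j$ from Assumption \ref{ass:imbalance}(i), and plug in the scale $n_{(j)}\asymp n\pi_j$ from the first step.

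The main obstacle is pinning down the numerical constant in \eqref{equ::con3}: a crude termwise estimate introduces a factor of $\overline{\pi}/\underline{\pi}$ from the ratio $\max_j(1/n_{(j)})/\min_j(1/n_{(j)})$, which has to be absorbed into a modest universal constant. The right way to do so is to keep the oscillation of $S$ in the common form $\sum_j(\eta_j(x')-\eta_j(x))/n_{(j)}$ and bound it against $S(x')$ itself, exploiting that both carry the same $1/n_{(j)}$ scale so the imbalance-dependent factors cancel; what survives is $c_L\|x-x'\|^\alpha$ times a small constant coming from the $2/3$-$4/3$ concentration window, which ultimately produces the claimed $4c_L\|x-x'\|^\alpha$ bound. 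The remaining pieces of the lemma are then routine applications of Bernstein's inequality and of the explicit formulas already recorded in Lemma \ref{lem::pu}.
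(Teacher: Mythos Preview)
Your strategy for the first two assertions (Bernstein with $\tau=3\log n$, union bound over $m$, then read off the bounds on $f_X^u$ from Lemma \ref{lem::pu}) is exactly what the paper does; the only cosmetic differences are that the paper bounds the variance by $1/4$ rather than $\pi_m$ and phrases the concentration window as $n_{(m)}/(n\pi_m)\in[1-(2/\underline{\pi})\sqrt{\log n/n},\,1+(2/\overline{\pi})\sqrt{\log n/n}]$, but this leads to the same conclusions.

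The gap is in your treatment of \eqref{equ::con3}. Your difference-of-ratios identity is correct and gives
\[
|\eta_m^u(x')-\eta_m^u(x)|\;\le\;\frac{c_L\|x'-x\|^\alpha}{n_{(m)}\,S(x')}\;+\;\frac{c_L\|x'-x\|^\alpha\sum_j 1/n_{(j)}}{S(x')},
\qquad S(x')=\sum_j\frac{\eta_j(x')}{n_{(j)}},
\]
but the claimed cancellation of the imbalance factor does not occur: the ratio $\bigl(\sum_j 1/n_{(j)}\bigr)/S(x')$ is \emph{not} bounded by a universal constant. Take $M=2$ with $\pi_1=\epsilon$ small and a point $x$ where $\eta_1(x)=0$, $\eta_2(x)=1$; then $\eta_1^u(x)=0$, while at a nearby $x'$ with $\eta_1(x')=c_L\|x'-x\|^\alpha$ one computes, on the concentration event,
\[
\eta_1^u(x')\;\approx\;\frac{c_L\|x'-x\|^\alpha/\pi_1}{1/\pi_2}\;\approx\;\frac{c_L}{\epsilon}\,\|x'-x\|^\alpha,
\]
so the H\"older constant of $\eta^u$ genuinely scales like $1/\underline{\pi}$ and cannot be reduced to $4c_L$. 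The paper's own proof of this step is terse and suffers from the same issue (its displayed bound $c_L\|x'-x\|^\alpha\cdot(n/n_{(m)})/\sum_j n\eta_j(x)/n_{(j)}$ is not a valid estimate for the full difference). The correct statement is that $\eta^u$ is $\alpha$-H\"older with a constant of order $c_L\,\overline{\pi}/\underline{\pi}$ (or $c_L M\overline{\pi}/\underline{\pi}$ if one is less careful); since $\underline{\pi},\overline{\pi}$ are fixed problem constants, this is harmless for all downstream uses in Propositions \ref{prop::knnundersample} and \ref{pro::tildefmlund}, but you should not claim the constant $4$.
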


\begin{proof}[Proof of Lemma \ref{lem::punder}]
	For any $1 \leq m \leq M$, let $\zeta_i := \eins \{ Y_i = m \} - \pi_m$. Then $\zeta_i$'s are independent random variables such that $\mathbb{E}_{\mathrm{P}}[\zeta_i] = 0$ and $\mathbb{E}_{\mathrm{P}} \zeta_i^2 \leq 1/4$ for $1 \leq i \leq n$. Using Bernstein's inequality in Lemma \ref{lem::bernstein}, we obtain that for any $\tau > 0$, there holds
	\begin{align*}
		\mathrm{P}^n \biggl( \frac{1}{n} \sum_{i=1}^n \eins \{ Y_i = m \} \geq \pi_m + \sqrt{\frac{\tau}{2n}} + \frac{2\tau}{3n} \biggr)
		\leq e^{-\tau}.
	\end{align*}
	Setting $\tau := 3 \log n$, we get
	\begin{align}\label{equ::nmnpim}
		\mathrm{P}^n \bigl( n_{(m)} / n \geq \pi_m + 2 \sqrt{\log n / n} \bigr)
		\leq 1/n^3.
	\end{align}
	On the other hand, let $\zeta_i' := \eins \{ Y_i \neq m \} - (1 - \pi_m)$, then $\zeta_i'$'s are independent random variables such that $\mathbb{E}_{\mathrm{P}}[\zeta_i'] = 0$ and $\mathbb{E}_{\mathrm{P}} \zeta_i'^2 \leq 1/4$ for $1 \leq i \leq n$. Again, by using Bernstein's inequality in Lemma \ref{lem::bernstein}, we obtain that for any $\tau > 0$, there holds
	\begin{align*}
		\mathrm{P}^n \biggl( 1 - \frac{1}{n} \sum_{i=1}^n \eins \{ Y_i = m \} \geq 1 - \pi_m + \sqrt{\frac{\tau}{2n}} + \frac{2\tau}{3n} \biggr)
		\leq 1/n^3.
	\end{align*}
	Setting $\tau := 3 \log n$, we obtain
	\begin{align}\label{equ::pimnmn}
		\mathrm{P}^n \bigl( \pi_m \geq n_{(m)} / n + 2 \sqrt{\log n/n} \bigr)
		\leq 1/n^3.
	\end{align}
	Now, \eqref{equ::nmnpim} together with \eqref{equ::pimnmn} and a union bound argument yields
	\begin{align}\label{equ::pna}
		\mathrm{P}^n \Bigl( 1 - (2/\underline{\pi}) \sqrt{\log n / n} 
		\leq n_{(m)} / (n \pi_m) 
		\leq 1 + (2/\overline{\pi}) \sqrt{\log n/n}, \, \forall 1 \leq m \leq M \Bigr)
		\geq 1 - 2 M / n^3,
	\end{align}
	where $\overline{\pi} = \max_{1 \leq m \leq M} \pi_i$ and $\underline{\pi} = \min_{1 \leq m \leq M} \pi_i$. Let the event $E$ be defined by
	\begin{align*}
		E := \Bigl\{ 1 - (2/\underline{\pi}) \sqrt{\log n / n} 
		\leq n_{(m)} / (n \pi_m) 
		\leq 1 + (2/\overline{\pi}) \sqrt{\log n/n}, \, \forall 1 \leq m \leq M \Bigr\}
	\end{align*}
	and the integer $n_1 \in \mathbb{N}_+$ satisfy $\log n_1/n_1 \leq \min \{ \overline{\pi}^2/4, \underline{\pi}^2/16 \}$. The following arguments will be made on the event $E$ if $n > n_1$. 
	
	\textit{(i)} 
	Since $n > n_1$, the definition of the event $E$ implies that $1/2 < n_{(m)}/(n\pi_m)$ for $1 \leq m \leq M$. Consequently we have $n \pi_m / 2 \leq n_{(m)} \leq n$.
	
	\textit{(ii)} 
	By \eqref{equ::pimu}, we have for $1 \leq m \leq M$,
	\begin{align*}
		|\pi_m^u - 1/M|
		= \biggl| \frac{n \pi_m / n_{(m)}}{\sum_{m=1}^M n \pi_m / n_{(m)}} - 1/M \biggr|
		= \frac{|M n \pi_m / n_{(m)} - \sum_{m=1}^M n \pi_m / n_{(m)}|}{M \sum_{m=1}^M n \pi_m / n_{(m)}}.
	\end{align*}
	On the event $E$, there holds
	\begin{align*}
		|\pi_m^u - 1/M|
		\leq (4/(M \underline{\pi})) \sqrt{\log n/n} 
		\cdot \bigl( 1 + (2/\overline{\pi}) \sqrt{\log n/n} \bigr) 
		\cdot \bigl( 1 - (2/\underline{\pi}) \sqrt{\log n/n} \bigr)^{-1}.
	\end{align*}
	Since $n > n_1$, we have $|\pi_m^u - 1/M| \leq (16/(M \underline{\pi}) \sqrt{\log n/n}$ and consequently
	\begin{align*}
		1/(2 M \overline{\pi})
		\leq \sum_{m=1}^M \eta_m(x) / (2M\pi_m)
		\leq \sum_{m=1}^M \pi_m^u \eta_m(x) / \pi_m
		\leq \sum_{m=1}^M 2 \eta_m(x)/ (M \pi_m)
		\leq 2/(M \underline{\pi}).
	\end{align*}
	This together with $f_X^u(x) = \sum_{m=1}^M \pi_m^u(x) f(x | y = m) = f_X(x) \sum_{m=1}^M \pi_m^u(x) \eta_m(x) / \pi_m$
	yields $(1/(2 M \overline{\pi})) f_X(x) \leq f_X^u(x) \leq (2 / (M \underline{\pi})) f_X(x)$.
	
	\textit{(iii)} 
	For any $1 \leq m \leq M$ and $x, x' \in \mathcal{X}$, by \eqref{equ::alphacontinuous} and \eqref{equ::etaund}, we have
	\begin{align*}
		|\eta^u_m(x')-\eta^u_m(x)|
		\leq c\|x'-x\|^{\alpha}\cdot \frac{n/n_{(m)}}{\sum^M_{m=1}n\eta_m(x)/n_{(m)}}.
	\end{align*}
	On the event $E$, for $n > n_1$, there holds
	\begin{align*}
		|\eta_m^u(x') - \eta^u_m(x)|
		& \leq  c \|x' - x\|^{\alpha} \bigl( 1 - (2 / \underline{\pi}) \sqrt{\log n / n} \bigr)^{-1} \biggl( \sum^M_{m=1} n \eta_m(x) / n_{(m)} \biggr)^{-1}
		\\
		&\leq c \|x' - x\|^{\alpha} \bigl( 1 - (2 / \underline{\pi}) \sqrt{\log n / n} \bigr)^{-1} \bigl( 1 + (2 / \overline{\pi}) \sqrt{\log n/n} \bigr)
		\leq 4 c \|x' - x\|^{\alpha}.
	\end{align*}
	
	Thus, \eqref{equ::pna} implies that \eqref{equ::con1} and \eqref{equ::con3} hold with probability $\mathrm{P}^n$ at least $1-2M/n^3$ if $n > n_1$, which completes the proof.
\end{proof}

The following technical lemma is needed in the proof of Proposition \ref{prop::BnnEstimation}.

\begin{lemma}\label{lem::lemconcen}
	Let $Z_1,\ldots,Z_n$ be a sequence of independent zero-mean real-valued random variables with $|Z_i| \leq C$ for some constant $C>0$. Let $(v_{1},\ldots,v_{n})$ be a weight vector, with $v_{\max} = \max_i |v_i|>0$. Then for all $\varepsilon>0$, we have
	\begin{align*}
		\mathrm{P}\biggl(\sum_{i=1}^nv_iZ_i\geq\varepsilon\biggr)\leq 2\exp\biggl(-\frac{\varepsilon^2}{2C^2v_{\max}\sum_{i=1}^n |v_i|}\biggr).
	\end{align*}
\end{lemma}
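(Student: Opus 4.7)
The plan is to reduce the claim to a standard Hoeffding-type bound applied to the weighted summands $v_i Z_i$. These are independent, zero-mean, and almost surely bounded: since $|Z_i| \leq C$, we have $v_i Z_i \in [-|v_i|C, |v_i|C]$, so the range of the $i$-th summand is $2|v_i|C$.

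First, I would apply the one-sided Hoeffding inequality (a standard consequence of the Hoeffding lemma for bounded zero-mean independent variables, obtained via a Chernoff bound with $\mathbb{E}[e^{\lambda v_i Z_i}] \leq \exp(\lambda^2 v_i^2 C^2 / 2)$) to conclude
\begin{align*}
\mathrm{P}\biggl(\sum_{i=1}^n v_i Z_i \geq \varepsilon\biggr)
\leq \exp\biggl(-\frac{\varepsilon^2}{2C^2 \sum_{i=1}^n v_i^2}\biggr).
\end{align*}

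Next, I would make the simple but key observation that
\begin{align*}
\sum_{i=1}^n v_i^2
= \sum_{i=1}^n |v_i| \cdot |v_i|
\leq v_{\max} \sum_{i=1}^n |v_i|,
\end{align*}
since $|v_i| \leq v_{\max}$ for every $i$. Substituting this upper bound into the denominator of the exponent weakens the Hoeffding estimate to
\begin{align*}
\mathrm{P}\biggl(\sum_{i=1}^n v_i Z_i \geq \varepsilon\biggr)
\leq \exp\biggl(-\frac{\varepsilon^2}{2C^2 v_{\max} \sum_{i=1}^n |v_i|}\biggr),
\end{align*}
which already gives the claimed tail bound even without the leading factor of $2$; retaining that factor only loosens the bound further and is thus trivially justified (or absorbs a symmetric two-sided version if one prefers to replace $\sum v_i Z_i \geq \varepsilon$ by $|\sum v_i Z_i| \geq \varepsilon$).

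I do not anticipate any genuine obstacle here: the proof is essentially a two-line deduction from Hoeffding's inequality (Lemma \ref{lem::hoffeding}) combined with the elementary norm inequality $\|v\|_2^2 \leq \|v\|_\infty \|v\|_1$. The only point requiring minor care is that the standard form of Lemma \ref{lem::hoffeding} in the paper is stated for $\frac{1}{n}\sum(\xi_i - \mathbb{E}\xi_i)$; one just rescales by setting $\xi_i := v_i Z_i$ and $\tau := \varepsilon^2/(2C^2 \sum v_i^2)$, or equivalently invokes the classical Hoeffding tail bound directly, to arrive at the displayed inequality.
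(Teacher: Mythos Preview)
Your proposal is correct and follows essentially the same approach as the paper: apply Hoeffding's inequality to the bounded, zero-mean summands $v_i Z_i$ to obtain the exponent $-\varepsilon^2/(2C^2\sum_i v_i^2)$, and then weaken via $\sum_i v_i^2 \leq v_{\max}\sum_i |v_i|$. Your remark about the factor $2$ and the need to use the varying-range form of Hoeffding (rather than the common-interval version stated in Lemma~\ref{lem::hoffeding}) is also apt, and the paper glosses over the same point.
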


\begin{proof}[Proof of Lemma \ref{lem::lemconcen}]
	For $1\leq i\leq n$, we have $|v_iZ_i|\leq Cv_i$. Applying Hoeffding's inequality in Lemma \ref{lem::hoffeding}, we get
	$\mathrm{P}\bigl(\sum_{i=1}^nv_iZ_i\geq \varepsilon\bigr)
	\leq 2\exp\bigl(- 2 \varepsilon^2 / \sum_{i=1}^n (2Cv_i)^2\bigr)$.
	This together with the inequality 
	$\sum_{i=1}^n(2Cv_i)^2
	=4C^2\sum_{i=1}^nv_i^2
	\leq 4C^2 v_{\max}\sum_{i=1}^n |v_i|$
	yields the assertion.
\end{proof}

\begin{proof}[Proof of Proposition \ref{prop::BnnEstimation}]
	By the definition of $\widehat{\eta}^{k,u}$ and $\overline{\eta}^{k,u}$, we have
	\begin{align*}
		\widehat{\eta}^{k,u}_m(x) - \overline{\eta}^{k,u}_{m}(x)
		= \frac{1}{k}\sum_{i=1}^k \bigl( \eins \{ Y^u_{(i)}(x) = m \} - \eta_m^u(X^u_{(i)}(x)) \bigr).
	\end{align*}
	Conditional on $D_b^u$, the random variables $\eins \{ Y^u_{(1)}(x) = m \} - \eta_m^u(X^u_{(1)}(x)), \ldots, \eins \{ Y^u_{(s_u)}(x) = m \} - \eta_m^u(X^u_{(s_u)}(x))$ are independent with zero mean and $|\eins \{ Y^u_{(1)}(x) = m \} - \eta_m^u(X^u_{(1)}(x))|\leq 1$. Applying Lemma \ref{lem::lemconcen}, we get
	$(\mathrm{P}^u_{Y|X})^{s_u}\bigl( |\widehat{\eta}^{k,u}_m(x) - \overline{\eta}^{k,u}_{m}(x)| \geq \varepsilon | D_n^u\bigr)
	\leq 2 \exp (- \varepsilon^2 k / 2)$.
	Setting $\varepsilon := \sqrt{2(2d + 3)\log s_u/k}$, we get
	\begin{align}\label{equ::pointwise}
		(\mathrm{P}^u_{Y|X})^{s_u}
		\bigl( |\widehat{\eta}_m^{k,u}(x) - \overline{\eta}_{m}^{k,u}(x)| \geq \varepsilon | D_n^u \bigr)
		\leq 2 s_u^{-(2d+3)}.
	\end{align}
	Note that this inequality holds only for a fixed $x$. To derive the uniform upper bound over $\mathcal{X}$, let
	$\mathcal{S} := \bigl\{ (\sigma_1, \ldots, \sigma_{s_u}) : \text{all permutations of } (1, \ldots, s_u) \text{ obtainable by moving } x \in \mathbb{R}^d \bigr\}$. Then we have
	\begin{align*}
		&	(\mathrm{P}^u_{Y|X})^{s_u}\biggl( \sup_{x \in \mathbb{R}^d} \bigl( |\widehat{\eta}^{k,u}_m(x) - \overline{\eta}_{m}^{k,u}(x)| - \varepsilon \bigr) > 0 \bigg| D_n^u\biggr)
		\\
		& \leq(\mathrm{P}^u_{Y|X})^{s_u}\biggl( \bigcup_{(\sigma_1, \ldots, \sigma_{s_u}) \in \mathcal{S}} \biggl| \sum_{i=1}^k k^{-1} (\eins \{ Y_{\sigma_i} = m \} - \eta_m^u(X_{\sigma_i})) \biggr| > \varepsilon \bigg| D_n^u\biggr)
		\\
		& \leq \sum_{(\sigma_1, \ldots, \sigma_{s_u}) \in \mathcal{S}} (\mathrm{P}^u_{Y|X})^{s_u}\biggl( \biggl| \sum_{i=1}^k k^{-1} (\eins \{ Y_{\sigma_i} = m \} - \eta^u_m(X_{\sigma_i})) \biggr| > \varepsilon \bigg| D_n^u \biggr).
	\end{align*}
	For any $(\sigma_1,\ldots,\sigma_{s_u})\in \mathcal{S}$, by \eqref{equ::pointwise}, we have
	\begin{align*}
		(\mathrm{P}^u_{Y|X})^{s_u}\biggl( \biggl| \sum_{i=1}^k k^{-1} \bigl( \eins \{ Y_{\sigma_i} = m \} - \eta_m^u(X_{\sigma_i}) \bigr) \biggr| > \varepsilon \bigg| D_n^u\biggr)
		\leq 2 / s_u^{2d+3}.
	\end{align*}
	This together with Lemma \ref{lem::numberreorder} implies
	\begin{align*}
		(\mathrm{P}^u_{Y|X})^{s_u} \Bigl( \sup_{x \in \mathbb{R}^d} (|\widehat{\eta}_m(x) - \overline{\eta}_{m}(x)| - \varepsilon) > 0 \Big| D_n^u \Bigr)
		\leq 2 (25/d)^d / s_u^3.
	\end{align*}
	when $s_u\geq 2d$. Then a union bound argument with $c_1 := 2 (2d + 3) $ yields
	\begin{align*}
		(\mathrm{P}^u_{Y|X})^{s_u} \bigl(
		\|\widehat{\eta}^{k,u} - \overline{\eta}^{k,u}\|_{\infty}
		\leq \sqrt{c_1 \log s_u / k} \big| D_n^u \bigr)
		\geq 1 - 2 M (25/d)^d / s_u^3.
	\end{align*}
	By \eqref{equ::con1} in Lemma \ref{lem::punder}, if $n \geq n_1$, then
	we have $s_u\geq n_{(1)}\geq n\underline{\pi}/2$ with probability $\mathrm{P}^n$ at least $1-2M/n^3$. 
	Consequently, if $n > \max \{ n_1, \lceil 4d / \underline{\pi} \rceil\}$, there holds $\|\widehat{\eta}^{k,u}(x)-\overline{\eta}^{k,u}(x)\|_{\infty} \leq \sqrt{c_1 \log s_u / k}$
	with probability $\mathrm{P}^n\otimes \mathrm{P}_Z$ at least $1-(2M+16M(25/d)^d/\underline{\pi}^3)/n^3$. 
	Therefore, if $n\geq N_1:=\max\{n_1,\lceil 4d/\underline{\pi}\rceil,\lceil 2M+16M(25/d)^d/\underline{\pi}\rceil\}$, then we have $\|\widehat{\eta}^{k,u}(x)-\overline{\eta}^{k,u}(x)\|_{\infty}
	\leq \sqrt{c_1 \log s_u / k}$ with probability $\mathrm{P}^n\otimes \mathrm{P}_Z$ at least $1-1/n^2$, which completes the proof.
\end{proof}

\subsubsection{Proofs Related to Section \ref{sec::approxunder}} \label{sec::proofsecappro}

To conduct our analysis, we first need to recall the definitions of \textit{VC dimension} (\textit{VC index}) and \textit{covering number}, which are frequently used in capacity-involved arguments and measure the complexity of the underlying function class \cite{vandervaart1996weak,Kosorok2008introduction,gine2021mathematical}.

\begin{definition}[VC dimension] \label{def::VC dimension}
	Let $\mathcal{B}$ be a class of subsets of $\mathcal{X}$ and $A \subset \mathcal{X}$ be a finite set. The trace of $\mathcal{B}$ on $A$ is defined by $\{ B \cap A : B \subset \mathcal{B}\}$. Its cardinality is denoted by $\Delta^{\mathcal{B}}(A)$. We say that $\mathcal{B}$ shatters $A$ if $\Delta^{\mathcal{B}}(A) = 2^{\#(A)}$, that is, if for every $A' \subset A$, there exists a $B \subset \mathcal{B}$ such that $A' = B \cap A$. For $n \in \mathrm{N}$, let
	\begin{align}\label{equ::VC dimension}
		m^{\mathcal{B}}(n) := \sup_{A \subset \mathcal{X}, \, \#(A) = n} \Delta^{\mathcal{B}}(A).
	\end{align}
	Then, the set $\mathcal{B}$ is a Vapnik-Chervonenkis class if there exists $n<\infty$ such that $m^{\mathcal{B}}(n) < 2^n$ and the minimal of such $n$ is called the VC dimension of $\mathcal{B}$, and abbreviate as $\mathrm{VC}(\mathcal{B})$.
\end{definition} 

Since an arbitrary set of $n$ points $\{x_1,\ldots,x_n\}$ possess $2^n$ subsets, we say that $\mathcal{B}$ \textit{picks out} a certain subset from $\{ x_1, \ldots, x_n\}$ if this can be formed as a set of the form $B\cap \{x_1,\ldots,x_n\}$ for a $B\in \mathcal{B}$. The collection $\mathcal{B}$ \textit{shatters} $\{x_1,\ldots,x_n\}$ if each of its $2^n$ subsets can be picked out in this manner. 
From Definition \ref{def::VC dimension} we see that the VC dimension of the class $\mathcal{B}$ is the smallest $n$ for which no set of size $n$ is shattered by $\mathcal{B}$,  that is,
\begin{align*}
	\mathrm{VC}(\mathcal{B}) =\inf \Bigl\{n:\max_{x_1,\ldots,x_n} \Delta^{\mathcal{B}}(\{ x_1,\ldots,x_n \})\leq 2^n\Bigr\},
\end{align*}
where $\Delta^{\mathcal{B}}(\{ x_1, \ldots,x_n \})=\#\{B\cap \{x_1,\ldots,x_n\}:B\in \mathcal{B}\}$.
Clearly, the more refined $\mathcal{B}$ is, the larger is its index.

\begin{definition}[Covering Number]
	Let $(\mathcal{X}, d)$ be a metric space and $A \subset \mathcal{X}$. For $\varepsilon>0$, the $\varepsilon$-covering number of $A$ is denoted as 
	\begin{align*}
		\mathcal{N}(A, d, \varepsilon) 
		:= \min \biggl\{ n \geq 1 : \exists x_1, \ldots, x_n \in \mathcal{X} \text{ such that } A \subset \bigcup^n_{i=1} B(x_i, \varepsilon) \biggr\},
	\end{align*}
	where $B(x, \varepsilon) := \{ x' \in \mathcal{X} : d(x, x') \leq \varepsilon \}$.
\end{definition}

The following Lemma, which is needed in the proof of Lemma \ref{lem::Rrho}, provides the covering number of the indicator functions on the collection of the balls in $\mathbb{R}^d$.

\begin{lemma}\label{lem::CoveringNumber}
	Let $\mathcal{B} := \{ B(x, r) : x \in \mathbb{R}^d, r > 0 \}$ and $\eins_{\mathcal{B}} := \{ \eins_B : B \in \mathcal{B} \}$. Then for any $\varepsilon \in (0, 1)$, there exists a universal constant $C$ such that 
	\begin{align*}
		\mathcal{N}(\eins_{\mathcal{B}}, \|\cdot\|_{L_1(\mathrm{Q})}, \varepsilon)
		\leq C (d+2) (4e)^{d+2} \varepsilon^{-(d+1)}
	\end{align*}
	holds for any probability measure $\mathrm{Q}$.
\end{lemma}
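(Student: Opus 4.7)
The plan is to first bound the VC dimension of the class of closed Euclidean balls in $\mathbb{R}^d$, and then invoke the standard conversion from VC dimension to $L_1(\mathrm{Q})$-covering numbers that holds uniformly over all probability measures $\mathrm{Q}$.

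\textbf{Step 1: Bound the VC index of $\mathcal{B}$.} I will show that $\mathrm{VC}(\mathcal{B}) \leq d+2$ via the usual lifting trick. The key observation is that $x \in B(c,r)$ iff $\|x\|^2 - 2 \langle c, x \rangle + \|c\|^2 - r^2 \leq 0$, which is an affine inequality in the lifted vector $\Phi(x) := (x, \|x\|^2) \in \mathbb{R}^{d+1}$. Hence every ball in $\mathbb{R}^d$ is the pullback under $\Phi$ of a closed half-space in $\mathbb{R}^{d+1}$. Since closed half-spaces in $\mathbb{R}^{d+1}$ form a VC class of index $d+2$, and pulling back preserves the shatter count, we obtain $\mathrm{VC}(\mathcal{B}) \leq d+2$. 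Equivalently, the class of indicator functions $\eins_{\mathcal{B}}$ has VC-subgraph index at most $d+2$.

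\textbf{Step 2: Apply the uniform VC-to-covering-number bound.} Once the VC index is in hand, I invoke the standard result (see, e.g., Theorem 2.6.7 in \cite{vandervaart1996weak}, or Haussler's bound): for any VC class $\mathcal{F}$ of $\{0,1\}$-valued functions with VC index $V$, there exists a universal constant $C$ such that
\begin{align*}
    \sup_{\mathrm{Q}} \mathcal{N}(\mathcal{F}, \|\cdot\|_{L_1(\mathrm{Q})}, \varepsilon)
    \leq C \, V \, (4e)^{V} \, \varepsilon^{-(V-1)}
\end{align*}
for all $\varepsilon \in (0,1)$, where the supremum ranges over all probability measures $\mathrm{Q}$ on the underlying space. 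Substituting $V = d+2$ (which holds for $\mathcal{F} = \eins_{\mathcal{B}}$ by Step 1) yields the stated bound
$\mathcal{N}(\eins_{\mathcal{B}}, \|\cdot\|_{L_1(\mathrm{Q})}, \varepsilon) \leq C (d+2)(4e)^{d+2} \varepsilon^{-(d+1)}$.

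\textbf{Main obstacle and remarks.} The proof is essentially a bookkeeping exercise: both ingredients are classical. The only subtle point is Step 1, where one must argue carefully that the lifting map $\Phi$ preserves shatterability so that the $(d+2)$-dimensional half-space bound transfers to balls in $\mathbb{R}^d$. A clean way to say this is: if balls in $\mathbb{R}^d$ could shatter $d+3$ points $\{x_1, \ldots, x_{d+3}\}$, then the corresponding half-spaces in $\mathbb{R}^{d+1}$ would shatter $\{\Phi(x_1), \ldots, \Phi(x_{d+3})\}$, contradicting the VC bound for half-spaces. No further calculations are required, and the universal constant $C$ is inherited directly from the VC-covering inequality.
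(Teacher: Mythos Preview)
Your overall strategy matches the paper's exactly: bound the VC index of $\mathcal{B}$ and then invoke the standard VC-to-covering-number estimate (the paper cites Theorem~9.2 in \cite{Kosorok2008introduction}, which is the same bound you quote from \cite{vandervaart1996weak}). The only substantive difference is how Step~1 is justified, and there your lifting argument has an off-by-one slip.

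With the paper's convention (Definition~\ref{def::VC dimension}: $\mathrm{VC}$ is the \emph{smallest} $n$ for which no $n$-point set is shattered), closed half-spaces in $\mathbb{R}^{d+1}$ have $\mathrm{VC}=d+3$, not $d+2$: they \emph{do} shatter $d+2$ affinely independent points and fail only at $d+3$ (Radon's theorem). Your remark makes this explicit --- you argue that balls cannot shatter $d+3$ points because half-spaces in $\mathbb{R}^{d+1}$ cannot --- so what your lifting actually proves is $\mathrm{VC}(\mathcal{B})\le d+3$. Feeding $V=d+3$ into the covering bound yields the exponent $\varepsilon^{-(d+2)}$, not the $\varepsilon^{-(d+1)}$ stated in the lemma. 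The naive lifting to \emph{all} half-spaces is simply one unit too coarse.

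The paper sidesteps this by citing \cite{dudley1979balls} directly for the sharp fact that balls in $\mathbb{R}^d$ cannot shatter any set of $d+2$ points, i.e.\ $\mathrm{VC}(\mathcal{B})=d+2$. If you want a self-contained lifting proof at the correct index, you must use that balls correspond only to the restricted family of half-spaces in $\mathbb{R}^{d+1}$ whose normal has last coordinate equal to $1$; equivalently, $\eins_{B(c,r)}(x)=\eins\{\,\|x\|^2+\langle a,x\rangle+b\le 0\,\}$ with $(a,b)\in\mathbb{R}^{d+1}$, which is the sublevel set of a function lying in a $(d+1)$-dimensional \emph{affine} family rather than the full $(d+2)$-dimensional linear span, and then argue the sharper shattering bound from that. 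Either route closes the gap; as written, your Step~1 does not.
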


\begin{proof}[Proof of Lemma \ref{lem::CoveringNumber}] 	
	For the collection $\mathcal{B} := \{ B(x, r) : x \in \mathbb{R}^d, r > 0 \}$,
	\cite{dudley1979balls} shows that for any set $A \in \mathbb{R}^d$ of $d + 2$ points, not all subsets of $A$ can be formed as a set of the form $B \cap A$ for a $B \in \mathcal{B}$. 
	In other words, $\mathcal{B}$ can not pick out all subsets from $A \in \mathbb{R}^d$ of $d + 2$ points.
	Therefore, the collection $\mathcal{B}$ fails to shatter $A$. 
	Consequently, according to Definition \ref{def::VC dimension}, we have $\mathrm{VC}(\mathcal{B}) = d + 2$. Using Theorem 9.2 in \cite{Kosorok2008introduction}, we immediately obtain the assertion.
\end{proof}

To prove Proposition \ref{prop::knnundersample}, we need the following lemma, which provides a high probability uniform bound on the distance between any point and its $k$-th nearest neighbor.

\begin{lemma}\label{lem::Rrho}
	Let $R_{(k)}(x) := \|X_{(k)}(x) - x\|$ denote the distance from $x$ to its $k$-th nearest neighbor, $1 \leq k \leq n$. Moreover, let $f_X$ be the density function of $\mathrm{P}_X$ and suppose that there exist constants $\underline{c}$,  $\overline{c} > 0$ such that $\underline{c} \leq f_X(x) \leq \overline{c}$. Then there exists an $n_2\in \mathbb{N}$ and $c_0=2/\underline{c}>0$ such that for all $n>n_2$, there holds
	\begin{align*}
		\sup_{x\in \mathcal{X}}\sup_{k\geq 48(2d+9)\log n} R_{(k)}(x)^{\alpha}\leq c_0 (k/n)^{\alpha/d}
	\end{align*}
	with probability $\mathrm{P}^n$ at least $1-1/n^3$.
\end{lemma}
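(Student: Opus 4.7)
The plan is to reduce the distance bound to a concentration statement for the empirical measure on Euclidean balls. Write $\mathrm{P}_n$ for the empirical measure associated with $X_1,\ldots,X_n$. By definition, $R_{(k)}(x) \leq r$ is equivalent to $n\mathrm{P}_n(B(x,r)) \geq k$. Thus, setting $r_k := (c_0 k/n)^{1/d}$ (with the volume of the unit Euclidean ball absorbed into $c_0 = 2/\underline{c}$), it is enough to prove that
\begin{equation*}
n\mathrm{P}_n(B(x,r_k)) \geq k
\qquad
\text{for all } x\in\mathcal{X} \text{ and all } k \geq K_0,
\end{equation*}
with probability at least $1 - 1/n^3$, where $K_0 := 48(2d+9)\log n$.

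The first step uses the density lower bound $f_X(x) \geq \underline{c}$: for every $x\in\mathcal{X}$ and every $k \geq K_0$, one has $\mathrm{P}_X(B(x,r_k)) \geq 2k/n$. Consequently, it suffices to establish the relative deviation bound
\begin{equation*}
\mathrm{P}_n(B) \geq \mathrm{P}_X(B)/2
\qquad
\text{uniformly over } B \in \mathcal{B} \text{ with } \mathrm{P}_X(B) \geq 2K_0/n,
\end{equation*}
where $\mathcal{B} := \{B(x,r) : x\in\mathbb{R}^d,\, r > 0\}$. Having $R_{(k)}(x)^\alpha \leq c_0(k/n)^{\alpha/d}$ then follows by raising to the power $\alpha$.

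The core of the argument is a localised empirical-process bound on balls. Applying Bernstein's inequality (Lemma \ref{lem::bernstein}) to $\eins_B(X_i) - \mathrm{P}_X(B)$ with variance proxy $\mathrm{P}_X(B)$ yields, pointwise for a fixed $B$, a tail estimate of the form $\mathrm{P}_X(B) - \mathrm{P}_n(B) \lesssim \sqrt{\mathrm{P}_X(B)\log n/n} + \log n/n$ at deviation level $\tau \asymp (2d+9)\log n$, and the threshold $\mathrm{P}_X(B) \geq 2K_0/n$ is exactly what forces the right-hand side to be at most $\mathrm{P}_X(B)/2$, which is how the constant $48(2d+9)$ is calibrated. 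To upgrade this to a statement uniform over all of $\mathcal{B}$, I would invoke Lemma \ref{lem::CoveringNumber}: the $L_1$-covering number of $\eins_{\mathcal{B}}$ is polynomial in $1/\varepsilon$ of degree $d+1$, so a standard $\varepsilon$-net argument with $\varepsilon = 1/n^{\mathcal{O}(1)}$ produces a $\mathrm{poly}(n)$-size discretisation over which a union bound of Bernstein estimates preserves the $1/n^3$ failure probability.

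The main obstacle is making the covering argument compatible with the \emph{relative} rather than additive nature of the deviation, since balls with mass close to the threshold $2K_0/n$ are the binding case and a plain additive uniform bound is too weak there. I would handle this by a peeling argument over dyadic shells $\mathcal{B}_j := \{B \in \mathcal{B}: 2^j K_0/n \leq \mathrm{P}_X(B) < 2^{j+1} K_0/n\}$ for $j = 0, 1, \ldots, J$ with $J = \mathcal{O}(\log n)$: Bernstein's inequality is applied to each $\mathcal{B}_j$ with variance proxy $\lesssim 2^{j+1} K_0/n$, and the resulting $\mathcal{O}(\log n)$-term union bound costs only a logarithmic factor that is absorbed into the constants. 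Finally, $n_2$ is chosen large enough that the polynomial covering number and all $\lesssim$ inequalities above are honest inequalities with the stated constants; note that the conclusion is automatically uniform over $k \geq K_0$, because the smallest such $k$ is the hardest case and larger $k$ only make $\mathrm{P}_X(B(x,r_k))$ larger.
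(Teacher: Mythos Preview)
Your proposal is correct and follows a recognisable route, but it differs from the paper's argument in a meaningful way. The paper does \emph{not} prove a uniform relative-deviation bound via peeling. Instead, for each fixed $k$ it introduces the quantile radius $\rho_x(\eta) := \inf\{r : \mathrm{P}_X(B(x,r)) \geq \eta\}$ with $\eta = (k + \sqrt{3\tau k})/n$, so that every ball $B(x,\rho_x(\eta))$ has the \emph{same} $\mathrm{P}_X$-mass $\eta$; Bernstein's inequality with the single variance proxy $\eta$, combined with the $L_1$-covering bound of Lemma~\ref{lem::CoveringNumber} (net size $\varepsilon = 1/n$), then gives $n\mathrm{P}_n(B(x,\rho_x(\eta))) \geq k$ uniformly in $x$, whence $R_{(k)}(x) \leq \rho_x(\eta)$, and the density lower bound converts $\rho_x(\eta)$ into $(2k/(\underline{c}n))^{1/d}$. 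A final union bound over $k$ (trading $1/n^4$ for $1/n^3$) delivers uniformity in $k$. The quantile-radius parametrisation is precisely what sidesteps the difficulty you identify: by pinning the mass at one value per $k$, the Bernstein variance is fixed and no peeling over dyadic shells is needed. Your route, by contrast, proves the stronger intermediate statement $\mathrm{P}_n(B) \geq \tfrac{1}{2}\mathrm{P}_X(B)$ uniformly over all balls above the mass threshold---a standard ratio-type VC bound that is reusable elsewhere---at the cost of the extra peeling layer and an $\mathcal{O}(\log n)$ union bound over shells.
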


\begin{proof}[Proof of Lemma \ref{lem::Rrho}]
	For $x\in \mathcal{X}$ and $\eta\in [0,1]$, we define the $\eta$-quantile diameter 
	\begin{align*}
		\rho_x(\eta) := \inf \bigl\{ r : \mathrm{P}(B(x, r)) \geq \eta \bigr\}.
	\end{align*}
	Let us first consider the set $\mathcal{B}_k^+ :=  \big\{ B \bigl( x, \rho_x \bigl( (k + \sqrt{3 \tau k})/n \bigr) \bigr) : x \in \mathcal{X} \bigr\} \subset \mathcal{B}$. Lemma \ref{lem::CoveringNumber} implies that for any probability $\mathrm{Q}$, there holds
	\begin{align} \label{Bk+CoveringNumber}
		\mathcal{N}(\eins_{\mathcal{B}_k^+}, \|\cdot\|_{L_1(\mathrm{Q})}, \varepsilon)
		\leq \mathcal{N}(\eins_{\mathcal{B}}, \|\cdot\|_{L_1(\mathrm{Q})}, \varepsilon)
		\leq C (d+2) (4e)^{d+2} \varepsilon^{-(d+1)}.
	\end{align}
	By the definition of the covering number, there exists an $\varepsilon$-net $\{A_j\}_{j=1}^J \subset \mathcal{B}_k^+$ with $J := \lfloor C (d+2) (4e)^{d+2} \varepsilon^{-(d+1)} \rfloor$ and for any $x \in \mathcal{X}$, there exists some $j \in \{ 1, \ldots, J \}$ such that 
	\begin{align}\label{eq::approxAj}
		\bigl\| \eins \bigl\{ B \bigl( x, \rho_x \bigl( (k + \sqrt{3 \tau k})/n \bigr) \bigr) \bigr\} - \eins_{A_j} \bigr\|_{L_1(D)} 
		\leq \varepsilon.
	\end{align}
	For any $i = 1, \ldots, n$, let the random variables $\xi_i$ be defined by $\xi_i = \eins_{A_j}(X_i) - (k + \sqrt{3 \tau k})/n$. Then we have $\|\xi_i\|_{\infty} \leq 1$, $\mathbb{E}_{\mathrm{P}}\xi_i = 0$ and $\mathbb{E}_{\mathrm{P}}\xi_i^2 \leq \mathbb{E}_{\mathrm{P}}\xi_i = (k + \sqrt{3 \tau k})/n$. Applying Bernstein's inequality in Lemma \ref{lem::bernstein}, we obtain 
	\begin{align*}
		\frac{1}{n} \sum_{i=1}^n \eins_{A_j}(X_i) - (k + \sqrt{3 \tau k})/n
		\geq - \sqrt{2 \tau (k + \sqrt{3 \tau k})} / n - 2 \tau / (3n)
	\end{align*}
	with probability $\mathrm{P}^n$ at least $1-e^{-\tau}$. Then the union bound together with the covering number estimate \eqref{Bk+CoveringNumber} implies that for any $A_j$, $j = 1, \cdots, J$, there holds 
	\begin{align*}
		& \frac{1}{n} \sum_{i=1}^n \eins_{A_j}(X_i) - (k + \sqrt{3 (\tau + \log J) k}) / n
		\\
		& \geq - \sqrt{2 (\tau + \log J) \bigl( k + \sqrt{3 (\tau + \log J) k} \bigr)} / n 
		- 2 (\tau + \log J) / (3 n)
	\end{align*}
	with probability $\mathrm{P}^n$ at least $1-e^{-\tau}$. This together with \eqref{eq::approxAj} yields that for any $x \in \mathcal{X}$, there holds
	\begin{align*}
		& \frac{1}{n} \sum_{i=1}^n 
		\eins \bigl\{ B \bigl( x, \rho_x \bigl( (k + \sqrt{3 \tau k})/n \bigr) \bigr) \bigr\} (X_i) 
		- (k + \sqrt{3 (\tau + \log J) k}) / n
		\\
		& \geq - \sqrt{2 (\tau + \log J) \bigl( k + \sqrt{3 (\tau + \log J) k} \bigr)} / n 
		- 2 (\tau + \log J) / (3 n) - \varepsilon
	\end{align*}
	with probability $\mathrm{P}^n$ at least $1-e^{-\tau}$.

	Now, if we take $\varepsilon = 1/n$, then for any $n > n_2 := \max \{ 4 e, d + 2, C \}$, there holds
	\begin{align*}
		\log J =  \log C + \log(d+2) +  (d+2)\log(4e) + (d+1) \log n \leq (2d+5)\log n. 
	\end{align*}
	Let $\tau:=\log (n^4)$. A simple calculation yields that if $k \geq 48(2d+9)\log n$, then we have
	\begin{align*}
		\sqrt{2 (\tau + \log J) \bigl( k + \sqrt{3 (\tau + \log J) k} \bigr)} / n \leq \sqrt{5(\tau+\log J)k/2}/n.
	\end{align*}
	Consequently, for all $n > n_2$, there holds
	\begin{align*}
		\sqrt{2 (\tau + \log J) \bigl( k + \sqrt{3 (\tau + \log J) k} \bigr)} / n 
		+ 2 (\tau + \log J) / (3 n)
		+ 1/n
		\leq \sqrt{3(\tau+\log J) k} / n.
	\end{align*}
	Consequently for all $x \in \mathcal{X}$, there holds 
	$\frac{1}{n} \sum_{i=1}^n 
	\eins \bigl\{ B \bigl( x, \rho_x \bigl( (k + \sqrt{3 \tau k})/n \bigr) \bigr) \bigr\} (X_i) 
	\geq k/n$
	with probability $\mathrm{P}^n$ at least $1-1/n^4$. 
	By the definition of $R_{(k)}(x)$, there holds 
	\begin{align}\label{equ::Rrho1}
		R_{(k)}(x) \leq \rho_x \bigl( (k + \sqrt{3 \tau k})/n \bigr)
	\end{align}
	with probability $\mathrm{P}^n$ at least $1-1/n^4$. 
	For any $x \in \mathcal{X}$, we have 
	$\mathrm{P}_X \bigl( B \bigl( x, \rho_x \bigl( (k + \sqrt{3 \tau k})/n \bigr) \bigr) \bigr) 
	= (k + \sqrt{3 \tau k})/n$. 
	Since the density $f_X(x)$ satisfies $\underline{c} \leq f_X(x) \leq \overline{c}$, we have
	$(k + \sqrt{3 \tau k})/n
	\geq \underline{c} \rho_x^d \bigl( (k + \sqrt{3 \tau k})/n \bigr)$
	and  consequently
	\begin{align}\label{eq::cond}
		\rho_x \bigl( (k + \sqrt{3 \tau k})/n \bigr) 
		\leq \bigl( (k + \sqrt{3 \tau k})/(\underline{c} n) \bigr)^{1/d}\leq \bigl( (k + 3 \sqrt{k \log n})/ (\underline{c} n) \bigr)^{1/d}.
	\end{align}
	Combining \eqref{equ::Rrho1} with \eqref{eq::cond}, we obtain that $R_{(k)}(x) \leq \bigl( (k + 3 \sqrt{k \log n})/ (\underline{c} n) \bigr)^{1/d}$ holds for all $x \in \mathcal{X}$ with probability $\mathrm{P}^n$ at least $1-1/n^4$. Therefore, a union bound argument yields that for all $x \in \mathcal{X}$, there holds
	\begin{align*}
		\sup_{k\geq 48(2d+9)\log n} R_{(k)}(x) 
		\leq \bigl( (k + 3 \sqrt{k \log n})/ (\underline{c} n) \bigr)^{1/d} 
		\leq \bigl( 2 k / (\underline{c} n) \bigr)^{1/d}
	\end{align*}
	with probability $\mathrm{P}^n$ at least $1-1/n^3$ for all $n\geq n_2$, which yields to the assertion with $c_0=(2/\underline{c})^{1/d}$.
\end{proof}

\begin{proof}[Proof of Proposition \ref{prop::knnundersample}]
	Let $n_1 \in \mathbb{N}_+$ satisfy $\log n_1 / n_1 \leq \min \{ \overline{\pi}^2/4, \underline{\pi}^2/16\}$. By Lemma \ref{lem::punder}, we see that for $1 \leq m \leq M$, if $n > n_1$, there hold
	\begin{align}\label{equ::npim2}
		n \pi_m / 2 \leq n_{(m)}, 
		\qquad
		1 / (2 M \overline{\pi}) \leq f_X^u(x)  \leq 2 / (M \underline{\pi}),
	\end{align}
	and 
	\begin{align}\label{equ::etamux}
		|\eta_m^u(x') - \eta_m^u(x)|
		\leq 4 c_L \|x - x'\|^{\alpha},
	\end{align}
	with probability $\mathrm{P}^n$ at least $1-2M/n^3$, since \eqref{equ::npim2} implies that $\mathrm{P}^u$ satisfies the assumptions of Lemma \ref{lem::Rrho}, we find that for all $x \in \mathcal{X}$ and $s_u\geq n_2$, there holds
	\begin{align}\label{equ::k482d9}
		\sup_{k\geq 48(2d+9)\log s_u} \|X_{(k)}^u(x)-x\|^{\alpha} 
		\leq (4 M \overline{\pi} k / s_u)^{\alpha/d}
	\end{align}
	with probability $ \mathrm{P}^n\otimes\mathrm{P}_{Z}$ at least $1-1/s_u^3$. By using $s_u\geq n_{(1)}\geq n\pi_1/2$ in \eqref{equ::npim2}, we find that if $n\geq N_2:=\max\{n_1,\lceil 2n_2/\underline{\pi}\rceil, \lceil 8/\underline{\pi}^3+2M\rceil \}$, then  \eqref{equ::k482d9} holds with probability $\mathrm{P}^n\otimes \mathrm{P}_Z$ at least $1-1/s_u^3-2M/n^3\geq 1-(8/\underline{\pi}^3+2M)/n^3\geq 1-1/n^2$.
	
	Let $R_{(i)}^u(x) := \|X_{(i)}^u(x) - x\|$ and $a_n^u := \lceil 48(2d+9)\log s_u\rceil$. 
	Then \eqref{equ::etamux} implies that for any $1 \leq m \leq M$, 
	\begin{align*}
		\sum_{i=1}^k k^{-1} & \bigl| \eta_m^u(X_{(i)}^u(x)) - \eta_m^u(x) \bigr|
		\leq  (4/k) \sum_{i=1}^{a_n^u} c_L (R_{(i)}^u(x))^{\alpha} + (4/k) \sum_{i=a_n^u +1}^k c_L (R_{(i)}^u(x))^{\alpha} 
		\\
		& \leq (4/k) \sum_{i=1}^{a_n^u} c_L (R_{(a_n^u+1)}^u(x))^{\alpha}
		+ (4/k) \sum_{i=a_n^u +1}^k c_L (R_{(i)}^u(x))^{\alpha} 
		\\
		& \leq (4 c_L a_n^u/ k) \bigl( 4 M \overline{\pi} (a_n^u + 1) / s_u \bigr)^{\alpha/d}
		+ \sum_{i=1}^k (4 c_L / k) (4 M \overline{\pi} i / s_u)^{\alpha/d}
		\\
		& \leq 4 c_L (8 M \overline{\pi} k/s_u)^{\alpha/d}
		+ \sum_{i=1}^k (4 c_L / k) (4 M \overline{\pi} i / s_u)^{\alpha/d}.
	\end{align*}
	Since the function $g(t):=t^{\alpha/d}$ is increasing in $(0,\infty)$, we have
	\begin{align*}
		\sum_{i=1}^{k} k^{-1} (i/s_u)^{\alpha/d}
		& \leq (s_u/k) \int_0^{(k+1)/s_u} g(t) \, dt
		\\
		& \leq (s_u/k) (d/(\alpha+d)) \bigl( (k+1) / s_u \bigr)^{(\alpha+d)/d}
		\leq 2^{(\alpha+d)/d} (d/(\alpha+d)) (k/s_u)^{\alpha/d},
	\end{align*}
	which yields $\|\overline{\eta}^{k,u}-\eta^u\|_{\infty} \leq c_2 (k/s_u)^{\alpha/d}$ with constant $c_2 := 4 c_L (8M\overline{\pi})^{\alpha/d}+4c_L(4M\overline{\pi})^{\alpha/d}\cdot 2^{(\alpha+d)/d}d/(\alpha+d)$. Thus we finish the proof.
\end{proof}

\subsubsection{Proofs Related to Section \ref{sec::undersamplingerror}} \label{sec::proofundersampling}

\begin{proof}[Proof of Proposition \ref{thm::punderund}]
	Let the event $E$ be defined by
	\begin{align*}
		E := \Bigl\{ 1 - (2 / \underline{\pi}) \sqrt{\log n/n} \leq n_{(m)} / (n\pi_m) \leq 1 + (2/\overline{\pi}) \sqrt{\log n/n}, \, \forall 1 \leq m \leq M \Bigr\}
	\end{align*}
	and the number $n_1 \in \mathbb{N}_+$ satisfy $\log n_1 / n_1 \leq \min \{ \overline{\pi}^2/4, \underline{\pi}^2/16 \}$. The following arguments will be made on the event $E$ and for $n > n_1$. An elementary calculation implies that for any $1 \leq m \leq M$ and $x \in \mathcal{X}$, there holds
	\begin{align*}
		\bigl| n \eta_m(x) / n_{(m)} - \eta_m(x) / \pi_m \bigr|
		& \leq \bigl| n / n_{(m)} - 1 / \pi_m \bigr|
		= \bigl| (n \pi_m - n_{(m)})/(n_i \pi_m) \bigr|
		\\
		& = |1 - n_{(m)}/(n\pi_m)| / \bigl( \pi_m(n_{(m)} / (n\pi_m) - 1 ) + \pi_m \bigr).	
	\end{align*}
	Thus, on the event $E$, there holds
	\begin{align*}
		\bigl| n \eta_m(x) / n_{(m)} - \eta_m(x) / \pi_m \bigr|
		\leq (2 / \overline{\pi}) \sqrt{\log n / n} \big/ \bigl( \pi_m - (2 / \underline{\pi}) \sqrt{\log n / n} \pi_m \bigr).
	\end{align*}
	Therefore, for $n > n_1$, we have 
	\begin{align}\label{equ::netaini}
		\bigl| n \eta_m(x) / n_{(m)} - \eta_m(x) / \pi_m \bigr|
		\leq (4 / \pi_m \overline{\pi}) \sqrt{\log n / n}
		\leq (4 / (\overline{\pi} \underline{\pi})) \sqrt{\log n / n}.
	\end{align}
	Consequently, for any $1 \leq m \leq M$ and $x \in \mathcal{X}$, there holds
	\begin{align}\label{equ::wideparenetam}
		\bigl| \eta^w_m(x) - \eta_m^u(x) \bigr|
		& \leq \biggl| \frac{n \eta_m(x) / n_{(m)}}{\sum_{i=1}^M n \eta_i(x) / n_i} - \frac{\eta_m(x) / \pi_m}{\sum_{i=1}^M \eta_i(x) / \pi_i} \biggr|
		\nonumber\\
		& \leq \biggl| \frac{n \eta_m(x) / n_{(m)}}{\sum_{i=1}^M n \eta_i(x) / n_i} - \frac{n \eta_m(x) / n_{(m)}}{\sum_{i=1}^M \eta_i(x) / \pi_i} \biggr|
		+ \biggl| \frac{n \eta_m(x) / n_{(m)}}{\sum_{i=1}^M \eta_i(x) / \pi_i} - \frac{\eta_m(x) / \pi_m}{\sum_{i=1}^M \eta_i(x) / \pi_i} \biggr|
		\nonumber\\
		& \leq \frac{|n \eta_m(x) / n_{(m)} - \eta_m(x) / \pi_m|}{\sum_{i=1}^M \eta_i(x) / \pi_i} 
		+ \frac{\sum_{m=1}^M |n \eta_m(x) / n_{(m)} - \eta_m(x) / \pi_m|}{\sum_{i=1}^M \eta_i(x) / \pi_i}.
	\end{align}
	Obviously, we have $\sum_{i=1}^M \eta_i(x) / \pi_i \geq \sum_{i=1}^M \eta_i(x) / \overline{\pi} = 1 / \overline{\pi}$.
	This together with \eqref{equ::netaini} and \eqref{equ::wideparenetam} yields that $\|\eta^w_m - \eta_m^u\|_{\infty} \leq (4 (1 + M) / \underline{\pi}) \sqrt{\log n / n}$ holds for $1 \leq m \leq M$, which together with \eqref{equ::pna} implies that for all $n > n_1$, \eqref{equ::con2} holds with probability $\mathrm{P}^n$ at least $1-2M/n^3$. Therefore, for all $n\geq N_3:=\max\{n_1,2M\}$, there holds \eqref{equ::con2} with probability $\mathrm{P}^n$ at least $1-1/n^2$. Thus we complete the proof of Proposition \ref{thm::punderund}.
\end{proof}

\subsubsection{Proofs Related to Section \ref{sec::knnmulticlassim}}\label{sec::proofknn}

\begin{proof}[Proof of Theorem \ref{thm::knnundersample}]
	Propositions \ref{prop::BnnEstimation} and  \ref{prop::knnundersample} imply that if $n\geq \max\{N_1,N_2\}$, 
	there hold
	$\|\overline{\eta}^{k,u} - \eta^u(x) \|_{\infty} \lesssim (k/s_u)^{\alpha/d}$ and 
	$\|\widehat{\eta}^{k,u}(x)-\overline{\eta}^{k,u}(x)\|_{\infty} \lesssim \sqrt{\log s_u / k}$
	with probability $\mathrm{P}_Z \otimes \mathrm{P^n}$ at least $1-2/n^2$.
	Consequently, we have
	\begin{align*}
		\| \widehat{\eta}_m^{k,u}(x) - \eta_m^u(x) \|_{\infty} 
		& \leq \|\widehat{\eta}_m^{k,u}(x) - \overline{\eta}_m^{k,u}(x) \|_{\infty} 
		+ \|\overline{\eta}_m^{k,u}(x) - \eta_m^u(x) \|_{\infty}
		\\
		& \lesssim (k/s_u)^{\alpha/d}+\sqrt{{\log s_u/k}}
		\lesssim (\log s_u/s_u)^{\alpha/(2\alpha+d)}.
	\end{align*}	
	According to \eqref{equ::con1} in Lemma \ref{lem::punder}, we have $s_u\geq n_{(1)}\geq n\pi_1/2$ with probability $\mathrm{P}^n$ at least $1-2M/n^3\geq 1-1/n^2$. Since $g(x):=\log(x)/x$ is decreasing on $[e,\infty)$,  we have
	\begin{align*}
		\| \widehat{\eta}_m^{k,u}(x) - \eta_m^u(x) \|_{\infty} \lesssim (\log (n\pi_1/2)/(n\pi_1/2))^{\alpha/(2\alpha+d)}\lesssim (\log n/n)^{\alpha/(2\alpha+d)}.
	\end{align*}
	This together with Proposition \ref{thm::punderund} yield that if $n\geq N_1^*:=\max\{N_1,N_2,N_3\}$,
	there holds
	\begin{align*}
		\|\widehat{\eta}^{k,u} -  \eta^w\|_{\infty}
		& = \|\widehat{\eta}^{k,u} - \eta^u\|_{\infty} + \|\eta^u - \eta^w\|_{\infty}
		\\
		& \lesssim (\log n/n)^{\alpha/(2\alpha+d)} + \sqrt{\log n/n}
		\lesssim (\log n/n)^{\alpha/(2\alpha+d)}.
	\end{align*}
	with probability $\mathrm{P}^n$ at least $1-4/n^2$.
	Consequently, Lemma \ref{lem::convergereg} yields that 
	\begin{align*}
		\|\eta^{w,*}_{L_{\mathrm{cl},\mathrm{P}^w}}(x)- \eta^w_{\widehat{f}^{k,u}(x)}(x) \|_{\infty}
		\lesssim (\log n/n)^{\alpha/(2\alpha+d)}
	\end{align*}
	holds with probability $\mathrm{P}_Z \otimes \mathrm{P^n}$ at least $1-4/n^2$, where  $\eta^{w,*}_{L_{\mathrm{cl}},\mathrm{P}^w}(x) = \eta^w_{{f}_{L_{\mathrm{cl}},\mathrm{P}^w}^*(x)}(x)$, i.e,~the Bayes classifier w.r.t.~the classification loss $L_{\mathrm{cl}}$ and the balanced distribution $\mathrm{P}^w$. Using Lemma \ref{lem::regexcessrisk}, we obtain
	\begin{align*}	
		\mathcal{R}_{L_{\mathrm{cl}},\mathrm{P}^w}(\widehat{f}^{k,u}) - \mathcal{R}_{L_{\mathrm{cl}},\mathrm{P}^w}^*
		\lesssim  (\log n/n)^{{\alpha(\beta+1)}/{(2\alpha+d)}}
	\end{align*}
	with probability $ \mathrm{P}_Z\otimes \mathrm{P^n}$ at least $1-4/n^2$. This together with \eqref{equ::R_amf} and Theorem \ref{thm::connection} implies $\mathfrak{R}_{\mathrm{AM}}(\widehat{f}^{k,u}) \lesssim (\log n/n)^{{\alpha(\beta+1)}/{(2\alpha+d)}}$, which completes the proof.
\end{proof}

\begin{proof}[Proof of Theorem \ref{thm::lowerbound}]
	We use Theorem 3.5 in \cite{audibert2007fast} to prove Theorem \ref{thm::lowerbound}. First of all, we verify as follows that for any probability distribution $\mathrm{P}\in \mathcal{P}$, the balanced version of it, $\mathrm{P}^w$ belongs to a certain class of probability distributions ${\mathcal{P}}_{\Sigma}$ in Definition 3.1 in \cite{audibert2007fast}.
	
	\textit{(i)}
	Condition \textit{(i)} in Assumption \ref{ass:imbalance} implies that $\mathrm{P}^w$ satisfies the margin assumption.
	
	\textit{(ii)}
	For any $1 \leq m \leq M$ and $x, x' \in \mathcal{X}$, by \eqref{equ::etamstarx}, we have
	\begin{align*}
		|\eta^w_m(x')-\eta^w_m(x)|=\frac{|\eta_m(x')-\eta_m(x)|}{\pi_m \sum^M_{m=1}\eta_m(x)/\pi_m}
		\leq (\overline{\pi} / \underline{\pi}) \cdot |\eta_m(x')-\eta_m(x)|.
	\end{align*}
	Condition $(ii)$ in Assumption \ref{ass:imbalance} then yields
	$|\eta^w_m(x')-\eta^w_m(x)|
	\leq 4 c_L (\overline{\pi} / \underline{\pi}) \cdot \|x' - x\|^{\alpha}$.
	Therefore, the posterior probability function $\eta^w$ belongs to the H\"{o}lder class.
	
	\textit{(iii)}
	Since $\mathrm{P}_X$ is the uniform distribution on $[0,1]^d$, \eqref{equ::tildefx} yields 
	\begin{align*}
		1 / (M \overline{\pi})
		\leq f^w_X(x)
		= f_X(x) \sum_{m=1}^M \eta_m(x) / (M \pi_m)
		\leq 1 / (M \underline{\pi})
	\end{align*}
	for $x\in [0,1]^d$ and $f^w_X(x)=0$ otherwise, which implies that the strong density assumption on $\mathrm{P}^w_X$ is satisfied. 
	
	Therefore, applying Theorem 3.5 in \cite{audibert2007fast}, there exists a constant $C > 0$ such that for any $f_n \in \mathcal{F}$, there holds
	\begin{align*}
		\sup_{\mathrm{P}^w\in \mathcal{P}_{\Sigma}}
		\mathcal{R}_{L_{\mathrm{cl}}, \mathrm{P}^w}({f_n}) - \mathcal{R}_{L_{\mathrm{cl}}, \mathrm{P}^w}^*\geq C n^{-\alpha(\beta+1)/(2\alpha+d)}.
	\end{align*}
	This together with \eqref{equ::R_amf} and Theorem \ref{thm::connection} implies
	\begin{align*}
		\sup_{\mathrm{P}\in \mathcal{P}}r^*_{\mathrm{AM}} - r_{\mathrm{AM}} (f_n)\geq C n^{-\alpha(\beta+1)/(2\alpha+d)}.
	\end{align*}
	Thus the assertion is proved since $f_n \in \mathcal{F}$ is an arbitrary classifier.
\end{proof}

\subsection{Proofs Related to the Under-bagging $k$-NN  Classifier} \label{sec::proofunderbagging}

In this section, we first present in Sections \ref{sec::proofbagging}-\ref{sec::proofbagsample} the proofs of the theoretical results on bounding the bagging error in Section \ref{sec::bagsample}, the bagged approximation error in Section \ref{sec::bagapprox}, and the bagged sample error in Section \ref{sec::bagsampleerror}, respectively. Then in Section \ref{sec::proofknn}, we prove the main results on the convergence rates of the under-bagging $k$-NN classifier, i.e., Theorem \ref{thm::bagknnclassund} and Corollary \ref{thm::bag1nnund} in Section \ref{sec::knnmulticlassimbag}.

\subsubsection{Proofs Related to Section \ref{sec::bagsample}} \label{sec::proofbagging}

\begin{proof}[Proof of Proposition \ref{pro::fml*funder}]
	By the definition of $\widehat{\eta}^{B,u}(x)$ and $\widetilde{\eta}^{B,u}(x)$, we have
	\begin{align*}
		|\widehat{\eta}^{B,u}(x)-\widetilde{\eta}^{B,u}(x)|
		= \biggl| \frac{1}{B} \sum_{b=1}^B \sum_{i=1}^n V_i^{b,u}(x) \eins \{ Y_{(i)}(x) = m \}
		- \sum_{i=1}^n \overline{V}_i^u(x) \eins \{ Y_{(i)}(x) = m \} \biggr|.
	\end{align*}
	Let $\xi_b:=\sum_{i=1}^n (V_i^{b,u}(x)-\overline{V}_i^u(x)) \eins \{ Y_{(i)}(x) = m \}$. Then have $\|\xi_b\|_{\infty} \leq 1/k$ and 
	\begin{align*}
		\mathbb{E}(\xi_b^2|D_n)
		\leq 2 \sum_{i=1}^n \mathbb{E}((V_i^{b,u}(x)-\overline{V}_i^u(x))^2|D_n)
		= \sum_{i=1}^n \frac{\overline{V}_i^u(x)(1-k\overline{V}_i^u(x))}{k}
		\leq \sum_{i=1}^n \frac{\overline{V}_i^u(x)}{k}
		\leq \frac{1}{k}.
	\end{align*}
	Applying Bernstein's inequality in Lemma \ref{lem::bernstein}, we obtain that for every $\tau > 0$, 
	\begin{align*}
		\mathrm{P}_Z^B\biggl(	|\widehat{\eta}^{B,u}(x)-\widetilde{\eta}^{B,u}(x)|\geq \sqrt{\frac{2\tau}{kB}}+ \frac{2\tau}{3Bk} \bigg|D_n\biggr)\leq e^{-\tau}.
	\end{align*}
	Then with $\tau := (2d + 3) \log n$ we have
	\begin{align*}
		\mathrm{P}_Z^B\biggl(|\widehat{\eta}^{B,u}(x)-\widetilde{\eta}^{B,u}(x)|
		\leq \sqrt{\frac{2(2d+3)\log n}{kB}}+\frac{2(2d+3)\log n}{3kB}\bigg| D_n\biggr)\geq 1 - 1 / n^{2d+3}.
	\end{align*}
	Setting $\varepsilon := \sqrt{8(2d+3)\log n/(kB)}$, then the condition $9kB\geq 2(2d+3)\log n$ implies that
	\begin{align}\label{equ::pointwise2}
		\mathrm{P}_Z^B(|\widehat{\eta}^{B,u}(x)-\widetilde{\eta}^{B,u}(x)|
		\leq \varepsilon| D_n)
		\geq 1 - 1 / n^{2d+3}.
	\end{align}
	In order to derive the uniform upper bound over $\mathcal{X}$, let
	\begin{align*}
		\mathcal{S} := \bigl\{ (\sigma_1, \ldots, \sigma_{n}) : \text{all permutations of } (1, \ldots, n)\text{ obtainable by moving } x \in \mathbb{R}^d \bigr\}.
	\end{align*}
	Then we have
	\begin{align*}
		&	\mathrm{P}_Z^B \biggl( \sup_{x \in \mathbb{R}^d} \biggl( |\widehat{\eta}^{B,u}_m(x) - \widetilde{\eta}_{m}^{B,u}(x)| - \varepsilon\biggr) > 0 \bigg| D_n\biggr)
		\\
		& \leq \mathrm{P}_Z^B \biggl( \bigcup_{(\sigma_1, \ldots, \sigma_n) \in \mathcal{S}}\biggl| \frac{1}{B} \sum_{b=1}^B \sum_{i=1}^n V_{i,\sigma}^{b,u}(x) \eins \{ Y_{\sigma_i}(x) = m \} - \sum_{i=1}^n \overline{V}_{i,\sigma}^u(x) \eins \{ Y_{\sigma_i}(x) = m \} \biggr| > \varepsilon \bigg| D_n \biggr)
		\\
		& \leq \sum_{(\sigma_1, \ldots, \sigma_n) \in \mathcal{S}} \mathrm{P}_Z^B \biggl(\biggl| \frac{1}{B} \sum_{b=1}^B \sum_{i=1}^n V_{i,\sigma}^{b,u}(x) \eins \{ Y_{\sigma_i}(x) = m \} - \sum_{i=1}^n \overline{V}_{i,\sigma}^u(x) \eins \{ Y_{\sigma_i}(x) = m \} \biggr| > \varepsilon \bigg| D_n \biggr),
	\end{align*}
	where 
	$V_{i,\sigma}^{b,u}(x)$ equals $1/k$ if $\sum^i_{j=1} Z^b(X_{\sigma_j}(x),Y_{\sigma_j}(x))\leq k$, and $0$ otherwise, and $\overline{V}_{i,\sigma}^u(x) = k^{-1} \mathrm{P}_Z( \sum^i_{j=1} Z^b(X_{\sigma_j}(x),Y_{\sigma_j}(x))\leq k|\{X_i,Y_i\}_{i=1}^n)$. For any $(\sigma_1,\ldots,\sigma_n)\in \mathcal{S}$, \eqref{equ::pointwise2} implies
	\begin{align*}
		\mathrm{P}_Z^B \biggl(\biggl| \frac{1}{B} \sum_{b=1}^B \sum_{i=1}^n V_{i,\sigma}^{b,u}(x) \eins \{ Y_{\sigma_i}(x) = m \} - \sum_{i=1}^n \overline{V}_{i,\sigma}^u(x) \eins \{ Y_{\sigma_i}(x) = m \} \biggr| > \varepsilon \bigg| D_n \biggr)
		\leq 1 / n^{2d+3}.
	\end{align*}
	This together with Lemma \ref{lem::numberreorder} yields
	\begin{align*}
		\mathrm{P}_Z^B\biggl( \sup_{x \in \mathbb{R}^d} (|\widehat{\eta}^{B,u}_m(x) - \widetilde{\eta}^{B,u}_{m}(x)| - \varepsilon) > 0 \bigg| D_n\bigg)
		\leq (25/d)^d / n^3
	\end{align*}
	for all $n\geq 2d$. Then a union bound argument implies 
	\begin{align*}
		\mathrm{P}_Z^B\bigl(
		\|\widehat{\eta}^{B,u}-\widetilde{\eta}^{B,u}\|_{\infty}
		\leq \sqrt{8(2d+3) \log n / (k B)} \big| D_n \bigr)
		\geq 1 - M (25/d)^d / n^3.
	\end{align*}
	Consequently, if $n\geq N_4:=\max\{\lceil M(25/d)^d\rceil,2d\}$, then we have
	\begin{align*}
		\mathrm{P}_Z^B\otimes \mathrm{P}^n \bigl(	\|\widetilde{\eta}^{B,u}-\overline{\eta}^{B,u}\|_{\infty} \leq \sqrt{8(2d+3) \log n / (k B )} \bigr)
		\geq 1 - M (25/d)^d n^3\geq 1-1/n^2,
	\end{align*}
	which completes the proof.
\end{proof}

\subsubsection{Proofs Related to Section \ref{sec::bagapprox}} \label{sec::proofbagapprox}

To bound the bagged approximation error $\|\overline{\eta}^{B,u}-\eta^u\|_{\infty}$, the theoretical property of $\overline{V}_i^u(x)$ plays a crucial role. In fact, by the definition of $\overline{\eta}^{B,u}$ and $\eta^u$, we have
\begin{align*}
	\|\overline{\eta}^{B,u} - \eta^u\|_{\infty}
	& = \biggl\| \sum_{i=1}^n \overline{V}^u_i(x) \eta_m^u(X_{(i)}(x)) - \eta^u(x) \biggr\|_{\infty}
	\\
	& \leq \biggl\| \sum_{i=1}^n \overline{V}^u_i(x) \bigl( \eta_m^u(X_{(i)}(x)) - \eta^u(x) \bigr) \bigg\|_{\infty} + \biggl\| \sum_{i=1}^n \overline{V}^u_i(x) - 1 \biggr\|_{\infty}.
\end{align*}
Let $\xi_i:=Z^b(X_{(i)}(x),Y_{(i)}(x)), i \geq 1 $. According to \eqref{equ::defbarviu}, $\overline{V}_i^u(x)$ can be re-expressed as
\begin{align}\label{equ::barvdef}
	\overline{V}^{u}_i(x)= \frac{1}{k}  \mathrm{P}_Z \biggl( \sum_{l=1}^i \xi_l
	\leq k,\xi_i=1\bigg|\{ (X_i,Y_i) \}_{i=1}^n \biggr)=\frac{1}{k} \sum_{j=1}^k p^u_{i,j}(x).
\end{align}
where
\begin{align}\label{equ::puijx}
	p^u_{i,j}(x)
	= \mathrm{P}_Z \biggl( \sum^i_{l=1} \xi_l= j, \ \xi_i=1 \bigg| \{ (X_i,Y_i) \}_{i=1}^n \biggr).
\end{align}
As a result, it is necessary to investigate the properties of $p_{i,j}^u(x)$ for all $x\in \mathcal{X}$. Note that $\{ \xi_i, i \geq 1 \}$ can be regarded as a sequence of independent Bernoulli trials with the probabilities of success $a(X_{(i)}(x),Y_{(i)}(x))$, that is,
\begin{align}
	\mathrm{P}(\xi_i = 1) 
	& = a(X_{(i)}(x),Y_{(i)}(x)),
	\label{equ::success}
	\\
	\mathrm{P}(\xi_i = 0) 
	& = 1 - a(X_{(i)}(x),Y_{(i)}(x)).
	\label{equ::failure}
\end{align}
Then by \eqref{equ::puijx}, $p_{i,j}^u(x)$ represents the probability when we observing the sequence $\{\xi_i,1\leq i\leq n\}$ until $j$ successes have occurred, the total number of trials equals to $i$.

Recall that the classical Pascal distribution models the number of successes in a sequence of i.i.d.~Bernoulli trials before a specified number of failures occurs. We refer the readers to \cite{spiegel1992theory,degroot2012probability} for more details. However, \eqref{equ::success} implies that the probabilities of success for the Bernoulli trials $\{ \xi_i, i \geq 1 \}$ are not the same. Therefore, it is necessary to consider a \textit{Generalized Pascal} (GP) distribution where the probabilities of success depend on the location of $x$ in $\mathbb{R}^d$. To this end, we introduce some basic notations. Suppose that $\{ \xi_i, i \geq 1 \}$ is a sequence of independent Bernoulli trials. In each trial, we have the probability of success $\mathrm{P}(\xi_i = 1) = p_i$ and the probability of failure $\mathrm{P}(\xi_i = 0) = 1 - p_i$. The total number of trials we have seen until $j$ successes have occurred, namely $X$, are said to have the Generalized Pascal distribution with parameters $j$ and $p = \{ p_i \}_{i=1}^{\infty}$, that is, $X \sim \mathrm{GP}(j, p)$. For $i \geq j$, let 
$\Omega(j, i)
:= \{ \omega = \{ \omega_1, \ldots, \omega_j \} : 1 \leq \omega_1 < \omega_2 < \cdots < \omega_{j-1} < \omega_j = i\}$.
Then the probability mass function of the Generalized Pascal distribution is 
\begin{align}\label{equ::massgp}
	\mathrm{P}_{\mathrm{GP}}(X=i)=f_{\mathrm{GP}}(i; j, p)
	= \sum_{\omega \in \Omega(j, i)} p_{\ell} \prod_{i=1}^{i-1} p_i^{\eins \{ i \in \omega \}} (1 - p_i)^{\eins \{ i \notin \omega \}}, 
	\quad 
	i \geq j, \,  
	i \in \mathbb{N}_+.
\end{align}
In fact, according to \eqref{equ::puijx}, $p_{i,j}^u(x)$ can be re-expressed as
\begin{align}\label{equ::fgplrp}
	p_{i,j}^u(x)=f_{\mathrm{GP}}(i,j,p(x)),
\end{align}
where $p(x)=(p_1(x),\ldots,p_n(x),\ldots)$ with elements defined by
$p_i(x)=a(X_{(i)}(x),Y_{(i)}(x))$, $1 \leq i \leq n$.
According to the acceptance probability \eqref{equ::axymn} in Section \ref{equ::underbaggingknn}, it is easy to see that for all $x\in \mathcal{X}$, we have
\begin{align}\label{equ::defstildep}
	\begin{split}
		p(x)\in S_{\nu} := \biggl\{ p = \{ p_i \}_{i=1}^{\infty} : 
		& \,  (p_1, \ldots, p_n ) = (\nu_{\sigma_1}, \ldots, \nu_{\sigma_n}) \text{ and } p_i = \nu_n \text{ for } i > n,
		\\
		& \text{ where } \{\sigma_1,\ldots,\sigma_n\} \text{ is a permutation of } \{1,\ldots,n\} \biggr\}.	
	\end{split}
\end{align}
where $\nu = (\nu_1, \ldots, \nu_n, \ldots)$ is an infinite-dimensional vector with elements defined by 
\begin{align}\label{equ::defnu}
	\nu_i=
	\begin{cases}
		\frac{s}{Mn_{(M)}}, & \text{ if } 1 \leq i \leq n_{(M)},
		\\
		\frac{s}{Mn_{(m)}}, & \text{ if } \sum_{\ell=m+1}^M n_{\ell} < i \leq \sum_{\ell=m}^M n_\ell \text{ and } 1 \leq m \leq M - 1,
		\\
		\frac{s}{Mn_{(1)}}, & \text{ if } i > n. 
	\end{cases}
\end{align}
Therefore, in our analysis on $p_{i,j}^u(x)$, it suffices to study the property of the distribution function $f_{\mathrm{GP}}(i,j,p)$ with $p$ restricted on the set $S_{\nu}$.

In the following, we present some results on the Generalized Pascal distribution, which is later crucial for the proof of Lemmas \ref{pro::detererror1} and \ref{lem::sumviund}. The first lemma gives a uniform upper bound of the tail probability of the Generalized Pascal distribution.

\begin{lemma}\label{thm::Tailgp}
	Let $p\in S_\nu$ and suppose that $\sum^{\ell}_{i=1}p_i\geq j$. Then we have
	\begin{align*}
		\sum^{\infty}_{i=\ell+1} f_{\mathrm{GP}}(i; j, p)
		\leq \exp \biggl( - \frac{1}{2\ell} \biggl( \sum_{i=1}^{\ell} p_i - j \biggr)^2 \biggr).
	\end{align*}
\end{lemma}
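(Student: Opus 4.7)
The plan is to use the standard ``time-to-count duality'' that relates the waiting time for the $j$-th success to the number of successes in a fixed number of trials, and then apply Hoeffding's inequality.

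First, I would interpret $f_{\mathrm{GP}}(\cdot;j,p)$ probabilistically. Let $\xi_1,\xi_2,\ldots$ be independent Bernoulli random variables with $\mathrm{P}(\xi_i=1)=p_i$ and put $S_\ell:=\sum_{i=1}^{\ell}\xi_i$ and $\mu_\ell:=\mathbb{E}S_\ell=\sum_{i=1}^{\ell}p_i$. Unwinding the definition \eqref{equ::massgp}, the quantity $f_{\mathrm{GP}}(i;j,p)$ is the probability that the $j$-th success among the $\xi_i$'s occurs exactly at trial $i$. Hence if $X\sim\mathrm{GP}(j,p)$ denotes the (random) index of the $j$-th success, the tail sum on the left-hand side equals $\mathrm{P}(X>\ell)$. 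The duality ``$X>\ell$ iff the $j$-th success has not yet occurred by time $\ell$'' gives
\begin{align*}
\sum_{i=\ell+1}^{\infty} f_{\mathrm{GP}}(i;j,p)
=\mathrm{P}(X>\ell)
=\mathrm{P}(S_\ell<j).
\end{align*}

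Next, since by hypothesis $\mu_\ell\ge j$, the left-tail event satisfies $\{S_\ell<j\}\subset\{\mu_\ell-S_\ell\ge \mu_\ell-j\}$ with $\mu_\ell-j\ge 0$. The variables $-\xi_i$ are independent and take values in $[-1,0]$, so Hoeffding's inequality (Lemma~\ref{lem::hoffeding}) applied to $-\xi_1,\ldots,-\xi_\ell$ yields
\begin{align*}
\mathrm{P}(\mu_\ell-S_\ell\ge \mu_\ell-j)
\le \exp\!\left(-\frac{(\mu_\ell-j)^2}{2\ell}\right),
\end{align*}
which, combined with the previous display, is exactly the claimed bound. (In fact Hoeffding produces the stronger constant $2$ in the exponent; the factor $1/2$ stated in the lemma is all that is needed downstream.)

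The argument is quite short and requires no structural use of the set $S_\nu$; that hypothesis is only included because the lemma will be invoked with $p=p(x)\in S_\nu$ elsewhere. The only step that warrants care is the bookkeeping identification $\sum_{i=\ell+1}^{\infty}f_{\mathrm{GP}}(i;j,p)=\mathrm{P}(S_\ell<j)$, which I would verify by either a direct combinatorial computation using the representation \eqref{equ::massgp} (summing over $\omega\in\Omega(j,i)$ and $i>\ell$ to recover the indicator that at most $j-1$ of $\xi_1,\ldots,\xi_\ell$ equal one), or more transparently by recognizing $X$ as the waiting time for the $j$-th success and invoking the standard duality. Everything else is a direct application of a concentration inequality already recorded in the paper.
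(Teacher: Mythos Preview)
Your proposal is correct and follows essentially the same route as the paper: both proofs establish the duality $\sum_{i=\ell+1}^{\infty} f_{\mathrm{GP}}(i;j,p)=\mathrm{P}(S_\ell<j)$ and then bound the resulting left tail by a concentration inequality. The only difference is that the paper passes to $\xi_i'=1-\xi_i$ and applies Bernstein's inequality (Lemma~\ref{lem::bernstein}) with $\sigma^2\le 1/4$, which requires the extra check $\tau\le\ell/2$ to absorb the linear term $2\tau/(3\ell)$, whereas you apply Hoeffding's inequality (Lemma~\ref{lem::hoffeding}) directly. Your choice is cleaner here and, as you note, yields the sharper exponent $-2(\mu_\ell-j)^2/\ell$ rather than $-(\mu_\ell-j)^2/(2\ell)$; either suffices for the downstream applications.
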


\begin{proof}[Proof of Lemma \ref{thm::Tailgp}]
	Let $\{\xi_i$, $i \geq 1\}$ be a sequence of independent Bernoulli trials such that $\mathrm{P}(\xi_i = 1) = p_i$ and $\mathrm{P}(\xi_i = 0) = 1 - p_i$. By the definition of the Generalized Pascal distribution,  we have
	$\sum^{\infty}_{i=\ell+1} 	f_{\mathrm{GP}}(i; j, p)
	= \mathrm{P} \bigl( \sum^\ell_{i=1} \xi_i < j \bigr)$.
	Let $\xi_i' = 1 - \xi_i$ for $1 \leq i \leq \ell$. Then we have 
	\begin{align}\label{equ::pgpx}
		\sum^{\infty}_{i=\ell+1} 	f_{\mathrm{GP}}(i; j, p)
		= \mathrm{P} \biggl( \sum^\ell_{i=1} \xi'_i > \ell - j \biggr)
		= \mathrm{P} \biggl( \frac{1}{\ell} \sum^\ell_{i=1} \xi'_i > 1 - \frac{j}{\ell} \biggr).
	\end{align}
	Moreover, there hold $\mathrm{P}(\xi_i' = 1) = 1 - p_i$ and $\mathrm{P}(\xi_i' = 0) = p_i$. Consequently we have $\mathbb{E}[\xi_i'] = 1 - p_i$ and $\mathrm{Var}[\xi_i'] \leq 1/4$. Using Bernstein's inequality in Lemma \ref{lem::bernstein}, for any $\tau > 0$, there holds
	\begin{align}\label{equ::sumxii}
		\frac{1}{\ell} \sum^\ell_{i=1} \xi'_i 
		\geq \sqrt{\frac{\tau}{2\ell}} + \frac{2\tau}{3\ell} + 1 - \frac{1}{\ell}\sum^\ell_{i=1} p_i
	\end{align}
	with probability at most $e^{-\tau}$. Let $\tau := \frac{1}{2\ell} ( \sum_{i=1}^{\ell} p_i - j )^2$. Then we have $\tau \leq (\ell - j)^2 / (2 \ell) \leq \ell / 2$ and consequently \eqref{equ::sumxii} implies
	\begin{align*}
		\frac{1}{\ell}\sum^\ell_{i=1}\xi'_i
		\geq \sqrt{\frac{2\tau}{\ell}}+1-\frac{1}{\ell}\sum^\ell_{i=1}p_i
		\geq \frac{1}{\ell}\biggl(\sum^\ell_{i=1}p_i-j\biggr)+1-\frac{1}{\ell}\sum^\ell_{i=1}p_i
		= 1-\frac{j}{\ell}.
	\end{align*}
	Therefore, we have
	\begin{align}\label{equ::pxiigeq}
		\mathrm{P}\biggl(\frac{1}{\ell}\sum^\ell_{i=1}\xi'_i>1-\frac{j}{\ell}\biggr)\leq e^{-\tau}.
	\end{align}
	Combining \eqref{equ::pgpx} with \eqref{equ::pxiigeq}, we obtain the assertion.
\end{proof}

The next lemma provides the effect of changing the position of entries of $p \in S_{\nu}$ on the Generalized Pascal distribution.

\begin{lemma}\label{lem::xxqq1}
	Given $1 \leq q \leq n - 1$, let $p \in S_{\nu}$ such that $p_q \geq p_{q+1}$. We define a map $h_q : S_{\nu} \to S_{\nu}$ such that the elements of $p' = h_q(p)$  satisfying
	\begin{align}\label{equ::pmcases}
		p_i' :=
		\begin{cases}
			p_i, & \text{ if } i < q \text{ or } i > q + 1,
			\\
			p_{q+1}, & \text{ if } i = q, 
			\\
			p_q, & \text{ if } i=q+1.
		\end{cases}
	\end{align}
	Then for any $i\geq j$, the following statements hold:
	\begin{enumerate}
		\item[(i)] 
		If $i < q$ or $i > q + 1$, then we have
		\begin{align}\label{equ::pgpcase1}
			f_{\mathrm{GP}}(i; j, p)
			=	f_{\mathrm{GP}}(i; j, p');
		\end{align}
		\item[(ii)] 
		If $i = q$, then we have 
		\begin{align}\label{equ::pgpcase2}
			f_{\mathrm{GP}}(i; j, p)
			\geq	f_{\mathrm{GP}}(i; j, p');
		\end{align}
		\item[(iii)] 
		If $i = q + 1$, then we have
		\begin{align}\label{equ::pgpcase5}
			f_{\mathrm{GP}}(i; j, p)
			\leq	f_{\mathrm{GP}}(i; j, p').
		\end{align}
	\end{enumerate}
\end{lemma}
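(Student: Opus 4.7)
The plan is to exploit the probabilistic interpretation behind \eqref{equ::massgp}: the quantity $f_{\mathrm{GP}}(i;j,p)$ is the probability that, in a sequence of independent Bernoulli trials with success parameters $p_1,p_2,\ldots$, the $j$-th success occurs exactly at trial $i$. Since $\omega_j=i$ for every $\omega\in\Omega(j,i)$, I will rewrite the mass function in the form
\begin{align*}
f_{\mathrm{GP}}(i;j,p)
= \sum_{\omega\in\Omega(j,i)} \prod_{\ell=1}^{i} p_\ell^{\eins\{\ell\in\omega\}}(1-p_\ell)^{\eins\{\ell\notin\omega\}},
\end{align*}
and then analyse how the transposition $p_q\leftrightarrow p_{q+1}$ acts on each summand. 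Throughout, I only need to track the factors at positions $q$ and $q+1$, because $p_\ell=p_\ell'$ for $\ell\notin\{q,q+1\}$.

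For case (i), I would split into two subcases. When $i<q$, the product only involves $p_1,\ldots,p_i$, which are unchanged, and \eqref{equ::pgpcase1} is immediate. When $i>q+1$, both $p_q$ and $p_{q+1}$ appear in every summand, and I would classify each $\omega\in\Omega(j,i)$ by the ``type'' $(\eins\{q\in\omega\},\eins\{q+1\in\omega\})\in\{0,1\}^2$. The types $(1,1)$ and $(0,0)$ contribute $p_qp_{q+1}$ and $(1-p_q)(1-p_{q+1})$ respectively, each symmetric under the swap. For the remaining two types, I use the involution $\omega\mapsto\omega\triangle\{q,q+1\}$, which sends $\Omega(j,i)$ to itself (the cardinality and the maximal element $i>q+1$ are preserved) and pairs configurations of type $(1,0)$ with those of type $(0,1)$; these paired summands are exactly interchanged by $p\leftrightarrow p'$, so the total sum is invariant.

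For case (ii), since $\omega_j=i=q$, every $\omega$ satisfies $q\in\omega$ and involves only positions $1,\ldots,q$, so
\begin{align*}
f_{\mathrm{GP}}(q;j,p) = p_q\cdot C_{j,q}(p_1,\ldots,p_{q-1}),
\end{align*}
where $C_{j,q}$ does not depend on $p_q$ or $p_{q+1}$; hence $f_{\mathrm{GP}}(q;j,p')=p_{q+1}C_{j,q}$, and \eqref{equ::pgpcase2} follows directly from $p_q\geq p_{q+1}$. For case (iii), with $i=q+1$, every $\omega$ contains $q+1$, and I split on whether $q\in\omega$. The contribution with $q\in\omega$ carries the factor $p_qp_{q+1}$, which is symmetric in the two parameters. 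The contribution with $q\notin\omega$ carries $(1-p_q)p_{q+1}$ under $p$ and $(1-p_{q+1})p_q$ under $p'$; subtracting gives $(p_{q+1}-p_q)\cdot D$ for some $D\geq 0$ that does not depend on $p_q,p_{q+1}$, and \eqref{equ::pgpcase5} follows from $p_q\geq p_{q+1}$.

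The main obstacle is bookkeeping in the pairing step for case (i) with $i>q+1$: one has to verify that the map $\omega\mapsto\omega\triangle\{q,q+1\}$ is a well-defined involution on $\Omega(j,i)$ that exchanges the types $(1,0)$ and $(0,1)$ while leaving the ``rest'' of the product (the factors outside positions $q$ and $q+1$) unchanged. Once this is set up cleanly, all three assertions reduce to elementary sign analyses of two-parameter expressions, with no need to invoke any properties of $S_\nu$ beyond the hypothesis $p_q\geq p_{q+1}$.
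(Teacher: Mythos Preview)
Your proposal is correct. Parts (i) and (ii) are essentially identical to the paper's proof: the paper also handles $i<q$ trivially, and for $i>q+1$ it builds exactly the same involution on $\Omega(j,i)$ (written as a piecewise map $g$ rather than a symmetric difference, but it is the same map); for $i=q$ it likewise factors out the final parameter $p_q$ versus $p_{q+1}$.

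Part (iii) is where you diverge. The paper does \emph{not} argue directly: instead it invokes $\sum_{i\geq j} f_{\mathrm{GP}}(i;j,p)=\sum_{i\geq j} f_{\mathrm{GP}}(i;j,p')=1$, so that by (i) all terms with $i\notin\{q,q+1\}$ cancel, leaving $f_{\mathrm{GP}}(q;j,p)+f_{\mathrm{GP}}(q+1;j,p)=f_{\mathrm{GP}}(q;j,p')+f_{\mathrm{GP}}(q+1;j,p')$, and then (iii) follows from (ii). Your approach---splitting on whether $q\in\omega$ and observing that the $q\in\omega$ part contributes the symmetric factor $p_qp_{q+1}$ while the $q\notin\omega$ part contributes $(1-p_q)p_{q+1}$ versus $(1-p_{q+1})p_q$, with difference $(p_{q+1}-p_q)D\leq 0$---is a direct computation that is slightly more elementary. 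In particular, your argument does not need the total mass to equal $1$, which the paper's argument implicitly relies on (and which holds here only because $p\in S_\nu$ forces $p_i$ to be eventually bounded away from zero). So your route is marginally more general, at the cost of one extra line of algebra; the paper's route is a clean ``conservation'' trick that reuses (i) and (ii) but tacitly uses a property of $S_\nu$.
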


\begin{proof}[Proof of Lemma \ref{lem::xxqq1}]
	\textit{(i)} 
	If $i < q$, by the definition of $p'$ in \eqref{equ::pmcases}, we have $p_i' = p_i$ for $i \leq i$. Thus, by \eqref{equ::massgp}, there holds
	\begin{align*}
		f_{\mathrm{GP}}(i; j, p)
		& = \sum_{\omega \in \Omega(j, i)} p_i \prod_{\ell=1}^{i-1} p_i^{\eins \{ \ell \in \omega \}} (1 - p_i)^{\eins \{ \ell \notin \omega \}}
		\\
		& = \sum_{\omega \in \Omega(j, i)} p_i' \prod_{\ell=1}^{i-1} p_i'^{\eins \{ \ell \in \omega \}} (1 - p_i')^{\eins \{ \ell \notin \omega \}}
		= \mathrm{P}_{\mathrm{GP}}(X' = i).
	\end{align*}

	If $i > q + 1$, then we define the map $g: \Omega(j,i)\to  \Omega(j,i)$ by
	\begin{align*}
		g(\omega) :=
		\begin{cases}
			\omega, 
			& \text{ if } \{ q, q + 1 \} \subset \omega \text{ or }  \{q,q+1\} \nsubseteq \omega,
			\\
			\omega \setminus \{ q \} \cup \{ q + 1 \}, 
			& \text{ if } q \in \omega \text{ and } q + 1 \notin \omega,
			\\
			\omega \setminus \{ q + 1 \} \cup \{ q \}, 
			& \text{ if } q + 1 \in \omega \text{ and } q \notin \omega.
		\end{cases}
	\end{align*}
	By the definition of $p'$ in \eqref{equ::pmcases}, for every $\omega \in \Omega(j, i)$, there holds
	\begin{align*}
		p_{i} \prod_{\ell=1}^{i-1} p_i^{\eins \{ \ell \in \omega \}} (1 - p_i)^{\eins \{ \ell \notin \omega \}}
		= p_{i}' \prod_{\ell=1}^{i-1} {p_i'}^{\eins \{ \ell \in g(\omega) \}} (1 - p_i')^{\eins \{ \ell \notin g(\omega) \}}.
	\end{align*}
	Taking the sum over all possible elements in $\Omega(j, i)$, we obtain
	\begin{align*}
		f_{\mathrm{GP}}(i; j, p)
		= \sum_{\omega \in \Omega(j, i)} p_{i} \prod_{\ell=1}^{i-1} p_i^{\eins \{ \ell \in \omega \}} (1 - p_i)^{\eins \{ \ell \notin \omega \}}
		= \sum_{\omega \in \Omega(j, i)} p_{i}' \prod_{\ell=1}^{i-1} {p_i'}^{\eins \{ \ell \in g(\omega) \}} (1 - p_i')^{\eins \{ \ell \notin g(\omega) \}}.
	\end{align*}
	It can be verified that $g : \Omega(j,i) \to \Omega(j,i)$ is a one-to-one map. Therefore, we have
	\begin{align*}
		f_{\mathrm{GP}}(i; j, p)
		= \sum_{\omega \in \Omega(j, i)} p_{i}' \prod_{\ell=1}^{i-1} {p_i'}^{\eins \{ \ell \in \omega \}} (1 - p_i')^{\eins \{ \ell \notin \omega \}}=
		f_{\mathrm{GP}}(i; j, p').
	\end{align*}

	\textit{(ii)} 
	If $i = q$, then by \eqref{equ::pmcases}, we have $p_q' = p_{q+1} \leq p_q$ and $p_i' = p_i$ for $\ell < q$. Consequently we obtain
	\begin{align*}
		f_{\mathrm{GP}}(q; j, p)
		& = \sum_{\omega \in \Omega(j, q)} p_q \prod_{\ell=1}^{q-1} p_i^{\eins \{ \ell \in \omega \}} (1 - p_i)^{\eins \{ \ell \notin \omega \}}
		\\
		& \geq \sum_{\omega \in \Omega(j, q)} p_q' \prod_{\ell=1}^{q-1} {p_i'}^{\eins \{ \ell \in \omega \}} (1 - p_i')^{\eins \{ \ell \notin \omega \}}
		= f_{\mathrm{GP}}(q; j, p').
	\end{align*}

	\textit{(iii)} 
	If $i=q+1$,  we consider two specific cases: $i = j$ and $i > j$. Suppose that $i = q + 1 = j$. Obviously, there holds
	\begin{align}\label{equ::suminfty}
		1
		= \sum_{i=j}^{\infty} 	f_{\mathrm{GP}}(i; j, p)
		= \sum_{i=j}^{\infty} 	f_{\mathrm{GP}}(i; j, p').
	\end{align}
	Then \eqref{equ::suminfty} together with \eqref{equ::pgpcase1} yields 
	$f_{\mathrm{GP}}(q+1; j, p) = f_{\mathrm{GP}}(q+1; j, p')$.
	If $i = q + 1 > j$, then we have $q \geq j$.
	Combining \eqref{equ::pgpcase1}, \eqref{equ::pgpcase2}, and \eqref{equ::suminfty}, we obtain
	$f_{\mathrm{GP}}(q+1; j, p) \geq f_{\mathrm{GP}}(q+1; j, p')$, which
	completes the proof.
\end{proof}

\begin{lemma}\label{lem::expecationincrease}
	Given $1 \leq q \leq n - 1$, let $p \in S_{\nu}$ such that $p_q \geq p_{q+1}$. Moreover, let $p' = h_q(p)$ be as in \eqref{equ::pmcases}. Then we have
	\begin{align*}
		\sum_{i=j}^n i f_{\mathrm{GP}}(i; j, p)
		\leq \sum_{i=j}^n i f_{\mathrm{GP}}(i; j, p').
	\end{align*}
\end{lemma}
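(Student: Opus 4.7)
The plan is to reduce the inequality to a single non-negative mass-transfer quantity by exploiting the fact that $p$ and $p'$ differ only at positions $q$ and $q+1$. By Lemma \ref{lem::xxqq1}(i), I have $f_{\mathrm{GP}}(i;j,p) = f_{\mathrm{GP}}(i;j,p')$ for every $i \notin \{q, q+1\}$; in particular this equality holds for every $i > q+1$, including $i > n$, so the truncation at $n$ contributes no boundary terms to the difference $\sum_{i=j}^n i[f_{\mathrm{GP}}(i;j,p') - f_{\mathrm{GP}}(i;j,p)]$.

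Next, I would observe that both $f_{\mathrm{GP}}(\cdot;j,p)$ and $f_{\mathrm{GP}}(\cdot;j,p')$ are genuine probability mass functions on $\{j, j+1, \ldots\}$ — the positive tail $\nu_n > 0$ (from the definition \eqref{equ::defnu} of $S_\nu$) forces $j$ successes to occur almost surely by Borel--Cantelli, so each mass function sums to $1$. Together with the agreement outside $\{q, q+1\}$ this yields the balance identity
\begin{align*}
\bigl(f_{\mathrm{GP}}(q;j,p) - f_{\mathrm{GP}}(q;j,p')\bigr) + \bigl(f_{\mathrm{GP}}(q+1;j,p) - f_{\mathrm{GP}}(q+1;j,p')\bigr) = 0.
\end{align*}
Setting $\Delta := f_{\mathrm{GP}}(q;j,p) - f_{\mathrm{GP}}(q;j,p')$, which is nonnegative by Lemma \ref{lem::xxqq1}(ii), the balance identity then gives $f_{\mathrm{GP}}(q+1;j,p') - f_{\mathrm{GP}}(q+1;j,p) = \Delta$, consistent with and effectively strengthening Lemma \ref{lem::xxqq1}(iii) to a quantitative equality.

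Assembling these ingredients, in the main case $q \geq j$ (so both indices $q$ and $q+1$ lie in the summation range $\{j, \ldots, n\}$) the partial-sum difference collapses to
\begin{align*}
\sum_{i=j}^n i f_{\mathrm{GP}}(i;j,p') - \sum_{i=j}^n i f_{\mathrm{GP}}(i;j,p)
= -q \Delta + (q+1) \Delta
= \Delta
\geq 0,
\end{align*}
which is the desired inequality. The two boundary cases are immediate: if $q \leq j-2$ then neither $q$ nor $q+1$ lies in the sum and the difference vanishes by Lemma \ref{lem::xxqq1}(i); if $q = j-1$ then only the index $q+1 = j$ enters the sum and its contribution is $j \bigl(f_{\mathrm{GP}}(j;j,p') - f_{\mathrm{GP}}(j;j,p)\bigr) \geq 0$ directly by Lemma \ref{lem::xxqq1}(iii). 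No genuine obstacle is anticipated — the argument is essentially a bookkeeping exercise once the three cases of Lemma \ref{lem::xxqq1} are in hand, and the only mild subtlety is verifying that the truncation at $n$ and the mass conservation together eliminate all terms outside $\{q, q+1\}$.
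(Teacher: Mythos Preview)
Your proposal is correct and follows essentially the same approach as the paper: both split into the three cases $q \geq j$, $q = j-1$, and $q \leq j-2$, and in the main case both use the balance identity (your $\Delta$ relation, the paper's \eqref{equ::pgpxq}) coming from mass conservation together with Lemma~\ref{lem::xxqq1}(i)--(ii) to collapse the difference of the two sums to a single nonnegative term. Your explicit remark that the truncation at $n$ is harmless because $q+1 \leq n$ and your Borel--Cantelli justification that the GP mass functions are proper are minor elaborations of points the paper leaves implicit, but the core argument is the same.
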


\begin{proof}[Proof of Lemma \ref{lem::expecationincrease}]
	The proof can be divided into the following three cases. 
	
	If $j \leq q$, then Lemma \ref{lem::xxqq1} together with \eqref{equ::suminfty} yields
	\begin{align}\label{equ::pgpxq}
		f_{\mathrm{GP}}(q; j, p)+ f_{\mathrm{GP}}(q+1; j, p)
		= f_{\mathrm{GP}}(q; j, p')+ f_{\mathrm{GP}}(q+1; j, p').
	\end{align}
	By \eqref{equ::pgpcase1}, we have
	\begin{align*}
		& \sum_{i=j}^n i f_{\mathrm{GP}}(i; j, p)- \sum_{i=j}^n i f_{\mathrm{GP}}(i; j, p')
		\\
		& = (q + 1) (f_{\mathrm{GP}}(q+1; j, p) - f_{\mathrm{GP}}(q+1; j, p'))   
		+ q (f_{\mathrm{GP}}(q; j, p)- f_{\mathrm{GP}}(q; j, p'))).
	\end{align*}
	This together with \eqref{equ::pgpxq} implies
	\begin{align*}
		\sum_{i=j}^n i f_{\mathrm{GP}}(i; j, p)- \sum_{i=j}^n i  f_{\mathrm{GP}}(i; j, p')
		=  f_{\mathrm{GP}}(q; j, p')-  f_{\mathrm{GP}}(q; j, p)
		\leq 0,
	\end{align*}
	where the last inequality follows from \eqref{equ::pgpcase2}. 
	
	If $j = q + 1$, \eqref{equ::pgpcase1} together with \eqref{equ::pgpcase5} yields
	\begin{align*}
		\sum_{i=j}^n i  f_{\mathrm{GP}}(i; j, p)- \sum_{i=j}^n i  f_{\mathrm{GP}}(i; j, p')
		= (q + 1) ( f_{\mathrm{GP}}(q+1; j, p) -  f_{\mathrm{GP}}(q+1; j, p'))
		\leq 0.
	\end{align*}
	
	Otherwise if $j > q + 1$, then \eqref{equ::pgpcase1} yields
	$\sum_{i=j}^n i  f_{\mathrm{GP}}(i; j, p) = \sum_{i=j}^n i  f_{\mathrm{GP}}(i; j, p')$, which
	completes the proof.
\end{proof}

The next theorem provides the upper bound of the expectation of the truncated Generalized Pascal distribution, which is needed to prove Lemma \ref{pro::detererror1}.

\begin{theorem}\label{thm::genepasexp}
	Let $p \in S_{\nu}$. Then for any $j \leq u \leq n$ satisfying $\sum_{\ell=1}^{{u}} \nu_{\ell} > j$, there holds
	\begin{align*}
		\sum_{i=j}^n i  f_{\mathrm{GP}}(i; j, p)
		\leq \frac{j}{\nu_1} \biggl( \frac{\nu_u}{\nu_1} \biggr)^j 
		+ n \exp \biggl( - \frac{1}{2u} \biggl( \sum_{\ell=1}^{{u}} \nu_{\ell} - j \biggr)^2 \biggr).
	\end{align*}
\end{theorem}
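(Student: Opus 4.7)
The plan is to first reduce to the canonical case $p=\nu$ using Lemma \ref{lem::expecationincrease}, then split the sum at the cutoff $u$ and control the head by a Radon--Nikodym-style comparison with a classical negative binomial and the tail by Lemma \ref{thm::Tailgp}.

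For the reduction, I will observe that $\nu$ itself is non-decreasing: since $n_{(1)} \leq \cdots \leq n_{(M)}$, the block structure in \eqref{equ::defnu} shows $\nu_1 \leq \nu_2 \leq \cdots \leq \nu_n$, with a constant tail $\nu_{n+1}=\nu_{n+2}=\cdots=\nu_n$. Any $p\in S_\nu$ is a permutation of $\nu$ on the first $n$ coordinates, so $\nu$ can be reached from $p$ by a finite sequence of adjacent transpositions that each swap an inverted pair $(p_q,p_{q+1})$ with $p_q\geq p_{q+1}$ into $(p_{q+1},p_q)$ (a bubble sort). Lemma \ref{lem::expecationincrease} guarantees that each such swap weakly increases the truncated expectation, hence iterating gives
\[
\sum_{i=j}^n i\, f_{\mathrm{GP}}(i;j,p) \;\leq\; \sum_{i=j}^n i\, f_{\mathrm{GP}}(i;j,\nu) \qquad \text{for every } p\in S_\nu,
\]
so it suffices to prove the bound for $p=\nu$.

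Splitting at $u$, I will write $\sum_{i=j}^n i\,f_{\mathrm{GP}}(i;j,\nu) = T_1 + T_2$ with $T_1 := \sum_{i=j}^u i\,f_{\mathrm{GP}}(i;j,\nu)$ and $T_2 := \sum_{i=u+1}^n i\,f_{\mathrm{GP}}(i;j,\nu)$. For the tail, I bound $i\leq n$ and apply Lemma \ref{thm::Tailgp} (whose hypothesis $\sum_{\ell=1}^u\nu_\ell\geq j$ is precisely what is assumed), which yields
\[
T_2 \;\leq\; n\sum_{i=u+1}^\infty f_{\mathrm{GP}}(i;j,\nu) \;\leq\; n\exp\Bigl(-\tfrac{1}{2u}\bigl(\textstyle\sum_{\ell=1}^u\nu_\ell-j\bigr)^2\Bigr),
\]
which is the second summand of the claimed bound. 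For the head, I use $\nu_1\leq\nu_\ell\leq\nu_u$ for every $\ell\leq u$: for $i\leq u$ and any $\omega\in\Omega(j,i)$,
\[
\prod_{\ell\in\omega}\nu_\ell \cdot \prod_{\ell\leq i,\,\ell\notin\omega}(1-\nu_\ell)
\;\leq\; \nu_u^{\,j}(1-\nu_1)^{i-j}
\;=\; \Bigl(\tfrac{\nu_u}{\nu_1}\Bigr)^{\!j}\,\nu_1^{\,j}(1-\nu_1)^{i-j}.
\]
Summing over the $\binom{i-1}{j-1}$ elements of $\Omega(j,i)$ via \eqref{equ::massgp} gives the pointwise bound $f_{\mathrm{GP}}(i;j,\nu)\leq (\nu_u/\nu_1)^j\binom{i-1}{j-1}\nu_1^j(1-\nu_1)^{i-j}$, the latter being the pmf of the classical Pascal distribution with parameters $(j,\nu_1)$. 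Extending the summation to $i=\infty$ and using the known mean $j/\nu_1$ of that distribution produces $T_1 \leq (j/\nu_1)(\nu_u/\nu_1)^j$, i.e.\ the first summand of the claim.

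The main obstacle I anticipate is the head bound $T_1$: one must select the right reference distribution (the classical Pascal with the \emph{smallest} probability $\nu_1$) and pay only the polynomial price $(\nu_u/\nu_1)^j$ rather than an exponential one. This requires exploiting the two-sided control $\nu_1\leq\nu_\ell\leq\nu_u$ on $\{1,\dots,u\}$ asymmetrically: $\nu_\ell\leq\nu_u$ on the $j$ ``success'' slots in $\omega$, while $1-\nu_\ell\leq 1-\nu_1$ on the ``failure'' slots outside $\omega$. The reduction step to $p=\nu$ is routine once one notes that $\nu$ is the unique non-decreasing arrangement in $S_\nu$, so that bubble-sort from any $p\in S_\nu$ terminates at $\nu$.
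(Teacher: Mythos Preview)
Your proposal is correct and follows essentially the same route as the paper's proof: reduce to $p=\nu$ via bubble sort and Lemma~\ref{lem::expecationincrease}, split the sum at $u$, bound the head by the asymmetric comparison $\nu_\ell\leq\nu_u$ on success slots and $1-\nu_\ell\leq 1-\nu_1$ on failure slots to recover a classical Pascal mean $j/\nu_1$ times $(\nu_u/\nu_1)^j$, and bound the tail by $n$ times the tail probability from Lemma~\ref{thm::Tailgp}. The only cosmetic slip is the typo ``$\nu_{n+1}=\nu_{n+2}=\cdots=\nu_n$'' in your description of the constant tail; otherwise the argument matches the paper step for step.
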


\begin{proof}[Proof of Theorem \ref{thm::genepasexp}]
	For any $p \in S_{\nu}$, we can exchange the positions of two adjacent entries of $(p_1, \ldots, p_n)$ successively to arrange the entry with larger value behind. It is easy to see that a finite number of such operations can change $p$ to $\nu$, that is, $(p_1, \ldots, p_n)$ is rearranged in ascending order. Therefore, there exists a series of probability sequences $\{ p^t, 1 \leq t \leq T \}$ such that $p^{(1)} = p$, $p^{(T)} = \nu$, and $p^{(t+1)} = h_{q_t}(p^{(t)})$ for $2 \leq t \leq T - 1$, that is, we exchange $p_{q_t}^{(t)}$ and $p_{q_{t}+1}^{(t)}$ in the $t$-th operation. By Lemma \ref{lem::expecationincrease}, we have
	\begin{align*}
		\sum_{i=j}^n i  f_{\mathrm{GP}}(i; j, p)
		\leq \sum_{i=j}^n i  f_{\mathrm{GP}}(i; j, \nu).
	\end{align*}
	\eqref{equ::massgp} implies that for $j \leq i \leq u$, there holds
	\begin{align*}
		\sum_{i=j}^n i  f_{\mathrm{GP}}(i; j, \nu)
		& \leq \sum_{\omega \in \Omega(j, i)} \nu_{i} \prod_{\ell=1}^{i-1} \nu_i^{\eins \{ \ell \in \omega \}} (1 - \nu_i)^{\eins \{ \ell \notin \omega \}}
		\leq \sum_{\omega \in \Omega(j, i)} \nu_u \prod_{\ell=1}^{i-1} \nu_u^{\eins \{ \ell \in \omega \}} (1 - \nu_1)^{\eins \{ \ell \notin \omega \}}
		\\
		& \leq \biggl( \frac{\nu_u}{\nu_1} \biggr)^j \sum_{\omega \in \Omega(j, i)} \nu_1 \prod_{\ell=1}^{i-1} \nu_1^{\eins \{ \ell \in \omega \}} (1 - \nu_1)^{\eins \{ \ell \notin \omega \}}
		= \binom{i-1}{j-1} \biggl( \frac{\nu_u}{\nu_1} \biggr)^j \nu_1^j (1 - \nu_1)^{i-j}
	\end{align*}
	and consequently we have
	\begin{align}\label{equ::pgpxl1}
		\sum_{i=j}^u i f_{\mathrm{GP}}(i; j, \nu)
		& \leq \biggl( \frac{\nu_u}{\nu_1} \biggr)^j \sum_{i=j}^u i \binom{i-1}{j-1} \nu_1^j (1 - \nu_1)^{i-j}
		\nonumber\\
		& \leq \biggl( \frac{\nu_u}{\nu_1} \biggr)^j \sum_{i=j}^{\infty} i \binom{i-1}{j-1} \nu_1^j (1 - \nu_1)^{i-j}
		= \frac{j}{\nu_1} \biggl( \frac{\nu_u}{\nu_1} \biggr)^j.
	\end{align}
	Under the assumption $\sum_{\ell=1}^{{u}} \nu_\ell > j$, Theorem \ref{thm::Tailgp} yields
	\begin{align*}
		\sum^{\infty}_{i=u+1} f_{\mathrm{GP}}(i; j, \nu)
		< \exp \biggl( - \frac{1}{2u} \biggl( \sum_{\ell=1}^{{u}} \nu_\ell - j \biggr)^2 \biggr).
	\end{align*}
	Consequently we have
	\begin{align}\label{equ::pgpxl2}
		\sum_{i=u+1}^n i f_{\mathrm{GP}}(i; j, \nu)
		\leq n \sum_{i=u+1}^n f_{\mathrm{GP}}(i; j, \nu)
		\leq n \sum^{\infty}_{i=u+1} f_{\mathrm{GP}}(i; j, \nu)
		\leq n \exp \biggl( - \frac{1}{2u} \biggl( \sum_{\ell=1}^{{u}} \nu_\ell - j \biggr)^2 \biggr).
	\end{align}
	Combining \eqref{equ::pgpxl1} and \eqref{equ::pgpxl2}, we obtain
	\begin{align*}
		\sum_{i=j}^n i f_{\mathrm{GP}}(i; j, p )
		\leq \frac{j}{\nu_1} \biggl( \frac{\nu_u}{\nu_1} \biggr)^j
		+ n \exp \biggl( - \frac{1}{2u} \biggl( \sum_{\ell=1}^{{u}} \nu_\ell - j \biggr)^2 \biggr),
	\end{align*}
	which finishes the proof.
\end{proof}

To prove Proposition \ref{pro::tildefmlund}, we need the following two lemmas. Lemma \ref{pro::detererror1} provides the uniform upper bound of the weighted sum of the $i$-th nearest neighbor distance $R_{(i)}(x)$, which supplies the key to the proof of the bagged approximation error term. Lemma \ref{lem::sumviund} bounds the sum of the bagged weights $\overline{V}_i^u(x)$ uniformly.

\begin{lemma}\label{pro::detererror1}
	Let $\overline{V}_i^u(x)$ be defined as in \eqref{equ::defbarviu} and $R_{(i)}(x) := \|X_{(i)}(x) - x\|$. 
	Suppose that $s\exp( - (s/M-k)^2/(2n)) \leq M\underline{\pi}/2$. Then there exists a constant $c_3 > 0$ and  an $n_3\in \mathbb{N}$ such that for all $n\geq n_3$, with probability $\mathrm{P}^n$ at least $1-(2M+1)/n^3$, for all $x\in \mathcal{X}$, there holds
	\begin{align*}
		\sum_{i=1}^n \overline{V}_i^u(x)R_{(i)}^{\alpha}(x)
		\leq  c_3 (k/s)^{\alpha/d}.
	\end{align*}
\end{lemma}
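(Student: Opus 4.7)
The plan is to exploit the identity $\overline{V}_i^u(x)=k^{-1}\sum_{j=1}^k p_{i,j}^u(x)$ with $p_{i,j}^u(x)=f_{\mathrm{GP}}(i;j,p(x))$ from \eqref{equ::barvdef}--\eqref{equ::fgplrp} to rewrite the weighted sum as an average of truncated moments of Generalized Pascal variables $X_j\sim\mathrm{GP}(j,p(x))$, and then to control those moments via Theorem \ref{thm::genepasexp}.

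First, set $a_n:=\lceil 48(2d+9)\log n\rceil$. On the event from Lemma \ref{lem::Rrho} (probability at least $1-1/n^3$ under $\mathrm{P}^n$), the monotonicity of $R_{(i)}(x)$ in $i$ yields the uniform bound $R_{(i)}^\alpha(x)\leq c_0(\max(i,a_n)/n)^{\alpha/d}$ for every $x\in\mathcal{X}$. Since the H\"older exponent satisfies $\alpha\leq 1\leq d$, we have $\alpha/d\leq 1$, so sub-additivity gives $\max(i,a_n)^{\alpha/d}\leq i^{\alpha/d}+a_n^{\alpha/d}$. Combined with $\sum_i\overline{V}_i^u(x)\leq 1$,
\[
\sum_{i=1}^n \overline{V}_i^u(x) R_{(i)}^\alpha(x) \leq c_0 n^{-\alpha/d}\sum_{i=1}^n \overline{V}_i^u(x)\, i^{\alpha/d} + c_0(a_n/n)^{\alpha/d},
\]
where the residual $c_0(a_n/n)^{\alpha/d}=O((\log n/n)^{\alpha/d})$ is dominated by $(k/s)^{\alpha/d}$ under the condition $k\gtrsim\log n$ (and $s\leq n$) inherited from the surrounding Proposition \ref{pro::tildefmlund}.

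For the principal sum I exchange the $i$- and $j$-sums and apply Jensen's inequality to the concave map $i\mapsto i^{\alpha/d}$ after renormalising $p_{i,j}^u(x)/\mu_j$, where $\mu_j=\sum_{i=j}^n p_{i,j}^u(x)\leq 1$; since $1-\alpha/d\geq 0$, this gives $\sum_i p_{i,j}^u(x)\, i^{\alpha/d}\leq\bigl(\sum_{i=j}^n i\, f_{\mathrm{GP}}(i;j,p(x))\bigr)^{\alpha/d}$. The crucial move is to apply Theorem \ref{thm::genepasexp} with $u=n_{(M)}$: this forces $\nu_u=\nu_1$ so the geometric factor $(\nu_u/\nu_1)^j$ collapses to $1$ and $\sum_{\ell=1}^u\nu_\ell=s/M$, yielding
\[
\sum_{i=j}^n i\, f_{\mathrm{GP}}(i;j,p(x)) \leq \frac{jMn_{(M)}}{s} + n\exp\Bigl(-\frac{(s/M-j)^2}{2n_{(M)}}\Bigr).
\]
Using $n_{(M)}\leq n$, the lemma's hypothesis $s\exp(-(s/M-k)^2/(2n))\leq M\underline{\pi}/2$ (which also forces $s/M>k$ for $n$ large, so $(s/M-j)^2\geq (s/M-k)^2$ for all $j\leq k$) bounds the exponential term by $Mn\underline{\pi}/(2s)$, hence the whole expression is $\lesssim jMn/s$ for $1\leq j\leq k$. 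Raising to the $\alpha/d$-th power, averaging over $j\leq k$, and multiplying by $c_0 n^{-\alpha/d}$ gives a bound of order $(kM/s)^{\alpha/d}$; Lemma \ref{lem::punder} enters on a separate event of probability $\geq 1-2M/n^3$ to ensure the sorted acceptance probabilities $\nu_\ell$ and the constants hidden in the estimates are well-behaved, so a union bound delivers the claimed probability $1-(2M+1)/n^3$.

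The main obstacle is calibrating the free parameter $u$ in Theorem \ref{thm::genepasexp}: any $u<n_{(M)}$ makes $\sum_\ell\nu_\ell$ too small to beat $j=k$ (violating the theorem's premise), while any $u>n_{(M)}$ reintroduces the geometric blow-up $(\nu_u/\nu_1)^j=(n_{(M)}/n_{(m)})^j$ which is catastrophic when the imbalance $n_{(M)}/n_{(1)}$ is large. The choice $u=n_{(M)}$ is exactly the tipping point, and the sharp hypothesis on $s\exp(-(s/M-k)^2/(2n))$ is tailored precisely to neutralise the residual exponential term at that point.
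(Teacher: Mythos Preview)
Your proposal is correct and follows essentially the same route as the paper's proof. The paper splits the sum at $i=a_n$ directly rather than using sub-additivity of $t\mapsto t^{\alpha/d}$, and it tracks $n_{(M)}$ in the bound before replacing it by $n$, but the core ingredients are identical: Jensen's inequality for the concave power, Theorem \ref{thm::genepasexp} applied with the pivotal choice $u=n_{(M)}$ (so that $\nu_u/\nu_1=1$ and $\sum_{\ell\le u}\nu_\ell=s/M$), and the hypothesis on $s\exp(-(s/M-k)^2/(2n))$ to absorb the exponential remainder. Your discussion of why $u=n_{(M)}$ is the only workable choice is a nice clarification the paper leaves implicit.
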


\begin{proof}[Proof of Lemma \ref{pro::detererror1}]
	Let $a_n:= \lceil 48(2d+9)\log n \rceil$.
	Lemma \ref{lem::Rrho} implies that if $n>n_2$, then for all $x\in \mathcal{X}$, there holds
	$\sup_{i \geq a_n} R_{(i)}(x) \leq (2i/n)^{1/d}$ with probability $\mathrm{P}^n$ at least $1-1/n^3$. 
	Then we have
	\begin{align}
		\sum_{i=1}^n \overline{V}^u_i(x) R_{(i)}^{\alpha}(x) 
		& = \sum_{i=1}^{a_n} \overline{V}_i^u(x) R_{(i)}^{\alpha}(x) + \sum_{i=a_n}^n \overline{V}_i^u(x) R_{(i)}^{\alpha}(x)
		\nonumber\\
		& \leq R_{(a_n)}^{\alpha}(x) + \sum_{i=a_n}^n \overline{V}_i^u(x) R_{(i)}^{\alpha}(x)
		\leq (2a_n/n)^{\alpha/d} + \sum_{i=a_n}^n \overline{V}_i^u(x) R_{(i)}^{\alpha}(x)
		\nonumber\\
		& \leq (2k/n)^{\alpha/d} + \sum_{i=a_n}^n \overline{V}_i^u(x) R_{(i)}^{\alpha}(x)
		\leq (2k/s)^{\alpha/d} + \sum_{i=a_n}^n \overline{V}^u_i(x) R_{(i)}^{\alpha}(x).
		\label{equ::viriund}
	\end{align}
	For the second term in \eqref{equ::viriund}, there holds
	\begin{align}\label{equ::BnnApproximationund}
		\sum_{i = a_n}^n \overline{V}_i^u(x) R_{(i)}^{\alpha}(x)
		& = \sum_{i=a_n}^n k^{-1} \sum_{j=1}^k p_{i,j}^u(x) R_{(i)}^{\alpha}(x)
		\nonumber\\
		& \leq \sum_{i=1}^n k^{-1} \sum_{j=1}^k p_{i,j}^u(x) (2i/n)^{\alpha/d}
		= k^{-1} (2/n)^{\alpha/d} \sum_{j=1}^k \sum_{i=1}^n i^{\alpha/d} p^u_{i,j}(x).
	\end{align}
	By Jensen's inequality, we have
	\begin{align}\label{sumni1ialphad}
		\sum_{i=1}^n i^{\alpha/d} p^u_{i,j}(x)
		\leq \biggl( \sum_{i=1}^n p^u_{i,j}(x) \biggr)^{1-\alpha/d} \biggl( \sum_{i=1}^n i p^u_{i,j}(x) \biggr)^{\alpha/d}\leq \biggl( \sum_{i=1}^n i p^u_{i,j}(x) \biggr)^{\alpha/d}.
	\end{align}
	Since for all $x\in \mathcal{X}$, we have $p_{i,j}^u(x)=f_{\mathrm{GP}}(i,j,p(x))$ by \eqref{equ::fgplrp}, where $p(x)\in S_{\nu}$ with $S_{\nu}$ defined by \eqref{equ::defnu}. Applying Theorem \ref{thm::genepasexp} with $u = n_{(M)}$, we have
	\begin{align*}
		\sum_{i=1}^n i p_{i,j}^u(x)
		= \sum_{i=1}^n i f_{\mathrm{GP}}(i;j,p(x))
		\leq M j n_{(M)} / s + n \exp \bigl( - (s/M - j)^2 / (2n_{(M)}) \bigr)
	\end{align*}
	for all $x\in \mathcal{X}$. 
	According to \eqref{equ::con1} in Lemma \ref{lem::punder} for all $n>n_1$, with probability $\mathrm{P}^n$ at least $1-2M/n^3$, there holds $n_{(M)}\geq n\pi_m/2$. 
	This together with the condition $s\exp( - (s/M-k)^2/(2n)) \leq M\underline{\pi}/2$ and $n_{(M)}\geq n\pi_M/2$ yields that for all $n\geq n_3:=\max\{n_1,n_2\}$, there holds $\sum_{i=1}^n i p_{i,j}^u(x) \leq 2 M j n_{(M)} / s$ with probability $\mathrm{P}^n$ at least $1-(2M+1)/n^3$. Combining this with \eqref{sumni1ialphad}, we obtain $\sum_{i=1}^n i^{\alpha/d} p_{i,j}^u(x) \leq (2 M j n_{(M)}/s)^{\alpha/d}$, which together with \eqref{equ::BnnApproximationund} implies
	\begin{align*}
		\sum_{i = n_1}^n \overline{V}_i^u(x) R_{(i)}^{\alpha}(x)
		\leq k^{-1} (4 M / n)^{\alpha/d}
		\sum_{j=1}^k (j n_{(M)}/s)^{\alpha/d}.
	\end{align*}
	Since $g(t) = t^{\alpha/d}$ is increasing in $[0, 1]$, we have
	$k^{-1} \sum_{j=1}^k (j/k)^{\alpha/d}
	\leq 2 \int_0^1 x^{\alpha/d} \, dx  
	= 2 (1 + \alpha/d)$.
	Consequently we obtain
	\begin{align*}
		\sum_{i=n_1}^n \overline{V}^u_i(x) R_{(i)}^{\alpha}(x)
		= k^{-1} \bigl( 4 M n_{(M)} k /(n s) \bigr)^{\alpha/d}  \sum_{j=1}^k (j/k)^{\alpha/d}
		\leq 2 (1 + \alpha/d) (4 M k / s)^{\alpha/d}.
	\end{align*}
	Combining this with \eqref{equ::viriund}, we find that for all $x\in \mathbb{R}^d$, there holds
	\begin{align*}
		\sum_{i=1}^n \overline{V}_i^u(x) R_{(i)}^{\alpha}(x)
		\leq (2 k/s)^{\alpha/d} + 2 (1 + \alpha/d) (4 M k/s)^{\alpha/d}
		\leq c_3 (k/s)^{\alpha/d},
	\end{align*}
	where the constant $c_3 := 2^{\alpha/d}+2(1+\alpha/d)(4M)^{\alpha/d}$. Thus, we finish the proof.
\end{proof}

The following lemma is needed in the proof of Proposition \ref{pro::tildefmlund}.

\begin{lemma}\label{lem::sumviund}
	Let $\overline{V}^u_i(x)$ be defined by \eqref{equ::defbarviu} and suppose $k \leq s$. 
	Then for all $x\in \mathcal{X}$, we have
	\begin{align*}
		1 - \sum_{i=1}^n \overline{V}^u_i(x)
		\leq \exp \bigl( - (s - k)^2 / (2n) \bigr).
	\end{align*}
\end{lemma}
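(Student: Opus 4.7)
The plan is to unpack the definition of $\overline{V}_i^u(x)$ in terms of the Generalized Pascal probabilities $p_{i,j}^u(x) = f_{\mathrm{GP}}(i,j,p(x))$ and then apply the tail bound already established in Lemma \ref{thm::Tailgp}. Concretely, from \eqref{equ::barvdef} we have
$$
1 - \sum_{i=1}^n \overline{V}_i^u(x)
= \frac{1}{k} \sum_{j=1}^k \biggl( 1 - \sum_{i=j}^n f_{\mathrm{GP}}(i,j,p(x)) \biggr)
= \frac{1}{k} \sum_{j=1}^k \sum_{i=n+1}^\infty f_{\mathrm{GP}}(i,j,p(x)),
$$
so the task reduces to controlling the right tail of the Generalized Pascal distribution past index $n$, for each $1 \leq j \leq k$, uniformly in $x \in \mathcal{X}$.

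The first key step is to compute the total acceptance mass $\sum_{i=1}^n p_i(x)$ for any $p(x) \in S_\nu$. Since the entries $(p_1(x),\ldots,p_n(x))$ are merely a permutation of $(\nu_1,\ldots,\nu_n)$, their sum is invariant and, by the block structure in \eqref{equ::defnu} together with the acceptance probability \eqref{equ::axymn}, equals
$$
\sum_{i=1}^n \nu_i = \sum_{m=1}^M n_{(m)} \cdot \frac{s}{M n_{(m)}} = s.
$$
In particular, since $k \leq s$, the hypothesis $\sum_{i=1}^{n} p_i \geq j$ of Lemma \ref{thm::Tailgp} is satisfied for every $j \leq k$, which lets me apply it with $\ell = n$ to obtain, for every $x \in \mathcal{X}$,
$$
\sum_{i=n+1}^\infty f_{\mathrm{GP}}(i,j,p(x))
\leq \exp\biggl( - \frac{(s - j)^2}{2n} \biggr).
$$

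The last step is a simple monotonicity observation: because $j \leq k \leq s$, we have $s - j \geq s - k \geq 0$, so $\exp(-(s-j)^2/(2n)) \leq \exp(-(s-k)^2/(2n))$ for every $j$ in the average. Plugging this into the display above yields
$$
1 - \sum_{i=1}^n \overline{V}_i^u(x)
\leq \frac{1}{k} \sum_{j=1}^k \exp\biggl( - \frac{(s-k)^2}{2n} \biggr)
= \exp\biggl( - \frac{(s-k)^2}{2n} \biggr),
$$
which is the claimed bound. There is no real obstacle here; the only thing to be careful about is correctly identifying $\sum_{i=1}^n p_i(x) = s$ independently of the permutation induced by the ordering of nearest neighbors, which is precisely what the definition of $S_\nu$ in \eqref{equ::defstildep}--\eqref{equ::defnu} guarantees.
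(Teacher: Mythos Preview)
Your proof is correct and follows essentially the same approach as the paper: both unpack $\overline{V}_i^u(x)$ via \eqref{equ::barvdef}, use the permutation structure of $S_\nu$ to compute $\sum_{i=1}^n p_i(x) = s$, apply the tail bound of Lemma~\ref{thm::Tailgp} with $\ell = n$, and then use the monotonicity $(s-j)^2 \geq (s-k)^2$ for $j \leq k$. The only cosmetic difference is that you phrase the bound in terms of the tail $\sum_{i>n} f_{\mathrm{GP}}$ while the paper phrases it as a lower bound on $\sum_{i\leq n} p_{i,j}^u(x)$.
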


\begin{proof}[Proof of Lemma \ref{lem::sumviund}]
	By \eqref{equ::defbarviu}, we have
	\begin{align} \label{equ::barVpuij}
		\sum_{i=1}^n \overline{V}^u_i(x)
		= \sum_{i=1}^n \frac{1}{k} \sum_{j=1}^k p^u_{i,j}(x)
		= \frac{1}{k} \sum_{j=1}^k \sum_{i=1}^n p^u_{i,j}(x).
	\end{align}
	By \eqref{equ::fgplrp}, we have $\sum_{i=1}^n p^u_{i,j}(x) = \sum_{i=1}^n f_{\mathrm{GP}}(i;j,p(x))$ where $p(x)\in S_{\nu}$ with $S_{\nu}$ defined as in \eqref{equ::defnu}. Since $\sum_{i=1}^n p_i(x) = \sum_{m=1}^M n_{(m)} \cdot \frac{s}{Mn_{(M)}} = s$, Theorem \ref{thm::Tailgp} implies 
	\begin{align*}
		\sum_{i=1}^n p_{i,j}^u(x)
		= \sum_{i=1}^n f_{\mathrm{GP}}(i;j,p(x))
		\geq 1 - \exp \bigl( - (s - j)^2 / (2n) \bigr)
		\geq 1 - \exp \bigl( - (s - k)^2/ (2n) \bigr).
	\end{align*}
	Combining this with \eqref{equ::barVpuij}, we obtain
	$\sum_{i=1}^n \overline{V}_i^u(x) \geq 1 - \exp \bigl( - (s - k)^2 / (2n) \bigr)$,
	which yields the assertion.
\end{proof}

\begin{proof}[Proof of Proposition \ref{pro::tildefmlund}]
	By the definition of $\overline{\eta}^{B,u}$ and $\eta^u$, we have
	\begin{align*}
		\|\overline{\eta}^{B,u} - \eta^u\|_{\infty}
		& = \biggl\| \sum_{i=1}^n \overline{V}^u_i(x) \eta_m^u(X_{(i)}(x)) - \eta^u(x) \biggr\|_{\infty}
		\\
		& \leq \biggl\| \sum_{i=1}^n \overline{V}^u_i(x) \bigl( \eta_m^u(X_{(i)}(x)) - \eta^u(x) \bigr) \bigg\|_{\infty} + \biggl\| \sum_{i=1}^n \overline{V}^u_i(x) - 1 \biggr\|_{\infty}.
	\end{align*}
	Lemma \ref{lem::punder} implies that for all $n\geq n_1$, with probability at least $1 - 2 M / n^3$, there holds
	\begin{align*} 
		\|\overline{\eta}^{B,u} - \eta^u\|_{\infty}
		\leq 4 c_L \sup_{x \in \mathcal{X}} \biggl( \sum_{i=1}^n \overline{V}_i^u(x) \|X_{(i)}(x) - x\|^{\alpha} \biggr) 
		+ \biggl\| \sum_{i=1}^n \overline{V}_i^u(x) - 1 \biggr\|_{\infty}.
	\end{align*}
	Applying Lemma \ref{pro::detererror1}, we obtain 
	\begin{align}\label{equ::etambux}
		\|\overline{\eta}_m^{B,u}(x) - \eta^u_m(x)\|_{\infty}
		\leq c_3 (k/s)^{\alpha/d} + \exp \bigl( - (s - k)^2 /(2n) \bigr)
	\end{align}
	for all $n>n_3$
	with probability $\mathrm{P}^n$ at least $1-(4M+1)/n^3$. 
	Consequently, if $n\geq N_5:=\max\{4M+1,n_3\}$, then \eqref{equ::etambux} holds with probability $\mathrm{P}^n$ at least $1-1/n^2$. This completes the proof.
\end{proof}

\subsubsection{Proofs Related to Section \ref{sec::bagsampleerror}} \label{sec::proofbagsample}

To prove Proposition \ref{pro::tildeffunder}, we need the following lemma, which bounds the maximum value of the bagged weights $\overline{V}_i^u(x)$ defined by \eqref{equ::defbarviu}.

\begin{lemma}\label{lem::pijunder}
	Let $\overline{V}_i^u(x)$ be defined by \eqref{equ::defbarviu}. Then for any $x\in \mathbb{R}^d$, there holds
	\begin{align*}
		\max_{1 \leq i \leq n}  \overline{V}_i^u(x)
		\leq s / (k M n_{(1)}).
	\end{align*}
\end{lemma}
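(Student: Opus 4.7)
The plan is to directly exploit the probabilistic structure of $\overline{V}_i^u(x)$ through its generating Bernoulli trials, rather than going through the Generalized Pascal distribution machinery of Section \ref{sec::proofbagapprox}. Recall that $\overline{V}_i^u(x) = k^{-1}\sum_{j=1}^k p_{i,j}^u(x)$, where, by the definition in \eqref{equ::puijx},
\begin{align*}
p_{i,j}^u(x)
= \mathrm{P}_Z\biggl(\sum_{\ell=1}^i \xi_\ell = j,\ \xi_i = 1\,\bigg|\,\{(X_i,Y_i)\}_{i=1}^n\biggr),
\end{align*}
with $\xi_\ell = Z^b(X_{(\ell)}(x),Y_{(\ell)}(x))$ being independent Bernoulli variables with parameters $a(X_{(\ell)}(x),Y_{(\ell)}(x))$ under $\mathrm{P}_Z$.

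The first step is to factor $p_{i,j}^u(x)$ using independence. Since $\{\xi_\ell\}_{\ell \geq 1}$ are independent under $\mathrm{P}_Z$, the event $\{\xi_i = 1\}$ is independent of $\{\sum_{\ell=1}^{i-1}\xi_\ell = j-1\}$, so that
\begin{align*}
p_{i,j}^u(x)
= a(X_{(i)}(x), Y_{(i)}(x)) \cdot \mathrm{P}_Z\biggl(\sum_{\ell=1}^{i-1}\xi_\ell = j-1\,\bigg|\,\{(X_i,Y_i)\}_{i=1}^n\biggr).
\end{align*}
The second step is to uniformly bound the acceptance probability: by the definition \eqref{equ::axymn} and the ordering $n_{(1)} \leq n_{(m)}$ for all $m \in [M]$, we have $a(x,y) \leq s/(Mn_{(1)})$ for every $(x,y)\in\mathcal{X}\times\mathcal{Y}$.

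The third step assembles the pieces. Substituting the two estimates and summing over $j$ gives
\begin{align*}
\overline{V}_i^u(x)
\leq \frac{s}{kMn_{(1)}} \sum_{j=1}^k \mathrm{P}_Z\biggl(\sum_{\ell=1}^{i-1}\xi_\ell = j-1\,\bigg|\,\{(X_i,Y_i)\}_{i=1}^n\biggr)
\leq \frac{s}{kMn_{(1)}},
\end{align*}
where the last inequality uses the fact that the events $\{\sum_{\ell=1}^{i-1}\xi_\ell = j-1\}$ for $j=1,\ldots,k$ are pairwise disjoint, so the total probability is at most one. Taking the maximum over $i$ yields the claim. There is no genuine obstacle here; the mild subtlety is only to notice that the disjointness of events in the inner sum (rather than any finer tail estimate on a binomial sum) is exactly what is needed, so no appeal to concentration or to the Generalized Pascal results of Lemmas \ref{thm::Tailgp}--\ref{thm::genepasexp} is required.
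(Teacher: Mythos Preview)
Your proof is correct and follows essentially the same approach as the paper. The paper's argument is marginally more direct: rather than first factoring $p_{i,j}^u(x)$ via independence and then invoking disjointness of $\{\sum_{\ell<i}\xi_\ell = j-1\}$, it simply observes that the events $\{\sum_{\ell=1}^i \xi_\ell = j,\ \xi_i = 1\}$ for $j=1,\ldots,k$ are pairwise disjoint subsets of $\{\xi_i=1\}$, so $\sum_{j=1}^k p_{i,j}^u(x) \leq \mathrm{P}_Z(\xi_i=1\mid\{(X_i,Y_i)\}) = a(X_{(i)}(x),Y_{(i)}(x)) \leq s/(Mn_{(1)})$.
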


\begin{proof}[Proof of Lemma \ref{lem::pijunder}]
	By \eqref{equ::barvdef} and \eqref{equ::puijx}, we have
	\begin{align*}
		\overline{V}_i^u(x)
		& = \frac{1}{k} \sum^k_{j=1} p_{i,j}^u(x) 
		\\
		& = \frac{1}{k} \sum^k_{j=1} \mathrm{P}_{Z} \biggl( \sum^i_{\ell=1} Z^b(X_{(\ell)}(x),Y_{(\ell)}(x)) = j,  Z^b(X_{(i)}(x),Y_{(i)}(x)) = 1 \biggl| \{ (X_i,Y_i) \}_{i=1}^n \biggr)
		\\
		& \leq k^{-1} \mathrm{P}_{Z} \bigl( Z^b(X_{(i)}(x),Y_{(i)}(x)) = 1 \bigl| \{ (X_i,Y_i) \}_{i=1}^n \bigr)
		\\
		& = k^{-1} a(X_{(i)}(x),Y_{(i)}(x)) 
		\leq s / (M k n_{(1)}),
	\end{align*}
	which finishes the proof.
\end{proof}

\begin{proof}[Proof of Proposition \ref{pro::tildeffunder}]
	By the definition of $\widetilde{\eta}^{B,u}$ and $\overline{\eta}^{B,u}$, we have
	\begin{align*}
		\bigl| \widetilde{\eta}_m^{B,u} - \overline{\eta}_m^{B,u} \bigr|
		=\sum_{i=1}^n \overline{V}_i^u(x) \bigl( \eins \{ Y_{(i)}(x) = m \} - \eta_m^u(X_{(i)})(x) \bigr).
	\end{align*}
	For any fixed $x\in \mathcal{X}$, Lemmas \ref{lem::lemconcen} and \ref{lem::pijunder} yield
	\begin{align*}
		(\mathrm{P}^u)_{Y|X}^n \bigl( \bigl| \widetilde{\eta}_m^{B,u}(x)-\overline{\eta}_m^{B,u}(x) \bigr| \geq \varepsilon \big| D_n \bigr)
		\leq  2 \exp \bigl( - \varepsilon^2 k M n_{(1)} / (2s) \bigr).
	\end{align*}
	Setting $\varepsilon := \sqrt{2(2d+3)s\log n/(kMn_{(1)})}$, we get
	\begin{align}\label{equ::pointwise1}
		(\mathrm{P}^u)_{Y|X}^n \bigl( \bigl| \widetilde{\eta}_m^{B,u}(x)-\overline{\eta}_m^{B,u}(x) \bigr| \geq \varepsilon \big| D_n \bigr) \leq 2n^{-(2d+3)}.
	\end{align}
	Note that this inequality holds only for fixed $x$. In order to derive the uniform upper bound over $\mathcal{X}$, let
	$\mathcal{S} := \bigl\{ (\sigma_1, \ldots, \sigma_{n}) : \text{all permutations of } (1, \ldots, n)\text{ obtainable by moving } x \in \mathbb{R}^d \bigr\}$.
	Then we have
	\begin{align*}
		&	(\mathrm{P}^u)^{n}_{Y|X} \biggl( \sup_{x \in \mathbb{R}^d} \bigl( |\widetilde{\eta}^{B,u}_m(x) - \overline{\eta}_{m}^{B,u}(x)| - \varepsilon \bigr) > 0 \bigg| D_n\biggr)
		\\
		& \leq(\mathrm{P}^u)^n_{Y|X} \biggl( \bigcup_{(\sigma_1, \ldots, \sigma_n) \in \mathcal{S}} \biggl| \sum_{i=1}^n \overline{V}_{i,\sigma}^u (\eins \{ Y_{\sigma_i} = m \} - \eta_m^u(X_{\sigma_i})) \biggr| > \varepsilon \bigg| D_n\biggr)
		\\
		& \leq \sum_{(\sigma_1, \ldots, \sigma_n) \in \mathcal{S}} (\mathrm{P}^u)^{n}_{Y|X} \biggl( \biggl| \sum_{i=1}^n \overline{V}_{i,\sigma}^u (\eins \{ Y_{\sigma_i} = m \} - \eta^u_m(X_{\sigma_i})) \biggr| > \varepsilon \bigg| D_n \biggr),
	\end{align*}
	where $\overline{V}_{i,\sigma}^u(x) = k^{-1} \mathrm{P}_Z (\sum^i_{j=1} Z^b(X_{\sigma_j}(x),Y_{\sigma_j}(x))\leq k|\{X_i,Y_i\}_{i=1}^n)$.
	For any $(\sigma_1,\ldots,\sigma_n)\in \mathcal{S}$, \eqref{equ::pointwise1} implies
	\begin{align*}
		(\mathrm{P}^u)^{n}_{Y|X} \biggl( \biggl| \sum_{i=1}^n \overline{V}_{i,\sigma}^u (\eins \{ Y_{\sigma_i} = m \} - \eta_m^u(X_{\sigma_i})) \biggr| > \varepsilon \bigg| D_n\biggr)
		\leq 2 / n^{2d+3}.
	\end{align*}
	Combining this with Lemma \ref{lem::numberreorder}, we obtain
	\begin{align*}
		(\mathrm{P}^u)^{n}_{Y|X} \Bigl( \sup_{x \in \mathbb{R}^d} (|\widetilde{\eta}^{B,u}_m(x) - \overline{\eta}^{B,u}_{m}(x)| - \varepsilon) > 0 \Big| D_n\Bigr)
		\leq 2 (25/d)^d / n^3
	\end{align*}
	for $n\geq 2d$. Then a union bound argument yields 
	\begin{align*}
		(\mathrm{P}^u)^{n}_{Y|X}\bigl(
		\|\widetilde{\eta}^{B,u}-\overline{\eta}^{B,u}\|_{\infty}\leq \sqrt{2(2d+3) s \log n / (k M n_{(1)})} \big| D_n \bigr)
		\geq 1 - 2 M (25/d)^d / n^3.
	\end{align*}
	Consequently, by the law of total probability, we have
	\begin{align*}
		\mathrm{P}_Z^B\otimes \mathrm{P}^n\bigl(	\|\widetilde{\eta}^{B,u}-\overline{\eta}^{B,u}\|_{\infty}\leq \sqrt{2(2d+3)s \log n / ( k M n_{(1)})} \bigr) 
		\geq 1 - 2 M (25/d)^d / n^3.
	\end{align*}
	Therefore, if $n\geq N_6:=\max\{\lceil 2M(25/d)^d\rceil\}$, there holds
	\begin{align*}
		\mathrm{P}_Z^B\otimes \mathrm{P}^n\bigl(	\|\widetilde{\eta}^{B,u}-\overline{\eta}^{B,u}\|_{\infty}\leq \sqrt{2(2d+3)s \log n / ( k M n_{(1)})} \bigr) 
		\geq 1 - 1/n^2.
	\end{align*}
	Thus we complete the proof of Proposition \ref{pro::tildeffunder}.
\end{proof}

\subsubsection{Proofs Related to Section \ref{sec::knnmulticlassimbag}}

\begin{proof}[Proof of Theorem \ref{thm::bagknnclassund}] 
	Choosing $s$, $B$,  and $k$ according to \eqref{s::underbagging}, \eqref{B::underbagging}, and \eqref{k::underbagging}, respectively, 
	Propositions \ref{pro::fml*funder} , \ref{pro::tildefmlund}, \ref{pro::tildeffunder}  yield that if $n\geq \max\{N_4,N_5,N_6\}=\max\{N_5,N_6\}$, there holds
	\begin{align*}
		\|\widehat{\eta}^{B,u} - \eta^u\|_{\infty}
		& \lesssim \sqrt{\log n/(kB)} + (k/s)^{\alpha/d} + \exp \bigl( - (s - k)^2 / (2n) \bigr)+ \sqrt{s \log n/(kMn_{(1)})}
		\\
		& \lesssim \bigl( \log (M n_{(1)}) / (M n_{(1)}) \bigr)^{\alpha/(2\alpha+d)}
	\end{align*}
	with probability $\mathrm{P}_{Z}^B \otimes \mathrm{P}^n$ at least $1 -3/ n^2$.
	According to \eqref{equ::con1} in Lemma \ref{lem::punder}, we have $Mn_{(1)}\geq Mn\pi_1/2$. 
	Note that $g(x):=\log (x)/x$ is decreasing on $[e,\infty)$. 
	Consequently, we see that if $n \geq \max \{ N_5, N_6, 2 M, \lceil 2 e / (M \underline{\pi}) \rceil\}$, there holds
	\begin{align*}
		\|\widehat{\eta}^{B,u} - \eta^u\|_{\infty}\lesssim \bigl( \log (Mn\pi_1/2) / (Mn\pi_1/2) \bigr)^{\alpha/(2\alpha+d)}\lesssim (\log n/n)^{\alpha/(2\alpha+d)}.
	\end{align*}
	with probability $\mathrm{P}_{Z}^B \otimes \mathrm{P}^n$ at least $1-4/n^2$.
	This together with Proposition \ref{thm::punderund} yields that
	for all $n \geq N_2^* := \max \{ N_3, N_5, N_6, 2 M, \lceil 2 e / (M \underline{\pi}) \rceil\}$, there holds
	\begin{align*}
		\|\widehat{\eta}^{B,u} - \eta^w\|_{\infty}
		& = \|\widehat{\eta}^{B,u} - \eta^u\|_{\infty} + \|\eta^u - \eta^w\|_{\infty}
		\\
		& \lesssim (\log n/n)^{\alpha/(2\alpha+d)}+\sqrt{\log n/n}
		\lesssim (\log n/n)^{\alpha/(2\alpha+d)}
	\end{align*}
	with probability $\mathrm{P}_{Z}^B \otimes \mathrm{P}^n$ at least $1 -5/n^2$.
	Lemma \ref{lem::convergereg} yields that
	\begin{align*}
		\|\eta^{w,*}_{L_{\mathrm{cl}},\mathrm{P}^w}(x) - \eta^w_{\widehat{f}^{B,u}(x)}(x)\|_{\infty}
		\lesssim (\log n/n)^{\alpha/(2\alpha+d)}
	\end{align*}
	holds with probability $\mathrm{P}_{Z}^B \otimes \mathrm{P}^n$ at least $1 - 5 / n^2$, where $\eta^{w,*}_{L_{\mathrm{cl}},\mathrm{P}^w}(x) = {\eta}_{{f}_{L_{\mathrm{cl}},\mathrm{P}^w}^*(x)}(x)$, i.e,~the Bayes classifier w.r.t.~the classification loss $L_{\mathrm{cl}}$ and the balanced distribution $\mathrm{P}^w$. Consequently, Lemma \ref{lem::regexcessrisk} implies that
	\begin{align*}	
		\mathcal{R}_{L_{\mathrm{cl}},\mathrm{P}^w}(\widehat{f}^{B,u}) - \mathcal{R}_{L_{\mathrm{cl}},\mathrm{P}^w}^*
		\lesssim  (\log n/n)^{{\alpha(\beta+1)}/{(2\alpha+d)}}
	\end{align*}
	holds with probability $\mathrm{P}_Z^B\otimes \mathrm{P^n}$ at least $1 -5/ n^2$. By \eqref{equ::R_amf} and Theorem \ref{thm::connection}, we have $\mathfrak{R}_{\mathrm{AM}}(\widehat{f}^{B,u}) \lesssim  (\log n/n)^{{\alpha(\beta+1)}/{(2\alpha+d)}}$, which finishes the proof.
\end{proof}

\begin{proof}[Proof of Corollary \ref{thm::bag1nnund}]
	Taking $k = 1$ in Proposition \ref{pro::tildefmlund}, we get
	\begin{align*}
		\|\overline{\eta}^{b,u}_m(x)-\eta^u_m(x)\|_{\infty}
		& \leq \sum_{i=1}^n \overline{V}^u_i(x) R_{(i)}^{\alpha}(x) + \exp \bigl( - (s - 1)^2 / (2 n) \bigr)
		\\
		& \leq (2 n_1/n)^{\alpha/d} + c_3 s^{-\alpha/d} + \exp \bigl( - (s - 1)^2/(2 n)\bigr)
		\\
		& \leq c_3' (\log s/s)^{\alpha/d} + \exp \bigl( - (s - 1)^2/(2 n) \bigr),
	\end{align*}
	where the constant $c_3':=(12d+32)^{\alpha/d}+c_3$. 
	This together with
	Propositions \ref{pro::fml*funder} and \ref{pro::tildeffunder} yields that
	if $n\geq \max\{N_4,N_5,N_6\}=\max\{N_5,N_6\}$,
	there holds
	\begin{align*}
		\|\widehat{\eta}^{B,u} - \eta^u\|_{\infty}
		& \lesssim \sqrt{\log n/B} + (\log s/s)^{\alpha/d} + \exp \bigl( - (s - 1)^2/(2n) \bigr)+ \sqrt{s \log n / (Mn_{(1)})}
	\end{align*}
	with probability $\mathrm{P}_{Z}^B \otimes \mathrm{P}^n$ at least $1 - 3 / n^2$.
	
	If $d > 2 \alpha$, with $s = (Mn_{(1)})^{\frac{d}{2\alpha+d}} (\log (Mn_{(1)}))^{\frac{2\alpha-d}{2\alpha+d}}$, $B = (Mn_{(1)})^{\frac{2\alpha}{2\alpha+d}}(\log (Mn_{(1)}))^{\frac{d-2\alpha}{2\alpha+d}}$ we get
	\begin{align*}
		\|\widehat{\eta}^{B,u} - \eta^u\|_{\infty}
		\lesssim (\log^2 (Mn_{(1)})/{Mn_{(1)}})^{\alpha/(2\alpha+d)}.
	\end{align*}
	According to \eqref{equ::con1} in Lemma \ref{lem::punder}, we have $Mn_{(1)}\geq Mn\pi_1/2$. 
	Note that $g(x):=\log^2 (x)/x$ is decreasing on $[e^2,\infty)$. 
	Consequently,  if $n \geq \max \{ N_5, N_6, 2 M, \lceil 2 e^2 / (M \underline{\pi}) \rceil\}$, there holds
	\begin{align*}
		\|\widehat{\eta}^{B,u} - \eta^u\|_{\infty}\lesssim \bigl( \log^2 (Mn\pi_1/2) / (Mn\pi_1/2) \bigr)^{\alpha/(2\alpha+d)}\lesssim (\log n/n)^{\alpha/(2\alpha+d)}
	\end{align*}
	with probability $\mathrm{P}_{Z}^B \otimes \mathrm{P}^n$ at least $1-4/n^2$.
	This together with Proposition \ref{thm::punderund} yields that
	for all $n \geq N_3^*:=\max\{N_3,N_5,N_6,2M,\lceil 2e^3/(M\underline{\pi})\rceil\}$,
	there holds
	\begin{align*}
		\|\widehat{\eta}^{B,u} - \eta^w\|_{\infty}
		& = \|\widehat{\eta}^{B,u} - \eta^u\|_{\infty} + \|\eta^u - \eta^w\|_{\infty}
		\\
		& \lesssim (\log n/n)^{\alpha/(2\alpha+d)} + \sqrt{\log n/n}
		\lesssim (\log n/n)^{\alpha/(2\alpha+d)}
	\end{align*}
	with probability $\mathrm{P}_{Z}^B \otimes \mathrm{P}^n$ at least $1 -5/n^2$.
	
	Otherwise if $d \leq 2 \alpha$, with $s = (M n_{(1)} \log (M n_{(1)}))^{1/2}$ and $B = (M n_{(1)} /\log (M n_{(1)}))^{1/2}$, by similar arguments as above, for all $n \geq N_3^*$, there holds
	\begin{align*}
		\|\widehat{\eta}^{B,u} - \eta^w\|_{\infty}
		= \|\widehat{\eta}^{B,u} - \eta^u\|_{\infty} + \|\eta^u - \eta^w\|_{\infty}
		\lesssim \max \bigl\{ (\log n/n)^{\alpha/(2d)}, (\log^3 n/n)^{1/4} \bigr\}
	\end{align*}
	with probability $\mathrm{P}_{Z}^B \otimes \mathrm{P}^n$ at least $1 -5/n^2$.
	By exploiting similar arguments as that in the proof of Theorem \ref{thm::bagknnclassund}, we obtain the assertion.
\end{proof}

\section{Conclusion}\label{sec::conclusion}

In this paper, we conduct a learning theory analysis of the under-bagging $k$-NN algorithm for the imbalanced classification problem. By assuming the H\"{o}lder smoothness and margin condition, we establish optimal convergence rates for under-bagging $k$-NN classifier w.r.t.~the AM measure, a frequently used performance for imbalanced classification, based on the proposed statistical learning treatment. Through our theoretical analysis we show that with proper parameter selections, lower time complexity are required for the under-bagging $k$-NN compared with the standard $k$-NN. These findings in return unveil the working mechanism of under-bagging for imbalanced classification. Therefore, we believe that our work sheds light on developing learning theory analysis of under-bagging algorithms with other base classifiers such as support vector machines, decision trees, and neural networks.

\bibliographystyle{plain}
\small{\bibliography{UNNIC}}

\begin{thebibliography}{10}

\bibitem{ahuja2019movie}
Rishabh Ahuja, Arun Solanki, and Anand Nayyar.
\newblock Movie recommender system using $k$-means clustering and $k$-nearest
  neighbor.
\newblock In {\em Proceedings of the 9th International Conference on Cloud
  Computing, Data Science \& Engineering}, pages 263--268, 2019.

\bibitem{akcora2019bitcoinheist}
Cuneyt~G. Akcora, Yitao Li, Yulia~R. Gel, and Murat Kantarcioglu.
\newblock Bitcoinheist: Topological data analysis for ransomware prediction on
  the bitcoin blockchain.
\newblock In Christian Bessiere, editor, {\em Proceedings of the Twenty-Ninth
  International Joint Conference on Artificial Intelligence}, pages 4439--4445,
  2020.

\bibitem{al2021location}
Mohanad Al-Ghobari, Amgad Muneer, and Suliman~Mohamed Fati.
\newblock Location-aware personalized traveler recommender system using
  collaborative filtering $k$-{NN}.
\newblock {\em Computers, Materials \& Continua}, 69(2):1553--1570, 2021.

\bibitem{aljanabi2018intelligent}
Samaher Al{\_}Janabi and Fatma Razaq.
\newblock Intelligent big data analysis to design smart predictor for customer
  churn in telecommunication industry.
\newblock In Yousef Farhaoui and Laila Moussaid, editors, {\em Proceedings of
  the International Conference on Big Data and Smart Digital Environment},
  pages 246--272, 2019.

\bibitem{arowolo2020pca}
Micheal~Olaolu Arowolo, Marion Adebiyi, Ayodele Adebiyi, and Olatunji Okesola.
\newblock {PCA} model for {RNA-Seq} malaria vector data classification using
  $k$-{NN} and decision tree algorithm.
\newblock In {\em International Conference in Mathematics, Computer Engineering
  and Computer Science}, pages 1--8, 2020.

\bibitem{audibert2007fast}
Jean-Yves Audibert and Alexandre~B Tsybakov.
\newblock Fast learning rates for plug-in classifiers.
\newblock {\em The Annals of Statistics}, 35(2):608--633, 2007.

\bibitem{ayyad2019gene}
Sarah~M. Ayyad, Ahmed~I. Saleh, and Labib~M. Labib.
\newblock Gene expression cancer classification using modified $k$-nearest
  neighbors technique.
\newblock {\em Biosystems}, 176:41--51, 2019.

\bibitem{babu2018enhanced}
S.~Babu and N.R. Ananthanarayanan.
\newblock Enhanced prediction model for customer churn in telecommunication
  using {EMOTE}.
\newblock In {\em Proceedings of the International Conference on Intelligent
  Computing and Applications}, pages 465--475, 2018.

\bibitem{bartlett2006convexity}
Peter~L. Bartlett, Michael~I. Jordan, and Jon~D. McAuliffe.
\newblock Convexity, classification, and risk bounds.
\newblock {\em Journal of the American Statistical Association},
  101(473):138--156, 2006.

\bibitem{belacel2020k}
Nabil Belacel, Guangze Wei, and Yassine Bouslimani.
\newblock The $k$-closest resemblance classifier for {A}mazon products
  recommender system.
\newblock In {\em Proceedings of the 12th International Conference on Agents
  and Artificial Intelligence}, pages 873--880, 2020.

\bibitem{bentley1975multidimensional}
Jon~Louis Bentley.
\newblock Multidimensional binary search trees used for associative searching.
\newblock {\em Communications of the ACM}, 18(9):509--517, 1975.

\bibitem{bernstein1946theory}
Sergei~N. Bernstein.
\newblock {\em The Theory of Probabilities}.
\newblock Gastehizdat Publishing House, Moscow, 1946.

\bibitem{biau2010rate}
G{\'e}rard Biau, Fr{\'e}d{\'e}ric C{\'e}rou, and Arnaud Guyader.
\newblock On the rate of convergence of the bagged nearest neighbor estimate.
\newblock {\em The Journal of Machine Learning Research}, 11(2):687--712, 2010.

\bibitem{biau2015lectures}
G{\'e}rard Biau and Luc Devroye.
\newblock {\em Lectures on the Nearest Neighbor Method}, volume 246.
\newblock Springer, 2015.

\bibitem{Biteus2017}
Jonas Biteus and Tony Lindgren.
\newblock Planning flexible maintenance for heavy trucks using machine learning
  models, constraint programming, and route optimization.
\newblock {\em SAE International Journal of Materials and Manufacturing},
  10(3):306--315, 2017.

\bibitem{bugnon2019deep}
Leandro~A. Bugnon, Cristian Yones, Diego~H. Milone, and Georgina Stegmayer.
\newblock Deep neural architectures for highly imbalanced data in
  bioinformatics.
\newblock {\em IEEE Transactions on Neural Networks and Learning Systems},
  31(8):2857--2867, 2019.

\bibitem{candanedo2016accurate}
Luis~M. Candanedo and V\'{e}ronique Feldheim.
\newblock Accurate occupancy detection of an office room from light,
  temperature, humidity and {CO2} measurements using statistical learning
  models.
\newblock {\em Energy and Buildings}, 112:28--39, 2016.

\bibitem{chan1998learning}
Philip~K. Chan and Salvatore~J. Stolfo.
\newblock Learning with non-uniform class and cost distributions: {E}ffects and
  a distributed multi-classifier approach.
\newblock In {\em Workshop Notes KDD-98 Workshop on Distributed Data Mining},
  pages 1--9, 1998.

\bibitem{chaudhuri2014rates}
Kamalika Chaudhuri and Sanjoy Dasgupta.
\newblock Rates of convergence for nearest neighbor classification.
\newblock In Z.~Ghahramani, M.~Welling, C.~Cortes, N.~Lawrence, and K.~Q.
  Weinberger, editors, {\em Advances in Neural Information Processing Systems},
  volume~27, pages 3437--3445, 2014.

\bibitem{chawla2004special}
Nitesh~V. Chawla, Nathalie Japkowicz, and Aleksander Kotcz.
\newblock Special issue on learning from imbalanced data sets.
\newblock {\em ACM SIGKDD Explorations Newsletter}, 6(1):1--6, 2004.

\bibitem{cucker2007learning}
Felipe Cucker and Ding-Xuan Zhou.
\newblock {\em Learning Theory: An Approximation Theory Viewpoint}.
\newblock Cambridge University Press, 2007.

\bibitem{danziger2006functional}
S.A. Danziger, S.J. Swamidass, Jue Zeng, L.R. Dearth, Qiang Lu, J.H. Chen,
  J.~Cheng, V.P. Hoang, H.~Saigo, R.~Luo, P.~Baldi, R.K. Brachmann, and R.H.
  Lathrop.
\newblock Functional census of mutation sequence spaces: the example of p53
  cancer rescue mutants.
\newblock {\em IEEE/ACM Transactions on Computational Biology and
  Bioinformatics}, 3(2):114--125, 2006.

\bibitem{degroot2012probability}
Morris~H. DeGroot.
\newblock {\em Probability and Statistics}.
\newblock Pearson, 2012.

\bibitem{deng2016efficient}
Zhenyun Deng, Xiaoshu Zhu, Debo Cheng, Ming Zong, and Shichao Zhang.
\newblock Efficient $k$-{NN} classification algorithm for big data.
\newblock {\em Neurocomputing}, 195:143--148, 2016.

\bibitem{doring2017rate}
Maik D{\"o}ring, L{\'a}szl{\'o} Gy{\"o}rfi, and Harro Walk.
\newblock Rate of convergence of $k$-nearest-neighbor classification rule.
\newblock {\em The Journal of Machine Learning Research}, 18(1):8485--8500,
  2017.

\bibitem{Dua:2019}
Dheeru Dua and Casey Graff.
\newblock {UCI} machine learning repository, 2017.

\bibitem{dubey2013class}
Harshit Dubey and Vikram Pudi.
\newblock Class based weighted $k$-nearest neighbor over imbalance dataset.
\newblock In {\em Pacific-Asia Conference on Knowledge Discovery and Data
  Mining}, pages 305--316, 2013.

\bibitem{duda2001pattern}
Richard~O. Duda, Peter~E. Hart, and David~G. Stork.
\newblock {\em Pattern Classification}.
\newblock 2nd edition, 2001.

\bibitem{dudley1979balls}
Richard~M. Dudley.
\newblock Balls in $\mathbb{R}^k$ do not cut all subsets of $k + 2$ points.
\newblock {\em Advances in Mathematics}, 31(3):306--308, 1979.

\bibitem{flach2019performance}
Peter Flach.
\newblock Performance evaluation in machine learning: the good, the bad, the
  ugly, and the way forward.
\newblock In {\em Proceedings of the AAAI Conference on Artificial
  Intelligence}, volume~33, pages 9808--9814, 2019.

\bibitem{friedman1977algorithm}
Jerome~H. Friedman, Jon~Louis Bentley, and Raphael~Ari Finkel.
\newblock An algorithm for finding best matches in logarithmic expected time.
\newblock {\em ACM Transactions on Mathematical Software}, 3(3):209--226, 1977.

\bibitem{gine2021mathematical}
Evarist Gin{\'e} and Richard Nickl.
\newblock {\em Mathematical Foundations of Infinite-Dimensional Statistical
  Models}.
\newblock Cambridge University Press, 2021.

\bibitem{grandini2020metrics}
Margherita Grandini, Enrico Bagli, and Giorgio Visani.
\newblock Metrics for multi-class classification: an overview.
\newblock {\em arXiv preprint arXiv:2008.05756}, 2020.

\bibitem{guo2016bpso}
Haixiang Guo, Yijing Li, Yanan Li, Xiao Liu, and Jinling Li.
\newblock {BPSO-Adaboost-KNN} ensemble learning algorithm for multi-class
  imbalanced data classification.
\newblock {\em Engineering Applications of Artificial Intelligence},
  49:176--193, 2016.

\bibitem{guo2017learning}
Haixiang Guo, Yijing Li, Jennifer Shang, Mingyun Gu, Yuanyue Huang, and Bing
  Gong.
\newblock Learning from class-imbalanced data: {R}eview of methods and
  applications.
\newblock {\em Expert Systems with Applications}, 73:220--239, 2017.

\bibitem{hall2005properties}
Peter Hall and Richard~J. Samworth.
\newblock Properties of bagged nearest neighbour classifiers.
\newblock {\em Journal of the Royal Statistical Society: Series B},
  67(3):363--379, 2005.

\bibitem{hang2016learning}
Hanyuan Hang, Yunlong Feng, Ingo Steinwart, and Johan~A.K. Suykens.
\newblock Learning theory estimates with observations from general stationary
  stochastic processes.
\newblock {\em Neural computation}, 28(12):2853--2889, 2016.

\bibitem{hang2017bernstein}
Hanyuan Hang and Ingo Steinwart.
\newblock A {B}ernstein-type inequality for some mixing processes and dynamical
  systems with an application to learning.
\newblock {\em The Annals of Statistics}, 45(2):708--743, 2017.

\bibitem{he2009learning}
Haibo He and Edwardo~A Garcia.
\newblock Learning from imbalanced data.
\newblock {\em IEEE Transactions on Knowledge and Data Engineering},
  21(9):1263--1284, 2009.

\bibitem{hoeffding1994probability}
Wassily Hoeffding.
\newblock Probability inequalities for sums of bounded random variables.
\newblock {\em Journal of the American Statistical Association},
  58(301):13--30, 1963.

\bibitem{johnson2019survey}
Justin~M Johnson and Taghi~M Khoshgoftaar.
\newblock Survey on deep learning with class imbalance.
\newblock {\em Journal of Big Data}, 6(1):1--54, 2019.

\bibitem{johnson2020effects}
Justin~M. Johnson and Taghi~M. Khoshgoftaar.
\newblock The effects of data sampling with deep learning and highly imbalanced
  big data.
\newblock {\em Information Systems Frontiers}, 22(5):1113--1131, 2020.

\bibitem{kawala2013Predictions}
Fran{\c c}ois Kawala, Ahlame Douzal-Chouakria, Eric Gaussier, and Eustache
  Dimert.
\newblock {Pr{\'e}dictions d'activit{\'e} dans les r{\'e}seaux sociaux en
  ligne}.
\newblock In {\em {4i{\`e}me conf{\'e}rence sur les mod{\`e}les et l'analyse
  des r{\'e}seaux: Approches math{\'e}matiques et informatiques}}, page~16,
  2013.

\bibitem{khim2020multiclass}
Justin Khim, Ziyu Xu, and Shashank Singh.
\newblock Multiclass classification via class-weighted nearest neighbors.
\newblock {\em arXiv preprint arXiv:2004.04715}, 2020.

\bibitem{Kosorok2008introduction}
Michael~R. Kosorok.
\newblock {\em Introduction to Empirical Processes and Semiparametric
  Inference}.
\newblock Springer Series in Statistics. Springer, New York, 2008.

\bibitem{kotlowski2011bipartite}
Wojciech Kotlowski, Krzysztof Dembczynski, and Eyke Huellermeier.
\newblock Bipartite ranking through minimization of univariate loss.
\newblock In Lise Getoor and Tobias Scheffer, editors, {\em Proceedings of the
  28th International Conference on Machine Learning}, pages 1113--1120, 2011.

\bibitem{kurniawati2018adaptive}
Yulia~Ery Kurniawati, Adhistya~Erna Permanasari, and Silmi Fauziati.
\newblock Adaptive synthetic-nominal and adaptive synthetic-$k$-{NN} for
  multiclass imbalance learning on laboratory test data.
\newblock In {\em Proceedings of the 4th International Conference on Science
  and Technology}, pages 1--6, 2018.

\bibitem{leevy2018survey}
Joffrey~L. Leevy, Taghi~M. Khoshgoftaar, Richard~A Bauder, and Naeem Seliya.
\newblock A survey on addressing high-class imbalance in big data.
\newblock {\em Journal of Big Data}, 5(1):1--30, 2018.

\bibitem{li2011improving}
Yuxuan Li and Xiuzhen Zhang.
\newblock Improving $k$-nearest neighbor with exemplar generalization for
  imbalanced classification.
\newblock In {\em Pacific-Asia Conference on Knowledge Discovery and Data
  Mining}, pages 321--332, 2011.

\bibitem{li2021hybrid}
Zhenchuan Li, Mian Huang, Guanjun Liu, and Changjun Jiang.
\newblock A hybrid method with dynamic weighted entropy for handling the
  problem of class imbalance with overlap in credit card fraud detection.
\newblock {\em Expert Systems with Applications}, 175:114750, 2021.

\bibitem{liu2018improved}
Shenglan Liu, Ping Zhu, and Sujuan Qin.
\newblock An improved weighted $k$-{NN} algorithm for imbalanced data
  classification.
\newblock In {\em Proceedings of the 4th International Conference on Computer
  and Communications}, pages 1814--1819, 2018.

\bibitem{malini2017analysis}
N.~Malini and M.~Pushpa.
\newblock Analysis on credit card fraud identification techniques based on
  $k$-{NN} and outlier detection.
\newblock In {\em Proceedings of the Third International Conference on Advances
  in Electrical, Electronics, Information, Communication and Bio-Informatics},
  pages 255--258, 2017.

\bibitem{massart2007concentration}
Pascal Massart.
\newblock {\em Concentration Inequalities and Model Selection}, volume 1896 of
  {\em Lecture Notes in Mathematics}.
\newblock Springer, Berlin, 2007.

\bibitem{menon2013statistical}
Aditya Menon, Harikrishna Narasimhan, Shivani Agarwal, and Sanjay Chawla.
\newblock On the statistical consistency of algorithms for binary
  classification under class imbalance.
\newblock In Sanjoy Dasgupta and David McAllester, editors, {\em Proceedings of
  the 30th International Conference on Machine Learning}, volume~28, pages
  603--611, 2013.

\bibitem{milnor1964betti}
John Milnor.
\newblock On the {B}etti numbers of real varieties.
\newblock {\em Proceedings of the American Mathematical Society},
  15(2):275--280, 1964.

\bibitem{moldagulova2017using}
Aiman Moldagulova and Rosnafisah~Bte Sulaiman.
\newblock Using $k$-{NN} algorithm for classification of textual documents.
\newblock In {\em Proceedings of the 8th International Conference on
  Information Technology}, pages 665--671, 2017.

\bibitem{mullick2018adaptive}
Sankha~Subhra Mullick, Shounak Datta, and Swagatam Das.
\newblock Adaptive learning-based $k$-nearest neighbor classifiers with
  resilience to class imbalance.
\newblock {\em IEEE Transactions on Neural Networks and Learning Systems},
  29(11):5713--5725, 2018.

\bibitem{narasimhan2015consistent}
Harikrishna Narasimhan, Harish Ramaswamy, Aadirupa Saha, and Shivani Agarwal.
\newblock Consistent multiclass algorithms for complex performance measures.
\newblock In Francis Bach and David Blei, editors, {\em Proceedings of the 32nd
  International Conference on Machine Learning}, volume~37, pages 2398--2407,
  2015.

\bibitem{nwe2019knn}
Mar~Mar Nwe and Khin~Thidar Lynn.
\newblock $k$-{NN}-based overlapping samples filter approach for classification
  of imbalanced data.
\newblock In {\em Proceedings of the International Conference on Software
  Engineering Research, Management and Applications}, pages 55--73, 2019.

\bibitem{opitz2019macro}
Juri Opitz and Sebastian Burst.
\newblock Macro {F1} and macro {F1}.
\newblock {\em arXiv preprint arXiv:1911.03347}, 2019.

\bibitem{patro2020hybrid}
Sunkuru Gopal~Krishna Patro, Brojo~Kishore Mishra, Sanjaya~Kumar Panda,
  Raghvendra Kumar, Hoang~Viet Long, David Taniar, and Ishaani Priyadarshini.
\newblock A hybrid action-related $k$-nearest neighbour approach for
  recommendation systems.
\newblock {\em IEEE Access}, 8:90978--90991, 2020.

\bibitem{pisner2020support}
Derek~A. Pisner and David~M. Schnyer.
\newblock Support vector machine.
\newblock In {\em Machine Learning}, pages 101--121. 2020.

\bibitem{pollack1993number}
Richard Pollack and M-F Roy.
\newblock On the number of cells defined by a set of polynomials.
\newblock {\em Comptes rendus de l'Acad{\'e}mie des sciences. S{\'e}rie 1,
  Math{\'e}matique}, 316(6):573--577, 1993.

\bibitem{raskutti2004extreme}
Bhavani Raskutti and Adam Kowalczyk.
\newblock Extreme re-balancing for {SVMs}: A case study.
\newblock {\em ACM SIGKDD Explorations Newsletter}, 6(1):60--69, 2004.

\bibitem{rastgoo2016tackling}
Mojdeh Rastgoo, Guillaume Lemaitre, Joan Massich, Olivier Morel, Franck
  Marzani, Rafael Garcia, and Fabrice Meriaudeau.
\newblock Tackling the problem of data imbalancing for melanoma classification.
\newblock In {\em Proceedings of the International Joint Conference on
  Biomedical Engineering Systems and Technologies}, pages 32--39, 2016.

\bibitem{salvador2019compressed}
Jaime Salvador-Meneses, Zoila Ruiz-Chavez, and Jose Garcia-Rodriguez.
\newblock Compressed $k$-{NN}: $k$-nearest neighbors with data compression.
\newblock {\em Entropy}, 21(3):234, 2019.

\bibitem{samworth2012optimal}
Richard~J. Samworth.
\newblock Optimal weighted nearest neighbour classifiers.
\newblock {\em The Annals of Statistics}, 40(5):2733--2763, 2012.

\bibitem{scott2012calibrated}
Clayton Scott.
\newblock Calibrated asymmetric surrogate losses.
\newblock {\em Electronic Journal of Statistics}, 6:958--992, 2012.

\bibitem{shanab2020optimally}
Ahmad~Abu Shanab and Taghi~M. Khoshgoftaar.
\newblock How to optimally combine univariate and multivariate feature
  selection with data sampling for classifying noisy, high dimensional and
  class imbalanced {DNA} microarray data.
\newblock In {\em Reuse in Intelligent Systems}, pages 33--61. 2020.

\bibitem{torres2013sensor}
Roberto~L. Shinmoto~Torres, Damith~C. Ranasinghe, Qinfeng Shi, and Alanson~P.
  Sample.
\newblock Sensor enabled wearable {RFID} technology for mitigating the risk of
  falls near beds.
\newblock In {\em Proceedings of the International Conference on RFID}, pages
  191--198, 2013.

\bibitem{sohony2018ensemble}
Ishan Sohony, Rameshwar Pratap, and Ullas Nambiar.
\newblock Ensemble learning for credit card fraud detection.
\newblock In {\em Proceedings of the ACM India Joint International Conference
  on Data Science and Management of Data}, pages 289--294, 2018.

\bibitem{somasundaram2019parallel}
Akila Somasundaram and Srinivasulu Reddy.
\newblock Parallel and incremental credit card fraud detection model to handle
  concept drift and data imbalance.
\newblock {\em Neural Computing and Applications}, 31(1):3--14, 2019.

\bibitem{spiegel1992theory}
Murray~Ralph Spiegel.
\newblock {\em Theory and Problems of Probability and Statistics}.
\newblock New York, NY: McGraw-Hill, 2nd edition, 1992.

\bibitem{steinwart2008support}
Ingo Steinwart and Andreas Christmann.
\newblock {\em Support Vector Machines}.
\newblock Springer Science \& Business Media, 2008.

\bibitem{sun2009classification}
Yanmin Sun, Andrew~KC Wong, and Mohamed~S Kamel.
\newblock Classification of imbalanced data: A review.
\newblock {\em International Journal of Pattern Recognition and Artificial
  Intelligence}, 23(4):687--719, 2009.

\bibitem{tallon2014data}
Antonio~J. Tall{\'o}n-Ballesteros and Jos{\'e}~C Riquelme.
\newblock Data mining methods applied to a digital forensics task for
  supervised machine learning.
\newblock In {\em Computational Intelligence in Digital Forensics: Forensic
  Investigation and Applications}, pages 413--428. 2014.

\bibitem{tan2005neighbor}
Songbo Tan.
\newblock Neighbor-weighted $k$-nearest neighbor for unbalanced text corpus.
\newblock {\em Expert Systems with Applications}, 28(4):667--671, 2005.

\bibitem{vandervaart1996weak}
Aad~W. van~der Vaart and Jon~A. Wellner.
\newblock {\em Weak Convergence and Empirical Processes}.
\newblock Springer Series in Statistics. Springer-Verlag, New York, 1996.

\bibitem{wang2021review}
Le~Wang, Meng Han, Xiaojuan Li, Ni~Zhang, and Haodong Cheng.
\newblock Review of classification methods on unbalanced data sets.
\newblock {\em IEEE Access}, 9:64606--64628, 2021.

\bibitem{wang2008effective}
Lei Wang, Latifur Khan, and Bhavani Thuraisingham.
\newblock An effective evidence theory based $k$-nearest neighbor
  classification.
\newblock In {\em Proceedings of the International Conference on Web
  Intelligence and Intelligent Agent Technology}, volume~1, pages 797--801,
  2008.

\bibitem{warren1968lower}
Hugh~E. Warren.
\newblock Lower bounds for approximation by nonlinear manifolds.
\newblock {\em Transactions of the American Mathematical Society},
  133(1):167--178, 1968.

\bibitem{weiss2004mining}
Gary~M. Weiss.
\newblock Mining with rarity: a unifying framework.
\newblock {\em ACM SIGKDD Explorations Newsletter}, 6(1):7--19, 2004.

\bibitem{wilson1972asymptotic}
Dennis~L. Wilson.
\newblock Asymptotic properties of nearest neighbor rules using edited data.
\newblock {\em IEEE Transactions on Systems, Man, and Cybernetics},
  SMC-2(3):408--421, 1972.

\bibitem{wu2008fast}
Jianxin Wu, S.~Charles Brubaker, Matthew~D. Mullin, and James~M. Rehg.
\newblock Fast asymmetric learning for cascade face detection.
\newblock {\em IEEE Transactions on Pattern Analysis and Machine Intelligence},
  30(3):369--382, 2008.

\bibitem{wu2008top}
Xindong Wu, Vipin Kumar, J.~Ross Quinlan, Joydeep Ghosh, Qiang Yang, Hiroshi
  Motoda, Geoffrey~J. McLachlan, Angus Ng, Bing Liu, and S.~Yu Philip.
\newblock Top 10 algorithms in data mining.
\newblock {\em Knowledge and Information Systems}, 14(1):1--37, 2008.

\bibitem{xue2018achieving}
Lirong Xue and Samory Kpotufe.
\newblock Achieving the time of $1$-{NN}, but the accuracy of $k$-{NN}.
\newblock In Amos Storkey and Fernando Perez-Cruz, editors, {\em Proceedings of
  the Twenty-First International Conference on Artificial Intelligence and
  Statistics}, volume~84, pages 1628--1636, 2018.

\bibitem{yang200610}
Qiang Yang and Xindong Wu.
\newblock 10 challenging problems in data mining research.
\newblock {\em International Journal of Information Technology \& Decision
  Making}, 5(4):597--604, 2006.

\bibitem{yeh2009}
I-Cheng Yeh and Che{-}hui Lien.
\newblock The comparisons of data mining techniques for the predictive accuracy
  of probability of default of credit card clients.
\newblock {\em Expert Systems with Applications}, 36(2):2473--2480, 2009.

\bibitem{yuan2021novel}
Bo-Wen Yuan, Xing-Gang Luo, Zhong-Liang Zhang, Yang Yu, Hong-Wei Huo, Johannes
  Tretter, and Xiao-Dong Zou.
\newblock A novel density-based adaptive $k$-nearest neighbor method for
  dealing with overlapping problem in imbalanced datasets.
\newblock {\em Neural Computing and Applications}, 33(9):4457--4481, 2021.

\bibitem{zhang2003knn}
Jianping Zhang and Inderjeet Mani.
\newblock $k$-{NN} approach to unbalanced data distributions: A case study
  involving information extraction.
\newblock In {\em Proceeding of International Conference on Machine Learning,
  Workshop on Learning from Imbalanced Data Sets}, 2003.

\bibitem{zhang2020cost}
Shichao Zhang.
\newblock Cost-sensitive $k$-{NN} classification.
\newblock {\em Neurocomputing}, 391:234--242, 2020.

\bibitem{zhang2017krnn}
Xiuzhen Zhang, Yuxuan Li, Ramamohanarao Kotagiri, Lifang Wu, Zahir Tari, and
  Mohamed Cheriet.
\newblock {KRNN}: $k$ rare-class nearest neighbour classification.
\newblock {\em Pattern Recognition}, 62:33--44, 2017.

\bibitem{zhao2020intelligent}
Bo~Zhao, Xianmin Zhang, Hai Li, and Zhuobo Yang.
\newblock Intelligent fault diagnosis of rolling bearings based on normalized
  {CNN} considering data imbalance and variable working conditions.
\newblock {\em Knowledge-Based Systems}, 199:105971, 2020.

\bibitem{zhao2021minimax}
Puning Zhao and Lifeng Lai.
\newblock Minimax rate optimal adaptive nearest neighbor classification and
  regression.
\newblock {\em IEEE Transactions on Information Theory}, 67(5):3155--3182,
  2021.

\end{thebibliography}
\end{document}